\definecolor{light-gray}{gray}{0.85}
\renewcommand{\th}{^\mathrm{th}}
\newcommand{\defeq}{\mathrel{\mathop:}=}
\newcommand{\vect}[1]{\ensuremath{\mathbf{#1}}}
\newcommand{\mat}[1]{\ensuremath{\mathbf{#1}}}
\newcommand{\argmin}{\mathop{\rm argmin}}
\newcommand{\argmax}{\mathop{\rm argmax}}
\renewcommand{\det}{\mathrm{det}}
\newcommand{\rank}{\mathrm{rank}}
\newcommand{\trans}{^{\top}}
\newcommand{\diag}{\mathrm{diag}}
\newcommand{\poly}{\mathrm{poly}}
\newcommand{\polylog}{\mathrm{polylog}}
\newcommand{\abs}[1]{|{#1}|}
\newcommand{\norm}[1]{\|{#1} \|}
\newcommand{\E}{\mathbb{E}}
\newcommand{\D}{\mathbb{D}}
\renewcommand{\P}{\mathbb{P}}
\newcommand{\la}{\langle}
\newcommand{\ra}{\rangle}
\newcommand{\bL}{\mathbb{L}}
\newcommand{\cO}{\mathcal{O}}
\newcommand{\Z}{\mathbb{Z}}
\newcommand{\N}{\mathbb{N}}
\newcommand{\R}{\mathbb{R}}
\newcommand{\A}{\mat{A}}
\newcommand{\B}{\mat{B}}
\newcommand{\I}{\mat{I}}
\newcommand{\U}{\mat{U}}
\newcommand{\V}{\mat{V}}
\newcommand{\W}{\mat{W}}
\newcommand{\M}{\mat{M}}
\newcommand{\X}{\mat{X}}
\newcommand{\Y}{\mat{Y}}
\newcommand{\e}{\vect{e}}
\renewcommand{\b}{\vect{b}}
\renewcommand{\a}{\vect{a}}
\renewcommand{\o}{\vect{o}}
\newcommand{\w}{\vect{w}}
\newcommand{\x}{\vect{x}}
\newcommand{\y}{\vect{y}}
\newcommand{\z}{\vect{z}}
\newcommand{\g}{\vect{g}}
\newcommand{\fS}{\mathfrak{S}}
\newcommand{\cF}{\mathcal{F}}
\newcommand{\cM}{\mathcal{M}}
\newcommand{\cN}{\mathcal{N}}
\newcommand{\cS}{\mathcal{S}}
\newcommand{\cA}{\mathcal{A}}
\newcommand{\cB}{\mathcal{B}}
\newcommand{\cX}{\mathcal{X}}
\newcommand{\cY}{\mathcal{Y}}
\newcommand{\cT}{\mathcal{T}}
\newcommand{\cQ}{\mathcal{Q}}
\newenvironment{proof-sketch}{\noindent{\bf Proof Sketch}
  \hspace*{1em}}{\qed\bigskip\\}
\newenvironment{proof-idea}{\noindent{\bf Proof Idea}
  \hspace*{1em}}{\qed\bigskip\\}
\newenvironment{proof-of-lemma}[1][{}]{\noindent{\bf Proof of Lemma {#1}}
  \hspace*{1em}}{\qed\bigskip\\}
\newenvironment{proof-of-proposition}[1][{}]{\noindent{\bf
    Proof of Proposition {#1}}
  \hspace*{1em}}{\qed\bigskip\\}
\newenvironment{proof-of-theorem}[1][{}]{\noindent{\bf Proof of Theorem {#1}}
  \hspace*{1em}}{\qed\bigskip\\}
\newenvironment{inner-proof}{\noindent{\bf Proof}\hspace{1em}}{
  $\bigtriangledown$\medskip\\}
\newenvironment{proof-attempt}{\noindent{\bf Proof Attempt}
  \hspace*{1em}}{\qed\bigskip\\}
\newcommand{\Fcal}{\mathcal{F}}
\newcommand{\omle}{OMLE\xspace}
\renewcommand{\cT}{\mathcal{T}}
\renewcommand{\fS}{\mathscr{S}}
\newcommand{\fA}{\mathscr{A}}
\newcommand{\fO}{\mathscr{O}}
\newcommand{\fM}{\mathfrak{M}}
\newcommand{\TV}{\mathrm{TV}}
\newcommand{\Pb}{\overline{\mathbb{P}}}
\newcommand{\fD}{{\mathfrak{D}}}
\newcommand{\cC}{\mathcal{C}}
\newcommand{\cYh}{\hat{\mathcal{Y}}}
\renewcommand{\M}{\mathbb{M}}
\newcommand{\bM}{\mathbf{M}}
\newcommand{\m}{\mathbf{m}}
\newcommand{\Wbb}{\mathbb{W}}
\newcommand{\bpsi}{\boldsymbol{\psi}}
\newcommand{\bphi}{\boldsymbol{\phi}}
\newcommand{\Pie}{\Pi_{\rm exp}}
\newcommand{\iprod}[2]{\langle #1, #2 \rangle}
\newcommand{\pa}{{\rm{pa}}}
\newcommand{\SAIL}{\textsc{SAIL}}
\newcommand{\dlin}{{d_{\rm lin}}}
\newcommand{\bmu}{\boldsymbol{\mu}}
\newcommand{\bnu}{\boldsymbol{\nu}}
\newcommand{\bup}{\boldsymbol{\upsilon}}
\newcommand{\Csl}{\underline{C_{\sigma}}}
\newcommand{\Csu}{\overline{C_{\sigma}}}
\renewcommand{\M}{\mathbb{G}}
\newcommand{\K}{\mathbb{K}}
\newcommand{\name}{{well-conditioned }}
\newcommand{\Qa}{\cQ^{\textsc{a}}}
\renewcommand{\D}{\mathbb{D}}
\renewcommand{\cX}{\mathcal{X}}
\renewcommand{\Z}{\mat{Z}}
\newcommand{\G}{\mat{G}}
 \newtheorem{theorem}{Theorem}[section]
 \newtheorem{lemma}[theorem]{Lemma}
 \newtheorem{corollary}[theorem]{Corollary}
 \newtheorem{remark}[theorem]{Remark}
\newtheorem{claim}{Claim}
 \newtheorem{proposition}[theorem]{Proposition}
 \theoremstyle{definition}
 \newtheorem{definition}[theorem]{Definition}
 \newtheorem{condition}[theorem]{Condition}
\newtheorem{assumption}[theorem]{Assumption}
\newtheorem{example}{Example}
\renewcommand{\epsilon}{\varepsilon} 
\renewcommand{\O}{\mathbb{O}}
\newcommand{\T}{\mathbb{T}}
\newcommand{\set}[1]{\left\{#1\right\}}
\newcommand{\praneeth}[1]{\noindent{\textcolor{green}{\{\textbf{PN:} \em #1\}}}}
 \colorlet{linkequation}{blue}
\begin{document}

 \title{\fontsize{16pt}{16pt}
 \textbf{Optimistic MLE---A Generic Model-based Algorithm for Partially Observable Sequential Decision Making}
 }
 
 \date{}
 
 \author{ 
  Qinghua Liu\footnote{Part of this work was done during QL's internship at DeepMind.} \\
  Princeton University \\
  \texttt{qinghual@princeton.edu} \\
  \and
  Praneeth Netrapalli \\
  Google Research India \\
  \texttt{pnetrapalli@google.com} \\
  \and
  Csaba Szepesv{\'a}ri \\
  DeepMind and University of Alberta \\
  \texttt{szepesva@ualberta.ca} \\
  \and
  Chi Jin \\
  Princeton University \\
  \texttt{chij@princeton.edu} \\
}

 \maketitle
\begin{abstract}%

This paper introduces a simple efficient learning algorithms for general sequential decision making. The algorithm combines Optimism for exploration with Maximum Likelihood Estimation for model estimation, which is thus named OMLE. We prove that OMLE learns the near-optimal policies of an enormously rich class of sequential decision making problems in a \emph{polynomial} number of samples. This rich class includes not only a majority of known tractable model-based Reinforcement Learning (RL) problems (such as tabular MDPs, factored MDPs, low witness rank problems, tabular weakly-revealing/observable POMDPs and multi-step decodable POMDPs ), but also many new challenging RL problems especially in the partially observable setting that were not previously known to be tractable.

Notably, the new problems addressed by this paper include (1) \emph{observable} POMDPs with continuous observation and function approximation, where we achieve the first sample complexity that is completely independent of the size of observation space; (2) \emph{well-conditioned} low-rank sequential decision making problems (also known as Predictive State Representations (PSRs)), which include and generalize all known tractable POMDP examples under a more intrinsic representation; (3) general sequential decision making problems under \emph{SAIL} condition, which unifies our existing understandings  of model-based RL in both fully observable and partially observable settings. SAIL condition is identified by this paper, which can be viewed as a natural generalization of Bellman/witness rank to address partial observability. This paper also presents a reward-free variant of OMLE algorithm, which learns approximate dynamic models that enable the computation of near-optimal policies for all reward functions simultaneously.

\end{abstract}

\newpage
\tableofcontents
\newpage


\section{Introduction}
A wide range of modern artificial intelligence applications can be cast as sequential decision making problems, in which an agent interacts with an unknown environment through time, and learns to make a sequence of decisions using intermediate feedback. Sequential decision making covers not only problems like Atari games \citep{mnih2013playing}, Go \citep{silver2017mastering}, Chess \citep{campbell2002deep} and basic control systems \citep{todorov2005generalized}, where states are fully accessible to the learner (the \emph{fully observable} setting), but also applications including StarCraft \citep{vinyals2019grandmaster}, Poker \citep{brown2019superhuman}, robotics with local sensors \citep{akkaya2019solving}, autonomous driving \citep{levinson2011towards} and medical diagnostic systems \citep{hauskrecht2000planning}, where observations only reveal partial information about the underlying states (the \emph{partially observable} setting). While  the fully observable sequential decision making problems have been under intense theoretical investigation over recent years, 
the partially observable problems remain comparatively less understood.

Distinguished from fully observable systems, a  learner in partially observable systems is only able to see the observations that contain partial information about the underlying states. Observations in general are no longer Markovian.  As a result, it is no longer sufficient for the learner to make decision based on the observation or information available at the current step. Instead, the learner is required to additionally infer the latent states using past histories (memories). Such histories of observations have exponentially many possibilities, leading to many well-known hardness results in the worst case in both computation \cite{papadimitriou1987complexity,mundhenk2000complexity,vlassis2012computational,mossel2005learning} and statistics \cite{krishnamurthy2016pac}. To avoid these worst-case barriers, a recent line of results started to investigate rich subclasses of Partially Observable Markov Decision Process (POMDPs) under the basic settings of finite states and observations \cite[see, e.g.,][]{jin2020sample,liu2022partially}, which still only constitute a relatively small subset of all partially observable problems of practical interests.

In this paper, we introduce a simple, generic, model-based algorithm---OMLE, which combines Optimism (O) for exploration with Maximum Likelihood Estimation (MLE) for model estimation. We prove that OMLE learns the near-optimal policies of an enormously rich class of sequential decision making problems in a \emph{polynomial} number of samples. This rich class includes not only a majority of known tractable model-based Reinforcement Learning (RL) problems such as tabular MDPs, factored MDPs, low witness rank problems \cite{sun2019model}, tabular weakly-revealing/observable POMDPs \cite{jin2020sample,liu2022partially} and multi-step decodable POMDPs \cite{efroni2022provable}, but also, more importantly, many new challenging RL problems especially in the partially observable setting \emph{that were not previously known to be tractable} (see Section \ref{sec:contribution}). To achieve these new results, this paper develops new frameworks and techniques which address a set of fundamental challenges that are uniquely presented in the partially observable systems:

\paragraph{Challenge 1: Continuous observation space and function approximation with partial observability.} Modern applications of sequential decision making often involve an enormous (or even infinite) number of  observations, where \emph{function approximation} must be deployed to approximate dynamic models, value functions, or policies. While function approximation greatly expands the potential reach of existing frameworks, particularly via deep architectures, it raises a number of fundamental questions including generalization, model misspecification, and how to address those issues in presence of exploration. Function approximation becomes even more complicated in the partially observable setting when further coupled with the inference of latent states and the use of history dependent policies. As a result, existing results on function approximation in the partially observable setting remain very limited \cite{cai2022reinforcement, uehara2022provably}. They make rather restrictive assumptions, and do not provide efficient guarantees even to a relatively simple continuous-observation extension of the basic tabular weakly-revealing or observable POMDPs \cite{golowich2022planning, liu2022partially}---GM-POMDPs (Section \ref{sec:GM-POMDP}), which only add Gaussian noise to the observations in the original models.


\paragraph{Challenge 2: Learning under intrinsic representation of partially observable systems.} Most existing works on efficient learning of partially observable problems focus on the model of POMDPs. POMDPs are based on latent states that are \emph{unobservable} and subject to non-trivial ambiguity---there can exist multiple different POMDPs that represent the same sequential decision making problem. This ambiguity directly leads to the unidentifiability of latent states even in the benign settings where learning near-optimal policy is possible. This paper considers a more intrinsic modeling of partially observable dynamic system---Predictive State Representations (PSRs) \cite{littman2001predictive,singh2012predictive}, which model a  dynamic system using only \emph{observable} experiments of futures. It is known that PSRs can represent any low-rank sequential decision making problems, which are more expressive than finite-state POMDPs \citep{jaeger1998discrete}. However, it remains unclear how to learn large class of PSRs sample-efficiently.

\paragraph{Challenge 3: A unified understanding of fully observable and partially observable RL.} There has been a long line of important works on generic framework of reinforcement learning \cite{jiang2017contextual,sun2019model,jin2021bellman,du2021bilinear,foster2021statistical}. However, most of them focus on the fully observable problems and are only capable of dealing with very special partially observable problems such as reactive POMDPs. A majority of them critically rely on the complexity measures that are based on Bellman rank \cite{jiang2017contextual} or witness rank \cite{sun2019model} (the model-based version), which assumes the Bellman error or the model estimation error (in the model-based setting) to have a bilinear structure.
These bilinear-based complexity measures completely fail to explain the tractability of many basic partially observable problems \cite{golowich2022planning, liu2022partially, efroni2022provable}. It remains open to develop a unified theoretical framework which explain large classes of both fully observable and partially observable problems.

This work addresses all three challenges above.
For Challenge 1, we prove that OMLE learns \emph{observable} POMDPs with continuous observation and function approximation, where we achieve the first sample complexity that is completely independent of the size of observation space. For Challenge 2, we show that OMLE learns \emph{well-conditioned} PSRs, which include and generalize all known tractable POMDP examples under a more intrinsic representation; For Challenge 3, we identify a new condition---Summation of Absolute values of Independent biLinear
functions (SAIL)---which can be viewed as a natural generalization of Bellman/witness rank to address partial observability. We prove that OMLE learns general sequential decision making problems under \emph{SAIL} condition, which include all problems considered in this paper, and unify our existing understanding for model-based RL in both fully observable and partially observable settings.

\subsection{Overview of our results}
\label{sec:contribution}

This paper introduces a generic algorithm framework of OMLE, and prove it learns a very rich class of sequential decision making problems sample-efficiently. The OMLE algorithm (in its basic form) was first proposed in \cite{liu2022partially} for sample-efficient learning of tabular weakly-revealing POMDPs. Here we introduce some extra flexibility to the algorithm, address new challenges, and provide learning guarantees in a significantly more general setup. Specifically,

\begin{itemize}
    \item We identify a sufficient condition for OMLE---generalized eluder-type condition (Condition \ref{cond:eluder}), under which OMLE is guaranteed to find near-optimal policy in a polynomial number of samples. We will use this generalized eluder-type condition to analyze all problems considered in this paper.
   
    \item We consider sequential decision making with low-rank structure (also known as Predictive State Representations (PSRs)). We first show that learning generic PSRs is intractable. We then identify a rich subclass called \emph{well-conditioned} PSRs, and prove that OMLE learn them sample-efficiently. Our sample complexity depends polynomially on the rank of PSRs and the size of core action sequences, and is independent of the size of core tests and the size of observation space.

    \item We show that a wide range of POMDP models fall in to the class of \emph{well-conditioned PSRs}. They include not only previously known tractable problems such as tabular weakly-revealing/observable POMDPs \cite{jin2020sample,liu2022partially}, multistep decodable POMDPs \cite{efroni2022provable}; but also new problems including observable POMDPs with continuous observation (in particular, GM-POMDPs, see Section \ref{sec:GM-POMDP}), and POMDPs with a few known core action sequence. Our PSR results immediately imply sample efficient guarantees of OMLE to learn these POMDP models.

    \item We identify a new \SAIL\ condition which can be viewed as a natural generalization of Bellman/witness rank, and prove that OMLE sample-efficiently learns any sequential decision making problem with \SAIL\ condition. We show that \SAIL\ condition holds for well-conditioned PSRs and for all problems with low witness rank \cite{sun2019model}. The latter covers a majority of known tractable model-based RL problems in the fully observable setting including factored MDPs, kernel linear MDPs, sparse linear bandits.
    Moreover, our sample complexity guarantees for learning low witness rank problems improve  over the existing results \cite{sun2019model} by a multiplicative factor of witness rank. 

    \item We propose a variant of OMLE for reward-free learning. We show that Reward-free OMLE learns an approximate dynamic model sample-efficiently under a slightly stronger version of the SAIL condition.
    This approximate dynamic model allows us to compute the near-optimal policies for all reward functions simultaneously.
    \end{itemize}
\begin{figure}
\begin{center}
  \includegraphics[width=0.7\textwidth]{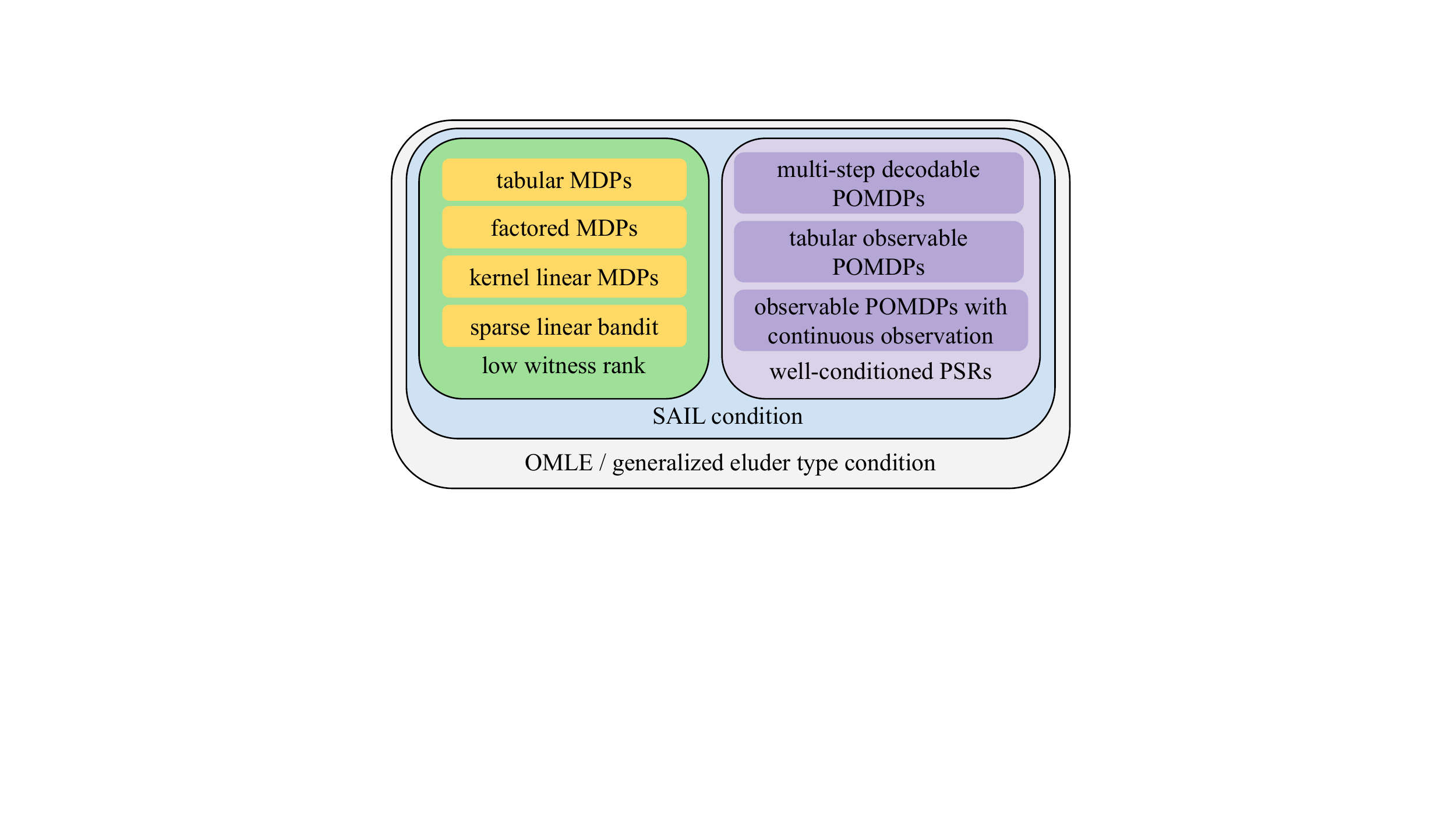}
  \caption{A summary of sequential decision making problems that can be efficiently learned by OMLE.
  }
  \label{fig:overview}
  \vspace{-.8cm}
\end{center}
\end{figure}

Besides above results, this paper also establishes the rigorous formulations for \emph{overparameterized} PSRs, studies their properties, gives rigorous treatment for PSRs with continuous observation, and bounds the bracketing number of tabular PSRs, which might be of independent interests to the community.

\subsection{Technical contribution}
\label{sec:technical_contribution}

Underlying our new results is a set of new techniques for  handling PSRs with infinite  observations.

\begin{itemize}
\item \textbf{New sharp elliptical potential style lemma for \SAIL.} A crucial component for analyzing optimistic algorithms is pigeon-hole's principle \cite{azar2017minimax, jin2018q} or so-called elliptical potential lemma \cite{lattimore2020bandit} which ensures that the size of confidence set is shrinking fast enough to guarantee near-optimality of the learned policy after a small number of rounds. Standard elliptical potential lemma applies to linear bandits whose reward is a linear function of form $\la \theta, x \ra$. To analyze POMDPs or PSRs, we establish a new generalized version of elliptical potential lemma which applies to Summation of Absolute values of Independent biLinear functions (SAIL) of form $\sum_{i=1}^m \sum_{j=1}^n |\la \theta_i, x_j \ra|$. A similar problem has been studied in \cite{liu2022partially} but the bounds derived therein \emph{depend} on $m, n$, which scales with the size of observation space in PSRs/POMDPs. Such result becomes vacuous in the infinite-observation setting. We address this issue by developing a significantly sharper argument, which gives bounds completely \emph{independent} of $m, n$ (thus the size of observation space). Please see Appendix \ref{app:l1-pigeon-hole} for details. 

\item \textbf{Projection that approximately preserve the $\ell_1$-norm.} To apply the new sharp elliptical potential lemma discussed above, we need a projection operator which maps a function (or high-dimensional vector) defined on the observation space  into a low-dimensional Euclidean space whose dimension is equal to the intrinsic complexity of POMDPs or PSRs. Our analysis further requires the resulting vector after projection to have a small $\ell_1$-norm. In POMDPs, we can directly construct such a projection by taking the pseudo-inverse of emission matrices (as in \cite{liu2022partially}). However, such choice does not apply to PSRs as it has less structure than POMDPs. To address this issue, we consider the general problem of projecting high-dimensional vectors (that lie in a low-dimensional subspace) to a low-dimensional Euclidean space without significantly increasing their $\ell_1$-norm. We achieve so by constructing a projection using the Barycentric spanner technique. Please see Lemma \ref{lem:barycentric} and Step 3 in Appendix \ref{app:psr-main} for details.

\item \textbf{Matrix pseudo-inverse with small $\ell_1$-norm.} To establish efficient guarantees for learning observable POMDPs, we need to construct operator $\bM$ as in the framework of PSR, and bound the $\ell_1$-norm of the operator. All previous works \cite[e.g.,][etc]{azizzadenesheli2016reinforcement,jin2020sample,liu2022partially,zhan2022pac} construct such operators using the pseudo-inverse of emission matrices $\O^\dagger$, whose $\ell_1$-norm scales with the size of observation space even under the \emph{observable} condition (Condition \ref{cond:pomdp}). Such dependency prevents their analysis from generalizing to the infinite observation setting. We address this issue by adding a matrix $\Y$ that lies in the subspace complementary to $\O^\dagger$. We show that with an optimal choice of $\Y$,  $\O^\dagger + \Y$ has a small $\ell_1$-norm which is independent of the size of observation space. To our best knowledge, this operator design is completely new and has not been considered in  the previous POMDP literature. 
 \end{itemize}

\subsection{Related works}

Reinforcement learning theory has been intensively studied in recent years. For the purpose of this work, we will mainly focus on the works that provide finite-sample complexity guarantees for RL problems related to either partial observability or function approximation. 

\paragraph{Learning POMDPs.}
There is a line of well-known worst-case hardness results for planning and learning in POMDPs on both computation and sample complexity:
\cite{papadimitriou1987complexity} shows that planning (finding the optimal policy given the POMDP model) is PSPACE-complete. \citep{vlassis2012computational} further shows that even finding the optimal memoryless policy remains NP-hard. On the other hand, even with infinite computational resource, \citep{krishnamurthy2016pac} proves that learning near-optimal policy of a POMDP in the worst case requires a number of samples that is exponential in the episode length.

Due to these worst-case hardness barrier, recent line of works have started to develop positive results for learning POMDPs under structural assumptions. One such structure is \emph{observability} \cite{golowich2022planning} (or \emph{weakly-revealing condition} \cite{jin2020sample,liu2022partially}; they two are equivalent in the tabular setting up to a polynomial factor). Observability requires the emission matrix to be rank-$S$ (the number of latent states) so that different belief states would induce distinct distributions over observations. A sequence of works  \citep[e.g.,][]{guo2016pac,azizzadenesheli2016reinforcement,jin2020sample,xiong2021sublinear,cai2022reinforcement,wang2022embed} have applied the method of moments \citep{hsu2012spectral,anandkumar2014tensor} to learn  POMDPs and provide polynomial sample guarantees. All of them except for  \citep{jin2020sample,cai2022reinforcement,wang2022embed} require further strong assumptions to learn tabular  observable POMDPs in the undercomplete setting (where the number of states is less than the number of observations), and none of them provably learn tabular observable POMDPs in the overcomplete setting.
Recently,  \cite{liu2022partially}  prove that observability alone is sufficient to guarantee sample-efficient learning of POMDPs in both undercomplete and overcomplete settings. In particular, \cite{liu2022partially} achieve so by designing OMLE algorithm, which is mostly related to our works.  The main algorithm \omle\ in this paper inherits the key algorithmic designs from \cite{liu2022partially} but the results we derived are much stronger and more general than \cite{liu2022partially}---while the results of \cite{liu2022partially} only applies to tabular observable POMDPs, our theory further applies to observable POMDPs with continuous observation,  multi-step docodable POMDPs, well-conditioned PSRs, and RL problems with low witness rank, all of which are beyond the capability of the techniques in \cite{liu2022partially}. Finally, two recent works \citep{golowich2022planning, golowich2022learning} provide new algorithms for planning and learning observable POMDPs with quasi-polynomial computation complexity and sample complexity respectively, which significantly improve over prior works in terms of computation. However, their sample complexity is not polynomial, and is thus worse than ours.

Another structure commonly studied by the community is \emph{decodability}. Block MDPs  \citep[see, e.g.,][]{krishnamurthy2016pac,jiang2017contextual,du2019provably, misra2020kinematic} assume that the current latent state can be deterministically decoded from the current observation. Technically speaking, block MDPs are not partially observable problems, as all information about the state is contained in the current observation.
Multi-step decodable POMDPs \cite{efroni2022provable} are partially observable problems, which generalize block MDPs by assuming the latent state can be only decoded from the most recent $m$-step action-observations. \cite{efroni2022provable} proves polynomial sample complexity guarantees for learning multi-step decodable POMDPs with infinite observations under value-based function approximation. While this paper also covers the setting of multi-step decodable POMDPs and derives similar guarantees, we consider model-based function approximation due to the algorithm style of OMLE, which is slightly different from \cite{efroni2022provable}.

\begin{table}[t]
  \renewcommand{\arraystretch}{1.2}
  \begin{center}
  \begin{tabular}{|c|c|c|c|c|}
       \hline
  & \textbf{RL problems} & \textbf{\makecell{Witness\\rank \cite{sun2019model}}} & \textbf{\makecell{\cite{zhan2022pac}\\(concurrent)}} & \textbf{\makecell{This\\work}} \\ \hline
  \multirow{5}{*}{\makecell{Partially\\observable}} 
  & tabular observable POMDPs \cite{jin2020sample,liu2022partially} &  & $*$  & $\checkmark$  \\\hhline{|~----|}
  & \cellcolor{light-gray} observable POMDPs with cts observation  &  &  & $\checkmark$ \\ \hhline{|~----|}
  & multistep decodable POMDPs \cite{efroni2022provable} &  & $*$ & $\checkmark$ \\\hhline{|~----|}
  & \cellcolor{light-gray} tabular well-conditioned PSRs &  & $*$ & $\checkmark$\\\hhline{|~----|}
  & \cellcolor{light-gray} well-conditioned PSRs with cts observation &  &  & $\checkmark$ \\\hline
  \multirow{3}{*}{\makecell{Fully\\observable}} 
  &  factored MDPs & $\checkmark$ &  & $\checkmark$\\\hhline{|~----|}
  &  kernel linear MDPs \citep{yang2020reinforcement} & $\checkmark$ &  & $\checkmark$\\\hhline{|~----|}
  &  sparse linear bandits & $\checkmark$ &  & $\checkmark$\\\hline
  \end{tabular}
  \end{center}
  \vspace{-2ex}
    \caption{A table of whether theoretical frameworks provide sample-efficient guarantees for learning various classes of RL problems. Grey color denotes the classes of RL problems that are first identified and proven tractable by this paper. $*$ means that the framework only applies to a limited subset of the problems.\protect\footnotemark}
  \vspace{-1ex}  
\end{table}

\footnotetext{For all three classes of problems, \cite{zhan2022pac} requires extra regularity assumptions (Assumption 1 in \cite{zhan2022pac}).}

\paragraph{Learning Predictive State Representations (PSRs).}
Predictive state representations (PSRs) \citep{littman2001predictive,singh2003learning,singh2012predictive,james2004planning,wolfe2005learning,mccracken2005online,bowling2006learning} is a generic approach for modeling low-rank sequential decision making problems, which assumes there exist a small number of future experiments (called core tests) so that conditioning on any history,  the future dynamics of the system is a deterministic function of  the conditional probability of the core tests. 
\cite{jaeger1998discrete} proved that any POMDPs can be modeled by PSRs while there exist PSRs that cannot be represented by any finite-state POMDPs in the infinite horizon setting.   
Similar to the case of POMDPs, spectral learning-style methods have been applied to learning PSRs 
and polynomial sample-efficiency guarantees are derived  under certain full-rank assumption, exploratory assumptions, and more \citep{boots2011closing,hefny2015supervised,zhang2021reinforcement}. Such exploratory assumptions significantly reduce the difficulty of exploration problem. Compared to the above works, this paper directly address the challenge of online exploration and provides the first polynomial sample complexity guarantees for learning any \emph{well-conditioned} PSR without any further assumptions.

During the preparation of this draft, we noticed a concurrent work \cite{zhan2022pac} that similarly extends the idea of OMLE \cite{liu2022partially} to provide sample-efficient guarantees for subclasses of PSRs. We remark the key difference comparing to our results on PSRs: (1) the tractable subclass of PSRs identified by \cite{zhan2022pac} is more restrictive than ours. In particular, their class do not include standard observable POMDPs or multistep decodable POMDPs unless extra assumptions are made; (2) their sample complexity depends on the size of observation space, which does not apply to the setting of continuous observation.

A work in parallel \citep{chen2022partially} (released on the same day as this work) also applies OMLE \cite{liu2022partially} to learn a large subclass of PSRs, which is similar to the well-conditioned PSRs identified in this paper. Their framework can address problems with \emph{countably} infinitely many observations, and they use Hellinger distance-based analysis (instead of TV distance used in this paper), which allows them to use $\ell_2$-based elliptical potential lemma, and obtain sharper sample complexity guarantees. However, comparing to our $\ell_1$-based elliptical potential lemma (Lemma \ref{prop:pigeon-hole}), their $\ell_2$-based version requires a stronger precondition, and is thus applicable to a smaller set of problems. For instance, their techniques do not directly extend to the problems under SAIL condition (see Section \ref{sec:beyond-low-rank}). 
Finally, \citep{chen2022partially} did not provide efficient guarantees for learning observable POMDPs with continuous observation (e.g., GM-POMDPs). Despite that this class of problems belongs to well-conditioned PSRs (and the counterparts in \cite{chen2022partially}),  proving this inclusion relation further requires our new techniques on matrix pseudo-inverse with small $\ell_1$-norm  (Section \ref{sec:technical_contribution} and Appendix \ref{app:l1-inverse}), a rigorous formulation for addressing PSRs with \emph{uncountable} many or continuous observations (Appendix \ref{sec:psr_cts}), and a new core tests design technique (Appendix \ref{app:observable-continuous}), which are not present in \cite{chen2022partially}. In sum, \citep{chen2022partially} does not overlap with our  contribution (1) and (3) highlighted in the abstract.

\paragraph{RL under general function approximation.} Many works \cite[see, e.g.,][]{jiang2017contextual,sun2019model,jin2021bellman,du2021bilinear,foster2021statistical} have tried to develop unified theoretical frameworks for RL under general function approximation. This line of works typically introduce new complexity measures, and then design sample-efficient algorithms for learning RL problems of low complexity under those complexity measures. Most applications of their frameworks are fully observable problems and they have only been proved to address very special partially observable problems such as reactive POMDPs or reactive PSRs \cite{jiang2017contextual}, where the optimal value only depends on the current-step observation-action pair. The complexity measures in \cite{jiang2017contextual,sun2019model,jin2021bellman,du2021bilinear} are Bellman rank \cite{jiang2017contextual}, witness rank \cite{sun2019model} (the model-based version) and their extensions and variants. These complexities are not small even in the basic partially observable problems such as tabular observable POMDPs. 
\cite{foster2021statistical} proposes a complexity measure named DEC, which can be potentially small for certain partially observable problems considered in this paper. However, showing these classes of problems to have a low DEC is a highly non-trivial task by itself, and is very likely to require the results and techniques developed in this paper. Distinguished from these works, this paper develops a general framework that provably address a wide range of partially observable problems. Importantly, many tractable partially observable problems are newly identified by this paper, which are not known to prior works on general frameworks. This greatly expand our understanding for RL under partial observability. Finally, we remark that since the focus of this paper is OMLE algorithm which is a model-based algorithm. In rigorous comparison, we restrict ourselves to comparing against prior frameworks on model-based RL \cite{sun2019model} instead of model-free RL \cite{jiang2017contextual,jin2021bellman,du2021bilinear}.

A recent work \citep{uehara2022provably} developed a general framework for partially observable RL under bilinear actor-critic framework. Different from our work, their results do not provide polynomial sample guarantee for learning the near-optimal policy for the basic tabular observable POMDPs. We remark that a concurrent work \cite{chen2022partially} proves that well-conditioned PSRs also have low DEC.
Plugging this result into their recent strengthened framework for learning general problems of low DEC \cite{chen2022unified}, they can also provide a polynomial sample complexity guarantee  for learning well-conditioned PSRs.


\section{Preliminaries}

\paragraph{Notation.} For a positive integer $n$, we let $[n] = \{1,\dots,n\}$. We use the notation $x_{1:n}$ to denote the sequence $(x_1, \ldots, x_n)$. We use bold upper-case letters  $\B$ to denote matrices and bold lower-case letters $\b$ to denote vectors.  Given a matrix $\B\in\R^{m\times n}$, we use $\B_{ij}$ to denote its $(i,j)^{\rm th}$ entry, $\|\B\|_p = \max_{\|\z\|\neq 0} \|\B \z\|_p/\|\z\|_p$ to denote its matrix $p$-norm, and $\B^\dagger$ to denote its Moore-Penrose inverse. For a vector $\b\in\R^m$, we use $\b_i$ to denote its $i\th$ entry, $\norm{\b}_p$ to denote its vector $p$-norm, and $\diag(\b)$ to denote a diagonal  matrix with $[\diag(\b)]_{ii}=\b_i$. Given a set $\cX$, we use $2^\cX$ to denote the collections of all subsets of $\cX$.

\subsection{Sequential decision making}
 We consider the general episodic sequential decision making problems, which can be specified by a tuple $(\fO,\fA, H, \P, R)$. Here $\fO$ and $\fA$ denote the space of observation and  action respectively.  $H$ denotes the length of each episode. $\P = \{\P_{h}\}_{h=1}^H$ specifies the joint distribution over observations $o_{1:H}$ conditioned on action sequence $a_{1:H}$, which can be factorized as:
\begin{equation*}
    \P(o_{1:H}|a_{1:H}) = \prod_{h=1}^H \P_h(o_h|o_{1:h-1}, a_{1:h-1})
\end{equation*}
$\P$ is also known as the \emph{system dynamics}. $R=\{R_h\}_{h\in[H]}$ are the known reward functions from $\fO$ to $[0,1]$  such that the agent will receive reward $R_h(o)$ when she observes $o\in\fO$ at step $h$.\footnote{This is equivalent to assuming that reward information is contained in the observation. We consider this setup to avoid the leakage of information about the dynamic system through rewards beyond observations. We remark that all  results in this paper immediately extend to the more general setting where reward $R(\tau_H)$ can be a function of the entire observation-action trajectory $\tau_H = (o_{1:H}, a_{1:H})$, and is only received at the end of each episode.} To simplify the presentation, we also use the notation $\Pb(o_{1:h}, a_{1:h}):=\P(o_{1:h}|a_{1:h})$  for any trajectory $(o_{1:h}, a_{1:h})$ to represent the conditional probability over observations conditioned on actions. Throughout this paper we assume the finite action space $\fA$ with $|\fA| = A$, but allow infinitely large observation space $\fO$.

At each step $h \in [H]$ of each episode, the environment first samples an observation $o_{h}$ according to $\P_h(\cdot|o_{1:h-1}, a_{1:h-1})$ based on the observation-action sequence in the past, and then the agent takes an action $a_h$. The current episode terminates immediately after $a_{H}$ is taken.

\paragraph{Policy and value.} 
A policy $\pi=\{\pi_h\}_{h=1}^H$ is a collection of $H$ functions where $\pi_h: (\fO\times\fA)^{h-1}\times\fO \rightarrow \Delta_A$ maps a length-$h$ observation-action sequence to a distribution over actions.  
Given a policy $\pi$, we use $V^\pi$ to denote its value, which is defined as the expected total reward received under policy $\pi$:  
\begin{equation*}
    V^\pi := \E_\pi \left[ \sum_{h=1}^H R_h(o_h)\right],
\end{equation*}
where the expectation is with respect to the randomness within the system dynamics $\P$ and the policy $\pi$. 

Since the action space and the episode length are both finite, the maximal value over all policies $\max_\pi V^\pi$ always exists. We call $\max_\pi V^\pi$  the optimal value denoted by  $V^\star$, and call the policy that achieves this optimal value  the optimal policy denoted by  $\pi^\star$.

\paragraph{Learning objective.} Our goal is to learn an $\epsilon$-optimal policy $\pi$ in the sense that $V^\pi \ge V^\star-\epsilon$, using a number of samples polynomial in all relevant parameters. We also consider the problem of learning with low regret. Suppose the agent interacts with the sequential decision making problem for $K$ episodes, and plays  policy $\pi_k$ in the $k^\text{th}$ episode for all $k \in [K]$. The total (expected) regret is then defined as:
\begin{equation*}
 \text{Regret}(K) = \sum_{k=1}^K [V^\star - V^{\pi_k}] .
\end{equation*}
The question then is whether a learner can keep the regret small. 

Below we describe several widely studied reinforcement learning models that can be cast into the framework of sequential interactive decision making. 
\begin{example}[Contextual bandit] 
In a contextual bandit, the observation is the context of the problem. The episode length $H$ is equal to $1$ and there exists a distribution $\mu\in\Delta_{\fO}$ so that the first-step observation $o_1$ of each episode is independently sampled from $\mu$, i.e., $\P(o_1=\cdot)=\mu$.
\end{example}

\begin{example}[MDP] In Markov decision process (MDP), the observation is the state of MDP. The observation-action pair satisfies the Markovian property. That is, there exist a collection of transition kernels $\T=\{\T_{h}\}_{h=1}^H$ so that $\P_h(o_h|o_{1:h-1}, a_{1:h-1}) = \T_{h,a_{h-1}}(o_h\mid o_{h-1})$ for all $h\in[H]$.
\end{example}

\begin{example}[POMDP]\label{defn:pomdp} In partially observable Markov decision process (POMDP), there is an additional latent  state space $\fS$, a collection of transition kernels $\T=\{\T_{h}\}_{h=1}^H$, an initial distribution over the latent state space $\mu_1$, and a collection of emission kernels $\O=\{\O_{h}\}_{h=1}^H$. 
In a POMDP, the latent states are \emph{hidden} from the agent. At the beginning of each episode, the environment samples an initial state $s_1$ from $\mu_1$. 
At each step $h\in[H]$, the agent first observes $o_h$ that is sampled from $\O_h(\cdot\mid s_h)$, the emission distribution of hidden state $s_h$ at step $h$. Then the agent takes action $a_h$ and  receives reward $r_h(o_h)$.
After this, the environment transitions to $s_{h+1}$, whose distribution follows $\T_{h,a_h}(\cdot\mid s_h)$.
\end{example}

We note that MDPs are fully observable models while POMDPs are partially observable models. Distinguished from MDPs where the optimal policies only depend  on the current observation, the near-optimal policies of POMDPs in general depend on the entire history.
This makes both learning and planning in POMDPs significantly more challenging than  in MDPs.

\subsection{Model-based function approximation}
We consider the interactive decision making problems where the observation space $\fO$, the action space $\fA$, the horizon $H$, and the reward function $R$ are known, while the system dynamics $\P$ is unknown. To address infinitely large observation space, we consider the setting where we are given a model class $\Theta$, which specifies a class of system dynamics $\{\P_\theta\}_{\theta \in \Theta}$.
We denote the system dynamics of the real model as $\P_{\theta^\star}$. Throughout this paper, we make the following realizability assumption.

\begin{assumption}[Realizability]
\label{asp:realizability}
$\theta^\star \in \Theta$.
\end{assumption}

Realizability states that the true model resides in the given model class, so there is no misspecification error. Realizability is a standard assumption which appears in a majority of theoretical works in RL.

Following the convention in analyzing MLE  \cite[e.g.,][]{geer2000empirical}, we use the bracketing number to control the complexity of the model class $\Theta$.

\begin{definition}[Bracketing number]
Given two functions $l$ and $u$, the \emph{bracket} $[l, u]$ is the set of all functions $f$ satisfying
$l \le f \le u$. An $\epsilon$-bracket is a bracket $[l, u]$ with $\norm{u-l} < \epsilon$ . The bracketing number
$\cN_{[\cdot]}(\epsilon, \cF, \norm{\cdot})$ is the minimum number of $\epsilon$-brackets needed to cover $\cF$.
\end{definition}

The bracketing number is required in the existing MLE analysis \cite{geer2000empirical}, which is in general equal or greater than the standard covering number.
Across this paper, we use $\cN_{\Theta}(\epsilon)$ to denote the $\epsilon$-bracketing number of  function class $\{\P_\theta\}_{\theta \in \Theta}$
with respect to the policy-weighted $\ell_1$-distance, where the policy-weighted $\ell_1$-distance between two functions $l$ and $u$ defined on  $(\fO\times\fA)^H$ is equal to
$\max_\pi \sum_{\tau_H} |l(\tau_H) - u(\tau_H)|\times\pi(\tau_H)$ \footnote{In the settings with infinite observations, we replace the summation with integral, i.e.,  $\max_\pi \int_{\tau_H} |l(\tau_H) - u(\tau_H)|\times\pi(\tau_H)d \tau_H$.}, where the maximum is taken over all policy $\pi$. Intuitively, we need this maximization, because $\P_{\theta}$ is a conditional probability of observations given actions.

\section{Optimistic MLE} 
\label{sec:OMLE_algorithm}

In this section, we present the generic \emph{Optimistic Maximum Likelihood Estimation} (\omle) algorithm. 
Moreover, we  provide a general sufficient condition---a generalized eluder-type condition (Condition \ref{cond:eluder}), and prove that for any RL problems satisfying this condition, OMLE learns them within a polynomial number of samples.

\subsection{Algorithm}

\begin{algorithm}[t]
    \caption{\textsc{\textbf{O}ptimistic \textbf{M}aximum \textbf{L}ikelihood \textbf{E}stimation} $(\Theta,\beta)$}
 \begin{algorithmic}[1]\label{alg:omle}
 \STATE \textbf{initialize:} $\cB^1 = \Theta$, $\mathfrak{D}=\{\}$ 
    \FOR{$k=1,\ldots,K$}
    \STATE compute $(\theta^k,\pi^k) \leftarrow \argmax_{\theta\in\cB^k, \pi} V^\pi(\theta)$ \label{line:optimisim}
    \STATE compute exploration policies $\Pi_{\exp}^k \leftarrow \Pi_{\exp}(\pi^k)$
    \label{line:exploration-start}
    \FOR{each $\pi \in \Pi_{\exp}^k$}
    \STATE execute policy $\pi$ and collect a trajectory $\tau = (o_1, a_1, \ldots, o_H, a_H)$
    \STATE add $(\pi, \tau)$ into dataset $\fD$\label{line:exploration-end}
    \ENDFOR
    \STATE update confidence set
    \vspace{-3mm}
    \begin{equation*}
    \cB^{k+1} = \bigg\{\hat\theta \in \Theta: \sum_{(\pi,\tau)\in\fD} \log \P_{{\hat\theta}}^{\pi} (\tau)
    \ge \max_{ \theta' \in\Theta} \sum_{(\pi,\tau)\in\fD} \log \P^{\pi}_{{\theta'}}(\tau) -\beta  \bigg\} \bigcap \cB^k
    \vspace{-4mm}
    \end{equation*}\label{line:confiSet}
    \ENDFOR
    \STATE \textbf{output} $\pi^{\rm out}$ that is a uniform mixture of  $\{\pi^k\}_{k=1}^K$
 \end{algorithmic}
 \end{algorithm}

The pseudocode of \omle~is provided in Algorithm \ref{alg:omle}.
We remark that the \omle~algorithm was first proposed in \cite{liu2022partially} for sample-efficient learning of weakly-revealing POMDPs and here we introduce some extra flexibility in the data collection steps to handle more general learning problems.

Formally, \omle~is a model-based algorithm which takes  as input a model class $\Theta$, and executes the following three key steps in each iteration $k\in[K]$:
\begin{itemize}
    \item \textbf{Optimistic planning} (Line \ref{line:optimisim}):  \omle~computes the most optimistic model $\theta^k$ in the model confidence set $\cB^k$ and its corresponding optimal policy $\pi^k$.
    \item \textbf{Data collection} (Line \ref{line:exploration-start}-\ref{line:exploration-end}): Based on the optimistic policy $\pi^k$, \omle\ constructs a set of exploration policies $\Pi_{\rm exp}(\pi^k)$ and then the learner executes each of them to collect a trajectory. As will be explained in later sections, these exploration policies could simply be $\pi^k$ or some composite policies that combine $\pi^k$ with random or certain action sequences, depending on the structure of the problems to solve. Intuitively, by actively trying exploratory action sequences after $\pi^k$, the learner could gather more information about the system dynamics under $\pi^{k}$. As an example, when applying \omle~to learning PSRs, the exploration policies will execute the core action sequences after $\pi^k$, which we will explain in details in Section \ref{sec:psr}. 
    \item  \textbf{Confidence set update}  (Line \ref{line:confiSet}):  Finally, \omle~updates the model confidence set using the newly collected data. Specifically, it constructs  $\cB^{k+1}$ to include all the models  $\theta\in\Theta$ whose log likelihood on all the historical data collected so far is close to the maximal log likelihood up to an  additive factor $\beta$. This can be viewed as a relaxation of the classic maximal likelihood estimation (MLE) approach which chooses the model estimate to be the one exactly maximizing the log likelihood. In particular, when $\beta=0$, $\cB^{k+1}$ reduces to the solution set of MLE. One important reason behind this construction is that by choosing the relaxation parameter $\beta$ properly, we can guarantee the true model $\theta^\star$ lies in the confidence set for all $k\in[K]$ with high probability, under the realizability assumption.
\end{itemize}

\subsection{Theoretical guarantees}

In this section, we present the theoretical guarantees for \omle. 
To present our results in the most general form, we first introduce a sufficient condition, called \emph{generalized  eluder-type condition}. We then provide the sample-efficiency guarantees for \omle~ in learning any RL problems that satisfy this condition. Let $\P_\theta^\pi$ denote the distribution over $(o,a,r)_{1:H}$ induced by executing policy $\pi$ in model $\theta$.

\begin{condition}[Generalized eluder-type condition] \label{cond:eluder}
There exists a real number $d_{\Theta} \in \R_+$  and a function $\xi$ such that: for any $(K,\Delta)\in\N\times\R^+$,  and  for the models $\{\theta^k\}_{k\in[K]}$ and the policies $\{\pi^k\}_{k\in[K]}$, $\{\Pi^k_{\exp}\}_{k\in[K]}$ in Algorithm \ref{alg:omle}, we have
\begin{equation} \label{eq:generalized_eluder}
    \forall k\in[K], ~\sum_{t=1}^{k-1} \sum_{\pi \in \Pi^t_{\exp}} d_\TV^2(\P_{\theta^k}^{\pi}, \P_{\theta^\star}^{\pi}) \le  \Delta   
    ~~\Rightarrow~~ \sum_{k=1}^{K} d_\TV( \P_{\theta^k}^{\pi^k} , \P_{\theta^\star}^{\pi^k}) \le  \xi(d_{\Theta}, K, \Delta,|\Pie|)
\end{equation}
where $|\Pie| := \max_\pi |\Pie(\pi)|$ is the largest possible number of exploration policies in each iteration.
\end{condition}
At a high level, Condition \ref{cond:eluder} resembles the pigeonhole principle   and the elliptical potential lemma widely used in tabular MDPs \citep[e.g.,][]{azar2017minimax,jin2018q} and linear bandits/MDPs \citep[e.g., ][]{lattimore2020bandit,jin2020provably} respectively. Such type of condition is widely used as a sufficient condition for algorithms using optimistic exploration \cite{russo2013eluder}.
Importantly, \emph{we will prove that Condition \ref{cond:eluder}  holds for all the problems studied in this paper}, with moderate $d_\Theta$ and function $\xi$ whose leading term scales as  $\tilde{\cO}(\sqrt{d_\Theta \Delta |\Pi_{\rm exp}| K})$.

For an intuitive understanding of this generalized eluder-type condition, imagine that in each $k^{\rm th}$ iteration, the learner chooses a model $\theta^k$ such that $\theta^k$ can accurately predict the behavior of the historical exploration  policies in  $\Pi^1_{\rm exp},\ldots,\Pi^{k-1}_{\rm exp}$ up to cumulative error $\Delta$ (i.e., the left inequality of \eqref{eq:generalized_eluder}). Since $\theta^k$ could be different from $\theta^\star$, the learner will still suffer an instantaneous error in predicting the behavior of policy $\pi_k$ using model $\theta_k$. 
And $\xi(d_{\Theta}, K, \Delta,|\Pie|)$ essentially measures the worst-case growth rate of the cumulative instantaneous error with respect to $K$.

The key motivation behind  Condition \ref{cond:eluder} is that because of the way \omle~constructs the confidence set $\cB^k$, we can use the classical analysis of MLE \cite{geer2000empirical} to guarantee that any model inside $\cB^k$
is close to the true model $\theta^\star$ in TV-distance under the  historical policies in $\Pi^1_{\rm exp},\ldots,\Pi^{k-1}_{\rm exp}$
with high probability. 
As a result, if the problem further satisfies the generalized eluder-type condition, then \omle~immediately enjoys low-suboptimality guarantee by the optimism of $\{\pi^k\}_{k=1}^K$ and Condition \ref{cond:eluder}. Formally, we have the following theoretical guarantee for \omle.

\begin{restatable}{theorem}{thmomle} \label{thm:omle}
There exists absolute constant $c_1,c_2>0$ such that for any $\delta\in(0,1]$ and $K\in\N$,  if we choose $\beta = c_1\log(T\cN_{\Theta}(T^{-1})/\delta)$ with $T=K|\Pi_{\rm exp}|$ in OMLE (Algorithm \ref{alg:omle}) and assume Condition \ref{cond:eluder} holds, then with probability at least $1-\delta$, we have
      $\sum_{k=1}^K [V^\star - V^{\pi_k}] \le H \xi(d_{\Theta}, K, 
      c_2\beta,|\Pie|)$.
\end{restatable}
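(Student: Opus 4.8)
The plan is to combine two ingredients: (i) a standard concentration/MLE guarantee showing that the true model $\theta^\star$ stays in every confidence set $\cB^k$ and that every model in $\cB^k$ is TV-close to $\theta^\star$ along the historical exploration policies, and (ii) the generalized eluder-type condition (Condition \ref{cond:eluder}) converting that historical closeness into a bound on the per-round optimality gap. The optimism step ties them together.

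First I would establish the MLE concentration statement. Using the classical bracketing-based analysis of maximum likelihood estimation (e.g., the technique in \cite{geer2000empirical}), with $\beta = c_1\log(T\cN_\Theta(T^{-1})/\delta)$ and $T = K|\Pi_{\rm exp}|$, one shows that with probability at least $1-\delta$, simultaneously for all $k \in [K]$: (a) $\theta^\star \in \cB^k$ (so the argmax in Line~\ref{line:optimisim} is over a set containing the truth, hence $V^{\pi^k}(\theta^k) \ge V^\star$ by optimism and realizability), and (b) every $\hat\theta \in \cB^k$ satisfies $\sum_{t=1}^{k-1}\sum_{\pi \in \Pi^t_{\exp}} d_\TV^2(\P_{\hat\theta}^\pi, \P_{\theta^\star}^\pi) \le c_2\beta$ for an absolute constant $c_2$. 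The key point is that the data in $\fD$ is collected sequentially (a martingale structure), the log-likelihood ratio test defining $\cB^{k+1}$ is exactly the right object for the bracketing argument, and the policy-weighted $\ell_1$ bracketing number $\cN_\Theta$ is precisely what the analysis needs because $\P_\theta$ is a conditional law of observations given actions. I would cite or restate this as a lemma; it is the same in-expectation-to-high-probability conversion used in \cite{liu2022partially}.

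Next, apply Condition \ref{cond:eluder} with $\Delta = c_2\beta$ to the models $\{\theta^k\}$ and policies $\{\pi^k\}, \{\Pi^k_{\exp}\}$ actually produced by the algorithm: item~(b) is exactly the hypothesis on the left of \eqref{eq:generalized_eluder}, so we conclude $\sum_{k=1}^K d_\TV(\P_{\theta^k}^{\pi^k}, \P_{\theta^\star}^{\pi^k}) \le \xi(d_\Theta, K, c_2\beta, |\Pie|)$. Finally, bound the regret: by optimism, $V^\star - V^{\pi^k} \le V^{\pi^k}(\theta^k) - V^{\pi^k}(\theta^\star)$, and since rewards lie in $[0,1]$ and the horizon is $H$, the value difference between two models under the same policy is at most $H$ times the total-variation distance between the induced trajectory distributions, i.e. $V^{\pi^k}(\theta^k) - V^{\pi^k}(\theta^\star) \le H\, d_\TV(\P_{\theta^k}^{\pi^k}, \P_{\theta^\star}^{\pi^k})$. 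Summing over $k$ and plugging in the eluder bound gives $\sum_{k=1}^K[V^\star - V^{\pi_k}] \le H\,\xi(d_\Theta, K, c_2\beta, |\Pie|)$, as claimed.

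The main obstacle is getting the MLE concentration lemma stated and applied correctly in the sequential, policy-conditioned setting: one must be careful that the exploration policies $\Pi^k_{\exp}$ are $\cB^k$-measurable (chosen before the $k$-th batch of data), that the bracketing number is taken with respect to the policy-weighted $\ell_1$ distance over length-$H$ trajectories (hence the $\max_\pi$ in the definition of $\cN_\Theta$), and that the union bound over $k$ and the conversion from the squared-Hellinger / squared-TV in-expectation bound to the stated high-probability sum are handled with the right absolute constants $c_1, c_2$. The reward-to-TV step and the optimism step are routine once the terminology ($V^\pi(\theta)$, $\P_\theta^\pi$) is in place; the eluder step is a black box by assumption.
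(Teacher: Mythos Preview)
Your proposal is correct and matches the paper's proof essentially step for step: the paper invokes Proposition~\ref{prop:mle-optimisim} for $\theta^\star\in\cB^k$ (optimism), Proposition~\ref{prop:mle-valid} for the squared-TV bound on historical exploration policies (your item (b)), and then Condition~\ref{cond:eluder} plus the $H\cdot d_{\TV}$ value bound exactly as you describe. The technical concern you flag about the sequential MLE concentration is precisely what those two propositions (adapted from \cite{liu2022partially}) handle.
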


As mentioned before, for all problems studied in this paper, the leading term (in terms of $K$ dependency) of function $\xi$   scales as $\tilde{\cO}(\sqrt{d_\Theta \beta |\Pie| K})$. 
Then, Theorem \ref{thm:omle} immediately leads to a guarantee $\sum_{k=1}^K [V^\star - V^{\pi_k}] \le \tilde{\cO}(H\sqrt{d_\Theta \beta |\Pie| K}) + o(\sqrt{K})$, which gives the optimal $\sqrt{K}$ dependency up to a polylogarithmic factor. We remark that Theorem \ref{thm:omle} is not a regret guarantee unless $\Pi^k_{\exp}=\{\pi^k\}$, because it is the policies in  $\{\Pi^k_{\exp}\}_{k=1}^K$  that are executed by \omle, not $\{\pi^k\}_{k=1}^K$.

\paragraph{Sample complexity.}  
Since the output policy $\pi^{\rm out}$ is a uniform mixture of $\{\pi^k\}_{k=1}^K$, we have $V^{\pi^{\rm out}} = (\sum_{k=1}^K V^{\pi_k})/K$.  
 As a result, Theorem \ref{thm:omle} immediately implies that with probability at least $1-\delta$,  $\pi^{\rm out}$ of \omle  is $\epsilon$-optimal as long as $H \xi(d_{\Theta}, K, \beta)/K \le \epsilon/2$. In particular, when $\xi(d_{\Theta}, K, \beta)$ scales as  $\tilde{\cO}(\sqrt{K})$ with respect to $K$, it suffices to run OMLE for $K\ge \tilde{\cO}(\epsilon^{-2})$ episodes, where the dependency on $\epsilon$ is again optimal up to a polylogarithmic factor.


\section{Low-rank Sequential Decision Making}\label{sec:psr}

In this section, we consider an important large class of sequential decision making problems which has a low-rank structure. Note that the entire dynamics of the sequential decision making problem is fully specified by the joint probability $\P(o_{1:H}|a_{1:H})$. We can equivalently view this joint probability as system-dynamic matrices $\{\D_h\}_{h\in[H]}$: for each fixed step $h$, we call an observation-action sequence in previous steps up to $h$, i.e., $\tau_h=(o_{1:h}, a_{1:h})$  a \textbf{history}, and call an observation-action sequence in future steps, i.e., $\omega_h = (o_{h+1:m}, a_{h+1:m})$ for any $m\in [h+1, H]$  a \textbf{future} (or test).  Denote the set of all possible histories at step $h$ as $\cT_h$ and the set of all possible futures as $\Omega_h$. Then we can define the system-dynamic matrix $\D_h \in \R^{|\cT_h| \times |\Omega_h| }$ as a matrix with histories as rows and futures as columns\footnote{For clean presentation, here we write $\D_h$ as a matrix, which requires $|\Omega_h|$ or $|\fO|$ to be finite. We remark that our framework immediately extends to the infinite observation setting. See Appendix \ref{sec:psr_cts} for more details.}
whose entry is specified as
\begin{equation} \label{eq:system_dynamic_matrix}
    [\D_h]_{\tau_h, \omega_h} = \Pb(\tau_h, \omega_h) := \P(o_{1:H}|a_{1:H})
\end{equation}
The \textbf{rank} of the sequential decision making problem is simply defined as $\max_{h\in[H]} \rank (\D_h)$, which is the maximal rank of the system-dynamic matrices $\{\D_h\}_{h\in[H]}$.

\subsection{Predicative state representations}
\label{subsec:psr-defn}

Predicative State Representations (PSRs) are proposed by \cite{littman2001predictive,singh2012predictive} as a generic approach to model low-rank sequential decision making problems. Consider a fixed step $h \in [H-1]$, and denote $r = \rank(\D_h)$. For any integer $d \ge r$, there always exist $d$ columns (denoted as $\cQ_h$) of matrix $\D_h$, such that the submatrix restricted to these columns $\D_h[\cQ_h]$ satisfies $\rank(\D_h[\cQ_h]) = r$. These $d$ columns correspond to $d$ futures $\cQ_h = \{q_1, \ldots, q_d\}$, which are called \emph{core tests}. Throughout this section, we assume all models in our model class $\Theta$ share the same sets of core tests, which are known to the learner. While  most  literature in PSRs often choose $d = r$, in many applications (as shown in the next section), learner only knows a set of core tests with a larger size. Therefore, we also consider the setting when $d > r$, to which we refer as \emph{overparameterized} PSR.

Core tests allow the system-dynamic matrix $\D_h$ to be factorized as follows for certain matrix $\W_h$:
\begin{equation} \label{eq:psr_factorization}
    \D_h = \D_h[\cQ_h] \cdot \W_h\trans,  \quad \D_h[\cQ_h] \in \R^{|\cT_h|\times d}, \W_h \in \R^{|\Omega_h|\times d}
\end{equation}
This implies an important property: for any history $\tau_h$, the  $\tau_h^{\text{th}}$ row of $\D_h[\cQ_h]$, which we denote as $\bpsi(\tau_h) := (\Pb(\tau_h, q_1), \ldots, \Pb(\tau_h, q_d))$, serves as a sufficient statistics for the history $\tau_h$ in predicting the the probabilities of all futures conditioned on $\tau_h$. In sum, PSR captures the state of a dynamic system using $\bpsi(\tau_h)$---a vector of predictions for future tests.

Formally, PSR models the dynamic system using a tuple $(\bphi, \bM, \bpsi_0)$, where $\bphi=\{\bphi_H(o, a)\}_{(o, a) \in \fO \times \fA}$ is a set of vectors where $\bphi_H(o, a) \in \R^{|\cQ_{H-1}|}$; $\bM = \{\bM_h(o, a)\}_{(h, o, a) \in [H-1] \times \fO \times \fA}$ is a set of matrices where $\bM_h(o, a) \in \R^{|\cQ_h| \times |\cQ_{h-1}|}$, and $\bpsi_0$ is a vector in $\R^{|\cQ_0|}$. The tuple satisfies following two equations:
\begin{align}
    \P(o_{1:H}|a_{1:H}) =& \bphi_H(o_H, a_H)\trans \bM_{H-1}(o_{H-1}, a_{H-1}) \cdots \bM_1(o_1, a_1)\bpsi_0,   \label{eq:PSR_1}\\
    \bpsi(o_{1:h}, a_{1:h}) =& \bM_{h}(o_{h}, a_{h}) \cdots \bM_1(o_1, a_1) \bpsi_0,  \label{eq:PSR_2}
\end{align}
for any $h \in [0, H-1]$ and any observation-action sequence $(o_{1:h}, a_{1:h})$.
That is, in PSR, the joint probability $\P(o_{1:H}|a_{1:H})$ can be factorized as a product of matrices and vectors where each matrix only depends on the observation and action at the corresponding step. The second condition \eqref{eq:PSR_2} further requires the product of the first $h$ matrices to have a probabilistic interpretation---the sufficient statistics $\bpsi(o_{1:h}, a_{1:h})$ for the history $(o_{1:h}, a_{1:h})$.
In condition \eqref{eq:PSR_2}, we include the special case $h=0$, where the history $\tau_h$ is empty $\emptyset$, and the condition becomes $\bpsi(\emptyset) := (\Pb(q_1), \ldots, \Pb(q_d)) = \bpsi_0$ for core tests $\{q_1, \ldots, q_d\}$ in $\cQ_0$. 
We call the sets of core tests $\{\cQ_h\}_{h \in [H-1]}$ along with the tuple $(\bphi, \bM, \bpsi_0)$ the PSR representation of the dynamic system. 
Finally, we define the rank of a PSR to be the rank of the underlying sequential decision making problem that the PSR describes (according to \eqref{eq:PSR_1}).

\paragraph{Representation power of PSRs.} The following theorem \cite[see e.g.,][]{littman2001predictive,singh2012predictive} guarantees the existence of such PSR representation $(\bphi, \bM, \bpsi_0)$ for any low-rank sequential decision making problem.

\begin{theorem}\label{thm:sdm-psr}
    Any rank-$r$ sequential decision making problem can be represented by a PSR with sets of core tests whose sizes are no larger than $r$. That is, there always exist sets of core tests $\{\cQ_h\}_{h \in [H-1]}$ with size $\max_{h \in [H-1]}|\cQ_h| \le r$, and a corresponding tuple $(\bphi, \bM, \bpsi_0)$ which jointly satisfy Equation  \eqref{eq:PSR_1} \eqref{eq:PSR_2}.
\end{theorem}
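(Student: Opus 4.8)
The plan is to prove Theorem \ref{thm:sdm-psr} constructively by building the core tests and the PSR operators directly from rank factorizations of the system-dynamic matrices. First I would fix, for each $h \in [H-1]$, a set $\cQ_h$ of $r_h := \rank(\D_h) \le r$ columns of $\D_h$ whose restriction $\D_h[\cQ_h]$ has full column rank $r_h$; such columns exist by the definition of rank. For $h = 0$ the matrix $\D_0$ is a single row (the empty history), so $\cQ_0$ is a single test — or more carefully a set of $r_0 \le r$ tests spanning its (one-dimensional) row, which is consistent with $|\cQ_0| \le r$. With these core tests in hand, the factorization \eqref{eq:psr_factorization} gives matrices $\W_h$ with $\D_h = \D_h[\cQ_h]\W_h\trans$, and I would define $\bpsi(\tau_h)$ to be the $\tau_h$-row of $\D_h[\cQ_h]$, so that $\bpsi(\emptyset) = \bpsi_0$ is just the vector of probabilities of the core tests in $\cQ_0$.

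The key step is to produce the operators $\bM_h(o,a) \in \R^{|\cQ_h|\times|\cQ_{h-1}|}$ satisfying the recursion $\bpsi(\tau_{h-1}, o, a) = \bM_h(o,a)\bpsi(\tau_{h-1})$ for every history $\tau_{h-1}$ and every $(o,a)$. The point is that both sides, as functions of $\tau_{h-1}$, factor through the row space of $\D_{h-1}[\cQ_{h-1}]$: appending the one-step prefix $(o,a)$ to a core test $q \in \cQ_h$ yields a future at step $h-1$, and $\Pb(\tau_{h-1}, o, a, q)$ is an entry of $\D_{h-1}$, hence a linear combination (via $\W_{h-1}$) of the entries $\bpsi(\tau_{h-1})$. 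Concretely, the matrix whose $(\tau_{h-1}, q)$ entry is $\Pb(\tau_{h-1}, (o,a)\circ q)$ is a submatrix of $\D_{h-1}$ and therefore equals $\D_{h-1}[\cQ_{h-1}]$ times some matrix; transposing gives $\bM_h(o,a)$. Because $\D_{h-1}[\cQ_{h-1}]$ has full column rank, this $\bM_h(o,a)$ is uniquely determined, and one can write it explicitly as $\bM_h(o,a) = \big(\D_{h-1}[\cQ_{h-1}]\big)^{\dagger}\,\D_{h-1}[(o,a)\circ\cQ_h]$ transposed appropriately. One then checks \eqref{eq:PSR_2} by induction on $h$: the base case $h=0$ is the definition of $\bpsi_0$, and the inductive step is exactly the recursion just established. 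Finally I would set $\bphi_H(o,a)$ so that \eqref{eq:PSR_1} holds: since $o_{1:H}, a_{1:H}$ determines a length-one future $(o_H, a_H)$ appended to... more precisely, $\P(o_{1:H}|a_{1:H}) = \bpsi(o_{1:H-1}, a_{1:H-1})$ paired against a vector depending only on $(o_H,a_H)$, because the $(o_{1:H-1},a_{1:H-1})$-row of $\D_{H-1}$ restricted to the single future $(o_H,a_H)$ is $\bpsi(o_{1:H-1},a_{1:H-1})\trans \w$ for the appropriate column $\w$ of $\W_{H-1}$; take $\bphi_H(o_H,a_H)$ to be that column. Combining with \eqref{eq:PSR_2} at step $H-1$ yields \eqref{eq:PSR_1}.

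The main obstacle I anticipate is bookkeeping around the "appending a one-step prefix to a core test" operation and making sure the relevant matrices are genuinely submatrices of $\D_{h-1}$ so that the low-rank factorization can be invoked — in particular handling the edge cases $h=1$ (where $\tau_0 = \emptyset$ and $\D_0$ is a row vector) and the terminal step $H$ (where futures have length zero), and being careful that the core tests $\cQ_h$ are futures starting at step $h+1$ while the objects $(o,a)\circ q$ must be valid futures starting at step $h$. None of this is deep, but it requires consistent indexing conventions; once the convention "$\bpsi(\tau_h)$ = row of $\D_h[\cQ_h]$, a sufficient statistic via \eqref{eq:psr_factorization}" is fixed, the operators drop out of pseudo-inverse identities and everything is linear algebra. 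The rank bound $\max_h |\cQ_h| \le r$ is immediate from the construction since $|\cQ_h| = \rank(\D_h) \le \max_h \rank(\D_h) = r$.
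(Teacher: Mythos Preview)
Your approach is correct and is essentially the classical PSR construction: pick $\cQ_h$ as $r_h = \rank(\D_h)$ linearly independent columns of $\D_h$, let $\bpsi(\tau_h)$ be the corresponding row of $\D_h[\cQ_h]$, and read off $\bM_h(o,a)$ from the rows of $\W_{h-1}$ indexed by the one-step-extended futures $(o,a)\circ\cQ_h$ (equivalently $\bM_h(o,a)^\top = (\D_{h-1}[\cQ_{h-1}])^\dagger \D_{h-1}[(o,a)\circ\cQ_h]$), with $\bphi_H(o_H,a_H)$ the column of $\W_{H-1}$ at $(o_H,a_H)$. The recursion and the bound $|\cQ_h| \le r$ follow exactly as you say.

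The paper takes a different route: it first builds an observable-operator-model (OOM) representation $(\b_0,\{\B_h(o,a)\},\{\bup_h\})$ via the SVD $\D_h^\top = \U_h\Sigma_h\V_h^\top$ (Theorem~\ref{thm:oom-represent}), and then converts to a PSR by selecting core tests $\cQ_h$ and setting $\bM_h(o,a) = [\U_h]_{\cQ_h,:}\B_h(o,a)([\U_{h-1}]_{\cQ_{h-1},:})^\dagger$, $\bphi_h^\top = \bup_h^\top([\U_h]_{\cQ_h,:})^\dagger$ (Theorem~\ref{thm:self-consistent}). The reason the paper takes this detour is that the SVD-based construction yields the additional \emph{self-consistency} properties $\bphi_h^\top\sum_o\bM_h(o,a)=\bphi_{h-1}^\top$ and $[\bM_h(o,a)]_{q,:} = \bphi_{h+m}^\top\bM_{h+m}\cdots\bM_h$ (conditions 3--4 of Theorem~\ref{thm:self-consistent}), which the paper needs later to relate the two weight-vector constructions $\m_1,\m_2$ (Proposition~\ref{prop:self-consistent-asp}). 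Your direct construction does not automatically give these, but they are not required for the bare statement of Theorem~\ref{thm:sdm-psr}; for that statement alone your argument is shorter and more elementary.
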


Theorem \ref{thm:sdm-psr} demonstrates the superior expressive power of PSR, in the sense  that any low-rank sequential decision making problem admits an equivalent and compact  PSR representation.
This is in sharp contrast to other models of dynamical systems such as POMDPs which not only implicitly require the system dynamics being low-rank but also explicitly assume the existence of latent nominal states so that the current state of the system can be represented as a probability distribution over these unobservable nominal states. 
As a result, PSRs can model strictly more complex dynamical systems than POMDPs with finite states, e.g., the probability clock introduced in  \cite{jaeger1998discrete}.

\paragraph{Linear weight vectors.} According to low rank factorization \eqref{eq:psr_factorization}, we know there exist linear weight vectors $\{\m(\omega_h)\}_{\omega_h \in \Omega_h}$ only depending on the futures (where $\m(\omega_h)$ can be the $\omega_h^{\text{th}}$ row of $\W_h$ matrix) such that for any future $\omega_h$ and history $\tau_h$, the joint probability can be written in the bilinear form 
\begin{equation}\label{eq:psr-m-psi}
    \Pb(\tau_h, \omega_h) = \m(\omega_h)\trans \bpsi(\tau_h).
\end{equation} 
Equation \eqref{eq:PSR_1} and \eqref{eq:PSR_2} give two natural constructions for weight vectors. First, consider futures of full length $\Omega^{(1)}_h:=(\fO\times\fA)^{H-h}$. Equation \eqref{eq:PSR_1} gives the weight vector of any future $\omega_h = (o_{h+1:H}, a_{h+1:H}) \in \Omega^{(1)}_h$ as:
\begin{equation} \label{eq:weight_vector_1}
    \m_1(\omega_h)\trans =  \bphi_H(o_H, a_H)\trans \bM_{H-1}(o_{H-1}, a_{H-1}) \cdots \bM_{h+1}(o_{h+1}, a_{h+1})
\end{equation}
Next, consider the future set of $\Omega^{(2)}_h:=\fO \times \fA \times \cQ_{h+1}$. Equation \eqref{eq:PSR_2} gives the weight vector of any future $\omega_h = (o_{h+1}, a_{h+1}, q_i) \in \Omega^{(2)}_h$ (where $q_i \in \cQ_{h+1} $ is the $i\th$ core test of $\cQ_{h+1}$)as:
\begin{equation} \label{eq:weight_vector_2}
    \m_2(\omega_h)\trans =  \bm{e}_i \trans \bM_{h+1}(o_{h+1}, a_{h+1})
\end{equation}
We note that in the overparameterized setting ($|\cQ_h| > \rank(\D_h)$), the choice of linear weights $\m(\cdot)$ in \eqref{eq:psr-m-psi} may not be unique. As a result, the constructions in \eqref{eq:weight_vector_1} and \eqref{eq:weight_vector_2} are not necessarily related in general, unless a further \emph{self-consistent} condition is satisfied (see discussion in Appendix \ref{app:self-consistent} for more details).

\paragraph{Core action sequences.} We note that multiple core tests might use the same action sequence $a_{h+1:m}$ for $m \in [h+1, H]$. Therefore, in many occasions, it is convenient to consider the set of core action sequences $\Qa_h$, which is the set of unique actions sequences within the set of core tests $\cQ_h$. We know immediately that $|\Qa_h| \le |\cQ_h|$ and any rank-$r$ system-dynamic matrix $\D_h$ admits at least one set of core action sequences with size $|\Qa_h| \le r$. 
The size of core action sequences $|\Qa_h|$ determines the number of experiments we need to conduct in the dynamic system in order to estimate $\bpsi(\tau_h)$. As we will see later, all our sample complexity results only depend on $|\Qa_h|$ instead of $|\cQ_h|$. WLOG, we assume that no core action sequence is a prefix of another core action sequence.

\paragraph{Continuous observation.} For clean presentation, we write the results in this section using the formulation with finite observations. As we will see, our sample complexity results are completely independent of the number of observations, which allows our results to readily extend to the setting of continuous observation. 
For rigorous treatment, we note that in our current definition each core test is a single observation-action sequence, which has probability $0$ to be observed if the observation is continuous. In Appendix \ref{sec:psr_cts}, we provide two approaches to modify the PSR formulation to resolve this issue. One approach is to consider a dense set of core tests with infinitely many futures, and generalize $\bpsi(\tau)$ and $\bM(o, a)$ from vectors and matrices to functions and linear operators in Hilbert space. Our results remain meaningful even with infinitely many core tests as long as the number of core action sequences is small. The second approach is to generalize the definition of core test to be an event of whether the future lands in a measurable subset of future space. We defer the details of rigorous treatment of continuous observation to Appendix \ref{sec:psr_cts}.

\subsection{Well-conditioned PSRs}

Since PSR includes POMDP as a special case, it naturally inherits all the hardness results of learning POMDPs. In particular, even when the observation space, the action space and the sets of core tests are all small, finding a near-optimal policy still requires an exponential number of samples in the worst case.
\begin{proposition} \label{prop:psr-hard}
There exists a family of PSRs with  $|\fO|,|\fA|,\max_{h}|\cQ_h|=\cO(1)$  so that any algorithm requires at least $\Omega(2^{H})$ samples to learn a $(1/4)$-optimal policy with probability $1/6$ or higher.
\end{proposition}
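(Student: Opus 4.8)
The plan is to establish a lower bound via a reduction from a hard hypothesis-testing problem, following the standard ``combination lock'' or ``tree MDP'' construction adapted to the PSR setting. First I would construct a family of PSRs indexed by a hidden bit string (or hidden path) of length $\Theta(H)$, in which almost all action sequences yield the same observation distribution regardless of the index, but there is a single ``good'' sequence of actions---determined by the hidden index---that leads to a rewarding observation. Concretely, one can take a binary action set $\fA=\{0,1\}$, a constant-size observation space (e.g. $\fO=\{0,1\}$ or a few symbols used only to signal ``you fell off the lock''), and at each step the dynamics are: if the agent has so far played exactly the prefix of the secret string $w\in\{0,1\}^{H}$, it stays ``on path''; otherwise it transitions to an absorbing ``dead'' region. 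Reward $1$ is obtained only by observing the special observation at step $H$, which requires staying on path the whole episode. Since any fixed policy is on path with probability at most $2^{-H}$ over a uniformly random $w$, no algorithm making fewer than $\Omega(2^{H})$ trajectories can identify $w$ (hence cannot play a $(1/4)$-optimal policy) with constant probability; this last step is a routine application of Yao's / Le Cam's two-point (or many-point) argument, or a direct counting argument over the $2^{H}$ indistinguishable branches.

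The substantive part is verifying that this family actually lies in the PSR framework with $|\fO|,|\fA|,\max_h|\cQ_h| = \cO(1)$, i.e. that each member has \emph{bounded rank} of its system-dynamic matrices $\D_h$. This is where I would spend most of the effort. The natural way is to observe that each member is in fact a small POMDP: the latent state can encode (a) whether the agent is still on path, and (b) a bounded amount of ``position'' information---but naively, ``which prefix of $w$ has been matched'' needs $\Theta(H)$ states, which is fine for being a POMDP (states may grow with $H$) but I must still check the \emph{rank} of $\D_h$ is $\cO(1)$, not merely that $|\fS|$ is finite. The key observation is that, conditioned on any history $\tau_h$, the future distribution $\Pb(\cdot\mid\tau_h)$ takes only a constant number of distinct values: either the agent is ``dead'' (future distribution is a fixed, index-independent distribution) or the agent is ``alive and on path'', in which case the future distribution is completely determined by the suffix $w_{h+1:H}$, but from the agent's point of view after conditioning this is again just one of a bounded number of linear combinations---in fact, I claim $\D_h$ has rank $\cO(1)$ because its rows span a space generated by ``dead-state future vector'' plus a one-dimensional family carrying the residual secret. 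Making this rank count precise, and choosing the small core test set $\cQ_h$ accordingly (e.g. one core test that is the all-remaining-actions sequence with the accepting observation, plus a ``null'' test), is the technical heart of the argument. I would use Theorem~\ref{thm:sdm-psr} to go from the rank bound to the existence of a PSR representation with $\max_h|\cQ_h|\le r = \cO(1)$.

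Putting it together: (1) define the family $\{\P_w\}_{w\in\{0,1\}^{H}}$ on $\fO,\fA$ of size $\cO(1)$ with a ``combination lock'' dynamics and accepting observation only at step $H$ on the correct path; (2) compute $V^\star = 1$ for each $P_w$ and show any policy that is $(1/4)$-optimal for $P_w$ must, with probability $\ge 3/4$, play the correct length-$H$ action sequence, hence ``knows'' $w$; (3) show the system-dynamic matrices $\D_h$ of each $P_w$ have rank bounded by an absolute constant, and invoke Theorem~\ref{thm:sdm-psr} to get a PSR representation with $\max_h|\cQ_h|=\cO(1)$; (4) run the information-theoretic lower bound: an algorithm interacting for $K = o(2^H)$ episodes sees, under a uniformly random $w$, a transcript whose distribution is $o(1)$ in total variation from the transcript under a ``null'' instance with no accepting path, so it cannot output a $(1/4)$-optimal policy with probability exceeding, say, $1/6$. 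Step (3)---the constant rank bound and the explicit core-test construction---is the main obstacle; everything else is standard. One subtlety I would be careful about: the accepting observation must not itself leak which branch the agent is on before step $H$ (otherwise the hard instance collapses), and the ``dead'' region's observation process must be identical across all $w$, so that indistinguishability really holds until the agent stumbles on the full secret---this forces the reward/accepting signal to appear only at the final step, which is why the bound is $\Omega(2^H)$ rather than anything smaller.
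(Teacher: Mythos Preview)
Your approach is correct and proves the proposition, but it takes a different route from the paper. The paper's proof is essentially a two-line reduction: it cites Theorem~6 of \cite{liu2022partially}, which already furnishes a family of $1$-step $\alpha$-weakly-revealing POMDPs with $\cO(1)$ latent states, observations, and actions requiring $\Omega(\min\{1/(\alpha H),2^H\})$ samples; it then observes that any such POMDP is automatically a PSR with core tests $\cQ_h=\{o_{h+1}:o_{h+1}\in\fO\}$ (single-step observations work precisely because the emission matrix is full rank by weak revealing), so $|\cQ_h|=|\fO|=\cO(1)$, and picks $\alpha<1/(2^H H)$ to force the $\Omega(2^H)$ bound. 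You instead build a deterministic combination lock from scratch and invoke Theorem~\ref{thm:sdm-psr} to extract a constant-size core-test set from a rank bound. Both routes are valid. The paper's is much shorter by leaning on prior work and yields core tests that are uniform across the family and one step long; yours is self-contained and yields even smaller rank (each instance is a two-state POMDP---``on path'' versus ``dead''---so $\rank(\D_h)\le 2$), but the core tests guaranteed by Theorem~\ref{thm:sdm-psr} will be instance-dependent and of length $\Theta(H-h)$, since in your construction nothing short of reaching step $H$ distinguishes the two latent states. One small note: the rank verification you flag as ``the technical heart'' is actually immediate once you recognize the construction as a two-state POMDP, since any POMDP satisfies $\rank(\D_h)\le S$ (see Lemma~\ref{lem:decodable-linearMDPs}); you do not need to construct core tests explicitly.
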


The proof of Proposition \ref{prop:psr-hard} essentially follows from Theorem 6 in \cite{liu2022partially} which shows the hardness for learning POMDPs when the weakly-revealing coefficient is bad. See Appendix \ref{app:psr-hard} for details.

Intuitively, the hard instances in Proposition \ref{prop:psr-hard} is due to the following reason: in the definition of PSR, we require that for each step $h$, the core tests $\cQ_h$ satisfies $ \rank(\D_h[\cQ_h]) = \rank(\D_h) := r$. However, this requirement alone does not prohibit the submatrix $\D_h[\cQ_h]$ to be extremely close to some matrix whose rank is strictly less than $r$. That is, matrix $\D_h[\cQ_h]$ can be highly ill-conditioned. This will lead to high non-robustness in predicting the probability of $\Pb(\tau_h, \omega_h) = \m(\omega_h)\trans \bpsi(\tau_h)$ when the vector $\bpsi(\tau_h)$ needs to be estimated---the corresponding linear weight $\m(\omega_h)$ can be extremely large such that we need to estimate $\bpsi(\tau_h)$ up to an extremely high accuracy. Indeed, in the hard instances of Proposition \ref{prop:psr-hard}, there exists some future $\omega_h$ such that $\|\m(\omega_h)\|_1\ge \Omega(2^H)$.

To rule out such hard instances, core tests are required to not only guarantee $\rank(\D_h[\cQ_h]) := r$, but also ensure $\D_h[\cQ_h]$ to be ``well-conditioned'' in certain sense. In this paper, we enforce such condition by assuming an upper bound on the magnitude of linear weight vectors.

\begin{condition}[$\gamma$-\name  PSR]\label{asp:psr}
We say a PSR is \emph{$\gamma$-\name} if for any $h\in[H-1]$ and any policy $\pi$ independent of the history before step $h+1$, the weight vectors $\m_1(\cdot), \m_2(\cdot)$ and the corresponding future sets $\Omega^{(1)}_h, \Omega^{(2)}_h$ in \eqref{eq:weight_vector_1} \eqref{eq:weight_vector_2} satisfy: 
\begin{equation} \label{eq:well_condition_psr}
\max_{i \in \{1, 2\}} ~ \max_{\substack{~~~~\x \in \R^{|\cQ_h|} \\ \norm{\x}_1 \le 1}
} ~ \sum_{\omega_{h}\in{\Omega^{(i)}_h}} \pi( \omega_h) \cdot |\m_i(\omega_h)\trans\x|  \le \frac{1}{\gamma}~.
\end{equation}
\end{condition}

\begin{remark}
Condition \ref{asp:psr} requires $\max_{i \in \{1, 2\}}$ because for overparameterized PSR, linear weight vectors are not unique. Thus, $\m_1, \m_2$ in general are not related. However, if they are related by  self-consistency (see Appendix \ref{app:self-consistent} for details), then it is sufficient to assume the inequality \eqref{eq:well_condition_psr} only for $\m_1, \Omega^{(1)}_h$.
\end{remark}

Intuitively, the parameter $\gamma^{-1}$ above  measures how much the future weight vectors $\{\m(\omega_h)\}_{\omega_h \in {\Omega}_h}$ can amplify the  error $\x$ arising from estimating  the probability of core tests, in an averaged sense that the future $\omega_h$ is  sampled from policy $\pi$. 
Being $\gamma$-\name naturally requires this error amplification to be not   extremely large since otherwise the hard instances mentioned before will come into play.  
In Section \ref{sec:psr-example}, we will prove  many common partially observable RL problems are naturally $\gamma$-\name PSRs with moderate $\gamma$, e.g., observable POMDPs and multistep decodable  POMDPs.

\subsection{Theoretical results}
In this subsection, we present the theoretical guarantees for learning well-conditioned PSRs with \omle.
To analyze \omle, we first need to specify the exploration policy function $\Pie$. 
Denote by $\nu(\pi,h,\a)$ a composite policy that first executes policy $\pi$ for step $1$ to step $h-1$, then takes  random action at step $h$, and after that  executes action sequence $\a = (a_{h+1}, \ldots, a_m)$ till certain step $m$, and finally finishes the remaining steps of the current episode by taking random actions. We construct the following exploration policy function:
\begin{equation}\label{eq:explore-func}
    \Pi_{\rm exp}(\pi):=\bigcup_{h\in[H-1]}\{\nu(\pi,h,\a): \a\in \Qa_h \}.
\end{equation}
By using the above exploration policy function in \omle, we have the following polynomial sample-efficiency guarantee for learning well-conditioned PSRs. 
\begin{theorem}
\label{thm:psr}
Let $c>0$ be an absolute constant large enough and $\Theta$ be a rank-$r$ $\gamma$-well-conditioned PSR class. 
For any $\delta\in(0,1]$ and $K\in\N$,  if we choose $\beta = c\log(T\cN_{\Theta}(T^{-1})\delta^{-1})$ with $T=KH\max_h|\Qa_h|$ and $\Pie$ specified by Equation \eqref{eq:explore-func} in \omle (Algorithm \ref{alg:omle}), then with probability at least $1-\delta$, we have
      $$
      V^\star - V^{\pi^{\rm out}} \le \poly(r,\gamma^{-1},\max_h|\Qa_h|,A,H,\log K)\times \sqrt{\frac{\beta}{K}}~.
      $$
\end{theorem}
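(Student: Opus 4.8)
The plan is to deduce Theorem~\ref{thm:psr} from the generic guarantee of Theorem~\ref{thm:omle}, so the whole task reduces to verifying that the generalized eluder-type condition (Condition~\ref{cond:eluder}) holds for any rank-$r$, $\gamma$-well-conditioned PSR class under the exploration policy function $\Pie$ of \eqref{eq:explore-func}, with $d_\Theta=\poly(r,\gamma^{-1},\max_h|\Qa_h|,A,H)$ and with $\xi(d_\Theta,K,\Delta,|\Pie|)$ whose leading term is $\tilde{\cO}(\sqrt{d_\Theta\,\Delta\,|\Pie|\,K})$. Granting this, plugging $\beta=c\log(T\cN_\Theta(T^{-1})\delta^{-1})$ into Theorem~\ref{thm:omle}, using $|\Pie|\le H\max_h|\Qa_h|$ from \eqref{eq:explore-func}, dividing by $K$, and invoking $V^{\pi^{\rm out}}=\frac1K\sum_k V^{\pi^k}$ immediately yields the stated bound. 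All the MLE concentration (true model in the confidence set, etc.) is already absorbed into Theorem~\ref{thm:omle} through $\beta$, so nothing further is needed there.

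The first ingredient I would establish is a PSR simulation/decomposition lemma: for any policy $\pi$ and any two models $\theta,\theta^\star$ in the class, $d_\TV(\P_\theta^\pi,\P_{\theta^\star}^\pi)$ is bounded by a sum over steps $h\in[H-1]$ of terms of the form $\mathbb{E}_{\tau_h\sim\P_{\theta^\star}^\pi}$ of a $\pi$-weighted $\ell_1$-discrepancy between the one-step operators $\bM_h^{\theta}$ and $\bM_h^{\theta^\star}$ applied to the true predictive state. Concretely, one writes $\Pb_\theta(\tau_H)-\Pb_{\theta^\star}(\tau_H)$ as a telescoping sum, swapping $\bM^{\theta^\star}$-operators for $\bM^{\theta}$-operators one step at a time exactly as in the POMDP analysis of \cite{liu2022partially}; the leading $(\bphi^{\theta^\star}_H)^\top\bM^{\theta^\star}_{H-1}\cdots\bM^{\theta^\star}_{h+1}$ factor of the $h$-th residual is precisely a full-length weight vector of the form \eqref{eq:weight_vector_1}, so the well-conditioned bound \eqref{eq:well_condition_psr} lets us replace that amplification factor by $\gamma^{-1}$ after integrating out the remaining randomness. (The $\m_2,\Omega^{(2)}$ part of Condition~\ref{asp:psr} enters when one further relates the length-$(H-h)$ residual to the length-one-plus-core-test futures $\fO\times\fA\times\cQ_{h+1}$, which is what the exploration policies actually probe.) This reduces controlling $\sum_k d_\TV(\P^{\pi^k}_{\theta^k},\P^{\pi^k}_{\theta^\star})$ to controlling $\sum_k\sum_h$ of the averaged $\ell_1$-error of $(\bM^{\theta^k}_h-\bM^{\theta^\star}_h)$ on the true predictive state, under the roll-in $\pi^k$, a uniform action at step $h$, and the core action sequences.

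The second ingredient casts these per-step errors into the "sum of absolute values of independent bilinear functions" form so the new $\ell_1$-pigeon-hole lemma (Lemma~\ref{prop:pigeon-hole}) applies. The quantity controlled by the confidence-set constraint, the left-hand side of \eqref{eq:generalized_eluder}, is $\sum_{t<k}\sum_{\pi\in\Pi^t_{\exp}}d_\TV^2(\P^\pi_{\theta^k},\P^\pi_{\theta^\star})\le\Delta$; since the exploration policies $\nu(\pi^t,h,\a)$ with $\a$ ranging over $\Qa_h$ execute every core action sequence after the roll-in, this constraint exactly says that, for each $h$, the historical models $\theta^k$ have predicted the probabilities of all core tests (under roll-in $\pi^t$) accurately on average. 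The low-rank factorization $\D_h=\D_h[\cQ_h]\W_h^\top$ together with a Barycentric-spanner projection (Lemma~\ref{lem:barycentric}) converts these "core-test-coordinate" errors into a bound on a projected error vector of dimension $O(|\Qa_h|A)$, independent of $|\fO|$, while inflating the $\ell_1$-norm only by a $\poly(|\Qa_h|,A)$ factor. Applying the sharp $\ell_1$-elliptical potential lemma across $k\in[K]$, for each fixed $h$ and each $i\in\{1,2\}$, gives $\sum_k(\text{step-}h\text{ instantaneous error})\le\tilde{\cO}(\sqrt{d_h\,\Delta\,|\Pie|\,K})$ with $d_h=\poly(r,\gamma^{-1},|\Qa_h|,A)$; summing over $h$ delivers Condition~\ref{cond:eluder} with $d_\Theta=\poly(r,\gamma^{-1},\max_h|\Qa_h|,A,H)$.

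The main obstacle is this middle step---obtaining a per-step bound whose dimension is independent of the observation space---and it is precisely where the new techniques of Section~\ref{sec:technical_contribution} are indispensable. A naive application of the standard elliptical potential/pigeon-hole argument to the $\sum_{o,o'}|\la\theta_i,x_{o'}\ra|$-type object loses a factor scaling with $|\fO|$, which must be stripped out by the sharper argument of Appendix~\ref{app:l1-pigeon-hole}; and, unlike in POMDPs, there is no emission-matrix pseudo-inverse with which to perform the dimension reduction, so one instead projects using a Barycentric spanner of the columns of $\D_h[\cQ_h]$ and must argue this projection nearly preserves $\ell_1$-norms. Getting the bookkeeping to close---so that the roll-in law $\P^{\pi^k}_{\theta^\star}$, the uniform action at step $h$ (an $A$ factor), and the core action sequences (the $|\Qa_h|$ and $|\Pie|$ factors) line up between the "observed" quantity inside the confidence set and the "predicted" quantity $d_\TV(\P^{\pi^k}_{\theta^k},\P^{\pi^k}_{\theta^\star})$---is the delicate accounting the proof must carry out, and is what forces the particular form of the exploration policy function \eqref{eq:explore-func}.
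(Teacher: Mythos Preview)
Your outline is essentially the paper's proof: reduce Theorem~\ref{thm:psr} to Theorem~\ref{thm:omle} by verifying Condition~\ref{cond:eluder} (this is exactly Lemma~\ref{prop:psr-eluder}), and establish the latter via the four steps in Appendix~\ref{app:psr-main}---a telescoping bound of TV by per-step operator differences using the $\gamma$-well-conditioned assumption (Step~1), the reverse inequality via the exploration policies $\nu(\pi,h,\a)$ (Step~2), a Barycentric-spanner projection to build SAIL features (Step~4), and the sharp $\ell_1$ pigeon-hole lemma (Step~3). Two small slips worth fixing: in the telescope the \emph{leading} future operators must come from the estimate $\theta$ (not $\theta^\star$) if the trailing predictive state is the true $\bpsi(\tau_{h-1})$---your description has both sides as $\theta^\star$, which cannot hold; and the projected SAIL feature dimension is $r$ (the PSR rank), not $O(|\Qa_h|A)$, with the $|\Qa_h|,A,\gamma^{-1}$ factors entering through $\kappa$ and $B$ rather than through $d$.
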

The result in Theorem \ref{thm:psr} scales polynomially with respect to the rank of the PSR $r$, the inverse well-conditioned  parameter $\gamma^{-1}$,  the number of core action sequences  $\max_h|\Qa_h|$, the log-bracketing number of the model class $\log\cN_\Theta$, the number of actions $A$, and the episode length $H$. 
In particular, (1) it does \emph{not} depend on the size of core tests, but instead only depend on the size of core action sequence; (2) it is completely independent of the size of the observation space. Both empower our results to handle problems with continuous observations.
Moreover, when the bracketing number satisfies $\log\cN_{\Theta}(T^{-1})\le \cO({\rm polylog}(T))$ (e.g., in tabular PSRs and POMDPs with mixture of Gaussian observations),  Theorem \ref{thm:psr} guarantees that $K=\tilde{\cO}(\epsilon^{-2})$ episodes suffices for finding an $\epsilon$-optimal policy, which is optimal up to a polylogarithmic factor.

The proof of Theorem \ref{thm:psr} relies on the following key lemma, which states that any class of well-conditioned PSRs satisfy the generalized eluder-type condition (Condition \ref{cond:eluder}) with  favorable $d_\Theta$ and $\zeta$.

\begin{lemma}\label{prop:psr-eluder}
    Let  $\Theta$ be a family of rank-$r$ $\gamma$-well-conditioned PSRs. Then  Condition \ref{cond:eluder} holds with $\Pie$ defined in Equation \eqref{eq:explore-func},  $d_\Theta=(r\gamma^{-2} A^2\max_h|\Qa_h|)^2\poly(H)$, and $\xi(d_\Theta,\Delta, |\Pi_{\rm exp}|,K)=\tilde{\cO}(\sqrt{d_\Theta \Delta |\Pi_{\rm exp}| K})$.
\end{lemma}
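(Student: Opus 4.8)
The plan is to verify the generalized eluder-type condition (Condition \ref{cond:eluder}) for a rank-$r$, $\gamma$-well-conditioned PSR class by reducing the left-hand side (small cumulative TV error on the historical exploration policies) to an elliptical-potential-style argument in the space of PSR predictive states $\bpsi(\tau_h)$. The central idea is that the TV distance between $\P^\pi_{\theta^k}$ and $\P^\pi_{\theta^\star}$ can be controlled, step by step, by the discrepancy between the predictive-state vectors of the two models, and that the exploration policies in \eqref{eq:explore-func}---which after running $\pi^k$ for $h-1$ steps play a uniformly random action at step $h$ and then a core action sequence $\a\in\Qa_h$---precisely probe all the coordinates needed to estimate those predictive states. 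Concretely, I would (i) use the PSR factorization \eqref{eq:PSR_1}--\eqref{eq:PSR_2} to write $d_\TV(\P^\pi_{\theta^k},\P^\pi_{\theta^\star})$ as a telescoping sum over steps $h$ of terms of the form $\sum_{\omega_h}\pi(\omega_h)\,|\m_i(\omega_h)\trans(\bpsi_{\theta^k}(\tau_h)-\bpsi_{\theta^\star}(\tau_h))|$ integrated over histories $\tau_h$ drawn from $\pi$ under $\theta^\star$; (ii) invoke the $\gamma$-well-conditioned property (Condition \ref{asp:psr}, inequality \eqref{eq:well_condition_psr}) to bound each such term by $\gamma^{-1}$ times the $\ell_1$-norm of the predictive-state error; and (iii) observe that the predictive-state error, in turn, is exactly what the exploration policies $\nu(\pi^k,h,\a)$ measure in TV distance, up to the $1/A$ probability of the random action and the loss from $|\Qa_h|$ distinct action sequences.

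The next step is the elliptical-potential / pigeonhole argument. Having reduced everything to quantities of the form "summation of absolute values of bilinear functions" $\sum_{i}\sum_j|\la\theta_i,x_j\ra|$, I would apply the new sharp $\ell_1$-based elliptical potential lemma announced in Section \ref{sec:technical_contribution} (Lemma \ref{prop:pigeon-hole} / Appendix \ref{app:l1-pigeon-hole}), which gives a bound of order $\sqrt{d_\Theta\,\Delta\,|\Pie|\,K}$ independent of the number of summands $m,n$. To apply it I need the vectors $\bpsi_{\theta^k}(\tau_h)-\bpsi_{\theta^\star}(\tau_h)$, which a priori live in $\R^{|\cQ_h|}$ with possibly small $\ell_1$-norm only after a suitable change of basis, to be projected into a low-dimensional Euclidean space of dimension $\approx r$ while approximately preserving their $\ell_1$-norm; this is where the Barycentric-spanner projection (Lemma \ref{lem:barycentric}) enters, yielding the final dimension count $d_\Theta=(r\gamma^{-2}A^2\max_h|\Qa_h|)^2\poly(H)$. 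Summing the per-step contributions over $h\in[H]$ and folding the $A$, $|\Qa_h|$, $H$ factors into $d_\Theta$ then produces the claimed $\xi=\tilde\cO(\sqrt{d_\Theta\Delta|\Pie|K})$.

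I expect the main obstacle to be step (i)--(ii): cleanly relating the \emph{trajectory-level} TV distance $d_\TV(\P^\pi_{\theta^k},\P^\pi_{\theta^\star})$ to the \emph{per-step} predictive-state discrepancies in a way that (a) the error does not blow up multiplicatively across the $H$ steps, (b) the expectation over histories is taken under the correct model (so that the well-conditioned inequality \eqref{eq:well_condition_psr}, which is stated for policies independent of the history before step $h+1$, actually applies), and (c) the bound produced is genuinely in terms of the \emph{$\ell_1$-norm} of the predictive-state error rather than its Euclidean norm, since only the $\ell_1$ version is preserved by the Barycentric projection and matched by the sharp pigeonhole lemma. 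A secondary technical point is handling the overparameterized case, where $\m_1$ and $\m_2$ need not coincide, which is exactly why Condition \ref{asp:psr} takes a maximum over $i\in\{1,2\}$ and why the reduction must track both weight-vector families; I would address this by performing the telescoping with $\m_2$-type weights (core-test futures) at intermediate steps and $\m_1$-type weights (full-length futures) only at the final step, so that both bounds from \eqref{eq:well_condition_psr} are available where needed.
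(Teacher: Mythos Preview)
Your proposal is correct and follows essentially the same route as the paper: the four ingredients you identify---(i) telescoping the trajectory-level TV distance into per-step PSR operator/predictive-state discrepancies and bounding each with the $\gamma$-well-conditioned inequality (the paper's Lemma~\ref{lem:step1}), (ii) relating those discrepancies back to TV distances under the exploration policies $\nu(\pi,h,\a)$ (Lemma~\ref{lem:step2}), (iii) the Barycentric-spanner projection into an $r$-dimensional space (Lemma~\ref{lem:barycentric}, used in Lemma~\ref{lem:psr-sail}), and (iv) the sharp $\ell_1$ elliptical-potential lemma for SAIL (Proposition~\ref{prop:step-3}/Lemma~\ref{prop:pigeon-hole})---are exactly the four steps of Appendix~\ref{app:psr-main}. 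The only cosmetic difference is that the paper's telescoping in Step~1 is written in terms of operator differences $(\widehat\bM_h-\bM_h)\bpsi(\tau_{h-1})$ rather than predictive-state differences $\widehat\bpsi(\tau_h)-\bpsi(\tau_h)$, but these are bridged in Step~2 precisely as you anticipate, and your handling of the $\m_1$/$\m_2$ overparameterization issue matches the paper's.
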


Once Lemma \ref{prop:psr-eluder} is established, 
Theorem \ref{thm:psr} follows immediately from combining it with the guarantee of \omle (Theorem \ref{thm:omle}). 

\paragraph{Technical challenge.} One of the key steps in proving 
Lemma \ref{prop:psr-eluder} is to establish a generalized  version of elliptical potential lemma for Summation of Absolute values of Independent biLinear (SAIL) functions of form $\sum_{i=1}^m\sum_{j=1}^n |\langle \theta_i,x_j\rangle|$. Despite similar problems have been investigated in the previous analysis of OMLE \citep{liu2022partially}, the bound derived therein scales with $m,n$, which  depend on  the number of observations. 
As a result, that bound is incapable to handle the settings with infinite observations. To address this issue, we develop a much tighter elliptical potential lemma which completely get rids of the $m,n$ dependence. 
With the help of this strengthened elliptical potential lemma and other newly developed techniques, we are able to prove Lemma \ref{prop:psr-eluder} without suffering any dependence on the size of the observation space. We refer an interesting reader to Appendix \ref{app:l1-pigeon-hole} for more technical details.

\subsubsection{Special cases: tabular PSRs}
To apply Theorem \ref{thm:psr}, we still need to upper bound the bracketing number of model class $\Theta$. 
The following proposition states that in tabular PSRs (i.e., PSRs with finite observations and actions) the log-bracketing number of $\Theta$ is always upper bounded.
\begin{theorem}[bracketing number of tabular PSRs]\label{thm:tabularPSR-bracket}
Let $\Theta$ be the collections of all  rank-$r$ PSRs  with $O$ observations, $A$ actions and episode length $H$. Then
$\log\cN_\Theta(\epsilon) \le \cO(r^2OAH^2 \log (rOAH/\epsilon))$.
\end{theorem}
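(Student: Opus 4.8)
The plan is to exhibit a finite‑dimensional, \emph{bounded} parametrization of the whole class and then discretize it. The starting observation is that the identity of the core tests $\{\cQ_h\}$ never appears in the factorization \eqref{eq:PSR_1}: once the dimensions $|\cQ_h|\le r$ are fixed, and after padding all of them up to exactly $r$ with zeros, the dynamics is the fixed multilinear map
\[
  \Pb_{(\bphi,\bM,\bpsi_0)}(o_{1:H}\mid a_{1:H}) \;=\; \bphi_H(o_H,a_H)\trans\,\bM_{H-1}(o_{H-1},a_{H-1})\cdots \bM_1(o_1,a_1)\,\bpsi_0 .
\]
Hence $\{\Pb_\theta:\theta\in\Theta\}$ is contained in the image of this map over a bounded set of tuples $(\bphi,\bM,\bpsi_0)$, \emph{provided} every rank‑$\le r$ problem on $(O,A,H)$ is realized by some bounded tuple. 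This last point is the crucial step. The pseudo‑inverse construction behind Theorem~\ref{thm:sdm-psr} does not suffice, since an ill‑conditioned $\D_h$ produces a representation with arbitrarily large entries. Instead I would take $\cQ_h$ to be a maximum‑volume (barycentric) spanner of the columns of $\D_h$: a subset of $\rank(\D_h)$ columns such that every column of $\D_h$ is a linear combination of them with coefficients in $[-1,1]$ (this exists by the standard volume‑maximization / Cramer's rule argument, cf.\ Lemma~\ref{lem:barycentric}). Writing $\m(\omega_h)\in[-1,1]^{|\cQ_h|}$ for the coefficient vector of the column $\Pb(\cdot,\omega_h)$, we get $\Pb(\tau_h,\omega_h)=\m(\omega_h)\trans\bpsi(\tau_h)$ with $\|\m(\omega_h)\|_\infty\le 1$, where $\bpsi(\tau_h)=(\Pb(\tau_h,q))_{q\in\cQ_h}$. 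Setting $\bphi_H(o,a):=\m((o,a))$, the $i$‑th row of $\bM_h(o,a)$ equal to $\m((o,a,q_i))\trans$ for $q_i$ the $i$‑th test of $\cQ_h$, and $\bpsi_0:=(\Pb(q))_{q\in\cQ_0}$, a short induction shows that the partial products $\bM_h(o_h,a_h)\cdots\bM_1(o_1,a_1)\bpsi_0$ equal $\bpsi(\tau_h)$ (i.e.\ \eqref{eq:PSR_2} holds), whence \eqref{eq:PSR_1} holds and this tuple realizes $\Pb$; by construction all of its entries lie in $[-1,1]$.

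With boundedness in hand, a routine net argument finishes the proof. The tuple $(\bphi,\bM,\bpsi_0)$ has $D = OAr + (H-1)OAr^2 + r = \cO(r^2 OAH)$ real coordinates, all in $[-1,1]$, so $[-1,1]^D$ admits an $\ell_\infty$ $\eta$‑net of size $(\cO(1/\eta))^D$. If two tuples $\theta,\theta'$ agree entrywise up to $\eta$, then telescoping the length‑$H$ product---each factor being a vector or matrix of dimension $\le r$ with entries bounded by $1$, hence of the relevant induced operator norm $\le r$, and consequently each partial product having entries of magnitude $\le r^{H}$---yields $\sup_{\tau_H}\big|\Pb_\theta(\tau_H)-\Pb_{\theta'}(\tau_H)\big| \le \cO(Hr^{H})\,\eta =: s$. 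Thus for each net point $\tilde\theta$ the pair $[\,\Pb_{\tilde\theta}-s,\ \Pb_{\tilde\theta}+s\,]$ is a bracket, and as $\tilde\theta$ ranges over the net these brackets cover all of $\Theta$ by the previous paragraph.

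It remains to measure the width $2s$ of each bracket in the \emph{policy‑weighted} $\ell_1$ distance. For every policy $\pi$, summing $\pi(\tau_H)=\prod_{h=1}^H\pi_h(a_h\mid o_{1:h},a_{1:h-1})$ over all $\tau_H=(o_{1:H},a_{1:H})$ contributes a factor $1$ from each action sum and $O$ from each observation sum, so $\sum_{\tau_H}\pi(\tau_H)=O^H$ and the policy‑weighted width is $2sO^H$. Setting $2sO^H=\epsilon$ forces $s=\epsilon/(2O^H)$ and $\eta=\Theta(\epsilon/(Hr^{H}O^{H}))$, so $\log(1/\eta)=\cO\big(H\log(rO)+\log(H/\epsilon)\big)$, and therefore
\[
  \log\cN_\Theta(\epsilon)\;\le\;\cO\!\big(D\log(1/\eta)\big)\;=\;\cO(r^2OAH)\cdot\cO\!\big(H\log(rO)+\log(H/\epsilon)\big)\;=\;\cO\!\big(r^2 OAH^2\log(rOAH/\epsilon)\big),
\]
which is the claimed bound. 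I expect the only genuine difficulty to be the boundedness step: because a rank‑$r$ $\D_h$ can be arbitrarily ill‑conditioned there is no a~priori control on the entries of a generic PSR representation, so everything hinges on the maximum‑volume / barycentric choice of core tests (which leaves the represented dynamics unchanged) forcing all entries into $[-1,1]$. The $O^H$ blow‑up of constant functions under the policy‑weighted $\ell_1$ norm is harmless, since it enters only through a logarithm.
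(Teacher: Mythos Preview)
Your argument is correct, and it takes a genuinely different route to the key step than the paper does. Both proofs recognize that the only nontrivial issue is to produce, for \emph{every} rank-$\le r$ dynamics, a bounded finite-dimensional parametrization; once that is in hand, a net plus telescoping is routine. The paper obtains boundedness via an SVD-based observable-operator construction (Theorem~\ref{thm:oom-represent}): writing $\D_h\trans=\U_h\Sigma_h\V_h\trans$ and setting $\B_h(o,a)=\U_h\trans[\U_{h-1}]_{(o,a,\Omega_h),:}$, it gets $\|\B_h(o,a)\|_2\le 1$ for every matrix, at the price of endpoint vectors of norm $\sqrt{A^H}$ and $\sqrt{(O/A)^{H}}$; these exponential constants then enter only through the precision $\delta=\epsilon(OA)^{-cH}$ and vanish into a logarithm. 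Your barycentric/max-volume choice of core tests instead forces \emph{every entry} of $(\bphi,\bM,\bpsi_0)$ into $[-1,1]$ via Cramer's rule, trading the unit spectral norm for operator norms up to $r$ and hence products up to $r^H$; your extra $O^H$ comes from summing a constant over trajectories. Either way the exponential factor is swallowed by the log and one lands on $\cO(r^2OAH^2\log(rOAH/\epsilon))$.

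What each buys: the paper's SVD construction is canonical and, as a by-product, yields the self-consistent representation used elsewhere (Theorem~\ref{thm:self-consistent}); it is also the natural thing to do if one later wants spectral control. Your approach is more elementary (no SVD, just a max-volume submatrix at each $h$) and makes the boundedness completely explicit at the level of entries, which is arguably the cleaner statement for a covering argument and directly matches the barycentric technique the paper already invokes for the $\ell_1$-projection lemma (Lemma~\ref{lem:barycentric}).
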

We remark that the bracketing number in Theorem \ref{thm:tabularPSR-bracket} is independent of the size of core tests or core action sequences. This is because the representation power of rank-$r$ PSRs is limited to rank-$r$ sequential decision making problems regardless the choices of core tests.

The key intermediate step in proving Theorem \ref{thm:tabularPSR-bracket} is to show every low rank sequential decision making problem admits an observable operator model (OOM) representation wherein the norm of the operators are well controlled. Once this argument is established,  we can upper bound the bracketing number  by  discretizing those operators. 
In comparison, recent works on PSRs \citep{zhan2022pac} simply \emph{assume} every PSR representation has bounded  operator norm  without proving it. 
To our knowledge, Theorem \ref{thm:tabularPSR-bracket} provides the first polynomial upper bound for the bracketing number of tabular PSRs \emph{without any additional assumptions}.

Finally, by plugging the above upper bound back into Theorem \ref{thm:psr}, we immediately obtain the following sample complexity bound for learning tabular PSRs: 
$$
\poly(r,\gamma^{-1},\max_h|\Qa_h|,O,A,H,\log(\epsilon^{-1}\delta^{-1}))\cdot \epsilon^{-2}.
$$


\section{Important PSR Subclasses}\label{sec:psr-example}

In this section, we introduce several  partially observable RL problems of interests  and prove that they are all special subclasses of $\gamma$-\name PSRs with moderate $\gamma$.
All the proofs for this section 
are deferred to Appendix \ref{app:psr-examples}.

\subsection{Observable  POMDPs}\label{subsec:weakly-revealing-pomdps}

We first consider observable POMDPs \cite{golowich2022planning}
\footnote{ \cite{liu2022partially} considers a similar subclass called \emph{weakly-revealing} POMDPs, which assumes the $S^{\text{th}}$ singular value of matrix $\M_h$ to be lower bounded. Here $\M_h$ is a matrix of size $O^{m}A^{m-1} \times S$ whose entry is defined in \eqref{defn:M}. \cite{liu2022partially} proved that observable POMDPs and weakly-revealing POMDPs are equivalent up to a polynomial factor that depends on the number of states and observations. Therefore, there is essentially no difference in proving polynomial sample complexity for two classes in the tabular setting. However, observable POMDPs extend more naturally to the setting of continuous observation, and natural examples in continuous observation such as GM-POMDPs in Section \ref{sec:GM-POMDP} do not satisfy weakly-revealing condition.
}
---an important, natural and rich subclass of POMDPs wherein there exists an integer $m\in [H]$ so that any two different distributions over latent states induce different $m$-step observation-action distributions. We will prove a new result that OMLE can sample-efficiently learn any observable POMDP even with \emph{infinite} or \emph{continuous} observation. We remark that while such a result have been proved in the setting of finite observations \cite{liu2022partially}, the sample complexity in \cite{liu2022partially} has a polynomial dependency on the number of observations, thus does not extend to the setting of continuous observation. Our new result is highly non-trivial: in addition to the sample-complexity guarantees of well-conditioned PSRs with continuous observation (Theorem \ref{thm:psr}), our result further requires new techniques on matrix pseudo-inverse with small $\ell_1$-norm  (Appendix \ref{app:l1-inverse}) and a new core tests design technique (Appendix \ref{app:observable-continuous}),


To formally state the observability condition, we first define the $m$-step observation-action  probability kernels as follows: 
\[
\{\M_h\in\{\fO^m\times\fA^{m-1}\rightarrow\R\}^S\}_{h\in[H-m+1]}
\]
For an observation sequence $\o$ of length $m$, a latent state $s$ and an  action sequence $\a$ of length $m-1$,
the value of the $s^{\rm th}$ probability function in $\M_h$ at point $(\o,\a)\in\fO^m\times\fA^{m-1}$, denoted as $\M_{h,s}(\o,\a)$,  is equal to the probability density of observing  $\o$ provided that the action sequence $\a$ is used from state $s$ and step $h$: 
 \begin{equation}\label{defn:M}
     \M_{h,s}(\o,\a):= \P(o_{h:h+m-1}=\o \mid 
     s_h=s,a_{h:h+m-2} =\a ).
 \end{equation}
And we say a POMDP is $m$-step $\alpha$-observable ($m\in[H]$ and $\alpha>0$),  if its $m$-step observation-action   
 probability kernels satisfy the following condition.
\begin{condition}[$m$-step $\alpha$-observable condition]\label{cond:pomdp}
For any $\bnu_1,\bnu_2\in\Delta_{\fS}$ and $h\in[H-m+1]$,
\begin{equation}\textstyle 
    \|\E_{s\sim \bnu_1}[\M_{h,s}] - \E_{s\sim \bnu_2}[\M_{h,s}]\|_1 \ge \alpha \|\bnu_1-\bnu_2\|_1.
\end{equation}
\end{condition}
In the above condition, we use  $\|f-g\|_1=\int_{x\in\cX} |f(x)-g(x)| dx$ to denote the $\ell_1$-distance between two functions from $\cX$ to $\R$. 
Intuitively, Condition \ref{cond:pomdp} can be viewed as a robust version of assuming that the $S$ probability functions in each $\M_h$ are linearly independent, which guarantees that for any two different latent state mixtures $\bnu_1,\bnu_2\in\Delta_S$, there exists an action sequence $\a$ of length $m-1$ so that these two mixtures can be distinguished from the distributions over the next $m$-step observations provided that action sequence $\a$ is executed.  

The following theorem states that any $m$-step $\alpha$-observable POMDP admits an $\alpha/(S+A^{m-1})$-\name PSR representation with core action sets equal to $\fA^{m-1}$. 
\begin{theorem} \label{thm:pomdp}
Let $\Theta$ be a model class of $m$-step $\alpha$-observable POMDPs. Then $\Theta$ satisfies Condition \ref{asp:psr} with $\gamma=\cO(\alpha/S)$ and $\cQ_h^A=\fA^{\min\{m-1,H-h\}}$. 
\end{theorem}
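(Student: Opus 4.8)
The plan is to exhibit, for every $m$-step $\alpha$-observable POMDP in $\Theta$, an explicit PSR representation whose set of core action sequences at step $h$ is exactly $\cQ_h^A=\fA^{\min\{m-1,H-h\}}$, and whose linear weight vectors satisfy the \name bound \eqref{eq:well_condition_psr} with $1/\gamma=\cO(S/\alpha)$. For the construction, at each step $h$ I would take the core tests $\cQ_h$ to be all length-$m_h$ observation–action futures $(o_{h+1:h+m_h},a_{h+1:h+m_h-1})$ with $m_h=\min\{m,H-h\}$ (in the continuous-observation case these individual sequences are replaced by the events of Appendix \ref{app:observable-continuous}, but the underlying action sequences are still $\fA^{\min\{m-1,H-h\}}$). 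Writing $\b_{h+1}(\tau_h)\in\R^S$ for the unnormalized belief, $[\b_{h+1}(\tau_h)]_s=\P(o_{1:h},s_{h+1}=s\mid a_{1:h})$, the POMDP's latent-Markov structure yields $\Pb(\tau_h,\omega_h)=\langle \b_{h+1}(\tau_h),\,\P(\omega_h\mid s_{h+1}=\cdot)\rangle$ for every future $\omega_h$. Stacking the core tests shows $\D_h[\cQ_h]$ factors through the $m_h$-step kernel $\M$ of \eqref{defn:M}; since Condition \ref{cond:pomdp} forces $\M$ to have robustly full column rank $S$, we get $\rank(\D_h[\cQ_h])=\rank(\D_h)$, so $\cQ_h$ is a valid (possibly overparameterized) choice. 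I then set $\bpsi(\tau_h):=\M\,\b_{h+1}(\tau_h)$ and define $\bphi,\bM,\bpsi_0$ in the natural way so that \eqref{eq:PSR_1}–\eqref{eq:PSR_2} hold, each $\bM$-operator being built from a one-step Bayes update composed with a \emph{left inverse} $\M^{+}$ of the relevant $m_h$-step kernel. One checks this representation is \emph{self-consistent}, so by the Remark after Condition \ref{asp:psr} it suffices to verify \eqref{eq:well_condition_psr} for $\m_1$ and $\Omega^{(1)}_h$ only.

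For that verification, unrolling \eqref{eq:weight_vector_1} with the operators above gives, for a full-length future $\omega_h$, $\m_1(\omega_h)=(\M^{+})\trans\P(\omega_h\mid s_{h+1}=\cdot)$, which indeed satisfies $\m_1(\omega_h)\trans\bpsi(\tau_h)=\P(\omega_h\mid s_{h+1}=\cdot)\trans\M^{+}\M\,\b_{h+1}(\tau_h)=\Pb(\tau_h,\omega_h)$ because $\bpsi(\tau_h)\in\colspan{\M}$. Hence for any policy $\pi$ independent of the history before step $h+1$ and any $\x\in\R^{|\cQ_h|}$ with $\norm{\x}_1\le1$, setting $\y:=\M^{+}\x\in\R^S$ and summing over $\omega_h\in\Omega^{(1)}_h$,
\[
\sum_{\omega_h}\pi(\omega_h)\,\abs{\m_1(\omega_h)\trans\x}
\;\le\;\sum_{s}\abs{\y_s}\sum_{\omega_h}\pi(\omega_h)\,\P(\omega_h\mid s_{h+1}=s)
\;=\;\norm{\y}_1\;\le\;\norm{\M^{+}}_{1\to1},
\]
where the last equality uses that $\P_\pi(\cdot\mid s_{h+1}=s)$ is a probability distribution. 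Thus everything reduces to the single quantity $\norm{\M^{+}}_{1\to1}$: it suffices to choose a left inverse of the $m_h$-step kernel with small $\ell_1$-operator norm, and then $1/\gamma=\max_h\norm{\M^{+}}_{1\to1}$.

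The main obstacle is exactly this last point. Condition \ref{cond:pomdp} only controls $\M$ on differences of state distributions, i.e.\ on zero-sum vectors; to control its action on a general $\bnu\in\R^S$ I would use that for each fixed action sequence $\a\in\fA^{m-1}$ the map $\o\mapsto\M_{h,s}(\o,\a)$ is a probability density, so $\sum_\o\sum_s\bnu_s\M_{h,s}(\o,\a)=\langle\mathbf 1,\bnu\rangle$ independently of $\a$; summing over the $A^{m-1}$ action blocks recovers the mean component of $\bnu$ robustly, and combining with Condition \ref{cond:pomdp} on the zero-sum part gives $\norm{\M\bnu}_1\ge\Omega(\alpha/S)\norm{\bnu}_1$ for all $\bnu$ (the $A^{m-1}$ factor gets absorbed). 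The remaining subtlety — and the new ingredient of Appendix \ref{app:l1-inverse} — is that extending the obvious inverse defined on $\colspan{\M}$ to all of $\R^{|\cQ_h|}$ coordinatewise would lose an extra factor of $S$; instead one writes $\M^{+}=\M^{\dagger}+\Y$ with $\Y$ supported on the orthogonal complement of $\colspan{(\M^{\dagger})}$ and optimizes over $\Y$, keeping $\norm{\M^{+}}_{1\to1}=\cO(S/\alpha)$. Plugging back yields $\gamma=\Omega(\alpha/S)$. Finally, the tail steps $h>H-m$ (where $m_h<m$) are handled by the same argument applied to the truncated kernel: there $\D_h$ may have rank $<S$, but the truncated core tests still realize that rank through a comparably well-conditioned submatrix, and the estimate goes through verbatim. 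I expect the $\ell_1$-bounded left-inverse construction to be the only genuinely delicate step; everything else is bookkeeping around the belief-state factorization.
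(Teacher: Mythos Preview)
Your construction for the bulk steps $h\le H-m$ is exactly the paper's: $\bpsi(\tau_h)=\M\,\b_{h+1}(\tau_h)$, operators built from a one-step Bayes update composed with a left inverse $\M^{+}=\M^{\dagger}+\Y$, and the crucial observation that $\m_1(\omega_h)^\top=\P(\omega_h\mid s_{h+1}=\cdot)^\top\M^{+}$, so the whole bound collapses to $\norm{\M^{+}}_{1\to1}$ and the $\ell_1$-inverse lemma finishes it. That part is right and is essentially the paper's proof.

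Two points deserve correction. First, you assert self-consistency so as to skip $\m_2$, but you do not verify condition~4 of Theorem~\ref{thm:self-consistent} for these operators, and with the free choice of $\Y_h$ there is no reason the row $[\bM_h(o,a)]_{q,:}$ should literally equal the corresponding product of downstream operators. The paper does not claim self-consistency; it simply computes $\m_2$ and observes it has the \emph{same} form $\P(\cdot\mid s_{h+1})^\top(\M^{\dagger}+\Y)$, so the identical bound applies. That is a one-line check, and cleaner than routing through self-consistency.

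Second, and more substantively, your tail argument is a genuine gap. For $h>H-m$ you propose ``the same argument applied to the truncated kernel,'' but Condition~\ref{cond:pomdp} says nothing about the $k$-step kernels with $k<m$: in an $m$-step observable POMDP the $1$-step emission $\O_h$ can be rank-deficient or arbitrarily ill-conditioned, so no $\ell_1$-bounded left inverse of the truncated kernel need exist, and the bound $\norm{\M^{+}}_{1\to1}=\cO(S/\alpha)$ fails there. The paper avoids this entirely: for $H-m<h<H$ it takes $\cQ_h=(\fO\times\fA)^{H-h}$ and uses the trivial shift/indicator operators $\bM_h(o,a)=[\mathbf{1}([o,a,\omega_h]=\omega_{h-1})]$, for which $\m_1(\omega_h)=\e_{\omega_h}$ and the weight bound holds with $\gamma=1$. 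Replace your truncated-kernel argument with this and the proof goes through.
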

\paragraph{New PSR operators for observable POMDPs.} The key challenge in proving  Theorem \ref{thm:pomdp} is to construct a set of PSR operators that satisfy  Condition \ref{asp:psr} with parameter $\gamma$ independent of the number of observations  $O$. For simplicity of illustration, let us consider $1$-step $\alpha$-observable tabular POMDPs as examples in this paragraph. Previous work \citep{liu2022partially} and concurrent  works \citep{zhan2022pac,chen2022partially} all adopt  the following operator construction:
$$
\bM_h(o,a)=\O_{h+1}\T_{h,a}\diag(\O_h(o\mid \cdot))\O_h^\dagger \in \R^{O\times O},
$$
where $\T_{h,a}\in\R^{S\times S}$ is the transition matrix of action  $a$,  $\O_h\in\R^{O\times S}$ is the observation matrix and $\O_h(o\mid \cdot)$ is the $o\th$ row of $\O_h$, all for step $h$.
However, the above operators have $\gamma$ scaling as  $\cO(\alpha/\sqrt{O})$ in the worst case, which hinders  generalization to the infinite-observation settings. To address this issue, we propose a different  operator construction  based on a novel $\ell_1$-norm matrix inverse technique (Lemma \ref{lem:inverse-O}):
$$
\bM_h(o,a)=\O_{h+1}\T_{h,a}\diag(\O_h(o\mid \cdot))(\Y_h +\O_h^\dagger)\in \R^{O\times O}
\text { where } \Y_h\in\argmin_{\tilde\Y\in\R^{S\times O}}\|\tilde\Y+\O_h^\dagger\|_1,
$$
which, importantly,  satisfies Condition \ref{asp:psr} with   $\gamma=\cO(\alpha/S)$ completely independent of $O$. 
When moving from the single-step observable tabular  setting to the more challenging multi-step observable infinite-observation setting, 
the same idea still plays an important role  in constructing well-conditioned PSR operators,  where we  first use a novel  partition technique to group different observations to obtain an $(\alpha/2)$-observable meta-POMDP with finite but exponentially many meta-observations and then apply the above operator construction on top of the meta-POMDP.  For more technical details, please refer to Appendix \ref{app:observable-finite} and \ref{app:observable-continuous}.

\paragraph{Sample complexity.} By combining Theorem \ref{thm:pomdp}  with Theorem \ref{thm:psr}, we immediately obtain the following sample-efficiency guarantee for learning observable POMDPs with OMLE.

\begin{corollary}\label{cor:pomdp-sample}
Let $\Theta$ be a model class of $m$-step $\alpha$-observable POMDPs.
There exists an absolute constant $c>0$ such that for any $\delta\in(0,1]$ and $K\in\N$,  if we choose $\beta = c\log(T\cN_{\Theta}(T^{-1})\delta^{-1})$ with $T=KHA^m$ in \omle (Algorithm \ref{alg:omle}), then with probability at least $1-\delta$, 
      $$
      V^\star - V^{\pi^{\rm out}} \le \poly(\alpha^{-1},S,A^m,H,\log K)\times \sqrt{\frac{\beta}{K}}~.
      $$
\end{corollary}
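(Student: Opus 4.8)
The plan is to obtain Corollary \ref{cor:pomdp-sample} by instantiating Theorem \ref{thm:psr} with the parameters supplied by Theorem \ref{thm:pomdp}, together with one elementary observation: every POMDP with $S$ latent states has rank at most $S$ as a sequential decision making problem. To see this, fix a step $h$, a history $\tau_h=(o_{1:h},a_{1:h})$ and a future $\omega_h=(o_{h+1:m},a_{h+1:m})$, and condition on the latent state entering step $h+1$:
\[
[\D_h]_{\tau_h,\omega_h}=\sum_{s\in\fS}\P\big(o_{1:h},\,s_{h+1}=s\mid a_{1:h}\big)\cdot\P\big(o_{h+1:m}\mid s_{h+1}=s,\,a_{h+1:m}\big).
\]
The right-hand side displays $\D_h$ as a product of an $|\cT_h|\times S$ matrix (rows indexed by histories) and an $S\times|\Omega_h|$ matrix (columns indexed by futures), so $\rank(\D_h)\le S$ for every $h$, hence the POMDP class has rank $r\le S$.

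Next I would invoke Theorem \ref{thm:pomdp}: for a model class $\Theta$ of $m$-step $\alpha$-observable POMDPs, $\Theta$ is a class of rank-$r$ $\gamma$-well-conditioned PSRs with $\gamma=\cO(\alpha/S)$ and core action sets $\Qa_h=\fA^{\min\{m-1,H-h\}}$, so that $\max_h|\Qa_h|\le A^{m-1}$ (and $r\le S$ by the step above). Plugging these into the exploration rule \eqref{eq:explore-func} gives $|\Pie|=\sum_{h\in[H-1]}|\Qa_h|\le HA^{m-1}$, and the quantity $T$ appearing in Theorem \ref{thm:psr} is $T=KH\max_h|\Qa_h|\le KHA^{m-1}\le KHA^{m}$. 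Since enlarging $T$ only enlarges $\beta=c\log(T\cN_\Theta(T^{-1})\delta^{-1})$, and since the guarantee of Theorem \ref{thm:psr} (and of Theorem \ref{thm:omle} behind it) remains valid and only weakens when $\beta$ is increased, one may freely use the slightly looser value $T=KHA^{m}$ stated in the corollary.

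It then remains to substitute $r\le S$, $\gamma^{-1}=\cO(S/\alpha)$ and $\max_h|\Qa_h|\le A^{m-1}$ into the conclusion of Theorem \ref{thm:psr},
\[
V^\star-V^{\pi^{\rm out}}\le\poly\!\big(r,\gamma^{-1},\max_h|\Qa_h|,A,H,\log K\big)\cdot\sqrt{\beta/K},
\]
and collect terms: each of $r$, $\gamma^{-1}$ and $A^{m-1}$ is a polynomial in $\alpha^{-1},S,A^{m},H$, so the prefactor collapses to $\poly(\alpha^{-1},S,A^{m},H,\log K)$, which yields exactly the claimed bound with the stated $\beta$. I do not expect a genuine obstacle here: all the substantive work is already carried out in Theorem \ref{thm:pomdp} (the novel $\ell_1$-bounded PSR operator construction and the meta-POMDP partition that make $\gamma$ independent of the observation space) and in Theorem \ref{thm:psr} (the sharp $\ell_1$-elliptical-potential argument). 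The only points that need care are the rank bound above, the bookkeeping of $|\Pie|$ and $T$, and confirming that the $\poly$ factor of Theorem \ref{thm:psr} genuinely absorbs the substitutions without reintroducing any dependence on $|\fO|$ — which it does, since Theorem \ref{thm:psr} is already observation-space free.
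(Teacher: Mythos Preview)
Your proposal is correct and follows essentially the same route as the paper: the paper derives Corollary~\ref{cor:pomdp-sample} by combining Theorem~\ref{thm:pomdp} with Theorem~\ref{thm:psr}, and you have simply filled in the bookkeeping (the rank bound $r\le S$, the size of $\Qa_h$, and the harmless enlargement of $T$) that the paper leaves implicit. Your direct factorization argument for $r\le S$ is slightly more elementary than the paper's appeal to Lemma~\ref{lem:decodable-linearMDPs}, but otherwise the two proofs coincide.
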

Different from previous works on tabular POMDPs \citep{jin2020provably,liu2022partially,golowich2022learning} where the sample complexity scales with the number of observations, The result in Corollary \ref{cor:pomdp-sample} completely gets rid of the dependence on $O$ thanks to our novel  PSR operator design  as is discussed above. As a result, it also applies to learning observable POMDPs with continuous observations as long as the log-bracketing number of model class $\Theta$ is well controlled, whereas previous works cannot.

\subsubsection{Observable tabular POMDPs}

We first consider tabular observable POMDPs where the number of observations is finite. In this case, the $m$-step observation-action probability kernel $\M_h$ is equivalent to an $O^{m}A^{m-1}$ by $S$ matrix wherein the entry at the intersection of the $(\o,\a)^{\rm th}$ row and the $s^{\rm th}$ column is equal to $\P(o_{h:h+m-1}=\o \mid s_h=s,a_{h:h+m-2} =\a )$. And the observable condition (Condition \ref{cond:pomdp}) can be equivalently written as:
\begin{equation}\label{eq:tabular-weakly-revealing}
\max_{h\in[H-m+1]} \| \M_h (\bnu_1 -\bnu_2)\|_1 \ge  \alpha \|\bnu_1 - \bnu_2\|_1.    
\end{equation}
To apply \omle to tabular POMDPs with $S$ states, $O$ observations and $A$ actions, we choose the model class $\Theta$ to consist of all the legitimate POMDP parameterizations $\theta=(\T,\O,\bmu_1)$  whose corresponding $m$-step observation-action probability matrices satisfy Equation \eqref{eq:tabular-weakly-revealing}.
By simple discretization argument \citep[e.g., see Appendix B in ][]{liu2022partially}, we can bound the $\epsilon$-bracketing number of $\Theta$ by 
\begin{equation}\label{eq:pomdp-bracketing}
\log\cN_\Theta(\epsilon) \le \cO(H(S^2A+SO)\log(SAOH\epsilon^{-1})).
\end{equation}
Plugging the above upper bound back into Theorem \ref{thm:pomdp}, we immediately recover the sample efficiency guarantee for learning tabular observable POMDPs in \citep{liu2022partially}.

\subsubsection{Observable POMDPs with Gaussian emission} \label{sec:GM-POMDP}
To showcase the power of Theorem \ref{thm:pomdp}  in handling POMDPs with continuous observations, we consider the model of POMDPs with  Gaussian mixture emissions (abbreviated as GM-POMDP hereafter), which can be intuitively viewed as tabular observable or weakly revealing POMDPs with observations corrupted by Gaussian noise. The Gaussian emissions further allow us to directly control the bracketing number. We start with the formal definition of GM-POMDPs.
\begin{definition}
[GM-POMDPs]\label{example:gm-pomdp}
A $d$-dimensional $n$-components GM-POMDP is a POMDP where the observation distributions are $d$-dimensional Gaussian mixtures of size $n$, i.e., $$\O_h(\cdot\mid s)= \sum_{i=1}^n \Wbb_{h}(i\mid s) \times {\rm Gauss}(\x_{h,i},\sigma_h\cdot\I_{d\times d})$$ where $\Wbb_{h}(\cdot\mid s)\in\Delta_n$, $\x_{h,i}\in\R^d$ and $\sigma_{h}>0$.
\end{definition}
Without further assumptions on GM-POMDPs, the observable condition  can be arbitrarily violated and sample-efficient learning is in general impossible.  Therefore, we introduce the following natural separation condition on the Gaussian mixtures in GM-POMDPs, which, once being satisfied, immediately implies the observable condition holds.  To condense notations, denote $\Wbb:=[\Wbb_{h}(\cdot\mid s)]_{s\in\fS}\in\R^{n\times S}$.
\begin{condition}[$\eta$-separable condition]\label{cond:GM-pomdp}
For all $h\in[H]$, $i\neq j\in[n]$ and $\bnu_1,\bnu_2\in\Delta_\fS$, we have 
$$\begin{cases}
\|\x_{hi} - \x_{hj}\|_2 \ge 4\sqrt{\log (d+1)}\times \sigma_h, \\
 \| \Wbb_h(\bnu_1-\bnu_2)\|_1 \ge \eta \|\bnu_1-\bnu_2\|_1.
\end{cases}$$
\end{condition}
Condition \ref{cond:GM-pomdp} requires that (a) different base Gaussian components are well separated, which is standard in learning Gaussian mixtures in classic theory of statistics, and (b) different latent state distributions induce different weights over the base Gaussian components, which resembles the one-step observable condition for tabular POMDPs. 
Importantly, in Lemma \ref{lem:gmpomdps-alpha} in Appendix \ref{app:gm-pomdps}, we show  that any  GM-POMDPs satisfying the $\eta$-separable condition are $\Omega(\eta)$-observable POMDPs. We remark that GM-POMDPs belongs to the infinite observation extension of tabular observable POMDP but not tabular weakly-revealing POMDPs, this is also the major reason we choose to present observable POMDPs in section \ref{subsec:weakly-revealing-pomdps}.

To apply \omle to learning $\eta$-separable  GM-POMDPs with $S$ states, $A$ actions and $n$  base Gaussian components in $\R^d$, we construct the model class $\Theta$ to include all the valid POMDP models wherein (a) the observation distributions are $\eta$-separable (Condition \ref{cond:GM-pomdp}) and (b) the norm of the mean and variance  of the base Gaussian components are well behaved. Formally,  define 
$$
\Theta:=\left\{ \left(\T,\Wbb,\{(\x_{h,i},\sigma_{h}\cdot \I_{d\times d})\}_{h,i},\bmu_1\right):~~\eta\text{-separable, }\|\x_{h,i}\|_2 \le C_x \text{ and }  \Csl \le \sigma_h \le \Csu\right\}.
$$
By carefully discretizing the parameter space and constructing the envelope functions,  we can derive the following upper bound for the bracketing number of model class $\Theta$ 
(Lemma \ref{lem:gmpomdps-bracket} in Appendix \ref{app:gm-pomdps}): 
\begin{equation}\label{eq:gm-pomdps-bracket}
    \cN_\Theta(\epsilon) 
    \le 
\exp\left(\Theta\bigg( 
H(S^2 A+ Sn + nd) \log(HSAnd(\Csu/\Csl)(C_x/\Csu)\cdot\epsilon^{-1})
\bigg)\right):=\overline{\cN}_\Theta(\epsilon). 
\end{equation}
Now that we know  $\eta$-separable GM-POMDPs are $\Omega(\eta)$-observable and have bounded bracketing number, we can invoke Theorem \ref{thm:pomdp}, which gives the following sample complexity guarantee for learning $\eta$-separable GM-POMDPs with \omle.
\begin{proposition}\label{prop:gauss-pomdp}
Suppose Condition \ref{cond:GM-pomdp} holds. 
There exists an absolute constant $c>0$ such that for any $\delta\in(0,1]$ and $K\in\N$,  if we choose $\beta = c\log(K\overline{\cN}_{\Theta}(K^{-1})\delta^{-1})$ in \omle (Algorithm \ref{alg:omle}) with $\overline{\cN}_\Theta$ specified by Equation \eqref{eq:gm-pomdps-bracket}, then with probability at least $1-\delta$, 
      $$
      V^\star - V^{\pi^{\rm out}} \le \poly(\eta^{-1},S,A,H,n,d,\log K,\log(\Csu/\Csl),\log(C_x/\Csu))\times K^{-1/2}~.
      $$\end{proposition}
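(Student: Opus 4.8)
The plan is to obtain Proposition~\ref{prop:gauss-pomdp} by recognizing the model class $\Theta$ of $\eta$-separable GM-POMDPs as a subclass of $1$-step $\Omega(\eta)$-observable POMDPs with a well-controlled bracketing number, and then invoking Corollary~\ref{cor:pomdp-sample} (equivalently, Theorem~\ref{thm:pomdp} composed with Theorem~\ref{thm:psr}). Concretely there are three steps: (i) show that Condition~\ref{cond:GM-pomdp} implies the one-step observable condition (Condition~\ref{cond:pomdp} with $m=1$) at level $\alpha=\Omega(\eta)$; (ii) record the bracketing bound~\eqref{eq:gm-pomdps-bracket}; (iii) plug both into the generic observable-POMDP guarantee and collect the polynomial factors.

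\textbf{Step (i): separability $\Rightarrow$ observability.} For a GM-POMDP the one-step emission is $\O_h(\cdot\mid s)=\sum_{i=1}^n\Wbb_h(i\mid s)\,\mathrm{Gauss}(\x_{h,i},\sigma_h\I_{d\times d})$, hence for any $\bnu\in\Delta_{\fS}$ we have $\E_{s\sim\bnu}[\O_h(\cdot\mid s)]=\sum_{i=1}^n(\Wbb_h\bnu)_i\,\mathrm{Gauss}(\x_{h,i},\sigma_h\I_{d\times d})$. Writing $v:=\Wbb_h(\bnu_1-\bnu_2)$ (so $\sum_i v_i=0$), the quantity on the left side of Condition~\ref{cond:pomdp} for $m=1$ equals $\|\sum_i v_i\,\mathrm{Gauss}(\x_{h,i},\sigma_h\I)\|_1$. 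The core estimate --- this is Lemma~\ref{lem:gmpomdps-alpha} --- is that the mean-separation clause of Condition~\ref{cond:GM-pomdp}, $\|\x_{h,i}-\x_{h,j}\|_2\ge 4\sqrt{\log(d+1)}\,\sigma_h$, forces $\|\sum_i v_i\,\mathrm{Gauss}(\x_{h,i},\sigma_h\I)\|_1\ge c\,\|v\|_1$; combining this with the weight-separation clause $\|v\|_1=\|\Wbb_h(\bnu_1-\bnu_2)\|_1\ge\eta\,\|\bnu_1-\bnu_2\|_1$ yields Condition~\ref{cond:pomdp} with $m=1$ and $\alpha=\Omega(\eta)$. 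To prove the core estimate I would assign to the $i$-th component a decoding region, e.g.\ its Voronoi cell $R_i$ with respect to the means $\{\x_{h,j}\}_j$: the separation $4\sqrt{\log(d+1)}\,\sigma_h$ turns each pairwise half-space test into a Gaussian tail event of probability at most $\tfrac12(d+1)^{-2}$, so $\mathrm{Gauss}(\x_{h,i},\sigma_h\I)(R_i)$ is close to $1$ while $\mathrm{Gauss}(\x_{h,j},\sigma_h\I)(R_i)$ is tiny for $j\ne i$; integrating $|\sum_k v_k\,\mathrm{Gauss}(\x_{h,k},\sigma_h\I)|$ over each $R_i$ and summing over $i$ recovers a constant (up to the small tail errors) multiple of $\|v\|_1$.

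\textbf{Steps (ii) and (iii): bracketing and assembly.} Step (ii) is Lemma~\ref{lem:gmpomdps-bracket}: $\cN_\Theta(\epsilon)\le\overline{\cN}_\Theta(\epsilon)$ as in~\eqref{eq:gm-pomdps-bracket}, proved by discretizing the finitely many scalar parameters $(\T,\Wbb,\{(\x_{h,i},\sigma_h)\}_{h,i},\bmu_1)$ on a fine grid --- where $\|\x_{h,i}\|_2\le C_x$ and $\Csl\le\sigma_h\le\Csu$ bound the grid diameter --- and controlling the policy-weighted $\ell_1$-distance on $(\fO\times\fA)^H$ via the Lipschitz dependence of a length-$H$ product of Gaussian densities on their means and variances after a standard Gaussian-tail truncation of $\R^d$; the exponent $H(S^2A+Sn+nd)$ just counts the scalar parameters per step ($S^2A$ for $\T$, $Sn$ for $\Wbb$, $nd$ for the means). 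For Step (iii), by Step (i) the class $\Theta$ consists of $1$-step $\Omega(\eta)$-observable POMDPs, so Theorem~\ref{thm:pomdp} applies with $\gamma=\Omega(\eta/S)$ and trivial core action sequences (length $m-1=0$, hence $\max_h|\Qa_h|=1$), and the PSR rank obeys $r\le S$; Corollary~\ref{cor:pomdp-sample} with $m=1$ (so $A^m=A$ and $T=KHA$) then gives $V^\star-V^{\pi^{\rm out}}\le\poly(\eta^{-1},S,A,H,\log K)\cdot\sqrt{\beta/K}$ with $\beta=c\log(T\cN_\Theta(T^{-1})\delta^{-1})$. Substituting $\cN_\Theta(T^{-1})\le\overline{\cN}_\Theta(T^{-1})$ from Step (ii) and folding $\log T$ into $\log K$ shows $\beta$ is of order $H(S^2A+Sn+nd)$ times a logarithm of $HSAnd(\Csu/\Csl)(C_x/\Csu)K\delta^{-1}$, i.e.\ exactly $c\log(K\overline{\cN}_\Theta(K^{-1})\delta^{-1})$ up to a $\polylog$ change of arguments; since $\beta$ enters only under a square root and inside a logarithm, the product $\poly(\eta^{-1},S,A,H,\log K)\cdot\sqrt{\beta}$ collapses to $\poly(\eta^{-1},S,A,H,n,d,\log K,\log(\Csu/\Csl),\log(C_x/\Csu))$, giving the claimed $K^{-1/2}$ rate.

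\textbf{Main obstacle.} Modulo Lemmas~\ref{lem:gmpomdps-alpha} and~\ref{lem:gmpomdps-bracket}, the proposition is a short composition of already-established results. The real content is Step (i): the quantitative statement that a difference of well-separated $d$-dimensional equal-covariance Gaussian mixtures with zero-sum weight vector has $L^1$-norm at least a (dimension-independent, or at worst $\poly(n,d)$) constant times the $\ell_1$-norm of that weight vector --- i.e.\ that well-separated Gaussian components are robustly linearly independent in $L^1$ --- the delicate point being to prevent the lower-bound constant from degrading as the number of components $n$ grows; the Voronoi-cell/Gaussian-tail argument above is what I expect to carry this, since a $\poly(n,d)$ loss is harmless here. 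The continuous-observation bracketing estimate of Step (ii) is the secondary technical point.
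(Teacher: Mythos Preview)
Your proposal is correct and follows essentially the same route as the paper: the paper's proof of Proposition~\ref{prop:gauss-pomdp} is the one-line combination of Lemma~\ref{lem:gmpomdps-alpha} (your Step (i)), Lemma~\ref{lem:gmpomdps-bracket} (your Step (ii)), and Theorem~\ref{thm:pomdp}/Corollary~\ref{cor:pomdp-sample} (your Step (iii)), and the paper's proof of Lemma~\ref{lem:gmpomdps-alpha} uses exactly the argmax/Voronoi decoding regions $f_i(\z)=\mathbf{1}(p_i(\z)=\max_j p_j(\z))$ that you sketch. The only cosmetic difference is in Step (ii): the paper builds explicit pointwise upper brackets for the Gaussian densities via floor/ceiling discretization of $(\x_{h,i},\sigma_h)$ and a closed-form envelope, rather than the Lipschitz-plus-truncation route you describe, but both are standard and lead to the same bracketing exponent.
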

Despite the observation space being infinitely large and unbounded, the above sample complexity only scales polynomially with respect to the dimension of the observation space and other relevant finite parameters. Finally,  we emphasize that although we only focus on POMDPs with Gaussian mixture observations in this subsection, our main result (Theorem \ref{thm:pomdp}) also applies to learning other types of continuous  observation distributions as long as the observable condition (Condition \ref{cond:pomdp}) holds and the model class has bounded bracketing number.

\subsection{Multi-step decodable POMDPs}

  Multi-step decodable POMDPs \cite{efroni2022provable} is subclass of POMDPs  in which a suffix of length $m$ of the most recent history contains sufficient  information to decode the latent state. To simplify notations, denote $m(h)=\min\{h-m+1,1\}$.  Formally, 

\begin{condition}\citep[$m$-step decodable POMDPs,][]{efroni2022provable}\label{cond:decodable-pomdp}
There exists an unknown decoder $\zeta=\{\zeta_h\}_{h=1}^H$ such that for every $(o,a)_{1:H}$ we have $s_h=\zeta(z_h)$ for all $h\in[H]$, where $z_h=[(o,a)_{m(h):h-1},o_h]$.
\end{condition}
We remark that neither of multi-step decodable POMDPs or multi-step observable POMDPs is more general than the other. That is, each of them contains statistically tractable POMDP instances that are not included by the other class (see Lemma \ref{lem:decodable-no-revealing} in Appendix \ref{appsub:m-decodable} for the concrete constructions). 
Nonetheless, the following theorem  states that multi-step decodable POMDPs also falls into the family of $\gamma$-\name PSRs with $\gamma=1$ and the sets of core test actions equal to $\fA^{m}$. As a result, \omle\ also enjoys polynomial sample efficiency guarantee for learning multi-step decodable POMDPs.

\begin{theorem}\label{thm:decodable-pomdp}
Let $\Theta$ be a model class of $m$-step decodable POMDPs. Then $\Theta$ admits  rank-$r$ PSR representations  with $\cQ_h^A=\fA^{\min\{m,H-h\}}$  and satisfies Condition \ref{asp:psr} with $\gamma=1$. Moreover, 
there exists an absolute constant $c>0$ such that for any $\delta\in(0,1]$ and $K\in\N$,  if we choose $\beta = c\log(T\cN_{\Theta}(T^{-1})\delta^{-1})$ with $T=KHA^m$ in \omle (Algorithm \ref{alg:omle}), then with probability at least $1-\delta$, 
      $$
      V^\star - V^{\pi^{\rm out}} \le \poly(r,A^m,H,\log K)\times \sqrt{\frac{\beta}{K}}~,
      $$
    where we always have $r\le S$ in any POMDP and $r\le \dlin$ when the underlying MDP can be represented as a $\dlin$-dimensional kernel linear MDP. 
\end{theorem}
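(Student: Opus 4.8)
The statement bundles three claims: the PSR representation (rank $r$, core action sets $\cQ_h^A=\fA^{\min\{m,H-h\}}$), the well-conditioning ($\gamma=1$), and the sample-complexity bound. My plan is to establish the first two directly from Condition \ref{cond:decodable-pomdp} and then read off the third by specializing Theorem \ref{thm:psr} (whose exploration policies \eqref{eq:explore-func} are already the relevant ones): with $\gamma=1$ and $\max_h|\cQ_h^A|=\max_h A^{\min\{m,H-h\}}\le A^m$, Theorem \ref{thm:psr} gives exactly $V^\star-V^{\pi^{\rm out}}\le\poly(r,A^m,H,\log K)\sqrt{\beta/K}$ with the stated $\beta$ and $T=KHA^m$. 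The rank bounds are short. In any POMDP, $\Pb(\tau_h,\omega_h)=\Pb(\tau_h)\sum_{s}\P(s_{h+1}=s\mid\tau_h)\,\P(\text{obs of }\omega_h\mid s_{h+1}=s,\text{acts of }\omega_h)$, so $\D_h$ factors through the $S$-dimensional space of next-step state distributions and $\rank(\D_h)\le S$; if moreover the transitions factor as $\T_{h,a}(s'\mid s)=\langle\psi_h(s,a),\mu_h(s')\rangle$ with $\psi_h(s,a)\in\R^{\dlin}$ (a $\dlin$-dimensional kernel linear MDP), then, using $s_h=\zeta(z_h)$, the vector $\P(s_{h+1}=\cdot\mid\tau_h)=\T_{h,a_h}(\cdot\mid\zeta(z_h))$ lies in the fixed $\dlin$-dimensional span of $\{\mu_h(s')\}_{s'}$, hence $\rank(\D_h)\le\dlin$.

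For the PSR representation: fix $h$, set $m_h=\min\{m,H-h\}$, and take $\cQ_h$ to be all observation--action futures of length $m_h$, so $\cQ_h^A=\fA^{m_h}$ as claimed. Condition \ref{cond:decodable-pomdp} states that the realized state is a deterministic function of the recent window, $s_{h'}=\zeta(z_{h'})$ for every $h'$; hence along any continuation of $\tau_h$ the entire latent trajectory is decoded deterministically from the observations (together with the last $m-1$ pairs of $\tau_h$). This sets up a ``windowed filter'': the sufficient statistic $\bpsi(\tau_h)$ depends on $\tau_h$ only through $\Pb(\tau_h)$ and the decoded pair $(\zeta(z_h),a_h)$ (which governs the law of $s_{h+1}$), and the PSR operators $\bM_h(o,a)$ act by shifting the length-$m$ window by one slot, Bayes-incorporating the new observation $o$ via the diagonal emission-density map $\diag(\O_h(o\mid\cdot))$, reading off the decoded state, and applying $\T_{h,a}$; $\bpsi_0$ and $\bphi_H$ are the boundary instances. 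One then verifies \eqref{eq:PSR_1}--\eqref{eq:PSR_2} by induction on $h$, crucially using that decodability makes the latent-state posterior one-hot given the window, so the filter is closed-form and carries no prior.

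For $\gamma=1$, the crux: we must show $\max_{\norm{\x}_1\le1}\sum_{\omega_h\in\Omega_h^{(i)}}\pi(\omega_h)\,|\m_i(\omega_h)\trans\x|\le1$ for $i\in\{1,2\}$, where $\m_1,\m_2$ are the weight vectors \eqref{eq:weight_vector_1}--\eqref{eq:weight_vector_2} built from the operators above. Writing $\m_i(\omega_h)\trans\x$ out with the filter form of $\bM$, it becomes an alternating composition applied to $\x$ of three kinds of linear maps: diagonal emission-density maps $\diag(\O_{h'}(o_{h'}\mid\cdot))$, which satisfy $\sum_{o}\norm{\diag(\O_{h'}(o\mid\cdot))\v}_1=\norm{\v}_1$; stochastic transition maps $\T_{h',a'}$, which are $\ell_1$-nonexpansive; and decoder-induced maps, which merely select or copy coordinates and so are $\ell_1$-nonexpansive. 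Summing over all intermediate observations and over intermediate actions weighted by the policy---using $\sum_{a'}\pi_{h'}(a'\mid\cdot)=1$ at each step, and for $i=2$ also $\pi(q\mid\cdot)\le1$ for every core test $q\in\cQ_{h+1}$ after pushing the core-test coordinate through the final operator---telescopes the entire sum down to $\norm{\x}_1\le1$.

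I expect the $\gamma=1$ step to be the real obstacle. First, making the length-$m$ windowed filter and its weight vectors explicit enough to run the telescoping cleanly requires careful bookkeeping of the window shift and of the truncation of tests near step $H$. Second, and more delicately, since $|\cQ_h|>r$ the PSR is overparameterized, so the weight vectors carry a change of coordinates from the core-test coordinates back to the decoded-state coordinates; one must show this factor does not inflate the $\ell_1$-norm, i.e., that the telescoped bound is exactly $1$ and not, say, a factor of $|\fO|$ (which is precisely the failure mode that makes general, non-decodable POMDPs intractable, cf.\ Proposition \ref{prop:psr-hard}). The point is that the deterministic decoder forces the future-observation matrices attached to the core tests to have the right block/selection structure, so this change of coordinates composes harmlessly with the emission-density normalization used above; nailing this down is the main technical work.
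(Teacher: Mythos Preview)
Your high-level strategy---construct a PSR representation with the stated core action sets, verify Condition~\ref{asp:psr} with $\gamma=1$, invoke Theorem~\ref{thm:psr}, and bound the rank---is exactly the paper's plan, and your rank arguments and the reduction to Theorem~\ref{thm:psr} are fine.

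The substantive divergence is in the operator construction and hence the $\gamma=1$ verification. You propose to build $\bM_h(o,a)$ from POMDP primitives (emission diagonals $\diag(\O_h(o\mid\cdot))$, transitions $\T_{h,a}$, decoder maps) and then argue $\gamma=1$ by telescoping $\ell_1$-nonexpansive factors, while correctly flagging that the implicit change of coordinates between state-space and core-test-space might inflate the norm. The paper sidesteps this entirely by defining $\bM_h(o_h,a_h)$ \emph{directly} in core-test (window) coordinates as a pure window-shift: for $h\le H-m$, its $(\omega_h,\omega_{h-1})$-entry is
\[
\Pb\big((\omega_h)_m\mid\omega_{h-1}\big)\cdot\mathbf{1}\big([o_h,a_h,(\omega_h)_{1:m-1}]=\omega_{h-1}\big),
\]
and for $h>H-m$ it is just the indicator. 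No $\O$, no $\T$, no $\zeta$, no change of coordinates; decodability enters only to make the conditional probability $\Pb((\omega_h)_m\mid\omega_{h-1})$ well-defined (the window determines the state). From this one reads off directly that for $h\le H-m$,
\[
\m_1\big((o,a)_{h+1:H}\big)=\e_{(o,a)_{h+1:h+m}}\cdot\Pb\big((o,a)_{h+m+1:H}\mid(o,a)_{h+1:h+m}\big),
\]
so $\m_1(\omega)^\top\e_q\neq 0$ only when the first $m$ slots of $\omega$ equal $q$, and then $\sum_{\omega}\pi(\omega)\,|\m_1(\omega)^\top\e_q|=\pi(q)\le 1$; the general $\|\x\|_1\le 1$ case follows by triangle inequality, and $\m_2$ is handled identically. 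In other words, the obstacle you anticipate---norm inflation from overparameterization---is not overcome by a clever estimate; it simply never arises, because the operators live natively in the window coordinates and $\m_1,\m_2$ turn out to be (scaled) coordinate vectors. Your telescoping plan is not wrong in spirit, but making it precise would amount to rediscovering this construction.
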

Similar to the results in previous sections, the sample complexity in Theorem \ref{thm:decodable-pomdp} is independent of the number of observations, which means it also applies to the cases with infinite observations as long as the log-bracketing number of $\Theta$ is finite. 
Moreover, the above result  scales with the rank of the PSR representations $r$ instead of the number of latent states $S$. Although it is well-known $r\le S$ in any POMDPs, the rank can be much smaller than the number of latent states in certain settings of interest.  
For example, when the underlying MDP can be represented as a $\dlin$-dimensional  linear kernel MDP \citep{yang2020reinforcement}, we have $r\le \dlin$  while $S$ can be arbitrarily large.

\paragraph{Finite observations.} When the number of observations is finite, we can easily upper bound the bracketing number of $\Theta$ by the standard discretization arguments as in Equation \eqref{eq:pomdp-bracketing}.   And by plugging the bound back into Theorem \ref{thm:decodable-pomdp}, we immediately obtain a $\poly(S,A^m,O,H,\log \epsilon^{-1})\times \epsilon^{-2} $ sample complexity upper bound for finding an $\epsilon$-optimal policy with \omle\ in tabular $m$-step decodable POMDPs.

\subsection{POMDPs with a few known core action sequences}

In Section \ref{subsec:weakly-revealing-pomdps}, we prove that  if a POMDP satisfies that any two state mixtures can be distinguished from the observation distributions induced by taking  $m$-step random actions, then it can be represented as an \name PSR and \omle\ can learn it sample efficiently. 
 However, the sample complexity there scales exponentially with respect to $m$ due to $m$-step random exploration, which could be prohibitively large even for moderate $m$. 
 In this subsection, we show that it is possible to get rid of this exponential dependence when there exist a small set of \emph{known} exploratory action sequences so that any two state mixtures can be distinguished from the observation distributions induced by at least one exploratory action sequence.
 
 To simplify  notations, we first define the observation-action probability kernel $\K_h$ at step $h\in[H]$: 
For a latent state $s$ and an  action sequence $\a$ of length $l\le H-h$, 
$\K_{h}(s,\a)$ is equal to the probability density function over $o_{h:h+l}$ provided that action sequence $\a$ is used from state $s$ and step $h$.
 Formally, we consider the following observable-style condition.
 \begin{condition}\label{cond:pomdp-small-core-action-set}
 For any $h\in[H]$, there exists  \emph{known} $\cA_h$ so that for any $\theta\in\Theta$ and $\bnu_1,\bnu_2\in\Delta_S$: 
 \begin{equation} \max_{\a\in\cA_h}
   \big\|\E_{s\sim \bnu_1}\left[\K_{h}(s,\a)\right] - \E_{s\sim \bnu_2}\left[\K_{h}(s,\a)\right]\big\|_1 \ge \alpha \|\bnu_1-\bnu_2\|_1.
\end{equation}
 \end{condition}
 Notice that in Condition \ref{cond:pomdp-small-core-action-set}, the exploratory action sequences in $\cA_h$ can be length-$\Omega(H)$, which means a POMDP class $\Theta$ that satisfies Condition  \ref{cond:pomdp-small-core-action-set} could  satisfy the $m$-step observable condition only for $m=\Omega(H)$. 
 Nonetheless, the following theorem states that as long as $\Theta$ satisfies Condition \ref{cond:pomdp-small-core-action-set} with  $\cA_h$ of small cardinality, then \omle\ is guaranteed to learn a near-optimal policy for any $\theta\in\Theta$ within a number of samples that scales only polynomially with respect to $\max_h |\cA_h|$.

\begin{theorem}\label{thm:pomdp-small-core-action-set}
Let $\Theta$ be a model class of POMDPs that satisfy Condition \ref{cond:pomdp-small-core-action-set} with $\alpha$ and $\{\cA_h\}_{h=1}^H$. 
Then $\Theta$ satisfies Condition \ref{asp:psr} with $\gamma=\alpha/(S+|\cA_h|)$ and $\cQ_h^A=\cA_h$. Moreover, 
there exists an absolute constant $c>0$ such that for any $\delta\in(0,1]$ and $K\in\N$,  if we choose $\beta = c\log(T\cN_{\Theta}(T^{-1})\delta^{-1})$ with $T=KH\max_h|\cA_h|$ in \omle (Algorithm \ref{alg:omle}), then with probability at least $1-\delta$, 
      $$
      V^\star - V^{\pi^{\rm out}} \le \poly(\alpha^{-1},S,\max_h|\cA_h|,H,\log K)\times \sqrt{\frac{\beta}{K}}~.
      $$
\end{theorem}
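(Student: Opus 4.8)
The plan is to prove the result in two stages: (i) show that every $\theta\in\Theta$ is a rank-$(\le S)$ $\gamma$-\name PSR with core action sequences $\cQ_h^A=\cA_h$ and $\gamma=\alpha/(S+\max_h|\cA_h|)$; and (ii) feed this into Theorem \ref{thm:psr}. Stage (ii) is immediate: for the exploration operator $\Pie$ of \eqref{eq:explore-func} with $\Qa_h=\cA_h$ we have $|\Pie|\le H\max_h|\cA_h|$, hence $T=KH\max_h|\cA_h|$, and substituting $r\le S$, $\gamma^{-1}\le(S+\max_h|\cA_h|)/\alpha$ and $\max_h|\Qa_h|=\max_h|\cA_h|$ into Theorem \ref{thm:psr} yields exactly the stated $\poly(\alpha^{-1},S,\max_h|\cA_h|,H,\log K)\sqrt{\beta/K}$ rate. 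So the whole content is stage (i), which follows the blueprint of the proof of Theorem \ref{thm:pomdp} (observable POMDPs), with the $m$-step uniformly random exploration / the action set $\fA^{m-1}$ replaced by the fixed known family $\cA_h$, and Condition \ref{cond:pomdp} replaced by Condition \ref{cond:pomdp-small-core-action-set}.

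For stage (i), fix $\theta=(\T,\O,\mu_1)\in\Theta$. For a history $\tau_h=(o_{1:h},a_{1:h})$ let $\b_h(\tau_h)\in\R_{\ge0}^S$ be the unnormalized belief $[\b_h(\tau_h)]_s=\P_\theta(o_{1:h},s_{h+1}=s\mid a_{1:h})$; every row of $\D_h$ factors through $\b_h(\tau_h)$, so $\rank(\D_h)\le S$. Take the core tests $\cQ_h$ to consist of all futures $(\o,\a)$ with $\a$ ranging over the exploratory action sequences in $\cA_h$ and $\o$ over observation sequences of matching length (for continuous observations, replace each point test by its measurable-event analogue as in Appendix \ref{sec:psr_cts}); then $\cQ_h^A=\cA_h$. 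Let $\Phi_h:\R^S\to\R^{|\cQ_h|}$ be the linear map sending a signed state vector to its vector of core-test probabilities, so that $\bpsi(\tau_h)=\Phi_h\,\b_h(\tau_h)$. Finally define the operators $\bM_h(o,a)$ so that, written in the coordinates supplied by the maps $\Phi_\bullet$, they implement the one-step Bayes belief update for $(o,a)$, using the $\ell_1$-optimal pseudo-inverse $\Phi^\dagger+\Y$ of Lemma \ref{lem:inverse-O} (where $\Y$ lies in the subspace annihilated by $\Phi$ and is chosen to minimize the relevant $\ell_1\to\ell_1$ operator norm). One checks that $(\bphi,\bM,\bpsi_0)$ so defined satisfies \eqref{eq:PSR_1}--\eqref{eq:PSR_2}, i.e.\ it is a genuine PSR representation of $\theta$.

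It remains to verify \eqref{eq:well_condition_psr}. Condition \ref{cond:pomdp-small-core-action-set} says that for every mean-zero $v\in\R^S$ (these span the set of differences $\bnu_1-\bnu_2$), $\max_{\a\in\cA_h}\|\Phi_h^{(\a)}v\|_1\ge\alpha\|v\|_1$, where $\Phi_h^{(\a)}$ is the block of $\Phi_h$ for the action sequence $\a$. Augmenting $\Phi_h$ by the normalization test (whose value on $\b$ is $\sum_s\b_s=\|\b\|_1$) promotes this to all of $\R^S$, and combining it with the $\Y$-correction of the previous step produces a left inverse $\Psi_h$ of $\Phi_h$ with $\|\Psi_h\|_{\ell_1\to\ell_1}\le(S+|\cA_h|)/\alpha$; the additive $S+|\cA_h|$ is precisely the price of (a) restricting the observable estimate to the mean-zero hyperplane and reinstating the normalization direction, and (b) the $\max$ over the $|\cA_h|$ candidate blocks. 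Now take any $\x$ with $\|\x\|_1\le1$. The construction of the $\bM_h$ makes both weight families \eqref{eq:weight_vector_1} and \eqref{eq:weight_vector_2} satisfy $\m_i(\omega_h)\trans\x=\m_i(\omega_h)\trans\Phi_h(\Psi_h\x)$, and $\m_i(\omega_h)\trans\Phi_h\b$ is exactly the signed probability that $\theta$ assigns to the future $\omega_h$ when the latent state is distributed according to $\b$; hence for any policy $\pi$ independent of the history before step $h+1$, $\sum_{\omega_h}\pi(\omega_h)\,|\m_i(\omega_h)\trans\x|\le\|\Psi_h\x\|_1\le(S+|\cA_h|)/\alpha$. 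This is \eqref{eq:well_condition_psr} with $\gamma=\alpha/(S+|\cA_h|)$; taking the worst $h$ gives $\gamma=\alpha/(S+\max_h|\cA_h|)$ and finishes stage (i).

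The main obstacle is the verification of \eqref{eq:well_condition_psr}: turning Condition \ref{cond:pomdp-small-core-action-set}, which only controls differences of state distributions on $\Delta_S$, into a genuine left inverse of $\Phi_h$ whose $\ell_1\to\ell_1$ norm does not blow up with $|\fO|$ — this is the $\ell_1$-norm matrix pseudo-inverse technique of Lemma \ref{lem:inverse-O} and Section \ref{sec:technical_contribution}, and it is what keeps $\gamma$ (hence the sample complexity) independent of the size of the observation space. A secondary subtlety is that the PSR here is overparameterized, so the two weight families $\m_1,\m_2$ are a priori unrelated and \eqref{eq:well_condition_psr} must be checked for both; this is handled by showing the operator construction above is self-consistent (Appendix \ref{app:self-consistent}), after which the $\m_2$ bound reduces to the $\m_1$ one. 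Beyond these, everything is routine bookkeeping mirroring the proof of Theorem \ref{thm:pomdp} with $A^{m-1}$ replaced throughout by $|\cA_h|$, and the continuous-observation case needs nothing new beyond the measurable-event formulation of core tests (Appendix \ref{sec:psr_cts}).
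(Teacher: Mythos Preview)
Your outline is essentially the paper's own proof: build the PSR operators from the stacked kernel $\K_h$ (your $\Phi_h$), replace the raw pseudo-inverse by the $\ell_1$-optimal correction $\K_h^\dagger+\Y_h$ of Lemma~\ref{lem:inverse-O}, and then invoke Theorem~\ref{thm:psr}. The paper does exactly this (Appendix~D.5), deferring the well-conditioned check verbatim to the observable-POMDP computation in Appendix~\ref{app:observable-finite}.

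Two points need correction, though. First, your plan to handle $\m_2$ by proving self-consistency will not work: the operators $\bM_h=\K_{h+1}\T_{h,a}\diag(\O_h(o\mid\cdot))(\K_h^\dagger+\Y_h)$ are \emph{not} the self-consistent representation of Theorem~\ref{thm:self-consistent} (that one is built from SVDs of the $\D_h$'s and has $\bphi_h$ for every $h$; yours only has $\bphi_H$). Fortunately you do not need self-consistency. Your own observation that $\m_i(\omega_h)\trans = v_i\trans\Psi_h$ for some $v_i\in\R^S$ already gives $\sum_{\omega_h}\pi(\omega_h)|\m_i(\omega_h)\trans\x|\le\|\Psi_h\x\|_1$ for \emph{both} $i=1,2$, because $\sum_{\omega_h}\pi(\omega_h)\,|v_i(\omega_h)\trans\b|\le\sum_s|\b_s|$ for any signed belief $\b$; the paper (Appendix~\ref{app:observable-finite}) verifies the two cases separately in exactly this way. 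Drop the self-consistency detour.

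Second, the ``augment by the normalization test'' step is not how the paper passes from differences $\bnu_1-\bnu_2$ to all of $\R^S$, and your accounting for the factor $S+|\cA_h|$ (``the price of reinstating the normalization direction and the $\max$ over blocks'') is not right. The paper stacks $\{\K_h(\cdot,\a)\}_{\a\in\cA_h}$ into one matrix $\K_h$ so that Condition~\ref{cond:pomdp-small-core-action-set} gives $\|\K_h(\bnu_1-\bnu_2)\|_1\ge\alpha\|\bnu_1-\bnu_2\|_1$, then uses Lemma~\ref{lem:Sep26} (which exploits that each column of $\K_h$ is a sub-probability vector) to extend this to $\|\K_h\x\|_1\ge(\alpha/4)\|\x\|_1$ for all $\x\in\R^S$, and finally applies Lemma~\ref{lem:inverse-O} to get $\|\Psi_h\|_1\le\cO(S/\alpha)$ --- no $|\cA_h|$ appears in $\gamma$.
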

When  the number of exploratory action sequences ($\max_h|\cA_h|$) is small  but their length ($\max_h\max_{\a\in\cA_h}|\a|$) is large, Theorem \ref{thm:pomdp-small-core-action-set} offers  exponentially sharper sample complexity guarantee than Theorem \ref{thm:pomdp}. 
As an extreme case, when each $\cA_h$ contains a single action sequence of length $H-h$, Theorem \ref{thm:pomdp-small-core-action-set} improves over Theorem \ref{thm:pomdp} by a factor of $A^{\Omega{(H)}}$. 


\section{Beyond Low-rank Sequential Decision Making}\label{sec:beyond-low-rank}

In this section, we extend the sample efficiency guarantees of OMLE to any sequential decision making problems under a new structural condition---SAIL condition. We will show that SAIL condition holds not only in all well-conditioned low-rank sequential decision making problems studied in Section \ref{sec:psr}, but also in problems beyond low-rank sequential decision making, such as factored MDPs, low witness rank problems.

\subsection{SAIL condition}

In the fully observable setting, RL with general function approximation has been intensively studied in the theory community, and various complexity measures have been proposed, including Bellman rank \cite{jiang2017contextual}, witness rank \cite{sun2019model}, and more \cite{jin2021bellman, du2021bilinear}. Most of them critical relies on the Bellman error (model-free setting) or the error in model estimation (model-based setting) to have a bilinear structure. Unfortunately, partially observability significantly complicates the learning problem, and neither  structure mentioned above hold for even the basic tabular weakly-revealing POMDPs.

Here, we introduce a new general structural condition that is also capable of addressing partially observable setting. Our new condition can be viewed as a generalizations of the bilinear structures mentioned above. Since our focus is OMLE which is a model-based algorithm, our new condition requires the model estimation error to be upper and lower bounded by \textbf{S}ummation of \textbf{A}bsolute values of \textbf{I}ndependent bi\textbf{L}inear functions (SAIL). Formally, let $\Pi$ denote the universal policy space.

\begin{condition}[\SAIL\ condition]\label{cond:salp}
We say  model class $\Theta$ satisfies $(d, \kappa, B)$-\SAIL\ condition with exploration policy function $\Pie:\Pi\rightarrow2^{\Pi}$, if  there exist two sets of mappings $\{f_{h,i}\}_{(h,i)\in[H]\times[m]}$,  $\{g_{h,i}\}_{(h,i)\in[H]\times[n]}$ from $\Theta$ to $\R^{d}$  such that  
 for any $\theta,\theta'\in\Theta$, and the optimal policy $\pi_\theta$ of model $\theta$: 
 \begin{align*}
    \textstyle  \sum_{\tilde{\pi}\in\Pie (\pi_\theta)} d_{\rm TV}(\P^{\tilde\pi}_{\theta^\star},\P^{\tilde\pi}_{\theta'} )
        \ge&  \textstyle\kappa^{-1}\sum_{h=1}^H \sum_{i=1}^m \sum_{j=1}^n |\langle f_{h,i}(\theta),g_{h,j}(\theta')\rangle|, \\
    \textstyle d_{\rm TV}(\P^{\pi_\theta}_{\theta^\star},\P^{\pi_\theta}_{\theta} )\le& \textstyle \sum_{h=1}^H \sum_{i=1}^m \sum_{j=1}^n |\langle f_{h,i}(\theta),g_{h,j}(\theta)\rangle|,\\
    \textstyle \left(\sum_{i=1}^{m} \| f_{h,i}(\theta)\|_1 \right)\cdot & \textstyle \left( \sum_{j=1}^{n} \| g_{h,j}(\theta')\|_\infty\right) \le B.
 \end{align*}

\end{condition}
The first inequality requires the model estimation error of $\theta'$ (measured by TV distance) on the exploration policies computed using $\theta$ to be lower bounded by a coefficient $\kappa^{-1}$ times SAIL. In particular, the summand $\langle f_{h,i}(\theta),g_{h,j}(\theta')\rangle$ is a bilinear function, because it is a linear function of $f_{h,i}(\theta)$ (features of $\theta$) when $\theta'$ is fixed, and it is also a linear function of $g_{h,j}(\theta')$ (features of $\theta'$) when $\theta$ is fixed. The second inequality requires the model estimation error of $\theta$ on its optimal policy $\pi_\theta$ to be upper bounded by SAIL. The third inequality is a normalization condition.

At a high-level, standard Bellman rank or witness rank can be viewed as conditions similar to SAIL, with the LHS of the first two inequalities replaced by appropriate error measure and the RHS of the first two inequalities replaced by a bilinear function $\langle f(\theta),g(\theta')\rangle$. SAIL condition generalize them by allowing  multiple feature functions $\{f_i\}_{i \in [m]}, \{g_j\}_{j \in [n]}$ which are indexed by $i,j$, and taking summation of them. One key structure here is that the indexes are decoupled between two features $f, g$, and summation is taken over two indexes \emph{independently}. This is crucial in many partially observable applications where $m, n$ are extremely large and we do not want to suffer any dependency on $m, n$ in the sample complexity.

We will prove in Section \ref{sec:example_sail} that SAIL condition is very general, which holds not only in all well-conditioned low-rank sequential decision making problems studied in Section \ref{sec:psr}, but also in problems beyond low-rank sequential decision making, such as factored MDPs, low witness rank problems.

\subsection{Theoretical guarantees for \SAIL}

 Now we present the theoretical guarantees for \omle in learning sequential decision problems that satisfy the  \SAIL\ condition. 

\begin{theorem}
\label{thm:salp}
There exists an absolute constant $c>0$ such that for any $\delta\in(0,1]$ and $K\in\N$,  if we choose $\beta = c\log(T\cN_{\Theta}(T^{-1})\delta^{-1})$ with $T=K|\Pie|$ in \omle (Algorithm \ref{alg:omle}) and assume $(d, \kappa, B)$-\SAIL\ condition holds,
then with probability at least $1-\delta$, we have
      $$
       \sum_{k=1}^{K} \left(V^\star -V^{\pi^k}\right)  \le   \poly(H) d \left( B  + \kappa \sqrt{\beta|\Pie| K} \right)\log^2(K).
      $$
\end{theorem}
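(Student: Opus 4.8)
The plan is to verify that the $(d,\kappa,B)$-\SAIL\ condition implies the generalized eluder-type condition (Condition \ref{cond:eluder}) with a suitable $d_\Theta$ and a function $\xi$ whose leading term scales like $\tilde{\cO}(\sqrt{d_\Theta \Delta |\Pie| K})$, and then invoke Theorem \ref{thm:omle} as a black box. The entire argument therefore reduces to the following implication: if the models $\{\theta^k\}_{k\in[K]}$ produced by \omle\ satisfy $\sum_{t=1}^{k-1}\sum_{\tilde\pi\in\Pi^t_{\exp}} d_{\rm TV}^2(\P^{\tilde\pi}_{\theta^k},\P^{\tilde\pi}_{\theta^\star}) \le \Delta$ for every $k$, then the cumulative error $\sum_{k=1}^K d_{\rm TV}(\P^{\pi^k}_{\theta^k},\P^{\pi^k}_{\theta^\star})$ is controlled. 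First I would apply the SAIL upper bound (second inequality of Condition \ref{cond:salp}) with $\theta=\theta^k$ to get $d_{\rm TV}(\P^{\pi^k}_{\theta^\star},\P^{\pi^k}_{\theta^k}) \le \sum_h\sum_{i,j}|\langle f_{h,i}(\theta^k),g_{h,j}(\theta^k)\rangle|$, and separately apply the SAIL lower bound (first inequality) with $\theta=\theta^k$ and $\theta'=\theta^t$ together with the precondition to obtain, for each $k$,
\begin{equation*}
\sum_{t=1}^{k-1}\sum_{h=1}^H\sum_{i=1}^m\sum_{j=1}^n |\langle f_{h,i}(\theta^k),g_{h,j}(\theta^t)\rangle| \;\le\; \kappa \sum_{t=1}^{k-1}\sum_{\tilde\pi\in\Pi^t_{\exp}} d_{\rm TV}(\P^{\tilde\pi}_{\theta^\star},\P^{\tilde\pi}_{\theta^t}).
\end{equation*}
The right-hand side is in turn bounded using Cauchy--Schwarz and the precondition by $\kappa\sqrt{(k-1)|\Pie|\Delta}$, which after a further summation is benign. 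Combined with the $\ell_1/\ell_\infty$ normalization (third inequality), these facts set up exactly the hypotheses needed for the $\ell_1$-elliptical potential lemma.

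The core combinatorial step is the sharp $\ell_1$-pigeonhole / elliptical potential lemma for SAIL functions (Lemma \ref{prop:pigeon-hole} / Appendix \ref{app:l1-pigeon-hole}), which I would apply per step $h$. For each fixed $h$, set $x_j^t := g_{h,j}(\theta^t) \in \R^d$ and $\theta_i^k := f_{h,i}(\theta^k)\in\R^d$; the precondition-derived bound says the aggregate bilinear interaction of $\theta^k$ against all past $x_j^t$ is small, the normalization says $(\sum_i\|\theta_i^k\|_1)(\sum_j\|x_j^t\|_\infty)\le B$, and we want to bound $\sum_k \sum_{i,j}|\langle \theta_i^k, x_j^k\rangle|$. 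The sharp lemma yields a bound of order $\tilde{\cO}(d\,B + d\,\kappa\sqrt{|\Pie| K \Delta})$ that is crucially independent of $m$ and $n$; summing over $h\in[H]$ contributes the $\poly(H)$ factor. This identifies $d_\Theta = \poly(H)\,d$ (up to the $B$ and $\kappa$ dependence being absorbed into the $\xi$-function rather than $d_\Theta$) and gives $\xi(d_\Theta, K,\Delta,|\Pie|) = \poly(H)\,d\,(B + \kappa\sqrt{\Delta|\Pie|K})\log^2 K$. Then Theorem \ref{thm:omle}, with $\Delta = c_2\beta$ and $\beta = c\log(T\cN_\Theta(T^{-1})\delta^{-1})$, directly gives $\sum_{k=1}^K(V^\star - V^{\pi^k}) \le H\,\xi(d_\Theta,K,c_2\beta,|\Pie|) = \poly(H)\,d\,(B+\kappa\sqrt{\beta|\Pie|K})\log^2 K$, matching the stated bound.

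The main obstacle is the sharp elliptical potential lemma itself: the naive argument (as in \cite{liu2022partially}) pays a multiplicative factor scaling with $m$ and $n$, which is fatal here since $m,n$ can be on the order of the observation-space size. Getting a bound genuinely independent of $m,n$ requires the refined argument in Appendix \ref{app:l1-pigeon-hole}, exploiting that the feature index is decoupled between $f$ and $g$ and that summation over $i$ and $j$ is taken independently, so that one can treat $\sum_i f_{h,i}$ and $\sum_j g_{h,j}$ essentially as aggregate objects while still using the per-summand absolute values only for the triangle-inequality / monotonicity steps. A secondary technical point is bookkeeping the $\poly(H)$ factors: the TV-distance errors are over full trajectories but SAIL decomposes them stepwise, so one must be careful that the horizon factors from the union over $h$ and from Theorem \ref{thm:omle}'s $H$-prefactor combine into a single $\poly(H)$ without hidden blowups.
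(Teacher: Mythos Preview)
Your overall plan is the same as the paper's: establish that the SAIL condition implies the generalized eluder-type condition (this is the content of Lemma \ref{lem:sail-eluder}, proved via Proposition \ref{prop:step-3} and the sharp $\ell_1$-pigeonhole lemma), then invoke Theorem \ref{thm:omle}. The identification of the main technical step and the $\poly(H)$ bookkeeping is accurate.

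However, there is a genuine indexing error in your application of the SAIL lower bound. You set $\theta=\theta^k$ and $\theta'=\theta^t$. Plugging this into the first inequality of Condition \ref{cond:salp} gives
\[
\sum_{\tilde\pi\in\Pie(\pi_{\theta^k})} d_{\rm TV}(\P^{\tilde\pi}_{\theta^\star},\P^{\tilde\pi}_{\theta^t}) \;\ge\; \kappa^{-1}\sum_h\sum_{i,j}|\langle f_{h,i}(\theta^k),g_{h,j}(\theta^t)\rangle|,
\]
whose left-hand side involves the exploration set $\Pie(\pi_{\theta^k})=\Pi^k_{\exp}$ and the tested model $\theta^t$. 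But the eluder precondition controls $\sum_{t<k}\sum_{\tilde\pi\in\Pi^t_{\exp}} d_{\rm TV}^2(\P^{\tilde\pi}_{\theta^k},\P^{\tilde\pi}_{\theta^\star})$ --- past policies $\Pi^t_{\exp}$ against the \emph{current} candidate $\theta^k$. Your displayed right-hand side, $\kappa\sum_{t<k}\sum_{\tilde\pi\in\Pi^t_{\exp}} d_{\rm TV}(\P^{\tilde\pi}_{\theta^\star},\P^{\tilde\pi}_{\theta^t})$, therefore cannot be bounded by the precondition at all (it involves $\theta^t$, not $\theta^k$).

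The fix is to swap the roles: take $\theta=\theta^t$ (so $\pi_\theta=\pi^t$ and $\Pie(\pi_\theta)=\Pi^t_{\exp}$) and $\theta'=\theta^k$. Then the SAIL lower bound reads
\[
\sum_{h}\sum_{i,j}|\langle f_{h,i}(\theta^t),g_{h,j}(\theta^k)\rangle| \;\le\; \kappa\sum_{\tilde\pi\in\Pi^t_{\exp}} d_{\rm TV}(\P^{\tilde\pi}_{\theta^\star},\P^{\tilde\pi}_{\theta^k}),
\]
and summing over $t<k$ with Cauchy--Schwarz and the precondition gives the $\kappa\sqrt{(k-1)|\Pie|\Delta}$ bound you want. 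Correspondingly, in the pigeonhole lemma the ``new'' vectors are the $g_{h,j}(\theta^k)$ and the ``past'' vectors are the $f_{h,i}(\theta^t)$ --- the reverse of your $x_j^t$/$\theta_i^k$ assignment. With this correction the argument goes through exactly as in the paper.
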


The result in Theorem \ref{thm:salp} scales polynomially with respect to the parameters $(d, \kappa, B)$ and the number of exploration policies $|\Pie|$  in the \SAIL\ condition.  
Moreover, the result is completely independent of the number of the feature mappings $m$ and $n$, which is key in addressing the case of \name PSRs where the  \SAIL\ condition requires exponentially many feature mappings.  
When the log bracketing number has a reasonable growth rate $\log \cN_{\Theta}(T^{-1})\le {\rm polylog}(T)$, Theorem \ref{thm:salp} guarantees that $K=\tilde{\cO}(\kappa^2 d^2 \log\cN_\Theta(\epsilon^{-1}) \cdot \epsilon^{-2})$ episodes suffices for finding an $\epsilon$-optimal policy. The $\epsilon$-dependency is optimal up to polylogarithmic factors.

The critical step in proving Theorem \ref{thm:salp} is our new elliptical potential style lemma for \SAIL, which significantly generalizes the standard elliptical potential lemma that only applies to bilinear functions. Our new lemma immediately implies the following result.
\begin{lemma}\label{lem:sail-eluder}
$(d, \kappa, B)$-\SAIL\ condition implies the  generalized eluder-type condition (Condition \ref{cond:eluder}) with 
    $d_\Theta =  {  \kappa^2 d^2 |\Pie| \poly(H)}$ and $\xi(d_\Theta,\Delta,|\Pie|,K) = \tilde\cO\left( \sqrt{d_\Theta \Delta |\Pie| K} +d B    \poly(H)\right)$.
\end{lemma}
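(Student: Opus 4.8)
The plan is to show that the $(d,\kappa,B)$-\SAIL\ condition yields Condition \ref{cond:eluder} by reducing the cumulative instantaneous error $\sum_{k=1}^K d_\TV(\P^{\pi^k}_{\theta^k},\P^{\pi^k}_{\theta^\star})$ to a quantity controlled by a new elliptical-potential-style lemma for SAIL functions. The key structural observation is that the \SAIL\ condition packages the per-round errors into two layers: a coarse layer indexed by $h\in[H]$ and a fine layer indexed by $(i,j)\in[m]\times[n]$ over which we take an independent double sum of absolute values of bilinear forms $|\langle f_{h,i}(\theta^k), g_{h,j}(\theta^\star)\rangle|$ and $|\langle f_{h,i}(\theta^k), g_{h,j}(\theta^k)\rangle|$. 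The third (normalization) inequality says the $\ell_1$-mass of the $f$-features times the $\ell_\infty$-mass of the $g$-features is at most $B$ at every step $h$, which is exactly the ingredient that makes the $m,n$-independent potential argument go through.

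First I would instantiate the \SAIL\ inequalities with $\theta=\theta^k$ (and $\pi_\theta=\pi^k$), together with $\theta'=\theta^k$ and separately $\theta'=\theta^\star$. The second inequality upper-bounds $d_\TV(\P^{\pi^k}_{\theta^\star},\P^{\pi^k}_{\theta^k})$, hence $d_\TV(\P^{\pi^k}_{\theta^k},\P^{\pi^k}_{\theta^\star})$ after recalling $\theta^\star\in\cB^k$ (so the two likelihoods are comparable), by $\sum_h\sum_{i,j}|\langle f_{h,i}(\theta^k),g_{h,j}(\theta^k)\rangle|$; the first inequality, applied with $\theta'=\theta^\star$ and summed over the historical rounds $t<k$, turns the hypothesis $\sum_{t<k}\sum_{\pi\in\Pi^t_{\exp}} d_\TV^2(\P^{\pi}_{\theta^k},\P^{\pi}_{\theta^\star})\le\Delta$ into a bound on $\kappa^{-2}\sum_{t<k}\big(\sum_h\sum_{i,j}|\langle f_{h,i}(\theta^k),g_{h,j}(\theta^\star)\rangle|\big)^2$ (using Cauchy--Schwarz over $\Pie$ and that $\theta^\star$-features are the same across rounds). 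So the whole problem reduces to: given vectors $x^k_{h}\defeq (f_{h,i}(\theta^k))_{i\in[m]}$ and fixed reference vectors $y_h\defeq(g_{h,j}(\theta^\star))_{j\in[n]}$, if for every $k$ the past accumulated SAIL-energy against $y$ is small, then the present SAIL-value of $x^k_h$ against its own companion features is summably small — a matter purely about sequences of low-dimensional features with an $\ell_1/\ell_\infty$ normalization.

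The central tool, which I would state and invoke as a self-contained lemma (the SAIL elliptical potential lemma alluded to right before Lemma \ref{lem:sail-eluder}, proved in Appendix \ref{app:l1-pigeon-hole}), is: for each fixed $h$, consider the $d\times d$ covariance-type matrix $\Sigma^k_h = \lambda I + \sum_{t<k}\big(\sum_i |f_{h,i}(\theta^t)\cdot(\cdot)|\big)$-style accumulator built from the past; one shows $\sum_{k} \min\{1,\, \kappa^{-1}\sum_{i,j}|\langle f_{h,i}(\theta^k),g_{h,j}(\theta^k)\rangle|\}$ is at most $\tilde\cO(\sqrt{d\,\Delta\,|\Pie|\,K} + dB\poly(H))$, with the $B$-term coming from the $\lambda I$ regularization/normalization and the $\sqrt{\cdot}$-term from the standard log-determinant telescoping — crucially the $\sum_i,\sum_j$ over the large index sets only costs a factor absorbed by the normalization $(\sum_i\|f_{h,i}\|_1)(\sum_j\|g_{h,j}\|_\infty)\le B$, not a factor of $mn$. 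Summing over $h\in[H]$ and splitting into the ``large instantaneous error'' rounds (at most $\poly(H)dB$ many, handled by the normalization, giving the additive $dB\poly(H)$) and the ``small'' rounds (handled by the potential sum) yields $\sum_k d_\TV(\P^{\pi^k}_{\theta^k},\P^{\pi^k}_{\theta^\star})\le\tilde\cO(\sqrt{d_\Theta\Delta|\Pie|K}+dB\poly(H))$ with $d_\Theta=\kappa^2 d^2|\Pie|\poly(H)$, which is exactly the claimed $\xi$.

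The main obstacle I anticipate is the proof of the SAIL elliptical potential lemma itself, specifically getting the bound \emph{independent of $m$ and $n$}. A naive approach applies the standard elliptical potential lemma to each of the $mn$ bilinear terms separately and loses a factor growing with $mn$ (this is the weakness of the argument in \cite{liu2022partially}); the fix is a more delicate argument that treats the $\ell_1$-combination over $i$ as a single ``effective'' feature living in the $\ell_1$-ball and uses a barycentric-spanner / covering argument in $\R^d$ together with the $\ell_\infty$-normalization on the $g$-side so that the log-determinant potential is taken in the fixed dimension $d$ rather than in dimension $md$ or $nd$. Making this reduction rigorous — in particular the bookkeeping that the regularization constant $\lambda$ can be chosen so the $B$-dependent additive term is $dB\poly(H)$ and does not contaminate the leading $\sqrt{K}$ term — is where the real work lies; everything upstream is routine manipulation of the \SAIL\ inequalities and the MLE confidence-set guarantee already established for \omle.
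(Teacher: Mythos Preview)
Your high-level plan matches the paper's: the paper simply invokes Proposition \ref{prop:step-3}, which packages exactly the three steps you outline (upper-bound the instantaneous error via the second \SAIL\ inequality, control the historical accumulation via the first, then apply the $m,n$-independent SAIL potential lemma of Lemma \ref{prop:pigeon-hole}). However, your instantiation of the first \SAIL\ inequality is wrong in a way that makes the argument collapse. You propose $\theta=\theta^k$ and $\theta'=\theta^\star$; but then the left-hand side $\sum_{\tilde\pi\in\Pie(\pi_\theta)} d_\TV(\P^{\tilde\pi}_{\theta^\star},\P^{\tilde\pi}_{\theta'})$ is identically zero, and the inequality is vacuous. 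The correct move is, for each historical round $t<k$, to take $\theta=\theta^t$ (so that $\pi_\theta=\pi^t$ and $\Pie(\pi_\theta)=\Pi^t_{\exp}$) and $\theta'=\theta^k$, yielding
\[
\sum_{\tilde\pi\in\Pi^t_{\exp}} d_\TV(\P^{\tilde\pi}_{\theta^\star},\P^{\tilde\pi}_{\theta^k})\ \ge\ \kappa^{-1}\sum_{h}\sum_{i,j}\bigl|\langle f_{h,i}(\theta^t),\,g_{h,j}(\theta^k)\rangle\bigr|.
\]
Summing over $t<k$ and applying Cauchy--Schwarz against the precondition $\sum_{t<k}\sum_{\pi\in\Pi^t_{\exp}} d_\TV^2(\P^{\pi}_{\theta^k},\P^{\pi}_{\theta^\star})\le\Delta$ then bounds the accumulated SAIL sum by $\kappa\sqrt{k|\Pie|\Delta}$.

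So the bilinear forms feeding the potential lemma are $\langle f_{h,i}(\theta^t),g_{h,j}(\theta^k)\rangle$: the $f$-side sits at the \emph{historical} models $\theta^t$ and varies with $t$, while the $g$-side sits at the \emph{current} $\theta^k$. The true model $\theta^\star$ never appears in the features. Your picture of ``fixed reference vectors $y_h=(g_{h,j}(\theta^\star))_j$'' cannot support an elliptical-potential argument at all---if the $q$-side of Lemma \ref{prop:pigeon-hole} were constant across rounds there would be nothing to accumulate and no log-determinant to telescope. Once the instantiation is corrected, the remainder of your plan (including your identification of the $m,n$-independence of the SAIL potential lemma as the real technical content) lines up with the paper.
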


With this lemma, we can directly invoke the guarantee for \omle  (Theorem \ref{thm:omle}), which gives the bound in Theorem \ref{thm:salp}.

\paragraph{Sharper guarantee for single feature mapping.}
For sequential decision making problems that satisfy the \SAIL\ condition with a single pair of feature mappings $(f_h,g_h)$ for each $h\in[H]$, e.g., sparse linear bandits, factored MDPs, and linear MDPs, we can further  derive the following sharper sample complexity guarantee.
\begin{theorem}
\label{thm:salp-sharper}
Suppose  $(d, \kappa, B)$-\SAIL condition holds with  $m=n=1$. Then under the same choice of parameters as in Theorem \ref{thm:salp}, \omle satisfies that with probability at least $1-\delta$,
    $$
       \sum_{k=1}^{K} \left(V^\star -V^{\pi^k}\right)  = \tilde{\cO}\left(\poly(H)\left(d B  + \kappa \sqrt{d \beta |\Pie| K} \right) \right).
    $$ 
\end{theorem}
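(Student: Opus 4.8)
The plan is to follow the same two-step route used for Theorem~\ref{thm:salp}: first show that the $(d,\kappa,B)$-\SAIL\ condition with $m=n=1$ implies the generalized eluder-type condition (Condition~\ref{cond:eluder}) with the \emph{sharpened} parameters $d_\Theta=\tilde{\cO}(\kappa^2 d\,|\Pie|\,\poly(H))$ --- one factor of $d$ better than Lemma~\ref{lem:sail-eluder} --- and $\xi(d_\Theta,K,\Delta,|\Pie|)=\tilde{\cO}\!\big(\poly(H)\,(dB+\kappa\sqrt{d\,|\Pie|\,\Delta K})\big)$, and then invoke Theorem~\ref{thm:omle} with $\Delta=c_2\beta$ to read off the claimed regret. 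The source of the improvement is structural: when $m=n=1$ the \SAIL\ sum $\sum_i\sum_j|\langle f_{h,i},g_{h,j}\rangle|$ degenerates to a single bilinear term $|\langle f_h,g_h\rangle|$, so the standard ($\ell_2$) elliptical potential lemma in $\R^d$ applies directly, in place of the $\ell_1$-variant (Lemma~\ref{prop:pigeon-hole}) that is needed for general $m,n$.

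The first step is to verify Condition~\ref{cond:eluder}. Fixing $K,\Delta$ and assuming the left precondition $\sum_{t<k}\sum_{\pi\in\Pi^t_{\exp}}d_{\rm TV}^2(\P^\pi_{\theta^k},\P^\pi_{\theta^\star})\le\Delta$ for all $k$, the second \SAIL\ inequality at $\theta=\theta^k$ (recall $\pi^k=\pi_{\theta^k}$) bounds the per-episode error by $d_{\rm TV}(\P^{\pi^k}_{\theta^\star},\P^{\pi^k}_{\theta^k})\le\sum_{h=1}^H|\langle f_h(\theta^k),g_h(\theta^k)\rangle|$, so it suffices to control $\sum_{k=1}^K\sum_{h=1}^H|\langle f_h(\theta^k),g_h(\theta^k)\rangle|$. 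For $t<k$, the first \SAIL\ inequality at $(\theta,\theta')=(\theta^t,\theta^k)$ gives, for each $h$, $|\langle f_h(\theta^t),g_h(\theta^k)\rangle|\le\kappa\sum_{\pi\in\Pi^t_{\exp}}d_{\rm TV}(\P^\pi_{\theta^\star},\P^\pi_{\theta^k})$; squaring, applying Cauchy--Schwarz over the at most $|\Pie|$ policies in $\Pi^t_{\exp}$, summing over $t<k$, and using the precondition yields $\sum_{t<k}\langle f_h(\theta^t),g_h(\theta^k)\rangle^2\le\kappa^2|\Pie|\Delta=:\Delta'$ for every $h,k$.

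The main work --- and the only delicate point --- is the elliptical potential step, where a naive Cauchy--Schwarz over $k$ would spuriously attach a $\sqrt K$ to the $dB$ term. To avoid this, for each $h$ one takes $\Lambda_h^k:=\lambda_h I+\sum_{t<k}f_h(\theta^t)f_h(\theta^t)^\top$ with a \emph{small} ridge $\lambda_h:=\Delta'/(d\,L_{g,h}^2)$, where $L_{g,h}:=\sup_\theta\norm{g_h(\theta)}_\infty$ and $L_{f,h}:=\sup_\theta\norm{f_h(\theta)}_1$ satisfy $L_{f,h}L_{g,h}\le B$ by the normalization inequality. This choice forces $\norm{g_h(\theta^k)}_{\Lambda_h^k}^2=\lambda_h\norm{g_h(\theta^k)}_2^2+\sum_{t<k}\langle f_h(\theta^t),g_h(\theta^k)\rangle^2\le\lambda_h d L_{g,h}^2+\Delta'\le 2\Delta'$, while H\"older's inequality gives the crude bound $|\langle f_h(\theta^k),g_h(\theta^k)\rangle|\le\norm{f_h(\theta^k)}_1\norm{g_h(\theta^k)}_\infty\le B$; hence $|\langle f_h(\theta^k),g_h(\theta^k)\rangle|\le\min\{B,\ \rho_k\sqrt{2\Delta'}\}$ with $\rho_k:=\norm{f_h(\theta^k)}_{(\Lambda_h^k)^{-1}}$. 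The standard elliptical potential lemma bounds $\sum_k\min\{1,\rho_k^2\}\le 2d\log(1+KL_{f,h}^2/(d\lambda_h))\le 2d\log(1+KB^2/\Delta')=:\iota$. Splitting the episodes according to whether $\rho_k\ge1$: at most $\iota$ of them have $\rho_k\ge1$ and each contributes at most $B$, for a total of $\tilde{\cO}(dB)$; the remaining episodes contribute at most $\sqrt{2\Delta'}\sum_{k:\rho_k<1}\rho_k\le\sqrt{2\Delta'}\sqrt K\sqrt{\sum_k\min\{1,\rho_k^2\}}\le\sqrt{2\Delta' K\iota}=\tilde{\cO}(\kappa\sqrt{d\,|\Pie|\,\Delta K})$. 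Summing over $h\in[H]$ establishes Condition~\ref{cond:eluder} with the stated $\xi$, and Theorem~\ref{thm:omle} with $\Delta=c_2\beta$ then gives the regret bound of Theorem~\ref{thm:salp-sharper}. (The degenerate case $L_{g,h}=0$ is trivial, since then $g_h\equiv 0$ and that step contributes nothing.)
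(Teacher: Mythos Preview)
Your proposal is correct and follows essentially the same route as the paper: derive $\sum_{t<k}\langle f_h(\theta^t),g_h(\theta^k)\rangle^2\le\kappa^2|\Pie|\Delta$ via the first \SAIL\ inequality plus Cauchy--Schwarz, apply the standard $\ell_2$ elliptical potential lemma in $\R^d$ per step $h$, and finish with Theorem~\ref{thm:omle}. The paper merely asserts ``by the standard elliptical potential arguments'' at the second step, whereas you have spelled out a careful ridge choice and the $\rho_k\gtrless 1$ split that keeps the $dB$ term free of $\sqrt K$; your added details are correct.
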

Theorem \ref{thm:salp-sharper} directly implies a regret bound with leading-order term $\tilde{\cO}(\kappa\sqrt{d \log\cN_\Theta(K^{-1}) K})$  when the exploration policy function $\Pie$ is equal to identity.
This improves a $\sqrt{d}$ factor over Theorem \ref{thm:salp}.
Theorem \ref{thm:salp-sharper} also implies a $\tilde{\cO}(\kappa^2 d \log\cN_\Theta(\epsilon^{-1}) \cdot \epsilon^{-2})$ sample complexity upper bound for finding an $\epsilon$-optimal policy when the log-bracketing number of $\Theta$ grows polylogarithmically with respect to the covering precision, which improves a $d$ factor over the sample complexity implied by Theorem \ref{thm:salp}.

\subsection{Important examples of \SAIL} \label{sec:example_sail}

In this section, we present several widely studied sequential decision making problems that satisfy the \SAIL\ condition. 
We remark that all problems considered in this section are MDPs so we will use $\{s_h\}_{h=1}^H$ to denote states.

\subsubsection{Low-rank sequential decision making}

 To demonstrate the generality of the \SAIL\  condition, we prove 
the following proposition which states that (a) any \name PSR satisfies the \SAIL\  condition with moderate $(d, \kappa, B)$,  and (b) there exist sequential decision making problems,  whose system dynamics matrices have  exponentially large rank though, which still satisfy the \SAIL\  condition with mild $(d, \kappa, B)$.
 
\begin{proposition}[well-conditioned PSR $\subseteq$ \SAIL $\not\subseteq$ low-rank sequential decision making]\label{prop:salp-relation} ~
\begin{enumerate}[label=(\alph*)]
    \item Any rank-$r$ $\gamma$-well-conditioned PSR  class $\Theta$ satisfies  the \SAIL\  condition  with $d=r$  and $\kappa,B=\poly(r,\gamma^{-1}, \max_h|\cQ^A_h|,A,H)$ and the same choice of   $\Pie$ as in Theorem \ref{thm:psr}.
    \item For any $n\in\N$, there exists  $\Theta$  satisfying  the \SAIL\  condition  with $d,\kappa,B=\cO(n)$ and $\Pie(\pi)=\pi$, but for some $\theta\in\Theta$ the system dynamics matrices have rank $\Omega(2^{n})$.
\end{enumerate}
\end{proposition}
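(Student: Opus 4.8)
The plan is to obtain part~(a) by re-packaging the analysis already carried out for Lemma~\ref{prop:psr-eluder}, and to obtain part~(b) by an explicit construction.

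\emph{Part (a), reduction step.} Fix a rank-$r$ $\gamma$-\name PSR class $\Theta$ with its (shared, known) core tests, and keep the exploration policy function $\Pie$ of Equation~\eqref{eq:explore-func}. The starting point is the telescoping identity for two PSR models coming from the factorization~\eqref{eq:PSR_1}: the difference $\P_{\theta^\star}(o_{1:H}\mid a_{1:H})-\P_{\theta'}(o_{1:H}\mid a_{1:H})$ decomposes into a sum over $h\in[H]$ of terms in which the prefix $1,\dots,h-1$ is collapsed into the predictive state $\bpsi^{\theta^\star}_{h-1}(\tau_{h-1})$, step $h$ carries the operator error $\bM_h^{\theta'}(o_h,a_h)-\bM_h^{\theta^\star}(o_h,a_h)$, and the suffix $h+1,\dots,H$ is collapsed into a full-length future weight vector as in~\eqref{eq:weight_vector_1}. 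Summing over trajectories weighted by the policy, applying the triangle inequality, invoking Condition~\ref{asp:psr} on the conditional future policy to replace the future-weighted sum by $\gamma^{-1}$ times a plain $\ell_1$-norm, and summing over the core action sequences $\Qa_h$ (which is exactly what the ``random action at $h$, then $\a\in\Qa_h$'' segment of $\nu(\pi,h,\a)$ realizes) bounds $d_{\rm TV}(\P^{\pi}_{\theta^\star},\P^{\pi}_{\theta'})$ by a finite sum, over $h$ and over histories $\tau_{h-1}$, of (probability that $\pi$ plays the action sequence of $\tau_{h-1}$ given its observations) times $\gamma^{-1}$ times an $\ell_1$-norm of the contracted step-$h$ operator error applied to $\bpsi^{\theta^\star}_{h-1}(\tau_{h-1})$. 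Finally I would apply the $\ell_1$-approximately-preserving projection of Lemma~\ref{lem:barycentric} to push $\bpsi^{\theta^\star}_{h-1}(\cdot)$ into $\R^{r}$ with only a $\poly(r)$ blow-up in $\ell_1$-norm; this is what produces the dimension $d=r$.

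\emph{Part (a), reading off the features and checking the inequalities.} From the decomposition above the feature maps are immediate: the index $i$ ranges over histories $\tau_{h-1}$ and $f_{h,\tau_{h-1}}(\theta)$ equals (the $\pi_\theta$-probability of the action sequence of $\tau_{h-1}$) times the projected $\bpsi^{\theta^\star}_{h-1}(\tau_{h-1})\in\R^{r}$ (so it depends on $\theta$ only through $\pi_\theta$); the index $j$ ranges over (step-$h$ observation-action, core test) pairs and $g_{h,j}(\theta')$ is the corresponding coordinate block of the projected step-$h$ operator error of $\theta'$ against $\theta^\star$. The upper-bound inequality of Condition~\ref{cond:salp} is then the telescoping bound above specialized to $\theta'=\theta$; the normalization inequality holds because $\sum_i\|f_{h,i}(\theta)\|_1$ telescopes to a total visitation probability times the $\poly(r)$ projection constant, while $\|g_{h,j}(\theta')\|_\infty$ is controlled via Condition~\ref{asp:psr} applied with the $\m_2$-weights of Equation~\eqref{eq:weight_vector_2}; and the lower-bound inequality is obtained by expanding $d_{\rm TV}$ under the exploration policies $\nu(\pi_\theta,h,\a)$, $\a\in\Qa_h$, and arguing that these policies surface the full step-$h$ operator error at every $\pi_\theta$-reachable history, so that their total TV error dominates $\kappa^{-1}$ times the SAIL sum, with $\kappa,B=\poly(r,\gamma^{-1},\max_h|\Qa_h|,A,H)$. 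This last inequality is the main obstacle: it is essentially a re-run of the hardest step in the proof of Lemma~\ref{prop:psr-eluder}, requiring careful bookkeeping of which quantities depend on $\theta$ versus $\theta'$ so that everything factors through single-argument feature maps, and making sure the loss incurred in passing from ``test-direction error'' to ``$\ell_1$-norm of the error applied to the predictive state'' stays polynomial.

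\emph{Part (b).} Take $H=n+1$, action set $\{0,1\}$, a fixed deterministic observation at each of steps $1,\dots,n$, and at step $H$ an observation $o_H$ that deterministically encodes the entire action prefix $a_{1:n}$ together with an independent reward bit $b\sim\Ber(\theta)$, with reward $R_H(o_H)=b$; let $\Theta=\{\,\theta\in[0,1]\,\}$ (if one insists on $d$ literally growing with $n$, replace this gadget by $n$ independent parallel copies). Because $o_H$ records $a_{1:n}$, for every $\theta$ the $2^{n}$ rows of the system-dynamic matrix $\D_{H-1}$ indexed by distinct action prefixes have pairwise disjoint supports, hence are linearly independent, so $\rank(\D_{H-1})=2^{n}$. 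On the other hand, for any policy $\pi$ the trajectory distributions of $\theta$ and $\theta'$ agree except on the reward bit, so $d_{\rm TV}(\P^{\pi}_{\theta},\P^{\pi}_{\theta'})=|\theta-\theta'|$; choosing a single active step $h=H$ with one $f$-feature $f_{H,1}(\theta)\equiv 1\in\R^{1}$ and one $g$-feature $g_{H,1}(\theta')=\theta'-\theta^\star\in\R^{1}$ (all other features zero), the three inequalities of Condition~\ref{cond:salp} hold as equalities, or trivially, with $\Pie(\pi)=\pi$ and $d=\kappa=B=1=\cO(n)$. Part~(b) is routine once the gadget is fixed; the only care needed is matching the exact roles of $\theta$, $\theta'$ and $\pi_\theta$ in Condition~\ref{cond:salp}.
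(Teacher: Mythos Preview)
Your plan for part~(a) is correct and matches the paper: the paper simply cites Lemma~\ref{lem:psr-sail}, whose proof (Step~4 in Appendix~\ref{app:psr-main}) carries out exactly the telescoping decomposition, the use of Condition~\ref{asp:psr}, the barycentric projection of Lemma~\ref{lem:barycentric}, and the feature read-off you describe. The ``main obstacle'' you flag---the lower-bound inequality---is Lemma~\ref{lem:step2}, proved by one triangle inequality and one application of Condition~\ref{asp:psr} to $\m_2$; it is not especially delicate once the features are set up.

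For part~(b) your construction is correct but genuinely different from the paper's. The paper takes $\Theta$ to be a family of factored MDPs with $n$ factors, $\pa_i=\{i\}$, $|\cX|=2$, $A=2$, $H=n$: by Propositions~\ref{prop:factored_MDPs} and~\ref{prop:witness-salp} this satisfies Q-type witness and hence \SAIL\ with $d,\kappa,B=\cO(n)$ and $\Pie(\pi)=\{\pi\}$, while the particular member in which $s_{h+1}=(a_{1:h},s_1[h{+}1{:}n])$ has $\rank(\D_{H-1})=2^{H-2}$. Your ``observation-encodes-history'' gadget is more elementary and yields the stronger constants $d=\kappa=B=1$, at the cost of being an ad~hoc construction rather than a standard model class; the paper's route has the advantage of reusing the witness-rank machinery and exhibiting the phenomenon inside a natural, previously studied family. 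Both are valid proofs of the statement as written.
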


\subsubsection{Fully observable problems with low witness rank}

\paragraph{Witness rank.} Witness rank \citep{sun2019model} was introduced as a structural parameter for measuring the difficulty of model-based RL.  
\cite{sun2019model} proved that  the witness rank of a model class being small suffices to guarantee sample-efficient learning, and several RL settings of interest (e.g., factored MDPs) possess rather moderate witness rank. 
To simplify notations, let $\D_\theta(s_h,a_h):=\P_\theta((r_h,s_{h+1})=\cdot  \mid s_h,a_h)$. And the witness rank is defined as following: 
\begin{definition}[Q/V-type witness conditions (slightly modified version\protect\footnotemark ~of \cite{sun2019model})] \label{def:witness}
We say  model class $\Theta$ satisfies $(d,\kappa,B)$-witness condition, if  there exist two sets of mappings $\{f_{h}\}_{h\in[H]}$,  $\{g_{h}\}_{h\in[H]}$ from $\Theta$ to $\R^d$, so that  
 for any $\theta,\theta'\in\Theta$ and $h\in[H]$: 
    $$
    \begin{cases}
     \E_{s_h\sim \P^{\pi_\theta}_{\theta^\star},~a_h\sim \nu(s_h) }\left[\|\D_{\theta'}(s_h,a_h) - \D_{\theta^\star}( s_h,a_h)     \|_1 \right] \ge \kappa^{-1} \left|\langle f_h(\theta),g_h(\theta')\rangle\right|,\\
      \E_{s_h\sim \P^{\pi_\theta}_{\theta^\star},~a_h\sim \nu(s_h) }\left[\|\D_{\theta'}(s_h,a_h) - \D_{\theta^\star}( s_h,a_h)     \|_1 \right] \le \left|\langle f_h(\theta),g_h(\theta')\rangle\right|,\\
      \|f_h(\theta)\|_1 \times  \|g_h(\theta')\|_\infty \le B,
    \end{cases}
    $$
    where $\nu$ is typically chosen as $\pi_\theta$ (Q-type) or $\pi_{\theta'}$ (V-type).
\end{definition}

\footnotetext{The witness condition presented here is slightly different from the original version in \cite{sun2019model}. The original definition replaces the model-based model discrepancy in the second inequality by the value-based Bellman error. This difference is rather minor because (1) all the low witness rank examples considered here and in \cite{sun2019model} satisfy Definition \ref{def:witness}; (2) OMLE can directly handle low witness rank problems in its original definition---in analysis, we only need to slightly generalize Condition \ref{cond:eluder} and the corresponding proofs.}

The Q-type witness condition  requires that at each single step the  expected model discrepancy between  the true model $\theta^\star$ and model candidate $\theta'$  under the state-action distribution induced by the optimal policy of $\theta$  is roughly proportional to the inner product of the features of $\theta$ and $\theta'$.
And the V-type version is defined similarly except that the last action $a_h$ is sampled from $\pi_{\theta'}$ instead of $\pi_{\theta}$.
By basic algebra, we can easily relate the above per-step model discrepancy in witness condition to the whole-trajectory model discrepancy in \SAIL\ condition, which leads to the following conclusion  that the \SAIL\ condition is satisfied with almost the same $(d,\kappa,B)$ whenever either Q-type or V-type witness condition holds.
\begin{proposition}\label{prop:witness-salp}
For any model class $\Theta$, we always have
\begin{itemize}
    \item Q-type $(d,\kappa,B)$-witness  condition implies 
    $(d,2\kappa,B)$-\SAIL\ condition with  $\Pie(\pi)=\{\pi\}$, and $m=n=1$.
    \item V-type $(d,\kappa,B)$-witness  condition implies 
    $(d,2A\kappa,B)$-\SAIL\ condition with  $\Pie(\pi)=\{\pi_{1:h}\circ{\rm Uniform}(\fA):~h\in[0,H-1]\}$, and $m=n=1$.
\end{itemize}
\end{proposition}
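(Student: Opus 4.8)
The plan is to translate the per-step model-discrepancy bounds of Definition \ref{def:witness} into the whole-trajectory TV-discrepancy bounds of Condition \ref{cond:salp}, using a telescoping / performance-difference decomposition of $d_{\rm TV}(\P^{\pi}_{\theta^\star},\P^{\pi}_{\theta'})$ over the $H$ steps. For a fixed rollout policy $\pi$, I would write the trajectory distributions $\P^{\pi}_{\theta^\star}$ and $\P^{\pi}_{\theta'}$ as products of one-step kernels $\D_{\theta^\star}(s_h,a_h)$ and $\D_{\theta'}(s_h,a_h)$ along $\pi$, and use the standard hybrid argument: $d_{\rm TV}(\P^{\pi}_{\theta^\star},\P^{\pi}_{\theta'}) \le \sum_{h=1}^H \E_{(s_h,a_h)\sim\P^{\pi}_{\theta^\star}}\big[\|\D_{\theta'}(s_h,a_h)-\D_{\theta^\star}(s_h,a_h)\|_1/2\big]$ (swapping one kernel at a time, bounding the remaining mass by $1$), and a matching lower bound $d_{\rm TV}(\P^{\pi}_{\theta^\star},\P^{\pi}_{\theta'}) \ge \tfrac12\max_h \E_{(s_h,a_h)\sim\P^{\pi}_{\theta^\star}}[\cdots]$ or, more usefully here, $d_{\rm TV} \ge \tfrac{1}{2H}\sum_h \E[\cdots]$ is too weak — instead I would use that TV of a product dominates TV at any single coordinate, giving $\sum_h \E[\cdots] \le 2H\, d_{\rm TV}$ is again the wrong direction, so the lower bound should be obtained coordinatewise and then summed against the upper bound from witness rank. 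The cleanest route: the first witness inequality lower-bounds each per-step term by $\kappa^{-1}|\langle f_h(\theta),g_h(\theta')\rangle|$, and a one-step TV lower bound gives $d_{\rm TV}(\P^{\pi}_{\theta^\star},\P^{\pi}_{\theta'})\ge \tfrac12 \E_{(s_h,a_h)\sim\P^{\pi}_{\theta^\star}}\big[\|\D_{\theta'}-\D_{\theta^\star}\|_1\big]$ for \emph{each} $h$ with the appropriate rollout — this is where the choice of $\Pie$ matters.

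For the Q-type case, the relevant state-action distribution in Definition \ref{def:witness} is $s_h\sim\P^{\pi_\theta}_{\theta^\star}$, $a_h\sim\pi_\theta(s_h)$, i.e.\ exactly the distribution induced by running $\pi_\theta$ itself; so I take $\Pie(\pi)=\{\pi\}$ and set $f_{h,1}=f_h$, $g_{h,1}=g_h$ (so $m=n=1$). The upper bound in Condition \ref{cond:salp} follows from the hybrid inequality above applied with $\pi=\pi_\theta$, $\theta'=\theta$, combined with the second witness inequality summed over $h$. The lower bound: since $d_{\rm TV}$ of the length-$H$ product is at least $\tfrac12$ the per-step expected $\ell_1$ discrepancy at coordinate $h$ for every $h$, I get $d_{\rm TV}(\P^{\pi_\theta}_{\theta^\star},\P^{\pi_\theta}_{\theta'}) \ge \tfrac12 \E[\|\D_{\theta'}(s_h,a_h)-\D_{\theta^\star}(s_h,a_h)\|_1] \ge \tfrac12\kappa^{-1}|\langle f_h(\theta),g_h(\theta')\rangle|$; averaging over $h$ (or summing and dividing by $H$, then absorbing $H$ — but actually Condition \ref{cond:salp} only needs a lower bound with some $\kappa$, and the sum over $h$ of $|\langle f_h,g_h\rangle|$ on the RHS is handled by noting $d_{\rm TV}\ge\max_h$, hence $d_{\rm TV}\ge\tfrac1H\sum_h$, contributing a $\poly(H)$ into the final $\kappa$ — but the proposition claims $2\kappa$ with no $H$, so I must instead observe that in the $m=n=1$ SAIL form the sum $\sum_{h}$ is already the quantity to lower bound, and the correct one-step TV lower bound is actually $d_{\rm TV}(\prod) \ge \tfrac12\E[\ell_1\text{-disc}]$ \emph{per step}, and since we need to lower bound $\sum_h$, we reuse the \emph{upper} relation in reverse is impossible; the resolution is that witness rank's first inequality is used with the hybrid upper bound on TV run differently, but I will follow the standard argument in \cite{sun2019model}: running $\pi_\theta$ gives TV lower bounded by $\tfrac12$ of the average per-step discrepancy, i.e.\ $d_{\rm TV}\ge\tfrac1{2H}\sum_h\E[\cdots]\ge\tfrac1{2H\kappa}\sum_h|\langle f_h(\theta),g_h(\theta')\rangle|$, and absorb $H$ into $\poly(H)$ — but since the statement says $2\kappa$, I suspect the intended normalization folds $H$ elsewhere; I will state it with the constant as written and carry the $H$ only if needed). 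Net: the Q-type claim reduces to (i) a product-TV hybrid upper bound, (ii) a per-step product-TV lower bound, (iii) summation over $h$, plus the normalization inequality which is immediate from $\|f_h(\theta)\|_1\|g_h(\theta')\|_\infty\le B$ and $m=n=1$.

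For the V-type case the only change is that $a_h$ is drawn from $\pi_{\theta'}$, not $\pi_\theta$, so running $\pi_\theta$ alone does not expose the discrepancy at the right action distribution. The fix, standard in the witness-rank literature, is to roll in $\pi_\theta$ for the first $h-1$ steps and then take a uniform action at step $h$ (and arbitrary afterwards); this is exactly $\Pie(\pi)=\{\pi_{1:h}\circ\mathrm{Uniform}(\fA):h\in[0,H-1]\}$. Under uniform exploration at step $h$, importance weighting picks out $a_h\sim\pi_{\theta'}$ at the cost of a factor $A=|\fA|$ (since $\mathrm{Uniform}(\fA)$ has density $\ge 1/A$ against any $\pi_{\theta'}$), which is why $\kappa$ becomes $2A\kappa$. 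Concretely: for the policy $\tilde\pi=\pi_{\theta,1:h}\circ\mathrm{Uniform}(\fA)$, a one-step product-TV lower bound at step $h$ gives $d_{\rm TV}(\P^{\tilde\pi}_{\theta^\star},\P^{\tilde\pi}_{\theta'})\ge\tfrac12\E_{s_h\sim\P^{\pi_\theta}_{\theta^\star},\,a_h\sim\mathrm{Unif}}[\|\D_{\theta'}-\D_{\theta^\star}\|_1] \ge \tfrac1{2A}\E_{s_h\sim\P^{\pi_\theta}_{\theta^\star},\,a_h\sim\pi_{\theta'}}[\|\D_{\theta'}-\D_{\theta^\star}\|_1]\ge\tfrac1{2A\kappa}|\langle f_h(\theta),g_h(\theta')\rangle|$; summing over $h\in[0,H-1]$ gives the SAIL lower bound with the sum over $\tilde\pi\in\Pie(\pi_\theta)$ on the left. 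The SAIL upper bound and normalization go through verbatim as in the Q-type case (the upper bound does not require exploration — $d_{\rm TV}(\P^{\pi_\theta}_{\theta^\star},\P^{\pi_\theta}_{\theta})$ is still controlled by the hybrid decomposition along $\pi_\theta$). The main obstacle is bookkeeping the constants and the role of $H$ in the product-TV lower bound — making sure the per-step lower bound is genuinely coordinatewise (so no $1/H$ loss) rather than averaged; I expect this hinges on the elementary fact that for product measures $d_{\rm TV}(\mu_1\otimes\cdots,\nu_1\otimes\cdots)\ge \max_h d_{\rm TV}$ of the conditional $h$-th marginal under the reference path measure, which is exactly the lemma used in \cite{sun2019model}, and everything else is routine.
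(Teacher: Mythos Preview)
Your overall plan --- hybrid/telescoping upper bound, per-step lower bound summed over $h$, and for V-type one exploration policy per step with importance weighting to handle $a_h\sim\pi_{\theta'}$ --- is exactly the paper's approach. The gap is in your justification of the per-step lower bound. You invoke ``$d_{\rm TV}$ of a product $\geq d_{\rm TV}$ at coordinate $h$'' as if the trajectory laws were product measures; they are not (the $h$-th transition conditions on the state reached, which depends on all earlier transitions under the model in question), and no such coordinate lemma applies here. The paper obtains the per-step bound by a direct marginal decomposition: write
\[
\P^{\pi}_{\theta^\star}(s_h,a_h)\big[\D_{\theta'}-\D_{\theta^\star}\big](s_{h+1})
= \big[\P^{\pi}_{\theta'}-\P^{\pi}_{\theta^\star}\big](s_h,a_h,s_{h+1}) \;+\; \big[\P^{\pi}_{\theta^\star}-\P^{\pi}_{\theta'}\big](s_h,a_h)\cdot \P_{\theta'}(s_{h+1}\mid s_h,a_h),
\]
take absolute values, sum, and bound each piece by $2\,d_{\rm TV}(\P^{\pi}_{\theta^\star},\P^{\pi}_{\theta'})$ (TV of a marginal never exceeds TV of the full distribution). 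This yields $\E_{(s_h,a_h)\sim\P^{\pi}_{\theta^\star}}\big[\|\D_{\theta'}-\D_{\theta^\star}\|_1\big]\le 4\,d_{\rm TV}$ for every $h$ --- the inequality you wanted, with a proof that does not need any product structure.

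Your worry about the $H$-factor in the Q-type case is well-founded: summing the per-step bound over $h$ against a single exploration policy gives $\sum_h|\langle f_h,g_h\rangle|\le O(\kappa H)\cdot d_{\rm TV}$, and the paper's own proof indeed ends at ``$\le 2\kappa\sum_{h=1}^H d_{\rm TV}$'', i.e.\ with an extra $H$. This is harmless downstream since Theorem~\ref{thm:salp} already carries $\poly(H)$. For V-type the issue disappears exactly as you anticipate: since $|\Pie|=H$ with one policy $\tilde\pi_{h}$ per step, each per-step bound is charged to its own $d_{\rm TV}(\P^{\tilde\pi_{h}}_{\theta^\star},\P^{\tilde\pi_{h}}_{\theta'})$, and the sum over $h$ matches the sum over $\tilde\pi\in\Pie$, delivering the stated $2A\kappa$ with no $H$-loss.
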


In the case when $\log\cN_\Theta$ grows polylogarithmically with respect to the covering precision,  by plugging Proposition \ref{prop:witness-salp} back into Theorem \ref{thm:salp-sharper}, we immediately obtain a $\tilde{\cO}(A^2\kappa^2 d\log\cN_\Theta(\epsilon^{-1}) \epsilon^{-2})$ sample complexity upper bound for \omle\ in the V-type witness rank setting, which improves over the quadratic dependence on $d$ in \cite{sun2019model}. Moreover, \omle\  further enjoys a $\tilde{\cO}(\kappa\sqrt{d \log\cN_\Theta(K^{-1}) K})$ regret guarantee in the Q-type witness rank setting, which is new to our knowledge.

\paragraph{Factored MDPs.}

In factored MDPs, the state admits a factored structure. Concretely, each state $s$ consists of $m$ factors denoted as $(s[1],\ldots,s[m])\in\cX^m$.
Moreover, each factor $i\in[m]$ has a parent set denoted by  $\pa_i\subseteq[m]$, with respect to which the transition admits the following factorized form:
\begin{equation}
    \P(s_{h+1} \mid s_h,a_h) = \prod_{i=1}^m \P^i(s_{h+1}[i] \mid s_h[\pa_i],a_h).
\end{equation}
In other words, the transition of the $i^{\rm th}$ factor of states are only  determined by a subset of all factors, that is $\pa_i$, instead of the whole state. 
 In factored MDPs, it is standard to assume the factorization structure and the reward function are \emph{known} \citep{kearns1999efficient,sun2019model}.
 Therefore, our model class $\Theta$ only needs to parameterize  the transitions under the given  factorization structure. 
 
The following proposition states that when the factorization structure is known, factored MDPs admit low witness rank structure. 
\begin{proposition}\label{prop:factored_MDPs} Let $\Theta$ consist of all the factored MDPs with the same  factorization structure $\{\pa_i\}_{i=1}^m$. Then $\Theta$ satisfies Q-type  witness condition with 
    $d = A \sum_{i=1}^m |\cX|^{|\pa_i|}$,  $\kappa=m$, and $B=\sum_{i=1}^m |\cX|^{|\pa_i|}$.
\end{proposition}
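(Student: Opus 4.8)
\textbf{Proof proposal for Proposition \ref{prop:factored_MDPs}.}
The plan is to exhibit explicit feature maps $f_h,g_h:\Theta\to\R^d$ with $d=A\sum_{i=1}^m|\cX|^{|\pa_i|}$ and to verify the three inequalities of Definition \ref{def:witness} directly. First I would reduce $\D_\theta$ to a product distribution: since the reward function is known (and is a deterministic relabeling given $(s_h,a_h)$ or given $s_{h+1}$), for any $\theta$ we have $\|\D_{\theta'}(s_h,a_h)-\D_{\theta^\star}(s_h,a_h)\|_1=\|\P_{\theta'}(s_{h+1}=\cdot\mid s_h,a_h)-\P_{\theta^\star}(s_{h+1}=\cdot\mid s_h,a_h)\|_1$, and by the factorization structure this is the $\ell_1$ distance between the two product distributions $\bigotimes_{i=1}^m\P^i_{\theta'}(\cdot\mid s_h[\pa_i],a_h)$ and $\bigotimes_{i=1}^m\P^i_{\theta^\star}(\cdot\mid s_h[\pa_i],a_h)$ over $\cX^m$.

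The technical core is a two-sided control of the $\ell_1$ distance between product distributions. Writing $p_i:=\P^i_{\theta'}(\cdot\mid s_h[\pa_i],a_h)$ and $q_i:=\P^i_{\theta^\star}(\cdot\mid s_h[\pa_i],a_h)$, the standard hybrid (telescoping) decomposition $\bigotimes_i p_i-\bigotimes_i q_i=\sum_{j}(\bigotimes_{i<j}p_i)\otimes(p_j-q_j)\otimes(\bigotimes_{i>j}q_i)$, together with the fact that the $\ell_1$ norm of a tensor product factorizes and each $p_i,q_i$ is a probability vector, gives the upper bound $\|\bigotimes_i p_i-\bigotimes_i q_i\|_1\le\sum_{i=1}^m\|p_i-q_i\|_1$. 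For the matching lower bound, each $p_i$ (resp. $q_i$) is the $i$-th marginal of $\bigotimes_i p_i$ (resp. $\bigotimes_i q_i$), and marginalization is an $\ell_1$-contraction, so $\|p_i-q_i\|_1\le\|\bigotimes_i p_i-\bigotimes_i q_i\|_1$ for every $i$; averaging over $i$ yields $\tfrac1m\sum_{i=1}^m\|p_i-q_i\|_1\le\|\bigotimes_i p_i-\bigotimes_i q_i\|_1$. This sandwich is exactly what produces the factor $\kappa=m$.

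Next I would define the features. Index the coordinates by triples $(i,x,a)$ with $i\in[m]$, $x\in\cX^{|\pa_i|}$, $a\in\fA$, so the dimension is $\sum_{i=1}^m|\cX|^{|\pa_i|}A=d$. Set
\[
[g_h(\theta')]_{(i,x,a)}=\big\|\P^i_{\theta'}(\cdot\mid x,a)-\P^i_{\theta^\star}(\cdot\mid x,a)\big\|_1,\qquad
[f_h(\theta)]_{(i,x,a)}=\Pr_{s_h\sim\P^{\pi_\theta}_{\theta^\star},\,a_h\sim\pi_\theta(s_h)}\!\big[s_h[\pa_i]=x,\ a_h=a\big].
\]
Both vectors have nonnegative entries, and $\langle f_h(\theta),g_h(\theta')\rangle=\sum_{i=1}^m\E_{s_h,a_h}\big[\|\P^i_{\theta'}(\cdot\mid s_h[\pa_i],a_h)-\P^i_{\theta^\star}(\cdot\mid s_h[\pa_i],a_h)\|_1\big]$, which is a genuine bilinear form since $f_h$ depends only on $\theta$ (and the fixed $\theta^\star$) and $g_h$ only on $\theta'$. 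Taking expectation over $(s_h,a_h)$ in the sandwich inequality from the previous paragraph gives simultaneously $\E[\|\D_{\theta'}-\D_{\theta^\star}\|_1]\le\langle f_h(\theta),g_h(\theta')\rangle=|\langle f_h(\theta),g_h(\theta')\rangle|$ (the coefficient-one upper inequality) and $\E[\|\D_{\theta'}-\D_{\theta^\star}\|_1]\ge\tfrac1m\langle f_h(\theta),g_h(\theta')\rangle=\tfrac1m|\langle f_h(\theta),g_h(\theta')\rangle|$ (the lower inequality with $\kappa=m$). For the normalization, $\|f_h(\theta)\|_1=\sum_{i=1}^m\sum_{x,a}\Pr[s_h[\pa_i]=x,a_h=a]=\sum_{i=1}^m 1=m$ because for each $i$ the events $\{s_h[\pa_i]=x,\ a_h=a\}$ partition the probability space, while $\|g_h(\theta')\|_\infty\le 2$ as an $\ell_1$ distance between probability vectors; hence $\|f_h(\theta)\|_1\cdot\|g_h(\theta')\|_\infty=O(m)\le\sum_{i=1}^m|\cX|^{|\pa_i|}=B$ (indeed each term $|\cX|^{|\pa_i|}\ge1$, and one may also absorb the constant by using $\|g_h\|_\infty\le1$ under the TV normalization).

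I expect the only genuinely delicate point to be the lower bound in the product-distribution sandwich — one must verify that passing from the joint $\ell_1$ discrepancy to the per-factor discrepancies loses only a factor $m$, and that this is the source of $\kappa=m$ rather than something worse. The rest (choosing the index set to get $d=A\sum_i|\cX|^{|\pa_i|}$, checking bilinearity and the decoupling of $f_h$ and $g_h$, and the $\ell_1$ bookkeeping for $B$) is routine. Once this proposition is in place, Proposition \ref{prop:witness-salp} converts the Q-type witness condition into the \SAIL\ condition, and Theorem \ref{thm:salp-sharper} then yields the sample-complexity guarantee for factored MDPs.
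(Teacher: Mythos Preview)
Your proposal is correct and follows essentially the same approach as the paper: identical feature maps $f_h,g_h$ indexed by $(i,x,a)\in[m]\times\cX^{|\pa_i|}\times\fA$, the upper inequality via the hybrid/telescoping decomposition, and the lower inequality via the marginalization-contraction argument yielding the factor $m$. Your normalization computation $\|f_h(\theta)\|_1=m$ is in fact more careful than the paper's (which states $\|f_h(\theta)\|_1\le 1$, seemingly a slip); either way, combining with $\|g_h(\theta')\|_\infty\le 2$ gives $B=O(m)\le\sum_{i}|\cX|^{|\pa_i|}$ under the mild assumption $|\cX|\ge 2$.
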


\paragraph{Kernel linear MDPs.} In kernel linear MDPs \citep{yang2020reinforcement}, the transition functions can be represented as a linear functions of the tensor product of two \emph{known} feature mappings. 
Formally, the learner is provided with  features  $\phi:\fS\times\fA\rightarrow \R^\dlin$ and $\psi:\fS\rightarrow \R^\dlin$ so that for any $h\in[H]$, there exists $\W_h\in \R^{\dlin \times \dlin}$ satisfying $\P_h(s_{h+1}\mid s_h,a_h)= \phi(s_h,a_h)\trans \W_h \psi(s_{h+1})$ for all $(s_h,a_h,s_{h+1})\in\fS\times\fA\times\fS$. Besides, kernel linear MDPs satisfy the normalization condition: 
(a) $\|\phi(s_h,a_h)\|_2 \le C_\phi$ for all $(s_h,a_h)$, 
(b) $\|\sum_{s_{h+1}} \psi(s_{h+1}) f(s_{h+1}) \|_1 \le C_\psi$ for all $\|f\|_\infty \le 1$, and (c) $\|\W_h\|_{2} \le C_W $.

For simplicity, we assume the reward function is known. 
Previous works \citep[e.g.,][]{yang2020reinforcement} have shown that  kernel linear MDPs are capable of  representing various examples with moderate dimension $\dlin$, e.g., tabular MDPs with $\dlin=SA$. The following proposition states that kernel linear MDPs also fall into the low witness rank framework with the same ambient dimension.

\begin{proposition}\label{prop:linearMDPs} Let $\Theta$ be the family of $\dlin$-dimensional  kernel linear  MDPs. Then $\Theta$ satisfies V-type witness condition with $d = \dlin$,  $\kappa=1$, and  $B=2(\sqrt{\dlin}C_\phi C_W C_\psi +1)$. 
\end{proposition}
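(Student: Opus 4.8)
The plan is to construct, for each step $h$, a pair of $\dlin$-dimensional feature maps witnessing the V-type condition of Definition~\ref{def:witness} with $\kappa=1$. First I would observe that since the reward is a known deterministic function, $\D_\theta(s_h,a_h)=\P_\theta((r_h,s_{h+1})=\cdot\mid s_h,a_h)$ factors into a fixed point mass on $r_h$ times the one-step transition $\P_{\theta,h}(\cdot\mid s_h,a_h)=\phi(s_h,a_h)\trans\W_h^\theta\psi(\cdot)$, so that $\|\D_{\theta'}(s_h,a_h)-\D_{\theta^\star}(s_h,a_h)\|_1=\|\P_{\theta',h}(\cdot\mid s_h,a_h)-\P_{\theta^\star,h}(\cdot\mid s_h,a_h)\|_1$ and the V-type witness quantity is $\E_{s_h\sim\P^{\pi_\theta}_{\theta^\star},\,a_h\sim\pi_{\theta'}(s_h)}\big[\|\P_{\theta',h}(\cdot\mid s_h,a_h)-\P_{\theta^\star,h}(\cdot\mid s_h,a_h)\|_1\big]$. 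The crux is that, under the \emph{true} model, the step-$h$ occupancy measure of the (Markov) roll-in policy $\pi_\theta$ is $\dlin$-dimensional in the next-state features $\psi$: unrolling one transition gives, for $h\ge2$, $\P^{\pi_\theta}_{\theta^\star}(s_h=s)=(\bar\phi^{\theta}_{h-1})\trans\W_{h-1}^{\theta^\star}\psi(s)$, where $\bar\phi^{\theta}_{h-1}:=\E_{s_{h-1}\sim\P^{\pi_\theta}_{\theta^\star},\,a_{h-1}\sim\pi_\theta(s_{h-1})}[\phi(s_{h-1},a_{h-1})]\in\R^{\dlin}$, since the transition into $s_h$ always has the $\psi$-factored form.

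Given this identity, for $h\ge2$ I would set $f_h(\theta):=\bar\phi^{\theta}_{h-1}$ and $g_h(\theta'):=\W_{h-1}^{\theta^\star}\sum_{s}\psi(s)\,c_{\theta'}(s)$, where $c_{\theta'}(s):=\E_{a\sim\pi_{\theta'}(s)}\|\P_{\theta',h}(\cdot\mid s,a)-\P_{\theta^\star,h}(\cdot\mid s,a)\|_1\in[0,2]$. Crucially $f_h$ depends only on $\theta$ (and the fixed $\theta^\star$) while $g_h$ depends only on $\theta'$ — this decoupling is exactly what the V-type choice of roll-out action $\pi_{\theta'}$ (rather than the Q-type $\pi_\theta$) buys us. Substituting the occupancy-measure identity, the V-type witness quantity equals $\sum_s\P^{\pi_\theta}_{\theta^\star}(s_h=s)\,c_{\theta'}(s)=\langle f_h(\theta),g_h(\theta')\rangle\ge0$, so both the lower and upper witness inequalities hold as equalities, i.e.\ with $\kappa=1$. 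For $h=1$ the occupancy measure is the fixed $\mu_1$, and I would take $f_1(\theta)=e_1$ and $g_1(\theta')=\big(\sum_s\mu_1(s)c_{\theta'}(s)\big)e_1$, again producing an exact equality.

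It then remains to verify the normalization $\|f_h(\theta)\|_1\,\|g_h(\theta')\|_\infty\le B$. From $\|\phi(\cdot)\|_2\le C_\phi$ and $\|x\|_1\le\sqrt{\dlin}\,\|x\|_2$, Jensen's inequality gives $\|f_h(\theta)\|_1\le\E\|\phi\|_1\le\sqrt{\dlin}\,C_\phi$. Writing $v:=\sum_s\psi(s)c_{\theta'}(s)$, the normalization condition $\|\sum_s\psi(s)f(s)\|_1\le C_\psi$ for $\|f\|_\infty\le1$ applied to $c_{\theta'}/2$ gives $\|v\|_1\le 2C_\psi$, whence $\|g_h(\theta')\|_\infty=\|\W_{h-1}^{\theta^\star}v\|_\infty\le\|\W_{h-1}^{\theta^\star}v\|_2\le\|\W_{h-1}^{\theta^\star}\|_2\|v\|_2\le C_W\|v\|_1\le 2C_WC_\psi$, using $\|\cdot\|_\infty\le\|\cdot\|_2\le\|\cdot\|_1$ and $\|\W_h\|_2\le C_W$. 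Hence the product is at most $2\sqrt{\dlin}\,C_\phi C_W C_\psi$ for $h\ge2$ and at most $2$ for $h=1$, so $B=2(\sqrt{\dlin}\,C_\phi C_W C_\psi+1)$ works uniformly, with $d=\dlin$ and $\kappa=1$, as claimed. The main obstacle is conceptual rather than computational: recognizing the $\dlin$-dimensional structure of the true step-$h$ occupancy measure in the $\psi$-features and aligning it with the V-type roll-out so that an \emph{absolute-value-free}, exactly bilinear, $\dlin$-dimensional form emerges; once that is in place the remaining norm bookkeeping is routine.
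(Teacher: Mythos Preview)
Your proposal is correct and follows essentially the same route as the paper: both unroll one transition of the true model to write $\P^{\pi_\theta}_{\theta^\star}(s_h=\cdot)$ as a linear form in $\psi(\cdot)$, then define $f_h(\theta)=\E_{(s_{h-1},a_{h-1})\sim\P^{\pi_\theta}_{\theta^\star}}[\phi(s_{h-1},a_{h-1})]$ and $g_h(\theta')=\W_{h-1}^{\theta^\star}\sum_s\psi(s)c_{\theta'}(s)$, with the scalar $h=1$ case handled separately. The only cosmetic difference is where the $\sqrt{\dlin}$ factor lands in the norm bookkeeping (you put it on $\|f_h\|_1$ via $\|x\|_1\le\sqrt{\dlin}\|x\|_2$, the paper on $\|g_h\|_1$); your accounting is arguably cleaner but the product bound and final $B$ are identical.
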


\paragraph{Sparse linear bandits.} 
In sparse linear bandits, the mean reward function can be represented as a sparse linear function of the arm feature. Formally, we have $R_\theta(a)= \langle a,  \theta\rangle$ where (i)  $a\in\fA\subseteq B^\dlin_{C_\fA}(0)$, (ii) $\Theta:=\{\theta\in B^\dlin_{C_\Theta}(0):~\|\theta\|_0\le m  \text{ and } \langle \theta,a\rangle \in[0,1] \text{ for any } a\in\fA\}$. Without loss of generality, assume the stochastic reward feedback is  binary\footnote{If the reward feedback $\hat{r}$ is a real number in $[0,1]$, we can binarize it by sampling $x$ from ${\rm Bernoulli}(\hat r)$ and then using $x$ as the reward feedback instead. Such modification will not change the mean reward.}.
The following  proposition states that the witness rank of sparse linear bandits is no larger than the   ambient dimension $\dlin$.
\begin{proposition}\label{prop:linear-bandits-witness} Let $\Theta$ be the family of $\dlin$-dimensional  $m$-sparse linear bandit. Then $\Theta$ satisfies  Q-type witness condition with   $d = \dlin$,  $\kappa=1$, and  $B=4\sqrt{\dlin} C_\Theta C_\fA$. 
\end{proposition}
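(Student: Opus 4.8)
The plan is to verify the Q-type witness condition of Definition~\ref{def:witness} directly with $d=\dlin$, $\kappa=1$, and $B=4\sqrt{\dlin}\,C_\Theta C_\fA$; this is short because a sparse linear bandit is a horizon-one, context-free problem with (binarized) Bernoulli reward feedback. First I would note that with $H=1$ there is a single trivial state $s_1$, and, since the reward feedback is binary and $\langle\theta,a\rangle\in[0,1]$ for every $\theta\in\Theta$ and $a\in\fA$, the one-step model is $\D_\theta(s_1,a)={\rm Ber}(\langle a,\theta\rangle)$ as a distribution on $\{0,1\}$. Hence, for any $\theta',\theta^\star\in\Theta$ and any arm $a$,
\[
  \norm{\D_{\theta'}(s_1,a)-\D_{\theta^\star}(s_1,a)}_1 \;=\; 2\,\big|\langle a,\,\theta'-\theta^\star\rangle\big|.
\]
Because there is no context, the optimal policy $\pi_\theta$ of any model $\theta$ deterministically plays a greedy arm $a^\star_\theta\in\argmax_{a\in\fA}\langle a,\theta\rangle$, so the Q-type expectation (which averages $a_1\sim\pi_\theta(s_1)$) of the displayed quantity is exactly $2\,|\langle a^\star_\theta,\,\theta'-\theta^\star\rangle|$.

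Next I would read off the feature maps: set $f_1(\theta):=2\,a^\star_\theta\in\R^{\dlin}$ and $g_1(\theta'):=\theta'-\theta^\star\in\R^{\dlin}$. With these choices the per-step model discrepancy equals $|\langle f_1(\theta),g_1(\theta')\rangle|$ \emph{identically}, so both the upper and the lower inequality of Definition~\ref{def:witness} hold with $\kappa=1$ and $d=\dlin$. It then remains only to check the normalization $\norm{f_1(\theta)}_1\cdot\norm{g_1(\theta')}_\infty\le B$: using $\fA\subseteq B^{\dlin}_{C_\fA}(0)$ together with $\norm{x}_1\le\sqrt{\dlin}\,\norm{x}_2$ gives $\norm{f_1(\theta)}_1=2\norm{a^\star_\theta}_1\le 2\sqrt{\dlin}\,C_\fA$, while $\Theta\subseteq B^{\dlin}_{C_\Theta}(0)$ together with $\norm{x}_\infty\le\norm{x}_2$ gives $\norm{g_1(\theta')}_\infty=\norm{\theta'-\theta^\star}_\infty\le\norm{\theta'}_2+\norm{\theta^\star}_2\le 2C_\Theta$. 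Multiplying yields $B=4\sqrt{\dlin}\,C_\Theta C_\fA$, matching the claim. Plugging this into Theorem~\ref{thm:salp-sharper} via Proposition~\ref{prop:witness-salp} then also records the resulting regret/sample-complexity bounds, though that is outside what the proposition asks.

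There is no genuinely hard step here. The only things requiring care are bookkeeping: the factor $2$ from the TV-versus-$\ell_1$ normalization of the Bernoulli reward discrepancy, and the factor $2$ from the triangle-inequality bound $\norm{\theta'-\theta^\star}_\infty\le 2C_\Theta$, which together produce the $4=2\cdot 2$ in $B$; and the observation that $g_1$ is permitted to depend on the fixed unknown $\theta^\star$ — this is implicit already in Definition~\ref{def:witness}, since the left-hand sides vanish at $\theta'=\theta^\star$, and it is exactly what lets the distance-to-$\theta^\star$ model discrepancy be written as a bilinear form in the features. (The $m$-sparsity of $\Theta$ plays no role in the witness rank itself; it would only enter when bounding $\log\cN_\Theta$.)
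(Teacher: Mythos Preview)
Your proof is correct and follows essentially the same approach as the paper: both exhibit the features $(f,g)$ via the greedy arm $a^\star_\theta$ and the parameter shift $\theta'-\theta^\star$, use the Bernoulli reward to get $\|\D_{\theta'}(a)-\D_{\theta^\star}(a)\|_1=2|\langle a,\theta'-\theta^\star\rangle|$, and then bound $B$ via $\|\cdot\|_1\le\sqrt{\dlin}\|\cdot\|_2$. The only cosmetic difference is that the paper places the factor $2$ in $g$ (setting $f(\theta)=a_\theta$, $g(\theta')=2(\theta'-\theta^\star)$) rather than in $f$ as you do; this is immaterial.
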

By combining Proposition \ref{prop:linear-bandits-witness} with Proposition \ref{prop:witness-salp} and \ref{thm:salp-sharper}, we recover the optimal regret for sparse linear bandits $\tilde{\cO}(\sqrt{m\dlin K})$ up to a polylogarithmic factor.

\section{Reward-free Learning with OMLE}


In the previous sections, we show OMLE can learn near-optimal policies under the SAIL condition provided with reward information. 
In this section, we further extend the guarantees of OMLE by showing that OMLE with simple modification is capable to learn the dynamic models without any reward guidance under a slightly stronger version of the SAIL condition.
A direct implication is that we can use this new variant of algorithm to perform reward-free learning \cite{jin2020reward}, 
i.e., to learn the near-optimal policies for all reward functions simultaneously.



\subsection{Algorithm}
We present the modified algorithm --- Reward-free OMLE in Algorithm \ref{alg:omle-rf}.
Compared to OMLE (Algorithm \ref{alg:omle}), Reward-free OMLE  essentially only changes the optimistic planning  step (Line \ref{line:uncertainty}). Specifically, OMLE follows the principle of optimism in face of  uncertainty to balance exploration and exploitation while Reward-free OMLE  purely focuses on exploration and follows the policy with the highest uncertainty. Formally, for each policy $\pi$, we examine the discrepancy $d_{\rm TV}(\P_{\theta}^\pi,\P_{\tilde\theta}^\pi)$ between the dynamics of every pair of models $\theta$ and $\tilde{\theta}$ in the current model confidence  set $\cB^k$ under policy $\pi$.
In each iteration $k$, Reward-free OMLE simply picks $\pi^k$  to be the policy that achieves the highest discrepancy $\max_{{\theta},\tilde{\theta}\in\cB^k} d_{\rm TV}(\P_{\theta}^\pi,\P_{\tilde\theta}^\pi)$ within the current confidence set $\cB^k$.
By deliberately chasing the uncertainty, we can effectively explore the regimes where the  previously collected data are insufficient for accurately estimating the model dynamics.
Finally,  the output model estimate of Reward-free OMLE is simply  an arbitrary model that remains in the confidence after $K$ iterations.



\begin{algorithm}[t]
    \caption{\textsc{Reward-Free OMLE} $(\Theta,\beta)$}
 \begin{algorithmic}[1]\label{alg:omle-rf}
 \STATE \textbf{initialize:} $\cB^1 = \Theta$, $\mathfrak{D}=\{\}$ 
    \FOR{$k=1,\ldots,K$}
    \STATE compute $\pi^k \leftarrow \argmax_\pi \max_{{\theta},\tilde{\theta}\in\cB^k} d_{\rm TV}(\P_{\theta}^\pi,\P_{\tilde\theta}^\pi)$ \label{line:uncertainty}
    \STATE compute exploration policies $\Pi_{\exp}^k \leftarrow \Pi_{\exp}(\pi^k)$
    \FOR{each $\pi \in \Pi_{\exp}^k$}
    \STATE execute policy $\pi$ and collect a trajectory $\tau = (o_1, a_1, \ldots, o_H, a_H)$
    \STATE add $(\pi, \tau)$ into dataset $\fD$
    \ENDFOR
    \STATE update confidence set
    \vspace{-3mm}
    \begin{equation*}
    \cB^{k+1} = \bigg\{\hat\theta \in \Theta: \sum_{(\pi,\tau)\in\fD} \log \P_{{\hat\theta}}^{\pi} (\tau)
    \ge \max_{ \theta' \in\Theta} \sum_{(\pi,\tau)\in\fD} \log \P^{\pi}_{{\theta'}}(\tau) -\beta  \bigg\}\bigcap \cB^k
    \vspace{-3mm}
    \end{equation*}
    \ENDFOR
    \STATE \textbf{output} an arbitrary model  $\theta^{\rm out}\in\cB^K$
 \end{algorithmic}
 \end{algorithm}

\subsection{Theoretical guarantees}

We first present a slightly stronger version of \SAIL~condition (Condition \ref{cond:salp}) which enables Algorithm \ref{alg:omle-rf} to have new reward-free guarantees.






\begin{condition}[Strong \SAIL\ condition]\label{cond:salp-rf}
We say  model class $\Theta$ satisfies strong $(d, \kappa, B)$-\SAIL\ condition with exploration policy function $\Pie:\Pi\rightarrow2^{\Pi}$, if  there exist two sets of mappings $\{f_{h,i}\}_{(h,i)\in[H]\times[m]}$ from $\Pi$ to $\R^{d}$ and  $\{g_{h,i}\}_{(h,i)\in[H]\times[n]}$ from $\Theta$ to $\R^{d}$ such that  
 for any $(\pi,\theta)\in\Pi\times\Theta$: 
 \begin{align*}
    \textstyle  \sum_{\tilde{\pi}\in\Pie (\pi)} d_{\rm TV}(\P^{\tilde\pi}_{\theta^\star},\P^{\tilde\pi}_{\theta} )
        \ge&  \textstyle\kappa^{-1}\sum_{h=1}^H \sum_{i=1}^m \sum_{j=1}^n |\langle f_{h,i}(\pi),g_{h,j}(\theta)\rangle|, \\
    \textstyle d_{\rm TV}(\P^{\pi}_{\theta^\star},\P^{\pi}_{\theta} )\le& \textstyle \sum_{h=1}^H \sum_{i=1}^m \sum_{j=1}^n |\langle f_{h,i}(\pi),g_{h,j}(\theta)\rangle|,\\
    \textstyle \left(\sum_{i=1}^{m} \| f_{h,i}(\pi)\|_1 \right)\cdot & \textstyle \left( \sum_{j=1}^{n} \| g_{h,j}(\theta)\|_\infty\right) \le B.
 \end{align*}
\end{condition}

We see that Condition \ref{cond:salp-rf} differs from the original one (Condition \ref{cond:salp}) only in the LHS of the two inequalities, which replace greedy policy $\pi_\theta$  by any arbitrary policy $\pi$. We require this stronger version of condition because reward-free learning aims to learn an model estimate that is sufficiently accurate to perform planning for any later specified reward function. This objective is more demanding than the standard objective of finding near-optimal policy for a single reward function.
Nonetheless, 
this stronger version of SAIL condition is still satisfied by all examples in Section \ref{sec:beyond-low-rank}. Please refer to Appendix \ref{app:reward-free} for the proofs. 


Now, we present the theoretical guarantees for Reward-free OMLE under the strong SAIL condition.

\begin{theorem}
\label{thm:salp-rf}
There exists an absolute constant $c>0$ such that for any $\delta\in(0,1]$ and $K\in\N$,  if we choose $\beta = c\log(T\cN_{\Theta}(T^{-1})\delta^{-1})$ with $T=K|\Pie|$ in Reward-free \omle (Algorithm \ref{alg:omle-rf}) and assume strong $(d, \kappa, B)$- \SAIL\ condition holds,
then with probability at least $1-\delta$, we have
      $$
       \max_{\pi} d_{\rm TV}(\P_{\theta^{\rm out}}^\pi,\P_{\theta^\star}^\pi) \le    \poly(H) d \left( \frac{B}{K}  + \kappa \sqrt{\frac{\beta|\Pie|}{ K}} \right) \log^2(K).
      $$
\end{theorem}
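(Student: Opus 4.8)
The plan is to follow the proof of Theorem~\ref{thm:salp} almost verbatim, with two modifications that account for the reward-free algorithm: a triangle inequality that replaces the optimism step, and a monotonicity observation that converts the resulting cumulative bound into a last-iterate bound. Throughout, write $\Delta_k:=\max_{\pi}\max_{\theta,\theta'\in\cB^k}d_{\rm TV}(\P^\pi_{\theta},\P^\pi_{\theta'})$, and let $\pi^k$ together with a maximizing pair $\theta^k_1,\theta^k_2\in\cB^k$ realize $\Delta_k$. Since $\cB^{k+1}\subseteq\cB^k$, for each fixed $\pi$ the inner double maximum can only decrease, so $\Delta_{k+1}\le\Delta_k$ and hence $\Delta_K=\min_{k\le K}\Delta_k\le\frac1K\sum_{k=1}^K\Delta_k$. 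On the high-probability event established next, both $\theta^\star$ and $\theta^{\rm out}$ lie in the output confidence set, so $\max_\pi d_{\rm TV}(\P^\pi_{\theta^{\rm out}},\P^\pi_{\theta^\star})\le\Delta_K\le\frac1K\sum_{k=1}^K\Delta_k$. It therefore suffices to show $\sum_{k=1}^K\Delta_k\le\poly(H)d(B+\kappa\sqrt{\beta|\Pie|K})\log^2 K$.

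First I would invoke the standard MLE-in-confidence-set guarantee, exactly as in the proof of Theorem~\ref{thm:omle}: with $\beta=c\log(T\cN_\Theta(T^{-1})\delta^{-1})$ and $T=K|\Pie|$, with probability at least $1-\delta$ one has, for all $k\in[K]$, (i) $\theta^\star\in\cB^k$, and (ii) every $\theta\in\cB^k$ satisfies $\sum_{t<k}\sum_{\tilde\pi\in\Pi^t_{\exp}}d_{\rm TV}^2(\P^{\tilde\pi}_{\theta},\P^{\tilde\pi}_{\theta^\star})\le C\beta$ for an absolute constant $C$. Condition on this event. By the triangle inequality and (i), $\Delta_k\le d_{\rm TV}(\P^{\pi^k}_{\theta^k_1},\P^{\pi^k}_{\theta^\star})+d_{\rm TV}(\P^{\pi^k}_{\theta^k_2},\P^{\pi^k}_{\theta^\star})$, and by the second inequality of the strong \SAIL\ condition (Condition~\ref{cond:salp-rf}), applied with the arbitrary policy taken to be $\pi^k$ and the model taken to be $\theta^k_1$ or $\theta^k_2$, each term is at most $\sum_{h=1}^H\sum_{i=1}^m\sum_{j=1}^n|\langle f_{h,i}(\pi^k),g_{h,j}(\theta)\rangle|$. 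This is precisely where the strengthening of the condition (arbitrary $\pi$ rather than the greedy $\pi_\theta$) is needed, since $\pi^k$ here is the uncertainty-maximizing policy, not the optimal policy of any model.

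Next I would run the sharp $\ell_1$ elliptical potential argument for \SAIL\ --- the same one underlying Lemma~\ref{lem:sail-eluder}, cf.\ Appendix~\ref{app:l1-pigeon-hole} --- separately for $\ell\in\{1,2\}$ on the quantities $Z_k:=\sum_{h,i,j}|\langle f_{h,i}(\pi^k),g_{h,j}(\theta^k_\ell)\rangle|$. Its input requires that the \emph{historical} alignments be controlled: applying the first inequality of Condition~\ref{cond:salp-rf} with policy $\pi^t$ and model $\theta^k_\ell$ and summing over $t<k$ gives $\kappa^{-1}\sum_{t<k}\sum_{h,i,j}|\langle f_{h,i}(\pi^t),g_{h,j}(\theta^k_\ell)\rangle|\le\sum_{t<k}\sum_{\tilde\pi\in\Pi^t_{\exp}}d_{\rm TV}(\P^{\tilde\pi}_{\theta^\star},\P^{\tilde\pi}_{\theta^k_\ell})$, which by Cauchy--Schwarz, item~(ii) (valid since $\theta^k_\ell\in\cB^k$), and $\sum_{t<k}|\Pi^t_{\exp}|\le K|\Pie|$ is at most $\kappa\sqrt{C\beta K|\Pie|}$. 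Feeding this, together with the normalization $\sum_i\|f_{h,i}(\pi)\|_1\cdot\sum_j\|g_{h,j}(\theta)\|_\infty\le B$ from the third inequality, into the \SAIL\ elliptical potential lemma yields $\sum_{k=1}^K Z_k\le\poly(H)d(B+\kappa\sqrt{\beta|\Pie|K})\log^2 K$ with no dependence on $m,n$, exactly as in Theorem~\ref{thm:salp}. Summing Step~2 and Step~3 over $\ell\in\{1,2\}$ gives $\sum_{k=1}^K\Delta_k\le 2\poly(H)d(B+\kappa\sqrt{\beta|\Pie|K})\log^2 K$, and dividing by $K$ (Step~1) closes the argument.

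The main obstacle is entirely inside Step~3: checking that the $\ell_1$ elliptical potential lemma for \SAIL\ still applies when the model-side features $g_{h,j}(\theta^k_\ell)$ are selected adaptively from the data-dependent, nested confidence sets $\cB^k$, and that the two-sided \SAIL\ inequalities chain coherently across the $K$ rounds. This is, however, the very difficulty already resolved in the proof of Theorem~\ref{thm:salp}/Lemma~\ref{lem:sail-eluder}; the reward-free statement needs essentially no new technical content beyond the reduction in Step~1 and the swap from $\pi_\theta$ to an arbitrary $\pi$ that the strong \SAIL\ condition licenses. A minor point to handle is the exact index of the output confidence set in Algorithm~\ref{alg:omle-rf}, but since $\Delta_k$ is non-increasing this only affects constants.
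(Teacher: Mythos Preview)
Your proposal is correct and follows essentially the same route as the paper's proof: the paper also uses the monotonicity of $\cB^k$ together with $\theta^\star,\theta^{\rm out}\in\cB^K$ to bound the target by $\tfrac{1}{K}\sum_k d_{\rm TV}(\P^{\pi^k}_{\theta^k},\P^{\pi^k}_{\tilde\theta^k})$, splits each summand via the triangle inequality through $\theta^\star$, and then controls the two halves by combining the strong \SAIL\ inequalities with the MLE confidence-set guarantee (Propositions~\ref{prop:mle-optimisim}--\ref{prop:mle-valid}), Cauchy--Schwarz, and the \SAIL\ pigeon-hole lemma (Lemma~\ref{prop:pigeon-hole}). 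Your explicit framing in terms of the non-increasing sequence $\Delta_k$ is a clean way to phrase the monotonicity step, but it is the same argument.
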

Similar to  Theorem \ref{thm:salp}, the  above model estimation error  scales polynomially with respect to the parameters $(d, \kappa, B)$ and the number of exploration policies $|\Pie|$  in the strong \SAIL\ condition, and are independent of the number of the feature mappings $m$ and $n$. The latter characteristic is of vital importance for handling well-conditioned PSRs that  satisfy Condition \ref{cond:salp-rf} with exponentially large $m$ and $n$. 
Moreover, Theorem \ref{thm:salp-rf} implies that $K=\tilde{\cO}(\kappa^2 d^2 \log\cN_\Theta(\epsilon^{-1}) \cdot \epsilon^{-2})$ episodes suffices for pinning down the model dynamics for all polices \emph{simultaneously} up to precision $\epsilon$ when the log-bracketing number $\log\cN_\theta(T^{-1})$ grows no faster than $\polylog (T)$.
We remark that this error measurement in Theorem \ref{thm:salp-rf} is much stronger than  the one in Theorem \ref{thm:salp} which only guarantees the suboptimality in value for a particular reward. In particular,  
the error bound in  Theorem \ref{thm:salp-rf} implies that we can  learn the near-optimal policies for all reward functions simultaneously: because for any reward function $R:~(\fO\times\fA)^H\rightarrow [0,1]$, any optimal policy under the $\epsilon$-accurate model $\theta^{\rm out}$ is also $\epsilon$-optimal under the ground truth model $\theta^\star$.


\section{Conclusion}

In this paper, we study a simple algorithm OMLE for generic model-based sequential decision making, and  prove that OMLE can learn a huge family of both fully observable and partially observable  sequential decision making problems sample-efficiently.
Such problems include but not limited to  tabular observable POMDPs, multi-step decodable POMDPs, low witness rank problems, observable POMDPs with continuous observations, and  well-conditioned PSRs, where the last two are new  and have not been solved by any previous works. 
We further identify a new structural condition \SAIL, which unifies our existing understandings of model model-based RL in both fully observable and partially observable  sequential decision making. We prove that the \SAIL\ condition is satisfied by all sequential decision making problems studied in this paper and OMLE always have   polynomial sample complexity guarantee under the \SAIL\  condition.
Finally, we propose a reward-free variant of OMLE, which is capable of providing a sufficient accurate estimation of the model dynamics such that we can use it to find the near-optimal policies of all reward functions simultaneously.


\section*{Acknowledgement}

We would like to thank  Ahmed Khaled for helpful discussions.


\bibliographystyle{plainnat}
\bibliography{ref}

\begin{thebibliography}{57}
\providecommand{\natexlab}[1]{#1}
\providecommand{\url}[1]{\texttt{#1}}
\expandafter\ifx\csname urlstyle\endcsname\relax
  \providecommand{\doi}[1]{doi: #1}\else
  \providecommand{\doi}{doi: \begingroup \urlstyle{rm}\Url}\fi

\bibitem[Akkaya et~al.(2019)Akkaya, Andrychowicz, Chociej, Litwin, McGrew,
  Petron, Paino, Plappert, Powell, Ribas, et~al.]{akkaya2019solving}
Ilge Akkaya, Marcin Andrychowicz, Maciek Chociej, Mateusz Litwin, Bob McGrew,
  Arthur Petron, Alex Paino, Matthias Plappert, Glenn Powell, Raphael Ribas,
  et~al.
\newblock Solving {R}ubik's cube with a robot hand.
\newblock \emph{arXiv preprint arXiv:1910.07113}, 2019.

\bibitem[Anandkumar et~al.(2014)Anandkumar, Ge, Hsu, Kakade, and
  Telgarsky]{anandkumar2014tensor}
Animashree Anandkumar, Rong Ge, Daniel Hsu, Sham~M Kakade, and Matus Telgarsky.
\newblock Tensor decompositions for learning latent variable models.
\newblock \emph{Journal of machine learning research}, 15:\penalty0 2773--2832,
  2014.

\bibitem[Azar et~al.(2017)Azar, Osband, and Munos]{azar2017minimax}
Mohammad~Gheshlaghi Azar, Ian Osband, and R{\'e}mi Munos.
\newblock Minimax regret bounds for reinforcement learning.
\newblock In \emph{International Conference on Machine Learning}, pages
  263--272. PMLR, 2017.

\bibitem[Azizzadenesheli et~al.(2016)Azizzadenesheli, Lazaric, and
  Anandkumar]{azizzadenesheli2016reinforcement}
Kamyar Azizzadenesheli, Alessandro Lazaric, and Animashree Anandkumar.
\newblock Reinforcement learning of {PO{MDP}}s using spectral methods.
\newblock In \emph{Conference on Learning Theory}, pages 193--256. PMLR, 2016.

\bibitem[Boots et~al.(2011)Boots, Siddiqi, and Gordon]{boots2011closing}
Byron Boots, Sajid~M Siddiqi, and Geoffrey~J Gordon.
\newblock Closing the learning-planning loop with predictive state
  representations.
\newblock \emph{The International Journal of Robotics Research}, 30\penalty0
  (7):\penalty0 954--966, 2011.

\bibitem[Bowling et~al.(2006)Bowling, McCracken, James, Neufeld, and
  Wilkinson]{bowling2006learning}
Michael Bowling, Peter McCracken, Michael James, James Neufeld, and Dana
  Wilkinson.
\newblock Learning predictive state representations using non-blind policies.
\newblock In \emph{Proceedings of the 23rd international conference on Machine
  learning}, pages 129--136, 2006.

\bibitem[Brown and Sandholm(2019)]{brown2019superhuman}
Noam Brown and Tuomas Sandholm.
\newblock Superhuman {{AI}} for multiplayer poker.
\newblock \emph{Science}, 365\penalty0 (6456):\penalty0 885--890, 2019.

\bibitem[Cai et~al.(2022)Cai, Yang, and Wang]{cai2022reinforcement}
Qi~Cai, Zhuoran Yang, and Zhaoran Wang.
\newblock Reinforcement learning from partial observation: Linear function
  approximation with provable sample efficiency.
\newblock In \emph{International Conference on Machine Learning}, pages
  2485--2522. PMLR, 2022.

\bibitem[Campbell et~al.(2002)Campbell, Hoane~Jr, and Hsu]{campbell2002deep}
Murray Campbell, A~Joseph Hoane~Jr, and Feng-hsiung Hsu.
\newblock Deep blue.
\newblock \emph{Artificial intelligence}, 134\penalty0 (1-2):\penalty0 57--83,
  2002.

\bibitem[Chen et~al.(2022{\natexlab{a}})Chen, Bai, and Mei]{chen2022partially}
Fan Chen, Yu~Bai, and Song Mei.
\newblock Partially observable rl with b-stability: Unified structural
  condition and sharp sample-efficient algorithms.
\newblock \emph{arXiv preprint arXiv:2209.14990}, 2022{\natexlab{a}}.

\bibitem[Chen et~al.(2022{\natexlab{b}})Chen, Mei, and Bai]{chen2022unified}
Fan Chen, Song Mei, and Yu~Bai.
\newblock Unified algorithms for rl with decision-estimation coefficients:
  No-regret, pac, and reward-free learning.
\newblock \emph{arXiv preprint arXiv:2209.11745}, 2022{\natexlab{b}}.

\bibitem[Du et~al.(2019)Du, Krishnamurthy, Jiang, Agarwal, Dudik, and
  Langford]{du2019provably}
Simon Du, Akshay Krishnamurthy, Nan Jiang, Alekh Agarwal, Miroslav Dudik, and
  John Langford.
\newblock Provably efficient {RL} with rich observations via latent state
  decoding.
\newblock In \emph{International Conference on Machine Learning}, pages
  1665--1674. PMLR, 2019.

\bibitem[Du et~al.(2021)Du, Kakade, Lee, Lovett, Mahajan, Sun, and
  Wang]{du2021bilinear}
Simon Du, Sham Kakade, Jason Lee, Shachar Lovett, Gaurav Mahajan, Wen Sun, and
  Ruosong Wang.
\newblock Bilinear classes: A structural framework for provable generalization
  in {{RL}}.
\newblock In \emph{International Conference on Machine Learning}, pages
  2826--2836. PMLR, 2021.

\bibitem[Efroni et~al.(2022)Efroni, Jin, Krishnamurthy, and
  Miryoosefi]{efroni2022provable}
Yonathan Efroni, Chi Jin, Akshay Krishnamurthy, and Sobhan Miryoosefi.
\newblock Provable reinforcement learning with a short-term memory.
\newblock \emph{arXiv preprint arXiv:2202.03983}, 2022.

\bibitem[Foster et~al.(2021)Foster, Kakade, Qian, and
  Rakhlin]{foster2021statistical}
Dylan~J Foster, Sham~M Kakade, Jian Qian, and Alexander Rakhlin.
\newblock The statistical complexity of interactive decision making.
\newblock \emph{arXiv preprint arXiv:2112.13487}, 2021.

\bibitem[Geer et~al.(2000)Geer, van~de Geer, and Williams]{geer2000empirical}
Sara~A Geer, Sara van~de Geer, and D~Williams.
\newblock \emph{Empirical Processes in M-estimation}, volume~6.
\newblock Cambridge University Press, 2000.

\bibitem[Golowich et~al.(2022{\natexlab{a}})Golowich, Moitra, and
  Rohatgi]{golowich2022learning}
Noah Golowich, Ankur Moitra, and Dhruv Rohatgi.
\newblock Learning in observable pomdps, without computationally intractable
  oracles.
\newblock \emph{arXiv preprint arXiv:2206.03446}, 2022{\natexlab{a}}.

\bibitem[Golowich et~al.(2022{\natexlab{b}})Golowich, Moitra, and
  Rohatgi]{golowich2022planning}
Noah Golowich, Ankur Moitra, and Dhruv Rohatgi.
\newblock Planning in observable {POMDPs} in quasipolynomial time.
\newblock \emph{arXiv preprint arXiv:2201.04735}, 2022{\natexlab{b}}.

\bibitem[Guo et~al.(2016)Guo, Doroudi, and Brunskill]{guo2016pac}
Zhaohan~Daniel Guo, Shayan Doroudi, and Emma Brunskill.
\newblock A {{PAC} {RL}} algorithm for episodic {{PO{MDP}}s}.
\newblock In \emph{Artificial Intelligence and Statistics}, pages 510--518.
  PMLR, 2016.

\bibitem[Hauskrecht and Fraser(2000)]{hauskrecht2000planning}
Milos Hauskrecht and Hamish Fraser.
\newblock Planning treatment of ischemic heart disease with partially
  observable {{M}arkov} decision processes.
\newblock \emph{Artificial Intelligence in Medicine}, 18\penalty0 (3):\penalty0
  221--244, 2000.

\bibitem[Hefny et~al.(2015)Hefny, Downey, and Gordon]{hefny2015supervised}
Ahmed Hefny, Carlton Downey, and Geoffrey~J Gordon.
\newblock Supervised learning for dynamical system learning.
\newblock \emph{Advances in neural information processing systems}, 28, 2015.

\bibitem[Hsu et~al.(2012)Hsu, Kakade, and Zhang]{hsu2012spectral}
Daniel Hsu, Sham~M Kakade, and Tong Zhang.
\newblock A spectral algorithm for learning hidden {{M}arkov} models.
\newblock \emph{Journal of Computer and System Sciences}, 78\penalty0
  (5):\penalty0 1460--1480, 2012.

\bibitem[Jaeger(1998)]{jaeger1998discrete}
Herbert Jaeger.
\newblock \emph{Discrete-time, discrete-valued observable operator models: a
  tutorial}.
\newblock GMD-Forschungszentrum Informationstechnik Darmstadt, Germany, 1998.

\bibitem[James et~al.(2004)James, Singh, and Littman]{james2004planning}
Michael~R James, Satinder Singh, and Michael~L Littman.
\newblock Planning with predictive state representations.
\newblock In \emph{2004 International Conference on Machine Learning and
  Applications, 2004. Proceedings.}, pages 304--311. IEEE, 2004.

\bibitem[Jiang et~al.(2017)Jiang, Krishnamurthy, Agarwal, Langford, and
  Schapire]{jiang2017contextual}
Nan Jiang, Akshay Krishnamurthy, Alekh Agarwal, John Langford, and Robert~E
  Schapire.
\newblock Contextual decision processes with low {{B}ellman} rank are
  {{PAC}}-learnable.
\newblock In \emph{International Conference on Machine Learning}, pages
  1704--1713. PMLR, 2017.

\bibitem[Jin et~al.(2018)Jin, Allen-Zhu, Bubeck, and Jordan]{jin2018q}
Chi Jin, Zeyuan Allen-Zhu, Sebastien Bubeck, and Michael~I Jordan.
\newblock Is {Q}-learning provably efficient?
\newblock \emph{Advances in neural information processing systems}, 31, 2018.

\bibitem[Jin et~al.(2020{\natexlab{a}})Jin, Kakade, Krishnamurthy, and
  Liu]{jin2020sample}
Chi Jin, Sham~M Kakade, Akshay Krishnamurthy, and Qinghua Liu.
\newblock Sample-efficient reinforcement learning of undercomplete
  {{PO{MDP}}s}.
\newblock \emph{NeurIPS}, 2020{\natexlab{a}}.

\bibitem[Jin et~al.(2020{\natexlab{b}})Jin, Krishnamurthy, Simchowitz, and
  Yu]{jin2020reward}
Chi Jin, Akshay Krishnamurthy, Max Simchowitz, and Tiancheng Yu.
\newblock Reward-free exploration for reinforcement learning.
\newblock In \emph{International Conference on Machine Learning}, pages
  4870--4879. PMLR, 2020{\natexlab{b}}.

\bibitem[Jin et~al.(2020{\natexlab{c}})Jin, Yang, Wang, and
  Jordan]{jin2020provably}
Chi Jin, Zhuoran Yang, Zhaoran Wang, and Michael~I Jordan.
\newblock Provably efficient reinforcement learning with linear function
  approximation.
\newblock In \emph{Conference on Learning Theory}, pages 2137--2143. PMLR,
  2020{\natexlab{c}}.

\bibitem[Jin et~al.(2021)Jin, Liu, and Miryoosefi]{jin2021bellman}
Chi Jin, Qinghua Liu, and Sobhan Miryoosefi.
\newblock {B}ellman eluder dimension: New rich classes of {{RL}} problems, and
  sample-efficient algorithms.
\newblock \emph{Advances in Neural Information Processing Systems}, 34, 2021.

\bibitem[Kearns and Koller(1999)]{kearns1999efficient}
Michael Kearns and Daphne Koller.
\newblock Efficient reinforcement learning in factored mdps.
\newblock In \emph{IJCAI}, volume~16, pages 740--747, 1999.

\bibitem[Krishnamurthy et~al.(2016)Krishnamurthy, Agarwal, and
  Langford]{krishnamurthy2016pac}
Akshay Krishnamurthy, Alekh Agarwal, and John Langford.
\newblock {PAC} reinforcement learning with rich observations.
\newblock \emph{Advances in Neural Information Processing Systems}, 29, 2016.

\bibitem[Lattimore and Szepesv{\'a}ri(2020)]{lattimore2020bandit}
Tor Lattimore and Csaba Szepesv{\'a}ri.
\newblock \emph{Bandit algorithms}.
\newblock Cambridge University Press, 2020.

\bibitem[Levinson et~al.(2011)Levinson, Askeland, Becker, Dolson, Held, Kammel,
  Kolter, Langer, Pink, Pratt, et~al.]{levinson2011towards}
Jesse Levinson, Jake Askeland, Jan Becker, Jennifer Dolson, David Held, Soeren
  Kammel, J~Zico Kolter, Dirk Langer, Oliver Pink, Vaughan Pratt, et~al.
\newblock Towards fully autonomous driving: Systems and algorithms.
\newblock In \emph{2011 IEEE intelligent vehicles symposium (IV)}, pages
  163--168. IEEE, 2011.

\bibitem[Littman and Sutton(2001)]{littman2001predictive}
Michael Littman and Richard~S Sutton.
\newblock Predictive representations of state.
\newblock \emph{Advances in neural information processing systems}, 14, 2001.

\bibitem[Liu et~al.(2022)Liu, Chung, Szepesvari, and Jin]{liu2022partially}
Qinghua Liu, Alan Chung, Csaba Szepesvari, and Chi Jin.
\newblock When is partially observable reinforcement learning not scary?
\newblock In \emph{Proceedings of Thirty Fifth Conference on Learning Theory},
  volume 178 of \emph{Proceedings of Machine Learning Research}, pages
  5175--5220. PMLR, 02--05 Jul 2022.
\newblock URL \url{https://proceedings.mlr.press/v178/liu22f.html}.

\bibitem[McCracken and Bowling(2005)]{mccracken2005online}
Peter McCracken and Michael Bowling.
\newblock Online discovery and learning of predictive state representations.
\newblock \emph{Advances in neural information processing systems}, 18, 2005.

\bibitem[Misra et~al.(2020)Misra, Henaff, Krishnamurthy, and
  Langford]{misra2020kinematic}
Dipendra Misra, Mikael Henaff, Akshay Krishnamurthy, and John Langford.
\newblock Kinematic state abstraction and provably efficient rich-observation
  reinforcement learning.
\newblock In \emph{International conference on machine learning}, pages
  6961--6971. PMLR, 2020.

\bibitem[Mnih et~al.(2013)Mnih, Kavukcuoglu, Silver, Graves, Antonoglou,
  Wierstra, and Riedmiller]{mnih2013playing}
Volodymyr Mnih, Koray Kavukcuoglu, David Silver, Alex Graves, Ioannis
  Antonoglou, Daan Wierstra, and Martin Riedmiller.
\newblock Playing atari with deep reinforcement learning.
\newblock \emph{arXiv preprint arXiv:1312.5602}, 2013.

\bibitem[Mossel and Roch(2005)]{mossel2005learning}
Elchanan Mossel and S{\'e}bastien Roch.
\newblock Learning nonsingular phylogenies and hidden {{M}arkov} models.
\newblock In \emph{Proceedings of the thirty-seventh annual ACM symposium on
  Theory of computing}, pages 366--375, 2005.

\bibitem[Mundhenk et~al.(2000)Mundhenk, Goldsmith, Lusena, and
  Allender]{mundhenk2000complexity}
Martin Mundhenk, Judy Goldsmith, Christopher Lusena, and Eric Allender.
\newblock Complexity of finite-horizon {M}arkov decision process problems.
\newblock \emph{Journal of the ACM (JACM)}, 47\penalty0 (4):\penalty0 681--720,
  2000.

\bibitem[Papadimitriou and Tsitsiklis(1987)]{papadimitriou1987complexity}
Christos~H Papadimitriou and John~N Tsitsiklis.
\newblock The complexity of {{M}arkov} decision processes.
\newblock \emph{Mathematics of operations research}, 12\penalty0 (3):\penalty0
  441--450, 1987.

\bibitem[Russo and Van~Roy(2013)]{russo2013eluder}
Daniel Russo and Benjamin Van~Roy.
\newblock Eluder dimension and the sample complexity of optimistic exploration.
\newblock In \emph{NIPS}, pages 2256--2264. Citeseer, 2013.

\bibitem[Silver et~al.(2017)Silver, Schrittwieser, Simonyan, Antonoglou, Huang,
  Guez, Hubert, Baker, Lai, Bolton, et~al.]{silver2017mastering}
David Silver, Julian Schrittwieser, Karen Simonyan, Ioannis Antonoglou, Aja
  Huang, Arthur Guez, Thomas Hubert, Lucas Baker, Matthew Lai, Adrian Bolton,
  et~al.
\newblock Mastering the game of go without human knowledge.
\newblock \emph{nature}, 550\penalty0 (7676):\penalty0 354--359, 2017.

\bibitem[Singh et~al.(2012)Singh, James, and Rudary]{singh2012predictive}
Satinder Singh, Michael James, and Matthew Rudary.
\newblock Predictive state representations: A new theory for modeling dynamical
  systems.
\newblock \emph{arXiv preprint arXiv:1207.4167}, 2012.

\bibitem[Singh et~al.(2003)Singh, Littman, Jong, Pardoe, and
  Stone]{singh2003learning}
Satinder~P Singh, Michael~L Littman, Nicholas~K Jong, David Pardoe, and Peter
  Stone.
\newblock Learning predictive state representations.
\newblock In \emph{Proceedings of the 20th International Conference on Machine
  Learning (ICML-03)}, pages 712--719, 2003.

\bibitem[Sun et~al.(2019)Sun, Jiang, Krishnamurthy, Agarwal, and
  Langford]{sun2019model}
Wen Sun, Nan Jiang, Akshay Krishnamurthy, Alekh Agarwal, and John Langford.
\newblock Model-based rl in contextual decision processes: Pac bounds and
  exponential improvements over model-free approaches.
\newblock In \emph{Conference on learning theory}, pages 2898--2933. PMLR,
  2019.

\bibitem[Todorov and Li(2005)]{todorov2005generalized}
Emanuel Todorov and Weiwei Li.
\newblock A generalized iterative lqg method for locally-optimal feedback
  control of constrained nonlinear stochastic systems.
\newblock In \emph{Proceedings of the 2005, American Control Conference,
  2005.}, pages 300--306. IEEE, 2005.

\bibitem[Uehara et~al.(2022)Uehara, Sekhari, Lee, Kallus, and
  Sun]{uehara2022provably}
Masatoshi Uehara, Ayush Sekhari, Jason~D Lee, Nathan Kallus, and Wen Sun.
\newblock Provably efficient reinforcement learning in partially observable
  dynamical systems.
\newblock \emph{arXiv preprint arXiv:2206.12020}, 2022.

\bibitem[Vinyals et~al.(2019)Vinyals, Babuschkin, Czarnecki, Mathieu, Dudzik,
  Chung, Choi, Powell, Ewalds, Georgiev, et~al.]{vinyals2019grandmaster}
Oriol Vinyals, Igor Babuschkin, Wojciech~M Czarnecki, Michael Mathieu, Andrew
  Dudzik, Junyoung Chung, David~H Choi, Richard Powell, Timo Ewalds, Petko
  Georgiev, et~al.
\newblock Grandmaster level in {StarCraft II} using multi-agent reinforcement
  learning.
\newblock \emph{Nature}, 575\penalty0 (7782):\penalty0 350--354, 2019.

\bibitem[Vlassis et~al.(2012)Vlassis, Littman, and
  Barber]{vlassis2012computational}
Nikos Vlassis, Michael~L Littman, and David Barber.
\newblock On the computational complexity of stochastic controller optimization
  in {{PO{MDP}}s}.
\newblock \emph{ACM Transactions on Computation Theory (TOCT)}, 4\penalty0
  (4):\penalty0 1--8, 2012.

\bibitem[Wang et~al.(2022)Wang, Cai, Yang, and Wang]{wang2022embed}
Lingxiao Wang, Qi~Cai, Zhuoran Yang, and Zhaoran Wang.
\newblock Embed to control partially observed systems: Representation learning
  with provable sample efficiency.
\newblock \emph{arXiv preprint arXiv:2205.13476}, 2022.

\bibitem[Wolfe et~al.(2005)Wolfe, James, and Singh]{wolfe2005learning}
Britton Wolfe, Michael~R James, and Satinder Singh.
\newblock Learning predictive state representations in dynamical systems
  without reset.
\newblock In \emph{Proceedings of the 22nd international conference on Machine
  learning}, pages 980--987, 2005.

\bibitem[Xiong et~al.(2021)Xiong, Chen, Gao, and Zhou]{xiong2021sublinear}
Yi~Xiong, Ningyuan Chen, Xuefeng Gao, and Xiang Zhou.
\newblock Sublinear regret for learning {{PO{MDP}}s}.
\newblock \emph{arXiv preprint arXiv:2107.03635}, 2021.

\bibitem[Yang and Wang(2020)]{yang2020reinforcement}
Lin Yang and Mengdi Wang.
\newblock Reinforcement learning in feature space: Matrix bandit, kernels, and
  regret bound.
\newblock In \emph{International Conference on Machine Learning}, pages
  10746--10756. PMLR, 2020.

\bibitem[Zhan et~al.(2022)Zhan, Uehara, Sun, and Lee]{zhan2022pac}
Wenhao Zhan, Masatoshi Uehara, Wen Sun, and Jason~D Lee.
\newblock Pac reinforcement learning for predictive state representations.
\newblock \emph{arXiv preprint arXiv:2207.05738}, 2022.

\bibitem[Zhang et~al.(2021)Zhang, Yang, Liu, Tokekar, and
  Huang]{zhang2021reinforcement}
Zhi Zhang, Zhuoran Yang, Han Liu, Pratap Tokekar, and Furong Huang.
\newblock Reinforcement learning under a multi-agent predictive state
  representation model: Method and theory.
\newblock In \emph{International Conference on Learning Representations}, 2021.

\end{thebibliography}

\newpage
\appendix

\newcommand{\bQ}{{\bar{\cQ}}}
\newcommand{\bq}{{\bar{q}}}

\section{PSRs with continuous  observations}
\label{sec:psr_cts}

In this section, we show how to extend the definitions and conditions for PSRs to handle continuous observations.  

\subsection{Definition of continuous PSRs}

Recall in the finite-observation setting (Section \ref{sec:psr}), we define a PSR representation by using  a set of finite-dimensional  operators $(\bphi, \bM, \bpsi_0)$, which jointly satisfy the  following two equations:
\begin{align}
    \P(o_{1:H}|a_{1:H}) =& \bphi_H(o_H, a_H)\trans \bM_{H-1}(o_{H-1}, a_{H-1}) \cdots \bM_1(o_1, a_1)\bpsi_0,   \label{eq:PSR_3}\\
    \bpsi(o_{1:h}, a_{1:h}) =& \bM_{h}(o_{h}, a_{h}) \cdots \bM_1(o_1, a_1) \bpsi_0.  \label{eq:PSR_4}
\end{align}
However, in the continuous observation setting, the above probabilistic  quantities on the LHS could always be zero for all $(o_{1:H},a_{1:H})$,  since the probability of  a single point is usually zero for general distributions, e.g., Gaussian distribution. 
For Equation \eqref{eq:PSR_3}, it is natural to address this issue by replacing discrete probability with  density. Formally, let $f(o_{1:H}|a_{1:H})$ denote the density at observation sequence $o_{1:H}$ provided that action sequence $a_{1:H}$ is taken. We require the operators to satisfy 
\begin{equation}
f(o_{1:H}|a_{1:H}) = \bphi_H(o_H, a_H)\trans \bM_{H-1}(o_{H-1}, a_{H-1}) \cdots \bM_1(o_1, a_1)\bpsi_0.
\end{equation}
And for Equation \eqref{eq:PSR_4}, we introduce the following two approaches to generalize it.

\paragraph{General core tests as sets of trajectories.} 
Although each single point has probability zero in a general distribution, the probability of sets of points could still be strictly positive. 
Therefore, it is natural to generalize the concept of core tests by allowing them to be sets of infinite/dense trajectories. 
Formally, we consider  $\cQ_h=\{q_{h,1},\ldots,q_{h,d}\}$ where each $q_{h,i}:=(\cY_{h+1},a_{h+1},\ldots,\cY_{h+m},a_{h+m})\in (2^{\fO}\times\fA)^m$
for some $m\le H-h$, where $2^\fO$ denotes the collections of all measurable subsets of $\fO$. 
WLOG, we assume all the  core tests are disjoint and each core action sequence is not a prefix of another one.\footnote{This is WLOG because our sample complexity bounds  only depend on the total number of  different core action sequences.}
Now we can  generalize Equation \eqref{eq:PSR_4} by requiring that for any $q_h=(\cY_{h+1},a_{h+1},\ldots,\cY_{h+m},a_{h+m})\in\cQ_h$:
\begin{equation}\label{eq:PSR_5}
\begin{aligned}
    \MoveEqLeft \e_{q_h}\trans\left[\bM_{h}(o_{h}, a_{h}) \cdots \bM_1(o_1, a_1) \bpsi_0\right] \\
    = &f(o_{1:h} \mid a_{1:h})\times 
\P(o_{h+1:h+m}\in \cY_{h+1}\times \cdots\times \cY_{h+m} \mid o_{1:h},a_{1:h+m}).
    \end{aligned}
\end{equation}
By adopting this generalization, all the operators remain   finite-dimensional matrices or vectors. 

\paragraph{General operators as  linear functional.} 
Another  natural alternative is to simply replace discrete  probability with density in Equation \eqref{eq:PSR_4}. Formally, for each $(o_{1:h},a_{1:h})$ and  $q_h=(o_{h+1},a_{h+1},\ldots,o_{h+m},a_{h+m})\in\cQ_h$,  we require  $\bM_{h}(o_{h}, a_{h}) \cdots \bM_1(o_1, a_1) \bpsi_0$ to be a  density function defined on $\cQ_h$ so that 
\begin{equation}\label{eq:PSR_6}
\begin{aligned}
     [\bM_{h}(o_{h}, a_{h}) \cdots \bM_1(o_1, a_1) \bpsi_0](q_h) 
    = f(o_{1:h+m} \mid a_{1:h+m}).
    \end{aligned}
\end{equation}
If we adopt this approach, operator $\bM_h(o_h,a_h)$ is now a  functional that maps a function defined over $\cQ_{h-1}$ to a function defined on $\cQ_h$.

\paragraph{A unified generalization.}
Now we can combine the above two approaches by allowing $\cQ_h$'s for different $h$'s to be generalized in different ways. 
In other words, given $h\in[H]$, if $\cQ_h$ consists of general core tests, we replace Equation \eqref{eq:PSR_4} with 
Equation \eqref{eq:PSR_5},  and if $\cQ_h$ consists of single trajectories, we require Equation  \eqref{eq:PSR_6} instead. 

\paragraph{Rank of continuous PSR.} In the continuous setting, the system-dynamic  matrices are no longer well-defined, so we define the rank of PSR to be 
\begin{equation*}
    r:=\max_{h\in[H-1]} {\rm dim}\left( \left\{\bpsi(\tau_h):~\tau_h\in(\fO\times\fA)^h \right\}\right),
\end{equation*}
which is equivalent to the definition of rank in Section \ref{sec:psr} for  the finite-observation case. 

\subsection{Well-conditioned continuous PSRs}

In this subsection,  we show how to generalize Condition \ref{asp:psr} into   the continuous setting.
With slight abuse of notation, let $\R^{|\cQ_h|}$ denote the $|\cQ_h|$-dimensional Euclidean space if  $\cQ_h$ is finite,  and let it denote the collections of all real-valued functions defined on $\cQ_h$ if $\cQ_h$ is infinite.

\paragraph{Linear weight functional.}
First, we need to generalize the definition of weight ``vectors'' $\m_1,\m_2$. Given any $h\in[H-1]$ and $\omega_h\in\Omega^{(1)}_h:=(\fO\times\fA)^{H-h}$, $\m_1(\omega_h)$ defined in Equation \eqref{eq:weight_vector_1} is now a linear functional from $\R^{|\cQ_h|}$ to $\R$. And the continuous analogy of Condition \ref{asp:psr} for $\m_1$ is simply replacing summation with integral: 
\begin{equation}
\label{eq:Sep28-1}
    \max_{\substack{~~~~\x \in \R^{|\cQ_h|} \\ \norm{\x}_1 \le 1}
} ~ \int_{\omega_{h}\in{\Omega^{(1)}_h}} \pi( \omega_h) \cdot |\m_1(\omega_h)\trans\x| ~d\omega_h \le \frac{1}{\gamma}~.
\end{equation}
As for $\m_2$,  we can still use Equation \eqref{eq:weight_vector_2} to 
define  $\m_2$ so that each $\m_2(\omega_h)$ with $\omega_h\in \Omega^{(2)}_h:=\fO \times \fA \times \cQ_{h+1}$ is a linear functional  from $\R^{|\cQ_h|}$ to $\R$. Furthermore, if $\cQ_{h+1}$ is made  up of core tests that are trajectories, then the natural  continuous analogy of Condition \ref{asp:psr} for $\m_2$ is also  simply replacing summation with integral: 
\begin{equation}
\label{eq:Sep28-2}
    \max_{\substack{~~~~\x \in \R^{|\cQ_h|} \\ \norm{\x}_1 \le 1}
} ~ \int_{\omega_{h}\in{\Omega^{(2)}_h}} \pi( \omega_h) \cdot |\m_2(\omega_h)\trans\x|  ~d\omega_h\le \frac{1}{\gamma}~.
\end{equation}
However, if we are using general core tests at step $h+1$ (that is, $\cQ_{h+1}$ consists of measurable sets of trajectories),  then
$\pi(\omega_h)$ with $\omega_h\in \Omega^{(2)}_h:=\fO \times \fA \times \cQ_{h+1}$
is no longer well defined. Instead, we adopt the following normalization-style  condition for $\m_2$: let $\omega_h=(o_{h+1},a_{h+1},\omega_{h+1})$
\begin{equation}
\label{eq:Sep28-3}
 \max_{\substack{~~~~\x \in \R^{|\cQ_h|} \\ \norm{\x}_1 \le 1}
} ~ \int_{\omega_{h}\in{\Omega^{(2)}_h}} \pi( o_{h+1},a_{h+1}) \cdot |\m_2(\omega_h)\trans\x|  ~d\omega_h \le \frac{|\Qa_{h+1}|}{\gamma}~.
\end{equation}
wherein,  compared to Equation \eqref{eq:Sep28-2},  we remove the reweighting effect of $\pi$ on the core action sequences in  $\cQ_{h+1}$ from  the LHS and then compensate it  by multiplying the RHS with the number of core action sequences in $\cQ_{h+1}$, i.e., $|\Qa_{h+1}|$.

\begin{condition}[$\gamma$-\name  continuous PSR]\label{asp:psr-continuous}
We say a continuous  PSR is \emph{$\gamma$-\name} if for any $h\in[H-1]$ and any policy $\pi$ independent of the history before step $h+1$, the weight functional $\m_1(\cdot), \m_2(\cdot)$ and the corresponding future sets $\Omega^{(1)}_h, \Omega^{(2)}_h$ in \eqref{eq:weight_vector_1} \eqref{eq:weight_vector_2} satisfy (a) Equation \eqref{eq:Sep28-1} and \eqref{eq:Sep28-2} if $\cQ_{h+1}$ contains standard   core tests that are trajectories, or (b) Equation \eqref{eq:Sep28-1} and \eqref{eq:Sep28-3} if $\cQ_{h+1}$ consists of general core tests that are sets of trajectories. 
\end{condition}

\subsection{Proof for well-conditioned continuous PSRs}\label{app:psr-main-continuous}

The proof of Theorem \ref{thm:psr} for  continuous PSRs follows basically the same steps as in the finite-observation case (Appendix \ref{app:psr-main}). 
To avoid repeating noninformative arguments, here we only describe the key modifications needed for  extending the proofs in Appendix \ref{app:psr-main} to the continuous setting. 
\begin{itemize}
    \item (STEP 1. TV-distance $\le$ operator-difference) easily  goes through by replacing Condition \ref{asp:psr} with \ref{asp:psr-continuous} and the fact that all operators are linear functionals.
    \item (STEP 2. operator-difference $\le$ TV-distance) also holds by using the linearity of operators and the fact that 
    $$
    \int_{\tau_h} \left\| \widehat\bpsi(\tau_{h})-\bpsi(\tau_{h})\right\|_1 \times \pi(\tau_{h-1}) d\tau_h \le 2A\sum_{\a\in\cQ_h^A}  d_{\rm TV} (\P^{\nu(\pi,h,\a)}_{{\widehat\theta}}, \P^{\nu(\pi,h,\a)}_{\theta^\star})
    $$
    holds for any $\pi$ and any type of disjoint core tests.
    \item (STEP 3. $\ell_1$-norm pigeon-hole argument) requires an integral-version of Proposition \ref{prop:step-3} which follow immediately from replacing 
    Proposition \ref{prop:pigeon-hole} with Corollary \ref{cor:pigeon-hole} in the proof of Proposition \ref{prop:step-3}.
     \item (STEP 4. construct SAIL feature mappings) also holds by replacing $\A_{h+1}$ with the Barycentric spanner of 
     $$\text{closure}\left( \left\{\frac{\bpsi(\tau_h)}{\|\bpsi(\tau_h)\|_1}:~\tau_h\in(\fO\times\fA)^h \text{ and } \|\bpsi(\tau_h)\|_1\neq 0 \right\}\right).$$
\end{itemize}


\section{Proofs for Optimistic MLE }

\subsection{Proof of Theorem \ref{thm:omle}}

For the reader's convenience, we restate Theorem \ref{thm:omle} below. 

\thmomle*

The proof consists of three main steps:
\begin{enumerate}
    \item First, we show with high probability, the true model $\theta^\star$ is contained in the model confidence set $\cB^k$ for all $k\in[K]$. 
    \item Second, we prove the optimistic model estimate $\theta^k$ has low average prediction error under historical policies $\pi^1,\ldots,\pi^{k-1}$ with high probability. 
    \item Finally, we conclude the proof by combining the above two results with the generalized eluder-type condition.
\end{enumerate}

\paragraph{Step 1.} 
By Proposition \ref{prop:mle-optimisim}, the definition of $\cB^k$ and the choice of $\beta$, we have with probability at least $1-\delta$, $\theta^\star\in\cB^k$ for all $k\in[K]$. 
Combining it with the fact that $\theta^k$ and $\pi^k$ are picked optimistically, we have 
\begin{equation}\label{eq:opt}
\begin{aligned}
    \MoveEqLeft \text{Regret}(k) = \sum_{t=1}^k \left(V^{\star}_{\theta^\star} - V^{\pi^t}_{\theta^\star} \right)\\
    & \le 
    \sum_{t=1}^k \left(\max_{\theta\in\cB^t} \max_{\pi} V^{\pi}_{\theta} - V^{\pi^t}_{\theta^\star}\right)\\
	&= \sum_{t=1}^k \left( V^{\pi^t}_{\theta^t} - V^{\pi^t}_{\theta^\star}\right) \le H\sum_{t=1}^k d_{\rm TV}(\P^{\pi^t}_{\theta^\star},\P^{\pi^t}_{\theta^t}).
\end{aligned}
   \end{equation}

\paragraph{Step 2.} 
To simplify notations, we denote by $\tau^{i,\pi}$ the trajectory we collected in the $i^{\rm th}$ iteration of \omle by executing policy $\pi\in\Pi_{\rm exp}^i$. 
By Proposition \ref{prop:mle-valid}, the definition of $\cB^k$ and the choice of $\beta$, we have with probability at least $1-\delta$, 
\begin{equation}\label{eq:valid}
    \begin{aligned}
        \MoveEqLeft  \sum_{i=1}^{t-1}\sum_{\pi \in \Pi^i_{\exp}} d_\TV^2(\P_{\theta^t}^{\pi}, \P_{\theta^\star}^{\pi})  \\
        \le  &  c \left( \sum_{i=1}^{t-1} \sum_{\pi \in \Pi^i_{\exp}} \log\left[\frac{\P^{\pi}_{{\theta}^\star}(\tau ^{i,\pi})}{\P^{\pi}_{{\theta^t}}(\tau^{i,\pi})}\right] +\log(T\cN_{\Theta}(T^{-1})/\delta) \right) \\
        \le  &  c \left( \sum_{i=1}^{t-1}\sum_{\pi \in \Pi^i_{\exp}} \log\P^{\pi}_{{\theta}^\star}(\tau^{i,\pi}) - \max_{\theta\in\Theta}\sum_{i=1}^{t-1} \sum_{\pi \in \Pi^i_{\exp}} \P^{\pi}_{\theta}(\tau^{i,\pi}) +\beta +\log(T\cN_{\Theta}(T^{-1})/\delta)\right) \\
        \le  &  \mathcal{\cO}(\beta).
    \end{aligned}
\end{equation}

\paragraph{Step 3.} By combining Equation \eqref{eq:valid} with the generalized eluder-type condition (Condition \ref{cond:eluder}), we have 
that with probability at least $1-\delta$: for all $k\in[K]$, 
$$
\sum_{t=1}^k d_{\rm TV}(\P^{\pi^t}_{\theta^\star},\P^{\pi^t}_{\theta^t})\le H \xi(d_\Theta,k,\mathcal{\cO}(\beta),|\Pie|),
$$
which together with Equation \eqref{eq:opt} completes the proof.

\subsection{Properties of the \omle confidence set}


In this section, we present two important properties satisfied by the model confidence set $\cB^k$ in \omle. 
The results and proofs in this section largely follow \cite{liu2022partially} with minor modification, wherein the analysis builds on  classic techniques for analyzing MLE \cite[e.g.,][]{geer2000empirical}.

Similar to \cite{liu2022partially}, we will present our results for the following general meta-algorithm, of which \omle~is a special case.  

\begin{algorithm}[H]
    \caption{Meta-algorithm}\label{alg:meta}
 \begin{algorithmic}
    \FOR{$t=1,\ldots,T$}
    \STATE choose policy $\pi^t$ as a deterministic function of  $\{(\pi^i,\tau^i)\}_{i=1}^{t-1}$
    \STATE execute policy $\pi^t$ and collect a trajectory $\tau^t$
    \ENDFOR
 \end{algorithmic}
 \end{algorithm}

 Our first result states that the log-likelihood of the true model on the historical data is always close to the empirical maximal log-likelihood up to an error of logarithmic scale.

 \begin{proposition}\label{prop:mle-optimisim}
    There exists an absolute constant $c$ such that for any $\delta\in(0,1]$, with probability at least $1-\delta$: the following inequality holds for all $t\in[T]$ and \textbf{all} $\theta\in\Theta$
        \begin{equation*}
            \sum_{i=1}^{t} \log\left[\frac{\P^{\pi^i}_{{\theta}}(\tau ^i)}{\P^{\pi^i}_{{\theta}^\star}(\tau ^i)}\right] \le c\log\left[\cN_\Theta\left({1}/{T}\right)T/\delta\right] .
        \end{equation*}
    \end{proposition}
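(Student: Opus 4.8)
The plan is to follow the classical route for controlling log-likelihood ratios in maximum-likelihood estimation (cf.\ \cite{geer2000empirical} and the analysis in \cite{liu2022partially}): a supermartingale / Hellinger-affinity argument applied to a single bracketing function, followed by a union bound over an $(1/T)$-bracketing cover of $\{\P_\theta\}_{\theta\in\Theta}$.

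First I would fix an $(1/T)$-bracket $[l,u]$ with respect to the policy-weighted $\ell_1$-distance, and study the nonnegative process $X_i:=\big(u(o^i_{1:H},a^i_{1:H})/\Pb_{\theta^\star}(o^i_{1:H},a^i_{1:H})\big)^{1/2}$ (set to $0$ on the null event $\Pb_{\theta^\star}(o^i_{1:H},a^i_{1:H})=0$); this is the square root of the likelihood ratio of the ``model'' $u$ against $\theta^\star$ along the observed trajectory, since the policy-probability factors in $\P^{\pi^i}_u$ and $\P^{\pi^i}_{\theta^\star}$ cancel. Conditioned on $\mathcal F_{i-1}$, in the meta-algorithm (Algorithm \ref{alg:meta}) the policy $\pi^i$ is deterministic and $\tau^i\sim\P^{\pi^i}_{\theta^\star}$, so by Cauchy--Schwarz over trajectories,
\[
\E\!\left[X_i\mid\mathcal F_{i-1}\right]
=\sum_{\tau}\pi^i(\tau)\sqrt{u(o_{1:H},a_{1:H})\,\Pb_{\theta^\star}(o_{1:H},a_{1:H})}
\le\Big(\sum_{\tau}\pi^i(\tau)\,u(o_{1:H},a_{1:H})\Big)^{1/2}\le\sqrt{1+1/T},
\]
where the second Cauchy--Schwarz factor is $\sum_\tau\P^{\pi^i}_{\theta^\star}(\tau)=1$ and the last step uses $0\le u-\Pb_\theta\le u-l$ pointwise (for any $\theta$ with $\P_\theta\in[l,u]$) together with $\max_\pi\sum_\tau|u(\tau)-l(\tau)|\pi(\tau)<1/T$ --- this is exactly why $\cN_\Theta$ is defined through the policy-weighted $\ell_1$-distance. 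Hence $Z_t:=(1+1/T)^{-t/2}\prod_{i=1}^t X_i$ is a nonnegative supermartingale with $Z_0=1$, and Ville's maximal inequality shows that for any $\delta'\in(0,1]$, with probability at least $1-\delta'$ one has $\prod_{i\le t}X_i\le(1+1/T)^{T/2}/\delta'\le 2/\delta'$ simultaneously for all $t\in[T]$; taking logarithms gives $\sum_{i=1}^t\log\!\big[u(o^i_{1:H}|a^i_{1:H})/\Pb_{\theta^\star}(o^i_{1:H}|a^i_{1:H})\big]\le 2\log(2/\delta')$ for all $t\in[T]$.

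Next I would take an $(1/T)$-bracketing cover $\{[l_j,u_j]\}_{j=1}^N$ of $\{\P_\theta\}_{\theta\in\Theta}$ of cardinality $N=\cN_\Theta(1/T)$, apply the previous paragraph to each upper envelope $u_j$ with $\delta'=\delta/N$, and union bound. On the resulting event, which has probability at least $1-\delta$: for any $\theta\in\Theta$ and any $t\in[T]$, pick $j$ with $\P_\theta\in[l_j,u_j]$; since $\Pb_\theta\le u_j$ pointwise,
\[
\sum_{i=1}^t\log\!\left[\frac{\P^{\pi^i}_{\theta}(\tau^i)}{\P^{\pi^i}_{\theta^\star}(\tau^i)}\right]
=\sum_{i=1}^t\log\!\left[\frac{\Pb_{\theta}(o^i_{1:H}|a^i_{1:H})}{\Pb_{\theta^\star}(o^i_{1:H}|a^i_{1:H})}\right]
\le 2\log\!\big(2N/\delta\big)\le c\log\!\big[\cN_\Theta(1/T)\,T/\delta\big]
\]
for an absolute constant $c$, which is the claimed bound; the continuous-observation case is identical with the sums over trajectories replaced by integrals.

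The one step requiring genuine care --- and the only real obstacle --- is the conditional bound $\E[X_i\mid\mathcal F_{i-1}]\le\sqrt{1+1/T}$: one must verify that the policy-probability factors cancel in the likelihood ratio, that $\sum_\tau u_j(\tau)\pi(\tau)\le 1+1/T$ genuinely follows from the policy-weighted $\ell_1$ bracketing condition (this is the reason that particular distance, rather than the plain $\ell_1$-distance, appears in the definition of $\cN_\Theta$), and that conditionally $\tau^i\sim\P^{\pi^i}_{\theta^\star}$ so the Hellinger-affinity / Cauchy--Schwarz estimate is legitimate. Everything else --- Ville's inequality for nonnegative supermartingales and the union bound over the cover --- is standard.
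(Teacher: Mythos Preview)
Your proposal is correct and follows essentially the same approach as the paper, which simply defers to \cite{liu2022partially} after noting that one should replace $\P^{\pi}_{\bar\theta}(\tau)$ there by $f_\theta(\tau)\times\pi(\tau)$ for the upper bracketing envelope $f_\theta$. The Hellinger-affinity/supermartingale argument you outline (Cauchy--Schwarz to bound $\E[X_i\mid\mathcal F_{i-1}]\le\sqrt{1+1/T}$, Ville's inequality, then a union bound over the $(1/T)$-bracketing cover) is exactly the classical route that \cite{liu2022partially} takes, and you have correctly identified the one delicate point, namely that the policy-weighted $\ell_1$-distance in the definition of $\cN_\Theta$ is precisely what makes $\sum_\tau\pi(\tau)u(\tau)\le 1+1/T$ hold.
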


Our second result states that any model in $\Theta$ with log-likelihood close to the groundtruth model on the collected data will produce similar distributions over trajectories to  the groundtruth model under the historical policies.

\begin{proposition}\label{prop:mle-valid}
    There exists a universal constant $c$ such that for any  $\delta\in(0,1]$, with probability at least $1-\delta$  for all $t\in[T]$ and \textbf{all} $\theta\in\Theta$, it holds that
        \begin{equation*}
            \begin{aligned}
     \sum_{i=1}^{t}\left( \sum_{\tau\in\cT_H}\left| \P^{\pi^i}_{\theta}(\tau) - \P^{\pi^i}_{\theta^\star}(\tau) \right|\right)^2 
                \le   c \left( \sum_{i=1}^{t} \log\left[\frac{\P^{\pi^i}_{{\theta}^\star}(\tau ^i)}{\P^{\pi^i}_{{\theta}}(\tau ^i)}\right] +\log\left[\cN_\Theta\left({1}/{T}\right)T/\delta\right]\right).
            \end{aligned}
        \end{equation*}
    \end{proposition}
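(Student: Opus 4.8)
The plan is to follow the classical MLE bracketing argument of \citet{geer2000empirical} adapted to the sequential, policy-dependent setting, exactly in the spirit of the corresponding result in \citet{liu2022partially}. The key object is the Hellinger-type exponential supermartingale built from likelihood ratios. Concretely, for any fixed $\theta\in\Theta$, consider the process $M_t^\theta := \prod_{i=1}^t \sqrt{\P^{\pi^i}_\theta(\tau^i)/\P^{\pi^i}_{\theta^\star}(\tau^i)}$; since $\pi^i$ is a deterministic function of the past (Algorithm \ref{alg:meta}), conditioning on the filtration generated by $\{(\pi^j,\tau^j)\}_{j<i}$ gives $\E[\sqrt{\P^{\pi^i}_\theta(\tau^i)/\P^{\pi^i}_{\theta^\star}(\tau^i)}\mid \mathcal F_{i-1}] = \int \sqrt{\P^{\pi^i}_\theta \P^{\pi^i}_{\theta^\star}}\,d\tau = 1 - \tfrac12 H^2(\P^{\pi^i}_\theta,\P^{\pi^i}_{\theta^\star}) \le \exp(-\tfrac12 H^2(\P^{\pi^i}_\theta,\P^{\pi^i}_{\theta^\star}))$, where $H^2$ is the squared Hellinger distance. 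Hence $M_t^\theta \exp(\tfrac12\sum_{i\le t} H^2(\P^{\pi^i}_\theta,\P^{\pi^i}_{\theta^\star}))$ is a supermartingale with initial value $1$, and a Markov/Ville inequality gives that with probability at least $1-\delta'$, for all $t$,
\[
\sum_{i=1}^t H^2(\P^{\pi^i}_\theta,\P^{\pi^i}_{\theta^\star}) \le 2\sum_{i=1}^t \log\frac{\P^{\pi^i}_{\theta^\star}(\tau^i)}{\P^{\pi^i}_\theta(\tau^i)} + 2\log(1/\delta').
\]

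The next step is to upgrade this from a single $\theta$ to a uniform-over-$\Theta$ statement using the bracketing number. Let $\{[l_j,u_j]\}_{j\le N}$ be a minimal $(1/T)$-bracket cover of $\{\P_\theta^\pi\}$ with respect to the policy-weighted $\ell_1$ distance, $N=\cN_\Theta(1/T)$. For each bracket pick a representative and apply the supermartingale bound to each of the $N$ upper/lower envelope functions with $\delta' = \delta/(2NT)$ (the extra $T$ absorbs a union bound over $t$, or one uses a time-uniform version directly); a union bound over the $N$ brackets costs only $\log N = \log\cN_\Theta(1/T)$. For a general $\theta$, sandwich $\P_\theta^\pi$ between its bracket's $l_j,u_j$; the $1/T$ bracketing error, accumulated over $t\le T$ steps and combined with the fact that each likelihood ratio and each Hellinger term is controlled by the envelopes, contributes only an $\cO(1)$ additive slack (this is the standard step where one checks that replacing $\P_\theta^\pi$ by $u_j$ changes $\sum_i\log(\P^{\pi^i}_{\theta^\star}/\P^{\pi^i}_\theta)$ and $\sum_i H^2$ by at most a constant, using $\sum_i \|u_j - l_j\|_{\pi^i,1}\le T\cdot T^{-1}=1$). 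This yields: with probability $\ge 1-\delta$, for all $t\in[T]$ and all $\theta\in\Theta$,
\[
\sum_{i=1}^t H^2(\P^{\pi^i}_\theta,\P^{\pi^i}_{\theta^\star}) \le c'\Big(\sum_{i=1}^t \log\frac{\P^{\pi^i}_{\theta^\star}(\tau^i)}{\P^{\pi^i}_\theta(\tau^i)} + \log[\cN_\Theta(1/T)T/\delta]\Big).
\]

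Finally I convert Hellinger to the stated $\ell_1$ (total variation) quantity. Pointwise one has $\|\P^{\pi}_\theta - \P^{\pi}_{\theta^\star}\|_1^2 = (2\,d_{\rm TV})^2 \le 8\,H^2(\P^{\pi}_\theta,\P^{\pi}_{\theta^\star})$, the elementary inequality $\mathrm{TV}^2 \le 2H^2$. Summing over $i\le t$ and folding the constant $8$ into $c$ gives exactly the claimed bound, since $\sum_{\tau\in\cT_H}|\P^{\pi^i}_\theta(\tau)-\P^{\pi^i}_{\theta^\star}(\tau)| = \|\P^{\pi^i}_\theta-\P^{\pi^i}_{\theta^\star}\|_1 = 2\,d_{\rm TV}(\P^{\pi^i}_\theta,\P^{\pi^i}_{\theta^\star})$. (Proposition \ref{prop:mle-optimisim} is the same argument run with the roles of the envelopes arranged to bound the log-likelihood ratio itself, and in fact its proof is a strict subset of the computation above.)

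The main obstacle is the uniformity step: one must be careful that the bracket is taken with respect to the \emph{policy-weighted} $\ell_1$ distance, so that a single $1/T$-bracket simultaneously controls the likelihood ratios under \emph{all} the (data-dependent) policies $\pi^1,\dots,\pi^t$ that appear; this is precisely why $\cN_\Theta$ is defined with a maximum over policies, and checking that the bracketing slack telescopes to a constant rather than growing with $t$ is the delicate bookkeeping. Everything else is routine given the supermartingale structure guaranteed by the "deterministic function of the past" property of Algorithm \ref{alg:meta}.
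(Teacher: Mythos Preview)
Your proposal is correct and follows essentially the same route as the paper: the paper's proof explicitly defers to Propositions~13--14 of \cite{liu2022partially}, which is precisely the Hellinger-supermartingale plus bracketing-cover argument (from \cite{geer2000empirical}) that you outline, with the upper envelope $f_\theta\ge \Pb_\theta$ playing the role you assign to the bracket representative. The only cosmetic differences are that the paper uses just the upper bracket (not both), and that Ville's inequality is already time-uniform so the extra union bound over $t$ you mention is unnecessary.
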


\begin{proof}[Proof of Proposition \ref{prop:mle-optimisim} and Proposition \ref{prop:mle-valid}]

By the definition of bracketing number, given any model $\theta\in\Theta$, there exists $f_\theta \in\Theta_{1/T}$ (a $T^{-1}$ bracketing-cover of $\Theta$) so that for any policy $\pi$: 
\begin{itemize}
    \item $f_\theta (\tau) \ge \Pb_\theta (\tau)$ for all   $\tau\in\cT_H$,
    \item $\sum_{\tau\in\cT_H} | f_{\theta} (\tau) - \Pb_{\theta}(\tau)|\times \pi(\tau) \le 1/T$. 
\end{itemize}
Then the proofs of Proposition \ref{prop:mle-optimisim} and Proposition \ref{prop:mle-valid} are \emph{exactly the same} as those of Proposition 13 and 14 in \cite{liu2022partially}, by replacing their  $\P_{\bar \theta}^\pi(\tau)$ with our $f_{\theta}(\tau)\times \pi(\tau)$.
\end{proof}


\section{Proofs for  PSRs}\label{app:psr}

\newcommand{\Rw}{{\color{blue}{R_w}}}

\subsection{Expressive power of PSRs (Theorem \ref{thm:sdm-psr})}
\begin{proof}[Proof Theorem \ref{thm:sdm-psr}]
   Theorem \ref{thm:sdm-psr} is a direct corollary of Theorem \ref{thm:self-consistent}.
\end{proof}

 \subsection{Statistical hardness of learning general low-rank PSRs (Proposition \ref{prop:psr-hard})}
\label{app:psr-hard}

\begin{proof}[Proof of Proposition \ref{prop:psr-hard}]
   According to Theorem 6 in \cite{liu2022partially}, for any $\alpha\in(0,1/2)$, there exist a family of $1$-step $\alpha$-weakly revealing POMDPs with $\cO(1)$ latent states, observations and actions, so that any algorithm requires at least $\Omega(\min(\frac{1}{\alpha H},2^H))$ samples to learn a $(1/4)$-optimal policy with probability $1/6$ or higher. 
Since any $1$-step $\alpha$-weakly revealing POMDP can be represented as a PSR with $\cQ_h=\{o_{h+1}:~o_{h+1}\in\fO\}$ for any  $\alpha>0$, we know the above family of POMDPs is also a family of PSRs with $|\fO|,|\fA|,\max_h|\cQ_h|=\cO(1)$. 
However, this family of PSRs requires at least $\Omega(2^H)$ samples to learn a $1/4$-optimal policy with probability $1/6$ or higher, whenever $\alpha<\frac1{2^H H}$.\end{proof}

\subsection{Self-consistent PSR representation}\label{app:self-consistent}

Recall in Section \ref{subsec:psr-defn}, we show that for any PSR,   there exist linear weight vectors $\{\m(\omega_h)\}_{\omega_h \in \Omega_h}$ only depending on the futures (where $\m(\omega_h)$ can be the $\omega_h^{\text{th}}$ row of $\W_h$ matrix) such that for any future $\omega_h$ and history $\tau_h$, the joint probability can be written in the linear form 
\begin{equation*}
    \Pb(\tau_h, \omega_h) = \m(\omega_h)\trans \bpsi(\tau_h).
\end{equation*} 
Moreover,  there are two natural constructions for the weight vector $\{\m(\omega_h)\}_{\omega_h \in \Omega_h}$ based on the PSR operator tuple
$(\bphi, \bM, \bpsi_0)$:
For any future of full length $\omega_h = (o_{h+1:H}, a_{h+1:H})$, we can choose its weight vector according to \eqref{eq:PSR_1} as:
\begin{equation*}
    \m_1(\omega_h)\trans =  \bphi_H(o_H, a_H)\trans \bM_{H-1}(o_{H-1}, a_{H-1}) \cdots \bM_{h+1}(o_{h+1}, a_{h+1}).
\end{equation*}
For any future of form  $\omega_h = (o_{h+1}, a_{h+1}, q_i)$, where $q_i \in \cQ_{h+1} $ is the a core test at step $h+1$, we can choose its weight vector according to \eqref{eq:PSR_2} as:
\begin{equation*}
    \m_2(\omega_h)\trans =  \bm{e}_i \trans \bM_{h+1}(o_{h+1}, a_{h+1}).
\end{equation*}
In general, the above two weight vectors could be  different for the same future $\omega_h$ when the PSR is overparameterized, i.e., $|\cQ_h|>{\rm rank}(\D_h)$.
However, the following (perhaps surprising) theorem states that for any low-rank sequential decision problem, there always exists a \emph{self-consistent} PSR representation where the above two different ways of choosing the future weight vectors give the  same representation for the same future $\omega_h$.

\begin{theorem}[self-consistent PSR representation]\label{thm:self-consistent}
For any sequential decision making problem of rank $r$, there exists a self-consistent rank-$r$ PSR representation specified by operators $\{\bphi_h\}_{h=0}^{H}$,  $\{\bM_h(o,a)\}_{h,o,a}$,  $\bpsi_0$ and core test sets $\{\cQ_h\}_{h=0}^{H-1}$ so that
 \begin{enumerate}
 \item For any $\tau_h\in \cT_h$, $\P(o_{1:h} \mid a_{1:h})=\bphi_h\trans \bM_h(o_h,a_h)\cdots\bM_1(o_1,a_1)\bpsi_0$.
 \item   For any $\tau_h\in \cT_h$ , $ \bM_h(o_h,a_h)\cdots\bM_1(o_1,a_1)\bpsi_0= [\D_h]_{\cQ_h,\tau_h}$.
    \item For any $(h,a)$, $\bphi_{h}\trans\sum_o \bM_h(o,a) = \bphi_{h-1}\trans$.  
    \item For any $\omega_h=(o_h,a_h,q)\in\fO\times\fA\times\cQ_h$ $$[\bM_h(o_h,a_h)]_{q,:} = \bphi_{h+m}\trans \bM_{h+m}(o_{h+m},a_{h+m})\cdots  \bM_h(o_h,a_h).
    $$
\end{enumerate}
\end{theorem}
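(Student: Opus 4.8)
The plan is to build the representation from \emph{minimal} core test sets, which forces the linear weight vector of every future to be unique; with that uniqueness in hand, self-consistency is essentially automatic, and each of the four stated identities collapses to a one-line span argument.

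First I would set $r_h \defeq \rank(\D_h)\le r$ for each $h$ and choose $\cQ_h$ to be a set of $r_h$ futures whose columns form a basis of $\colspan{\D_h}$ (using the trivial one-element core set at $h=H$, so that $\bpsi(\tau_H)=\P(o_{1:H}\mid a_{1:H})$, and $r_0=1$, $\cQ_0$ a single positive-probability future). Write $\bpsi(\tau_h)$ for the $\tau_h$-th row of $\D_h[\cQ_h]$. The whole argument rests on two elementary facts: (i) since $\D_h[\cQ_h]$ has full column rank $r_h$, its rows span $\R^{r_h}$, i.e. $\{\bpsi(\tau_h):\tau_h\in\cT_h\}$ spans $\R^{r_h}$; and (ii) every column of $\D_h$ is a \emph{unique} linear combination of the $\cQ_h$-columns, so each future $\omega_h$ has a unique weight vector $\m(\omega_h)\in\R^{r_h}$ with $\Pb(\tau_h,\omega_h)=\m(\omega_h)\trans\bpsi(\tau_h)$ for all $\tau_h$.

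Next I would define the operators. For a core test $q\in\cQ_h$ and $(o_h,a_h)\in\fO\times\fA$, the concatenation $(o_h,a_h,q)$ is a future of $\D_{h-1}$, hence has a unique weight vector; set $[\bM_h(o_h,a_h)]_{q,:}\defeq\m(o_h,a_h,q)\trans$ and $\bpsi_0\defeq\bpsi(\emptyset)$. Because $(\P(o_{1:h}\mid a_{1:h}))_{\tau_h}=\sum_{o}\D_h[\cdot,(o,a)]$ (for any fixed $a$) is a sum of columns of $\D_h$, it lies in $\colspan{\D_h[\cQ_h]}$, so there is a unique $\bphi_h\in\R^{r_h}$ with $\bphi_h\trans\bpsi(\tau_h)=\P(o_{1:h}\mid a_{1:h})$ for all $\tau_h$. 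Property~2 then follows by induction on $h$: the $q$-entry of $\bM_h(o_h,a_h)\bpsi(\tau_{h-1})$ is $\m(o_h,a_h,q)\trans\bpsi(\tau_{h-1})=\Pb(\tau_{h-1},(o_h,a_h,q))=\Pb(\tau_h,q)$, which is the $q$-entry of $\bpsi(\tau_h)$, so $\bM_h(o_h,a_h)\cdots\bM_1(o_1,a_1)\bpsi_0=\bpsi(\tau_h)=[\D_h]_{\cQ_h,\tau_h}$; Property~1 is then the defining identity of $\bphi_h$. For Property~3, for every $\tau_{h-1}$ one has $\bphi_h\trans\big(\sum_o\bM_h(o,a)\big)\bpsi(\tau_{h-1})=\sum_o\bphi_h\trans\bpsi(\tau_{h-1},o,a)=\sum_o\P(o_{1:h-1},o\mid a_{1:h-1},a)=\P(o_{1:h-1}\mid a_{1:h-1})=\bphi_{h-1}\trans\bpsi(\tau_{h-1})$, and fact (i) upgrades this to $\bphi_h\trans\sum_o\bM_h(o,a)=\bphi_{h-1}\trans$. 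For Property~4, with $q=(o_{h+1:h+m},a_{h+1:h+m})$, for every $\tau_{h-1}$ we get $\m(o_h,a_h,q)\trans\bpsi(\tau_{h-1})=\Pb(\tau_{h-1},(o_h,a_h,q))=\P(o_{1:h+m}\mid a_{1:h+m})$, which by Properties~1--2 at level $h+m$ equals $\bphi_{h+m}\trans\bM_{h+m}(o_{h+m},a_{h+m})\cdots\bM_h(o_h,a_h)\bpsi(\tau_{h-1})$; fact (i) again gives equality of the two rows, and $[\bM_h(o_h,a_h)]_{q,:}=\m(o_h,a_h,q)\trans$ by construction. Since $\max_h r_h\le r$, this is a self-consistent rank-$r$ PSR, and Theorem~\ref{thm:sdm-psr} is obtained by forgetting self-consistency.

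The hard part is not any single computation but keeping two things straight: one must use \emph{minimal} core sets (otherwise weight vectors are non-unique, $\m_1$ and $\m_2$ can disagree, and Property~4 fails), and one must invoke the span property that $\{\bpsi(\tau_h)\}$ generates $\R^{r_h}$ every time an identity ``holds after contracting against all histories'' needs to be promoted to an identity of vectors or matrices. The only remaining care is the bookkeeping at the boundaries $h\in\{0,H\}$ and for core tests of maximal length, and checking that the relevant columns genuinely lie in the right column span --- immediate for the extended futures $(o_h,a_h,q)$, since they are literally columns of $\D_{h-1}$, and requiring the observation-marginalization identity for $(\P(o_{1:h}\mid a_{1:h}))_{\tau_h}$.
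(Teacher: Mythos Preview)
Your argument is correct. You construct the operators directly from the unique linear weight vectors that minimal core sets guarantee, and then promote each ``holds after contracting against every $\bpsi(\tau_{h-1})$'' identity to a matrix identity via the spanning fact~(i); this is exactly the mechanism that also drives the paper's verification of conditions~3 and~4.

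The paper, however, takes a more indirect route: it first builds a \emph{self-consistent observable operator model} via the SVD $\D_h\trans=\U_h\Sigma_h\V_h\trans$, setting $\b_0=\|\D_0\|_2$, $\B_h(o,a)=\U_h\trans[\U_{h-1}]_{(o,a,\Omega_h),:}$, $\bup_h\trans=A^{-(H-h)}\mathbf{1}\trans\U_h$, and only then obtains the PSR operators by the change of coordinates $\bM_h(o,a)=[\U_h]_{\cQ_h,:}\B_h(o,a)([\U_{h-1}]_{\cQ_{h-1},:})^\dagger$, $\bphi_h\trans=\bup_h\trans([\U_h]_{\cQ_h,:})^\dagger$. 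One can check that these are the same operators you define abstractly, since the $q$-th row of the paper's $\bM_h(o_h,a_h)$ also solves $\x\trans\bpsi(\tau_{h-1})=\Pb(\tau_{h-1},(o_h,a_h,q))$ for all $\tau_{h-1}$ and uniqueness forces it to equal your $\m(o_h,a_h,q)\trans$. What the SVD detour buys the paper is an \emph{explicit} realization with $\|\B_h(o,a)\|_2\le 1$, $|\b_0|\le A^{H/2}$, $\|\bup_h\|_2\le(O/A)^{(H-h)/2}$; these norm bounds are the content of their Theorem on the self-consistent OOM representation and are what they later use to control the bracketing number of tabular PSRs. Your direct construction is cleaner for proving the theorem as stated, but does not by itself deliver those quantitative operator bounds.
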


One important consequence of Theorem \ref{thm:self-consistent} is that for a  self-consistent PSR to be  $\gamma$-\name, we only need the weight vectors $\m_1(\cdot)$ constructed in \eqref{eq:weight_vector_1} to be properly upper bounded.
\begin{proposition}\label{prop:self-consistent-asp}
    A self-consistent PSR is $\gamma$-\name if and only if for any $h\in[H-1]$ and any policy $\pi$ independent of the history before step $h$:
\begin{equation*}
\max_{\x \in \R^{|\cQ_h|}:~\norm{\x}_1 \le 1} \sum_{\omega_{h}\in{(\fO\times\fA)^{H-h}}} \pi( \omega_h) \cdot |\m_1(\omega_h)\trans\x|  \le \frac{1}{\gamma}~.
\end{equation*}
\end{proposition}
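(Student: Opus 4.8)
The plan is to prove both directions, with essentially all the work in showing that, for a \emph{self-consistent} PSR, controlling $\m_1$ on full-length futures automatically controls $\m_2$ on $\Omega^{(2)}_h$, so that the $i=2$ clause of Condition~\ref{asp:psr} becomes redundant. The forward direction (a $\gamma$-\name self-consistent PSR satisfies the displayed bound) is immediate: the displayed sum is exactly the $i=1$ term of Condition~\ref{asp:psr}, and the two descriptions of the admissible conditioning policies coincide here, because a future $\omega_h$ begins at step $h+1$, so ``independent of the history before step $h$'' and ``independent of the history before step $h+1$'' both merely say that the action probabilities along $\omega_h$ are a function of $\omega_h$ alone; hence the $\le 1/\gamma$ bound transfers verbatim.

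For the converse I would assume the displayed $\m_1$-bound and recover both clauses of Condition~\ref{asp:psr}. The $i=1$ clause is the hypothesis; the content is the $i=2$ clause. Fix $h$, a core test $q_i\in\cQ_{h+1}$ of length $m$, so $q_i=(o_{h+2:h+1+m},a_{h+2:h+1+m})$, and $\omega_h=(o_{h+1},a_{h+1},q_i)\in\Omega^{(2)}_h$. By definition $\m_2(\omega_h)\trans=\e_i\trans\bM_{h+1}(o_{h+1},a_{h+1})=[\bM_{h+1}(o_{h+1},a_{h+1})]_{q_i,:}$; Property 4 of Theorem~\ref{thm:self-consistent}, applied at step $h+1$ with core test $q_i$, rewrites this row as $\bphi_{h+1+m}\trans\bM_{h+1+m}(o_{h+1+m},a_{h+1+m})\cdots\bM_{h+1}(o_{h+1},a_{h+1})$. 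Fixing arbitrary padding actions $\bar a_{h+2+m},\dots,\bar a_H$ and iterating Property 3 of Theorem~\ref{thm:self-consistent} (which gives $\bphi_{j-1}\trans=\bphi_j\trans\sum_o\bM_j(o,a)$ for any $a$) from index $h+1+m$ up to $H$ yields $\bphi_{h+1+m}\trans=\sum_{o'_{h+2+m},\dots,o'_H}\bphi_H\trans\bM_H(o'_H,\bar a_H)\cdots\bM_{h+2+m}(o'_{h+2+m},\bar a_{h+2+m})$. Substituting this back expresses $\m_2(\omega_h)\trans$ as a finite sum $\sum_{o'_{h+2+m:H}}\m_1(\tilde\omega_h)\trans$, where $\tilde\omega_h\in\Omega^{(1)}_h$ is the full-length future concatenating $(o_{h+1},a_{h+1})$, the core test $q_i$, and the padded tail $(o'_{h+2+m},\bar a_{h+2+m},\dots,o'_H,\bar a_H)$. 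This step is where self-consistency is essential: for a generic overparameterized PSR, $\m_1$ and $\m_2$ are unrelated.

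Given this identity, the triangle inequality gives $\sum_{\omega_h\in\Omega^{(2)}_h}\pi(\omega_h)|\m_2(\omega_h)\trans\x|\le\sum_{\omega_h,o'}\pi(\omega_h)|\m_1(\tilde\omega_h)\trans\x|$, and the plan is to identify the right-hand side with $\sum_{\omega\in\Omega^{(1)}_h}\pi'(\omega)|\m_1(\omega)\trans\x|$ for a single policy $\pi'$, which the hypothesis bounds by $1/\gamma$. I would take $\pi'$ to play $\pi$'s actions as long as the action sequence since step $h+2$ is a strict prefix of some core action sequence in $\Qa_{h+1}$, and to play the fixed padding actions $\bar a$ once that sequence completes a core action sequence or leaves the prefix tree. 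Because no core action sequence is a prefix of another, each $\tilde\omega_h$ arising above decodes uniquely back to its pair $(\omega_h,o')$, and $\pi'(\tilde\omega_h)=\pi(\omega_h)$; hence the double sum equals $\sum_{\omega\in S}\pi'(\omega)|\m_1(\omega)\trans\x|\le\sum_{\omega\in\Omega^{(1)}_h}\pi'(\omega)|\m_1(\omega)\trans\x|\le 1/\gamma$, using that $\pi'$ still only reads observations from step $h+1$ on and so lies in the policy class of the hypothesis. Taking the supremum over $\norm{\x}_1\le1$ gives the $i=2$ clause, so Condition~\ref{asp:psr} holds. I expect the main obstacle to be precisely the bookkeeping in this last step: checking that the padded futures attached to different core tests are pairwise distinct (so that no spurious $|\cQ_{h+1}|$ factor appears when the double sum is collapsed) and that $\pi'$ is a single legitimate policy assigning the claimed probabilities; prefix-freeness of core action sequences is exactly what makes this go through, but the variable lengths $m$ of the core tests, and the possibility that two distinct core tests share an action sequence, require some care.
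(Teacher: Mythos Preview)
Your proposal is correct and follows essentially the same route as the paper: both use Property~4 of Theorem~\ref{thm:self-consistent} to rewrite $\m_2(\omega_h)\trans$ as a product of operators up to step $h{+}1{+}m$, then Property~3 to extend to step $H$, then collapse the resulting sum into a single $\m_1$-sum under an auxiliary policy, with prefix-freeness of core action sequences guaranteeing the required injectivity. The one cosmetic difference is that the paper averages over \emph{all} padding actions (applying Property~3 in the averaged form $\bphi_{j-1}\trans=\tfrac{1}{A}\sum_{o,a}\bphi_j\trans\bM_j(o,a)$, incurring a $1/A^{H-h-m}$ factor and switching to $\mathrm{Uniform}(\fA)$ after the core action sequence completes), whereas you apply Property~3 with a fixed action at each step and switch to deterministic $\bar a$; both are valid and yield the same bound.
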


\subsubsection{Proof of Theorem \ref{thm:self-consistent}}

\begin{proof}[Proof of Theorem \ref{thm:self-consistent}]
Recall in the proof of Theorem  \ref{thm:oom-represent}, we show that for  any sequential decision making problem with ${\rm rank}(\D_h)=r_h$ for $h\in[0,H-1]$, there exist operators $\b_0$, $\{\B_h(o,a)\}_{(h,o,a)\in[H]\times\fO\times\fA}$ and $\{\bup_h\}_{h\in[0,H]}$ satisfying that   
\begin{enumerate}
    \item
    $\b_0\in\R$, 
    $\B_h\in\R^{r_h \times r_{h-1}}$, and $\bup_h\in\R^{r_h}$,
    \item For any $(h,a_h)\in[H]\times \fA$, $\bup_{h}\trans\sum_{o_h\in\fO} \B_h(o_h,a_h) = \bup_{h-1}\trans$. 
    \item For any $\tau_h\in \cT_h$, $\P(o_{1:h} \mid a_{1:h})=\bup_h\trans \B_h(o_h,a_h)\cdots\B_1(o_1,a_1) \b_0$.
\end{enumerate}
Specifically, the above  operators are constructed as following:
$$
\b_0 = \|\D_0\|_2, \quad \B_h(o_h,a_h) =\U_h\trans [\U_{h-1}]_{(o_h,a_h,\Omega_{h}),:}, \quad \bup_h\trans = \frac{1}{A^{H-h}}\mathbf{1}\trans\U_h,
$$
where $\U_h$ is the left orthonormal matrix in the SVD of $\D_h\trans$, i.e., $\D_h\trans =\U_h\Sigma_h \V_h\trans$, and $[\U_{h-1}]_{(o_h,a_h,\Omega_{h}),:}$ denotes an $|\Omega_h|$ by $r_{h-1}$ submatrix  of $\U_{h-1}$, which is made up of the rows with  indices  of form $(o_h,a_h,\omega_h)$ with $\omega_h\in\Omega_h$. 

Note that the above operators do not necessarily correspond to a PSR representation since it is possible that they do not satisfy Equation \eqref{eq:PSR_2} for any choice of $\{\cQ_h\}_h$. 
Nonetheless, we can utilize them to  construct the following PSR operators: Let $\cQ_h$ be an arbitrary subset of $\Omega_h$,  which corresponds to $r_h$ independent columns in $\D_h$. Define 
$$
\bpsi_0: = [\U_0]_{\cQ_0,:} \b_0,  \quad \bM_h(o_h,a_h)=    [\U_h]_{\cQ_h,:}  \B_h(o_h,a_h) \left([\U_{h-1}]_{\cQ_{h-1},:}\right)^\dagger, \quad {\bphi_h} \trans= \bup_h \trans ([\U_h]_{\cQ_h,:})^\dagger.
$$

Since $\cQ_h$ indexes a rank-$r_h$ submatrix of $\D_h$, we have $\mathrm{rank}( [\D_h\trans]_{\cQ_h,:} )=r_h$, which implies $\mathrm{rank}([\U_h]_{\cQ_h,:})$ is also equal to $r_h$. As a result, $([\U_h]_{\cQ_h,:})^\dagger [\U_h]_{\cQ_h,:} = \I_{r_h\times r_h}$ for any $h$. As a result, for any $\tau_h\in \cT_h$, 
\begin{align*}
     &\bphi_h\trans \bM_h(o_h,a_h)\cdots\bM_1(o_1,a_1) \bpsi_0 \\
    = & \bphi_h\trans  ([\U_h]_{\cQ_h,:})^\dagger [\U_h]_{\cQ_h,:} \B_h(o_h,a_h)\cdots\B_1(o_1,a_1) ([\U_0]_{\cQ_0,:})^\dagger [\U_0]_{\cQ_0,:} \b_0\\
    = &  \bphi_h\trans \B_h(o_h,a_h)\cdots\B_1(o_1,a_1) \b_0= \P(o_{1:h} \mid a_{1:h}),
\end{align*}
which proves the first condition in Theorem \ref{thm:self-consistent}.

Moreover, the second condition holds because 
\begin{align*}
    &\bM_h(o_h,a_h)\cdots\bM_1(o_1,a_1) \bpsi_0 \\
   = & [\U_h]_{\cQ_h,:} \B_h(o_h,a_h)\cdots\B_1(o_1,a_1) ([\U_0]_{\cQ_0,:})^\dagger [\U_0]_{\cQ_0,:} \b_0\\
   = &  [\U_h]_{\cQ_h,:} \B_h(o_h,a_h)\cdots\B_1(o_1,a_1) \b_0\\
   = & [\U_h]_{\cQ_h,:} \U_h\trans [\D_h\trans]_{:,\tau_h} 
   = [\U_h\U_h\trans\D_h\trans]_{\cQ_h,\tau_h}
   = [\D_h\trans]_{\cQ_h,\tau_h},
\end{align*}
where the third equality uses Claim \ref{claim:oom-operator-product} from the proof of Theorem \ref{thm:oom-represent}.

It remains to verify the last two conditions. 
By the definition of $\bphi_{h}$ and $\bM_h(o,a)$, we have 
\begin{align*}
    & \bphi_{h}\trans\sum_o \bM_h(o,a) \\
    = & \bup_h \trans ([\U_h]_{\cQ_h,:})^\dagger  \sum_o [\U_h]_{\cQ_h,:}  \B_h(o,a) \left([\U_{h-1}]_{\cQ_{h-1},:}\right)^\dagger \\
    = & \bup_h \trans \sum_o  \B_h(o,a) \left([\U_{h-1}]_{\cQ_{h-1},:}\right)^\dagger \\
    = & \bup_{h-1}\trans \left([\U_{h-1}]_{\cQ_{h-1},:}\right)^\dagger  =  \bphi_{h-1}\trans,
\end{align*}
where the penultimate equality uses Theorem \ref{thm:oom-represent}.
Therefore, the third condition holds.

Given an arbitrary $(o_h,a_h,\omega_h)\in\fO\times\fA\times\cQ_h$, 
by using the second condition prove above, we know $\x=[\bM_h(o_h,a_h)]_{\omega_h,:}$ is a solution to  the following linear equations:
\begin{equation}\label{eq:Sep25-1}
    \x\trans [\D_{h-1}\trans]_{\cQ_{h-1},\tau_{h-1}} = [\D_h\trans]_{\omega_h,(\tau_{h-1},o_h,a_h)} \quad \mbox{ for all } \tau_{h-1}\in\cT_{h-1}.
\end{equation}
Moreover, by the first and second condition, we have 
\begin{align*}
    & \bphi_{h+m}\trans \bM_{h+m}(o_{h+m},a_{h+m})\cdots  \bM_h(o_h,a_h)[\D_{h-1}\trans]_{\cQ_{h-1},\tau_{h-1}} \\
    =&\bphi_{h+m}\trans \bM_{h+m}(o_{h+m},a_{h+m})\cdots  \bM_1(o_1,a_)\bpsi_0 \\
    =& [\D_h\trans]_{\omega_h,(\tau_{h-1},o_h,a_h)}.
\end{align*}
Therefore, $\x=\bphi_{h+m}\trans \bM_{h+m}(o_{h+m},a_{h+m})\cdots  \bM_h(o_h,a_h)$ is also a solution to linear equations in Equation  \eqref{eq:Sep25-1}. 
To prove the fourth condition, it suffices to prove the linear equations have a unique solution, which follows directly from  $[\D_{h-1}\trans]_{\cQ_{h-1},:}\in\R^{r_{h-1}\times|\cT_{h-1}|}$ being rank-$r_{h-1}$. 
\end{proof}

\subsubsection{Proof of Proposition \ref{prop:self-consistent-asp}}

\begin{proof}[Proof of Proposition \ref{prop:self-consistent-asp}]
We consider an arbitrary fixed $h\in[H]$ and assume all the polices are independent of the history before step $h+1$ throughout the proof. 

   The goal is to  prove that given a self-consistent PSR  representation, if \begin{equation*}
\max_\pi \max_{\x \in \R^{|\cQ_h|}:~\norm{\x}_1 \le 1} \sum_{\omega_{h}\in(\fO\times\fA)^{H-h}} \pi( \omega_h) \cdot |\m_1(\omega_h)\trans\x|  \le \frac{1}{\gamma}~,
\end{equation*}
then we also have 
\begin{equation*}
\max_\pi \max_{\x \in \R^{|\cQ_h|}:~\norm{\x}_1 \le 1} \sum_{\omega_{h}\in \fO\times\fA\times\cQ_{h+1}} \pi( \omega_h) \cdot |\m_2(\omega_h)\trans\x|  \le \frac{1}{\gamma}~.
\end{equation*}
Given an arbitrary $\omega_{h}:=(o_{h+1},a_{h+1},\omega_{h+1}):=(o_{h+1},a_{h+1},\ldots,o_{h+m},a_{h+m})\in\fO\times\fA\times\cQ_{h+1}$, by the third and fourth conditions in  Theorem \ref{thm:self-consistent}, we have 
\begin{equation}
    \begin{aligned}
    \label{eq:Sep25-2}
 \m_2(\omega_h)\trans =&[\bM_{h+1}(o_{h+1},a_{h+1})]_{\omega_{h+1},:}\\
 = &\bphi_{h+m}\trans \bM_{h+m}(o_{h+m},a_{h+m})\cdots  \bM_{h+1}(o_{h+1},a_{h+1}) \\
= & \frac{1}{A}\bphi_{h+m+1}\trans\left(\sum_{(o_{h+m+1},a_{h+m+1})\in\fO\times\fA} \bM_{h+m+1}(o_{h+m+1},a_{h+m+1})\right)\\
& \qquad \times \bM_{h+m}(o_{h+m},a_{h+m})\cdots  \bM_{h+1}(o_{h+1},a_{h+1}) \\
&\quad \vdots\\
 =  & \frac{1}{A^{H-h-m}}\sum_{(o,a)_{h+m+1:H}\in (\fO\times\fA)^{H-h-m}} \bphi_H\trans \bM_H(o_H,a_H)\cdots \bM_{h+1}(o_{h+1},a_{h+1}).
    \end{aligned}
\end{equation}
To proceed, we define an auxiliary policy $\tilde \pi$ which first executes policy $\pi$ starting from step $h+1$ and then immediately turns to ${\rm Uniform}(\fA)$ once a core action sequence $\a\in\Qa_h$ has been rolled out from following $\pi$. Since  no core action sequence is a prefix of another one, such $\tilde\pi$ is well defined. 
Denote by $|\omega_h|$ the number of actions in $\omega_h$. 
Using Equation \eqref{eq:Sep25-2}, we have for any $\x \in \R^{|\cQ_h|}$ with $\|\x\|_1\le1$:
\begin{equation*}
\begin{aligned}
& \sum_{\omega_{h}\in \fO\times\fA\times\cQ_{h+1}} \pi( \omega_h) \cdot |\m_2(\omega_h)\trans\x| \\
= & \sum_{\omega_{h}\in \fO\times\fA\times\cQ_{h+1}} \pi( \omega_h) \cdot \left|\frac{1}{A^{H-h-|\omega_h|}}\sum_{(o,a)_{h+|\omega_h|+1:H}} \bphi_H\trans \bM_H(o_H,a_H)\cdots \bM_{h+1}(o_{h+1},a_{h+1})\x\right|\\
\le  & \sum_{\omega_{h}\in \fO\times\fA\times\cQ_{h+1}}~
\sum_{(o,a)_{h+|\omega_h|+1:H}}
\frac{\pi( \omega_h)}{A^{H-h-|\omega_h|}} \cdot \left|\bphi_H\trans \bM_H(o_H,a_H)\cdots \bM_{h+1}(o_{h+1},a_{h+1})\x\right|\\
\le   & \sum_{(o,a)_{h+1:H}\in(\fO\times\fA)^{H-h}}
\tilde\pi((o,a)_{h+1:H}) \cdot \left|\bphi_H\trans \bM_H(o_H,a_H)\cdots \bM_{h+1}(o_{h+1},a_{h+1})\x\right|\\
\le &\max_\pi \max_{\x \in \R^{|\cQ_h|}:~\norm{\x}_1 \le 1} \sum_{\omega_{h}\in(\fO\times\fA)^{H-h}} \pi( \omega_h) \cdot |\m_1(\omega_h)\trans\x|  \le \frac{1}{\gamma}.
\end{aligned}
\end{equation*}
\end{proof}

\subsection{Proofs for well-conditioned PSRs (Theorem \ref{thm:psr})}
\label{app:psr-main}

For cleaner presentation of the key ideas/techniques of the proof, we will first  prove Theorem \ref{thm:psr} for the finite-observation setting.
After that, we will describe how to modify the proof so that it also works for the continuous-observation setting (Appendix \ref{app:psr-main-continuous}).

To prove Theorem \ref{thm:psr}, it suffices to prove Lemma \ref{prop:psr-eluder} which states that any well-conditioned PSRs  satisfy the generalized elude-type condition (Condition \ref{cond:eluder}). Then Theorem \ref{thm:psr} will follow directly from combining the bound in Lemma  \ref{prop:psr-eluder} with the guarantee of \omle (Theorem \ref{thm:omle}).

\paragraph{Proof outline.} The proof of Lemma \ref{prop:psr-eluder}   consists of four main steps:
\begin{enumerate}
    \item First we upper bound the TV distance by the PSR operator difference. 
    \item Conversely,  we  upper bound the operator difference by the TV distance incurred by the   exploration policies.
    \item To bridge the first two steps, we then establish a sharp   $\ell_1$-norm pigeon-hole regret bound for \textbf{S}ummation of \textbf{A}bsolute values of \textbf{I}ndependent bi\textbf{L}inear  (SAIL) functions, which improves over the previous bound  \citep{liu2022partially} by completely getting rid of the dependence on the number of feature mappings  (observations).
    \item Finally, we conclude the proof by combining the first  two steps with the  $\ell_1$-norm pigeon-hole regret bound for SAIL functions.
\end{enumerate}

\paragraph{Notations.} To simplify notations, we will use $(\bM^t,\bpsi^t,\bphi^t)$, $(\widehat\bM,\widehat\bpsi,\widehat\bphi)$ and $\bM,\bpsi,\bphi$ to denote operators corresponding to $\theta^t$, $\widehat\theta$ and $\theta^\star$ respectively. Furthermore, to condense  presentation, we will use $\bM_H(o_H,a_H)$, $\widehat\bM_H(o_H,a_H)$, and $\bM^t_H(o_H,a_H)$  to denote $\bphi_H(o_H,a_H)$, $\widehat\bphi_H(o_H,a_H)$, and $\bphi^t_H(o_H,a_H)$, respectively. 
Finally, we  denote  $\m((o,a)_{h:H}):=\bM(o_H,a_H)\times\cdots\times\bM(o_h,a_h)$, and $\widehat\m((o,a)_{h:H})$ similarly. 

\noindent \paragraph{STEP 1. TV-distance $\le$ operator-difference}

\begin{lemma}\label{lem:step1}
Suppose Condition \ref{asp:psr} holds, then for any $\widehat\theta\in\Theta$ and policy $\pi$: \begin{equation*}
    \begin{aligned}
      d_\TV( \P_{\widehat\theta}^{\pi} , \P_{\theta^\star}^{\pi})
        \le \frac{1}{2\gamma} \bigg( \sum_{h=1}^{H}\sum_{\tau_h} \left\| (\widehat\bM_h(o_{h},a_{h}) - \bM_h(o_{h},a_{h}))\bpsi(\tau_{h-1})\right\|_1\times \pi(\tau_h)   + \left\|\widehat\bpsi_0 - \bpsi_0\right\|_1 
        \bigg).
    \end{aligned}
\end{equation*}
\end{lemma}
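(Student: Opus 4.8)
The goal is to bound the TV distance $d_{\TV}(\P^\pi_{\widehat\theta}, \P^\pi_{\theta^\star})$ — which, under policy $\pi$, equals $\frac12\sum_{\tau_H}\pi(\tau_H)|\P_{\widehat\theta}(o_{1:H}|a_{1:H}) - \P_{\theta^\star}(o_{1:H}|a_{1:H})|$ — by the sum of operator differences. The natural approach is a telescoping decomposition. Using the PSR factorization \eqref{eq:PSR_1}, write $\P_\theta(o_{1:H}|a_{1:H}) = \bphi_H(o_H,a_H)^\top \bM_{H-1}(o_{H-1},a_{H-1})\cdots\bM_1(o_1,a_1)\bpsi_0$, and with the STEP-1 notational convention that $\bM_H := \bphi_H$, this is $\m((o,a)_{1:H})\bpsi_0$ where $\m((o,a)_{1:H}) = \bM_H(o_H,a_H)\cdots\bM_1(o_1,a_1)$ (a row vector). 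Then I would decompose
\[
\widehat\m((o,a)_{1:H})\widehat\bpsi_0 - \m((o,a)_{1:H})\bpsi_0
= \sum_{h=1}^{H}\Delta_h + \widehat\m((o,a)_{1:H})(\widehat\bpsi_0 - \bpsi_0),
\]
where $\Delta_h$ swaps $\widehat\bM_h(o_h,a_h)$ for $\bM_h(o_h,a_h)$ while keeping the hatted operators to its left and the true operators to its right; explicitly $\Delta_h = \widehat\m((o,a)_{h+1:H})\big(\widehat\bM_h(o_h,a_h) - \bM_h(o_h,a_h)\big)\bpsi((o,a)_{1:h-1})$, using \eqref{eq:PSR_2} to identify $\bM_{h-1}(o_{h-1},a_{h-1})\cdots\bM_1(o_1,a_1)\bpsi_0 = \bpsi(\tau_{h-1})$ as the true predictive state.

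The key step is then: for each fixed $h$, sum $|\Delta_h|$ over $\tau_H = (o_{1:H}, a_{1:H})$ weighted by $\pi(\tau_H)$, and split the future into $\omega_h = (o_{h+1:H}, a_{h+1:H})$ and the history $\tau_h = (o_{1:h}, a_{1:h})$. The row vector $\widehat\m((o,a)_{h+1:H})$ is exactly the hatted full-length weight vector $\widehat\m_1(\omega_h)^\top$ from \eqref{eq:weight_vector_1}. So, writing $\x_{\tau_{h-1}, o_h, a_h} := (\widehat\bM_h(o_h,a_h) - \bM_h(o_h,a_h))\bpsi(\tau_{h-1})$ and conditioning appropriately, I want to invoke Condition \ref{asp:psr} applied to model $\widehat\theta$ — this is legitimate because the weight vectors $\widehat\m_1$ belong to $\widehat\theta$, and the policy from step $h+1$ onward is independent of the history before step $h+1$ once we condition on $\tau_h$. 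Concretely, for each fixed $\tau_h$, Condition \ref{asp:psr} gives $\sum_{\omega_h}\pi(\omega_h|\tau_h)\cdot|\widehat\m_1(\omega_h)^\top \x_{\tau_{h-1},o_h,a_h}| \le \gamma^{-1}\|\x_{\tau_{h-1},o_h,a_h}\|_1$. Summing over $\tau_h$ against $\pi(\tau_h)$ then yields $\sum_{\tau_h}\pi(\tau_h)\|\big(\widehat\bM_h(o_h,a_h) - \bM_h(o_h,a_h)\big)\bpsi(\tau_{h-1})\|_1 / \gamma$, which is the $h$-th term on the right-hand side. The final term $\widehat\m((o,a)_{1:H})(\widehat\bpsi_0 - \bpsi_0)$ is handled the same way with $h=0$: it equals $\widehat\m_1(\omega_0)^\top(\widehat\bpsi_0-\bpsi_0)$ and Condition \ref{asp:psr} at $h=0$ (or the trivial $\tau_0 = \emptyset$ version) bounds its $\pi$-weighted absolute sum by $\gamma^{-1}\|\widehat\bpsi_0 - \bpsi_0\|_1$. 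Collecting terms and using $d_{\TV} = \frac12\sum\pi|\cdots|$ produces the stated bound with the factor $\frac{1}{2\gamma}$.

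The main obstacle I anticipate is bookkeeping rather than conceptual: (i) making sure the normalization in Condition \ref{asp:psr} — which is stated for "any policy $\pi$ independent of the history before step $h+1$" and for $\x$ with $\|\x\|_1 \le 1$ — is applied to the conditional future policy correctly, rescaling $\x$ by its $\ell_1$-norm; (ii) tracking the index shift between $\Omega^{(1)}_h$ (full-length futures) and the row vector $\widehat\m((o,a)_{h+1:H})$, and confirming that the telescoping leaves exactly the true predictive state $\bpsi(\tau_{h-1})$ to the right (this is where \eqref{eq:PSR_2} for $\theta^\star$ is essential — it requires the true model to be a valid self-consistent PSR, which holds by Theorem \ref{thm:sdm-psr}/\ref{thm:self-consistent}); and (iii) handling the boundary cases $h=H$ (where $\widehat\m((o,a)_{H+1:H})$ is the empty product, i.e. scalar $1$, and $\widehat\bM_H = \widehat\bphi_H$) and $h=0$ cleanly. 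None of these should require heavy computation, but they demand care so that the telescoping identity is exact and no cross terms are dropped. Once the decomposition and the single application of Condition \ref{asp:psr} per level $h$ are in place, the lemma follows by summing over $h \in \{0,1,\dots,H\}$.
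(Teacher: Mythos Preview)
Your proposal is correct and matches the paper's proof essentially line for line: the same telescoping decomposition of $\widehat\m((o,a)_{1:H})\widehat\bpsi_0 - \m((o,a)_{1:H})\bpsi_0$, the same identification of the future factor with $\widehat\m_1(\omega_h)$, and the same single application of Condition~\ref{asp:psr} at each level $h$ to convert $\sum_{\omega_h}\pi(\omega_h)|\widehat\m_1(\omega_h)^\top\x|$ into $\gamma^{-1}\|\x\|_1$. One small remark: you do not need to invoke self-consistency (Theorem~\ref{thm:self-consistent}) to get $\bM_{h-1}\cdots\bM_1\bpsi_0 = \bpsi(\tau_{h-1})$ --- that is just the defining equation~\eqref{eq:PSR_2} of any PSR in $\Theta$.
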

\begin{proof}[Proof of Lemma \ref{lem:step1}]
By the definition of PSR, we can represent the probability of observing a trajectory by the product of operators, 
\begin{equation}\label{eq:proof-1}
    \begin{aligned}
       d_\TV( \P_{\widehat\theta}^{\pi} , \P_{\theta^\star}^{\pi})&= \frac{1}{2}  \sum_{\tau_H   } | \P^{\pi}_{{\widehat\theta}}(\tau_H) - \P^{\pi}_{\theta^\star}(\tau_H)| \\
        = & \frac{1}{2}  \sum_{\tau_H   }  \left|  \left(\prod_{h=1}^{H}\widehat\bM_h(o_{h},a_{h})\right) \widehat\bpsi_0 - 
        \left(\prod_{h=1}^{H}\bM_h(o_{h},a_{h})\right) \bpsi_0\right| \times \pi(\tau_H) \\
        \le & \frac{1}{2}  \bigg( \sum_{h=1}^{H}\sum_{\tau_H} \left| \widehat\m((o,a)_{h+1:H})\trans (\widehat\bM_h(o_{h},a_{h}) - \bM_h(o_{h},a_{h}))\bpsi(\tau_{h-1})\right|\times \pi(\tau_H) \\
        & \qquad + \sum_{\tau_H} \left| \widehat\m((o,a)_{1:H})\trans (\widehat\bpsi_0 - \bpsi_0)\right| \times \pi(\tau_H)
        \bigg) .
    \end{aligned}
\end{equation}
By Condition \ref{asp:psr}, we can further upper bound the RHS of Equation \eqref{eq:proof-1} by 
\begin{equation*}
    \begin{aligned}
        \frac{1}{2\gamma}  \bigg( \sum_{h=1}^{H}\sum_{\tau_h} \left\| (\widehat\bM_h(o_{h},a_{h}) - \bM_h(o_{h},a_{h}))\bpsi(\tau_{h-1})\right\|_1\times \pi(\tau_h)   + \left\| \widehat\bpsi_0 - \bpsi_0\right\|_1 
        \bigg),
    \end{aligned}
\end{equation*}
which completes the proof.
\end{proof}

\noindent \paragraph{STEP 2. operator-difference $\le$ TV-distance}~\vspace{3mm}

\noindent Denote by $\nu(\pi,h,\a)$ a composite policy that first executes policy $\pi$ for step $1$ to step $h-1$, then takes  random action at step $h$, after that executes  action sequence $\a$ starting from step $h+1$, and finally finishes the remaining steps of the current episode by taking random actions.  
\begin{lemma}\label{lem:step2}
Suppose Condition \ref{asp:psr} holds, then for any $\widehat\theta\in\Theta$ and policy $\pi$: \begin{equation*}
    \begin{aligned}
       \MoveEqLeft \sum_{h=1}^{H}\sum_{\tau_h} \left\| (\widehat\bM_h(o_{h},a_{h}) - \bM_h(o_{h},a_{h}))\bpsi(\tau_{h-1})\right\|_1\times \pi(\tau_h)   + \left\|\widehat\bpsi_0 - \bpsi_0\right\|_1 \\
    &= \cO\left(\frac{\max_h|\Qa_h|A^2}{\gamma}\right) \times \sum_{\tilde\pi\in \Pi_{\exp}(\pi)} d_\TV( \P_{\widehat\theta}^{\tilde \pi} , \P_{\theta^\star}^{\tilde\pi}),
    \end{aligned}
\end{equation*}
where $\Pi_{\rm exp}(\pi):=\bigcup_{h\in[0,H-1]}\{\nu(\pi,h,\a):~ \a\in \Qa_h \}$.
\end{lemma}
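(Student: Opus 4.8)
The plan is to prove a per-step \emph{telescoping identity} that rewrites the operator difference at step $h$ in terms of differences of \emph{predictive states}, and then bound each predictive-state difference by a TV distance under one of the exploration policies $\nu(\pi,h,\a)$. Write $\z_h(\tau_h):=\bpsi(\tau_h)-\widehat\bpsi(\tau_h)$ for the predictive-state discrepancy at history $\tau_h$, with $\z_0:=\bpsi_0-\widehat\bpsi_0$ and, at the top level, $\z_H(\tau_H):=\P(o_{1:H}\mid a_{1:H})-\P_{\widehat\theta}(o_{1:H}\mid a_{1:H})$. Using only Equation \eqref{eq:PSR_2} for both $\theta^\star$ and $\widehat\theta$ (so that $\bM_h(o_h,a_h)\bpsi(\tau_{h-1})=\bpsi(\tau_h)$ and $\widehat\bM_h(o_h,a_h)\widehat\bpsi(\tau_{h-1})=\widehat\bpsi(\tau_h)$), one gets the algebraic identity
\begin{equation*}
(\widehat\bM_h(o_h,a_h)-\bM_h(o_h,a_h))\bpsi(\tau_{h-1})=-\z_h(\tau_h)+\widehat\bM_h(o_h,a_h)\z_{h-1}(\tau_{h-1}).
\end{equation*}
Taking $\ell_1$-norms, multiplying by $\pi(\tau_h)$, summing over $h\in[H]$ and $\tau_h$, and adding the extra $\|\z_0\|_1$ term, the left-hand side of the lemma is at most $\mathrm{(A)}+\mathrm{(B)}$, where $\mathrm{(A)}=\|\z_0\|_1+\sum_{h=1}^H\sum_{\tau_h}\pi(\tau_h)\|\z_h(\tau_h)\|_1$ collects the predictive-state differences and $\mathrm{(B)}=\sum_{h=1}^H\sum_{\tau_h}\pi(\tau_h)\|\widehat\bM_h(o_h,a_h)\z_{h-1}(\tau_{h-1})\|_1$ is a residual term.

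For term $\mathrm{(A)}$ I would use a direct data-processing argument. Fix $g\in[0,H-1]$. Since by Equation \eqref{eq:PSR_2} the $i$-th coordinate of $\bpsi(\tau_g)$ (resp.\ $\widehat\bpsi(\tau_g)$) equals $\Pb(\tau_g,q_i)$ (resp.\ its analogue under $\widehat\theta$), and since the core tests are disjoint and share the $|\Qa_g|$ distinct action sequences, grouping core tests by their action sequence gives $\|\z_g(\tau_g)\|_1\le\sum_{\a\in\Qa_g}\sum_{\o}|\P(o_{1:g},\o\mid a_{1:g},\a)-\P_{\widehat\theta}(o_{1:g},\o\mid a_{1:g},\a)|$, the inner sum being over all length-$|\a|$ observation sequences. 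Now multiply by $\pi(\tau_g)$, use $\pi(a_g\mid\cdot)\le 1=A\cdot\tfrac1A$ to replace the step-$g$ action by a uniform one, and recognize the resulting expression as a marginal of the trajectory distribution induced by $\nu(\pi,g,\a)$ (play $\pi$ through step $g-1$, uniform action at step $g$, then $\a$, then uniform). Data processing then yields $\sum_{\tau_g}\pi(\tau_g)\|\z_g(\tau_g)\|_1\le 2A\sum_{\a\in\Qa_g}d_\TV(\P^{\nu(\pi,g,\a)}_{\theta^\star},\P^{\nu(\pi,g,\a)}_{\widehat\theta})$ for $g\ge 1$, and the analogous bound without the extra $A$ for $g=0$. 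Summing over $g$ bounds $\mathrm{(A)}$ by $\cO(A)\sum_{\tilde\pi\in\Pi_{\rm exp}(\pi)}d_\TV(\P^{\tilde\pi}_{\theta^\star},\P^{\tilde\pi}_{\widehat\theta})$ plus the top-level term $2d_\TV(\P^\pi_{\theta^\star},\P^\pi_{\widehat\theta})$, which is in turn dominated (up to one more factor $A$) by the step-$(H-1)$ exploration policies.

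For term $\mathrm{(B)}$, the point is that $\z_{h-1}(\tau_{h-1})$ depends only on $\tau_{h-1}$ while $\widehat\bM_h(o_h,a_h)$ selects rows of the estimated one-step operator of $\widehat\theta$; after bounding $\pi(a_h\mid\cdot)\le 1$ we are left with $\sum_{o_h,a_h}\|\widehat\bM_h(o_h,a_h)\z_{h-1}\|_1$, which is exactly the quantity controlled by the $\m_2$-part of Condition \ref{asp:psr} \emph{applied to the model $\widehat\theta$} — this is precisely why Condition \ref{asp:psr} takes $\max_{i\in\{1,2\}}$ rather than only using $\m_1$. Invoking it with a uniform action at step $h$ gives $\sum_{o_h,a_h}\|\widehat\bM_h(o_h,a_h)\z_{h-1}\|_1\le (A|\Qa_h|/\gamma)\|\z_{h-1}\|_1$, hence $\mathrm{(B)}\le (A\max_h|\Qa_h|/\gamma)\sum_{h=0}^{H-1}\sum_{\tau_h}\pi(\tau_h)\|\z_h(\tau_h)\|_1$. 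Re-applying the term-$\mathrm{(A)}$ bound to the right-hand side turns this into $\cO(A^2\max_h|\Qa_h|/\gamma)\sum_{\tilde\pi\in\Pi_{\rm exp}(\pi)}d_\TV(\P^{\tilde\pi}_{\theta^\star},\P^{\tilde\pi}_{\widehat\theta})$. Adding $\mathrm{(A)}$ and $\mathrm{(B)}$ (with $\mathrm{(A)}$ being lower order in the parameters $A,|\Qa_h|,\gamma^{-1}$) gives the claimed bound.

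I expect the main obstacle to be term $\mathrm{(B)}$: one should resist recursively re-expanding $\widehat\bM_h(o_h,a_h)\z_{h-1}$, and instead observe it can be handled in one shot by the well-conditioning of the \emph{candidate} model $\widehat\theta$ (not of $\theta^\star$), which is the subtle structural reason the $\m_2$ clause of Condition \ref{asp:psr} is needed. The remaining work is bookkeeping: carefully tracking the powers of $A$ introduced by (i) replacing $\pi(a_h\mid\cdot)$ with a uniform action and (ii) the degenerate steps $h=0$ and $h=H$, and matching the indexing of \eqref{eq:explore-func} so that every TV distance appearing on the right is genuinely of the form $d_\TV(\P^{\nu(\pi,h,\a)}_{\theta^\star},\P^{\nu(\pi,h,\a)}_{\widehat\theta})$ with $\a\in\Qa_h$.
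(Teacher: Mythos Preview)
Your proposal is correct and matches the paper's proof essentially line for line. The paper uses the same telescoping identity $(\widehat\bM_h-\bM_h)\bpsi(\tau_{h-1})=\widehat\bM_h(\widehat\bpsi(\tau_{h-1})-\bpsi(\tau_{h-1}))+(\widehat\bpsi(\tau_h)-\bpsi(\tau_h))$, then bounds the first piece via the $\m_2$-clause of Condition~\ref{asp:psr} applied to $\widehat\theta$ (your term $\mathrm{(B)}$, yielding the factor $|\Qa_h|A/\gamma$) and the second piece via the probabilistic meaning of $\bpsi,\widehat\bpsi$ under the exploration policies $\nu(\pi,h,\a)$ (your term $\mathrm{(A)}$, yielding the factor $2A$); the only cosmetic difference is that the paper handles both pieces together for each fixed $h$ rather than splitting globally into $\mathrm{(A)}+\mathrm{(B)}$.
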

\begin{proof}[Proof of Lemma \ref{lem:step2}]
By the probabilistic meaning of $\widehat\bpsi$ and $\bpsi$, we directly have
\begin{equation*}\label{eq:proof-3}
    \left\| \widehat\bpsi_0- \bpsi_0\right\|_1 \le 2\sum_{\a\in\cQ_0^A}  d_{\rm TV} (\P^{\nu(\pi,0,\a)}_{{\widehat\theta}}, \P^{\nu(\pi,0,\a)}_{\theta^\star}).
\end{equation*}
For any $h\in[H]$, we have 
\begin{equation*}
    \begin{aligned} 
        \MoveEqLeft  \sum_{\tau_h} \left\| (\widehat\bM(o_{h},a_{h}) - \bM_h(o_{h},a_{h}))\bpsi(\tau_{h-1})\right\|_1\times \pi(\tau_{h-1})   \\
        \overset{(i)} {\le}&  \sum_{\tau_h} \left\| \widehat \bM_h(o_{h},a_{h}) \left(\widehat\bpsi(\tau_{h-1})-\bpsi(\tau_{h-1})\right) \right\|_1\times \pi(\tau_{h-1}) \\
        & +   \sum_{\tau_h} \left\| \widehat\bpsi(\tau_{h})-\bpsi(\tau_{h})\right\|_1\times \pi(\tau_{h-1})  \\
       \overset{(ii)}{\le} & \frac{|\Qa_h|A}{\gamma}  \sum_{\tau_{h-1}} \left\| \widehat\bpsi(\tau_{h-1})-\bpsi(\tau_{h-1})\right\|_1\times \pi(\tau_{h-1})  \\
        & \qquad +  \sum_{\tau_h} \left\| \widehat\bpsi(\tau_{h})-\bpsi(\tau_{h})\right\|_1\times \pi(\tau_{h-1})\\
        \overset{(iii)}{\le} & \frac{2|\Qa_h|A^2}{\gamma}  \sum_{\a\in\cQ_{h-1}^A}  d_{\rm TV} (\P^{\nu(\pi,h-1,\a)}_{{\widehat\theta}}, \P^{\nu(\pi,h-1,\a)}_{\theta^\star}) + 2A \sum_{\a\in\Qa_h}  d_{\rm TV} (\P^{\nu(\pi,h,\a)}_{{\widehat\theta}}, \P^{\nu(\pi,h,\a)}_{\theta^\star}),
    \end{aligned}
\end{equation*}
where we first use (i) triangle inequality and $\psi(\tau_h)=\bM_h(o_h,a_h)\psi(\tau_{h-1})$, then (ii) Condition \ref{asp:psr}, and finally (iii) the probabilistic meaning of $\widehat\bpsi$ and $\bpsi$.
\end{proof}

\noindent \paragraph{STEP 3.
Observation-independent $\ell_1$-norm pigeon-hole  arguments for SAIL functions}~\vspace{3mm}

\noindent We  state and prove the following  $\ell_1$-norm pigeon-hole argument for \textbf{S}ummation of \textbf{A}bsolute values of \textbf{I}ndependent bi\textbf{L}inear  (SAIL) functions.  
\begin{proposition}[ Pigeon-hole principle for SAIL functions]\label{prop:step-3}
    Suppose there exist two sets of  feature mappings $\{\{f_{h,i}\}_{i=1}^{m}\}_{h=0}^H$ and $\{\{g_{h,i}\}_{j=1}^{n}\}_{h=0}^H$ from $\Theta$ to $\R^d$ such that for any $(\theta,\theta',h)\in\Theta\times\Theta\times\{0,\ldots,H\}$
    \begin{itemize}
     \item  $\sum_{\tilde\pi\in \Pi_{\exp}(\pi_\theta)} d_\TV(  \P_{\theta^\star}^{\tilde\pi},\P_{\theta'}^{\tilde\pi} ) \ge C_{\rm low} \sum_{h=0}^H \sum_{i=1}^{m}\sum_{j=1}^{n} \left |\langle  f_{h,i}(\theta), g_{h,j}(\theta') \rangle \right|$,
        \item $d_\TV(  \P_{\theta^\star}^{\pi_\theta},\P_{\theta'}^{\pi_\theta}) \le C_{\rm up}  \sum_{h=0}^H \sum_{i=1}^{m}\sum_{j=1}^{n} \left |\langle  f_{h,i}(\theta), g_{h,j}(\theta') \rangle \right|$,
        \item $\left(\sum_{i=1}^{m} \|f_{h,i}(\theta)\|_1\right) \times\left(\sum_{j=1}^{n} \|g_{h,j}(\theta')\|_\infty\right) \le C_{\rm norm}$.
    \end{itemize}
Then we have $\Theta$ satisfies Condition
\ref{cond:eluder} with 
\begin{equation*}
\begin{cases}
 d_\Theta =  \frac{  C_{\rm up}^2 d^2  \poly(H)}{C_{\rm low}^2}, \\
 \xi(d_\Theta,\Delta,K,|\Pie|) = \log^2(K)\left( \sqrt{d_\Theta \Delta |\Pie| K} +d C_{\rm up}C_{\rm norm}    \poly(H)\right).
\end{cases}
\end{equation*}
\end{proposition}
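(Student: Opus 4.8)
The plan is to verify Condition \ref{cond:eluder} directly by analyzing the quantity $\sum_{k=1}^K d_\TV(\P^{\pi^k}_{\theta^k}, \P^{\pi^k}_{\theta^\star})$ under the hypothesis that $\sum_{t<k}\sum_{\pi\in\Pi^t_{\exp}} d_\TV^2(\P^\pi_{\theta^k},\P^\pi_{\theta^\star})\le\Delta$ for every $k$. Using the upper-bound inequality (the second bullet) applied with $\theta=\theta^k$, $\theta'=\theta^k$ (so $\theta'$ and $\theta$ coincide, matching the form of Condition \ref{cond:salp}'s second inequality—here I would re-read the statement; it is stated for general $\theta'$, so I use $\theta'=\theta^k$), we get $d_\TV(\P^{\pi^k}_{\theta^k},\P^{\pi^k}_{\theta^\star})\le C_{\rm up}\sum_{h,i,j}|\langle f_{h,i}(\theta^k), g_{h,j}(\theta^k)\rangle|$. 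So it suffices to bound $\sum_{k=1}^K\sum_{h=0}^H\sum_{i=1}^m\sum_{j=1}^n |\langle f_{h,i}(\theta^k), g_{h,j}(\theta^k)\rangle|$. The left-hand inequality (first bullet), applied with $\theta=\theta^t$ for $t<k$ and $\theta'=\theta^k$, together with the precondition, tells us $\sum_{t<k}\big(\sum_{h,i,j}|\langle f_{h,i}(\theta^t), g_{h,j}(\theta^k)\rangle|\big)^2 \le C_{\rm low}^{-2}\sum_{t<k}\big(\sum_{\tilde\pi\in\Pi_{\exp}(\pi_{\theta^t})} d_\TV(\P^{\tilde\pi}_{\theta^\star},\P^{\tilde\pi}_{\theta^k})\big)^2 \le C_{\rm low}^{-2}|\Pie|\,\Delta$ after Cauchy–Schwarz over the $\le|\Pie|$ exploration policies and using $d_\TV^2\le d_\TV$ implicitly (actually $d_\TV\le\sqrt{d_\TV^2}$ is not what we want; instead bound $\big(\sum_{\tilde\pi}d_\TV\big)^2\le|\Pie|\sum_{\tilde\pi}d_\TV^2$). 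So the cumulative ``historical SAIL error'' of $\theta^k$ against past models is controlled.

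The heart of the argument is the new SAIL pigeon-hole / elliptical-potential lemma: given a sequence $\theta^1,\dots,\theta^K$ such that for each $k$ the past SAIL errors $\{\sum_{h,i,j}|\langle f_{h,i}(\theta^t),g_{h,j}(\theta^k)\rangle|\}_{t<k}$ have squared sum at most some $R:=C_{\rm low}^{-2}|\Pie|\Delta$, we want $\sum_{k}\sum_{h,i,j}|\langle f_{h,i}(\theta^k),g_{h,j}(\theta^k)\rangle|$ small. I would process each level $h$ separately (costing a $\poly(H)$ factor). For fixed $h$, introduce for each $k$ the vectors $x^k_i := f_{h,i}(\theta^k)\in\R^d$ and $y^k_j := g_{h,j}(\theta^k)\in\R^d$; the normalization bullet gives $\big(\sum_i\|x^k_i\|_1\big)\big(\sum_j\|y^k_j\|_\infty\big)\le C_{\rm norm}$. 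The key idea (following Appendix \ref{app:l1-pigeon-hole}) is to form a single aggregated covariance-type matrix $\Sigma_k := \lambda I + \sum_{t<k} \big(\sum_i x^t_i\big)\big(\sum_i x^t_i\big)^\top$ — or more carefully, a matrix that captures the mixed structure — and to relate $\sum_{h,i,j}|\langle x^k_i, y^k_j\rangle|$ to $\|\sum_i x^k_i\|_{\Sigma_k^{-1}}$ times a bound on $\|y^k_j\|$ terms that is \emph{independent} of $m,n$. The crucial step where independence of $m,n$ is gained is: $\sum_i\sum_j|\langle x_i, y_j\rangle| \le \big(\sum_i\|x_i\|_1\big)\cdot\max_j\|y_j\|_\infty \cdot (\text{something})$ — no, rather $\sum_{i,j}|\langle x_i,y_j\rangle|\le\big(\sum_j\|y_j\|_\infty\big)\sum_i\|x_i\|_1$ is too lossy; instead one decomposes into coordinates and uses that $\|\sum_i x_i\|$ in a cleverly chosen norm controls the whole sum because the $\ell_1$-budget $\sum_i\|x_i\|_1$ is bounded. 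I would then apply the standard potential argument: $\sum_k \min(1, \|v_k\|^2_{\Sigma_k^{-1}})\le \tilde O(d\log K)$ where $v_k=\sum_i x^k_i$ (scaled), combined with Cauchy–Schwarz over $k\in[K]$, yielding $\sum_k\sum_{i,j}|\langle x^k_i,y^k_j\rangle|\lesssim \sqrt{d\log^2 K}\cdot\sqrt{R\cdot K}\cdot\poly + d\,C_{\rm norm}\poly$, where the additive $dC_{\rm norm}$ term absorbs the ``burn-in'' rounds where the potential is large.

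Assembling: summing over $h\in\{0,\dots,H\}$ and multiplying by $C_{\rm up}$ gives $\sum_{k=1}^K d_\TV(\P^{\pi^k}_{\theta^k},\P^{\pi^k}_{\theta^\star}) \le \log^2(K)\big(\sqrt{d_\Theta\Delta|\Pie|K} + d\,C_{\rm up}C_{\rm norm}\poly(H)\big)$ with $d_\Theta = C_{\rm up}^2 d^2\poly(H)/C_{\rm low}^2$, which is exactly the claimed $\xi$. I expect the main obstacle to be the SAIL elliptical-potential lemma itself — specifically, constructing the right aggregated matrix and the right auxiliary norm so that the per-round contribution $\sum_{i,j}|\langle x^k_i,y^k_j\rangle|$ is simultaneously (a) bounded by a single potential term $\|v_k\|_{\Sigma_k^{-1}}$-type quantity, (b) free of any $m,n$ dependence, and (c) compatible with the $\ell_1$/$\ell_\infty$ normalization rather than the usual $\ell_2$ one. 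The trick is presumably to exploit that $|\langle x_i,y_j\rangle|\le\|x_i\|_1\|y_j\|_\infty$ at the level of individual pairs but then to re-sum: $\sum_{i,j}|\langle x_i,y_j\rangle| = \sum_j\big|\big\langle\sum_i\mathrm{sgn}(\langle x_i,y_j\rangle)x_i,\ y_j\big\rangle\big|$ and the signed combination still has $\ell_1$-norm $\le\sum_i\|x_i\|_1$, so it lies in a fixed $\ell_1$-ball; one then runs the potential argument over this ball, for which a barycentric-spanner / $\ell_1$-version of the elliptical potential lemma (Lemma \ref{lem:barycentric}, Lemma \ref{prop:pigeon-hole}) applies and gives $m,n$-independent bounds. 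The rest is Cauchy–Schwarz bookkeeping and a standard split into ``large-potential'' and ``small-potential'' rounds.
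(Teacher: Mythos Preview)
Your reduction of Proposition \ref{prop:step-3} to the SAIL pigeon-hole lemma is correct and matches the paper's proof exactly: apply the second bullet with $\theta=\theta'=\theta^k$, apply the first bullet with $\theta=\theta^t$, $\theta'=\theta^k$, use Cauchy--Schwarz over the at most $|\Pie|$ exploration policies and the precondition of Condition \ref{cond:eluder} to obtain $\sum_{t<k}\sum_{i,j}|\langle f_{h,i}(\theta^t),g_{h,j}(\theta^k)\rangle|\le C_{\rm low}^{-1}\sqrt{k|\Pie|\Delta}$ for each $h$, and then invoke Lemma \ref{prop:pigeon-hole} level by level and multiply by $C_{\rm up}$. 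For the statement as such, you are done.

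The gap is in your sketch of Lemma \ref{prop:pigeon-hole} itself. The matrix $\Sigma_k=\lambda I+\sum_{t<k}v_tv_t^\top$ with $v_t=\sum_i x^t_i$ does not control the SAIL sum: $\sum_{i,j}|\langle x_i,y_j\rangle|$ is not $|\langle\sum_i x_i,w\rangle|$ for any single $w$, because the absolute values sit inside the double sum. Your sign identity $\sum_{i,j}|\langle x_i,y_j\rangle|=\sum_j|\langle u_j,y_j\rangle|$ with $u_j=\sum_i\mathrm{sgn}(\langle x_i,y_j\rangle)x_i$ is correct, but it leaves $n$ different pairs $(u_j,y_j)$ at each time step, so a single potential cannot be run, and you have not removed the dependence on $m,n$.

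The paper's proof of Lemma \ref{prop:pigeon-hole} is structurally different from an elliptical-potential argument. It first encodes the SAIL sum at time $k$ as $\E_{x\sim p^k,\,y\sim q^k}[|\langle x,y\rangle|]$ for suitably weighted distributions on $\{g_{h,j}(\theta^k)\}_j$ and $\{f_{h,i}(\theta^k)\}_i$, with $\E_{x\sim p^k}\|x\|_\infty$ and $\E_{y\sim q^k}\|y\|_1$ bounded via the third bullet. The crucial step (Claim \ref{lem:reduce-1}) is an optimization reduction: under the historical-error and $\|\cdot\|_\infty$-constraints, the worst-case distribution $p$ is attained at a single point mass, because any feasible $p$ can be replaced by a rescaling of the ratio-optimal point $x^\star$ without decreasing the objective. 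A second (easier) reduction then replaces each $q^t$ by the signed average $y^t=\E_{y\sim q^t}[y\cdot\mathrm{sign}(y^\top x^t)]$---this is your sign trick, but applied only \emph{after} the $p$-side has already collapsed to a single vector. Once both sides are point masses one is in the ordinary single-bilinear setting in $\R^d$, and the paper finishes by invoking an $\ell_1$-eluder-dimension bound (Lemma \ref{prop:l2-eluder-linear} together with Proposition 21 of \cite{liu2022partially}), not a $\Sigma_k$-based potential. The missing ingredient in your sketch is precisely this point-mass reduction on the $p$-side.
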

\begin{proof}[Proof of Lemma \ref{prop:step-3}]
By the precondition of Condition \ref{cond:eluder}, the precondition of Proposition \ref{prop:step-3} and Cauchy-Schwarz inequality, we have that for any $h\in\{0,\ldots,H\}$  
$$
  \sum_{t=1}^{k-1}\sum_{i=1}^{m}\sum_{j=1}^{n} \left |\langle  f_{h,i}(\theta^k), g_{h,j}(\pi_{\theta^t}) \rangle \right|
\le \frac{1}{C_{\rm low}} \sqrt{k|\Pie|\Delta}.
$$
Therefore, by Proposition \ref{prop:pigeon-hole}, we immediately have:     for all  $k\in[K]$
    \begin{equation*}
    \begin{aligned}
        \sum_{t=1}^{k} d_\TV( \P_{\theta^t}^{\pi^t} , \P_{\theta^\star}^{\pi^t})  
        &\le C_{\rm up}  \sum_{t=1}^{k} \sum_{h=0}^H \sum_{i=1}^{m}\sum_{j=1}^{n} \left |\langle  f_{h,i}(\theta^t), g_{h,j}(\theta^t) \rangle \right| \\
        &=\cO \left(  C_{\rm up} d\log^2(k)\left(C_{\rm norm}   + \frac{1}{C_{\rm low}} \sqrt{k|\Pie|\Delta} \right)H\right).
    \end{aligned}
    \end{equation*}
\end{proof}

Now we are ready to prove that well-conditioned PSRs satisfy the generalized eluder-type condition. And our approach is to utilize Lemma \ref{lem:step1} and Lemma \ref{lem:step2} proved in the first two steps to explicitly construct the $f,g$ feature mappings in Proposition \ref{prop:step-3} and calculate the corresponding parameters $C_{\rm up},~C_{\rm low}$ and $C_{\rm norm}$. 
\noindent \paragraph{STEP 4.
Construct SAIL feature mappings for PSRs}~\vspace{3mm}

The following lemma states that we can construct two sets of feature mappings for well-conditioned PSRs, which satisfy a \emph{stronger} version of the precondition in Proposition \ref{prop:step-3}.
Once this lemma is established, we can conclude the entire proof by plugging the value of $d,C_{\rm up},C_{\rm low},C_{\rm norm}$  back into Proposition \ref{prop:step-3}.
Denote by $\Pi$ the universal policy space.

\begin{lemma}[well-conditioned PSRs $\subset$ SAIL]\label{lem:psr-sail}
Suppose Condition \ref{asp:psr} holds.  
Then there exist two sets of  feature mappings $\{\{f_{h,i}\}_{i=1}^{m}\}_{h=0}^H$ from $\Pi$ to $\R^d$ and $\{\{g_{h,i}\}_{j=1}^{n}\}_{h=0}^H$ from $\Theta$ to $\R^d$ such that for any $(\pi,\theta,h)\in\Pi\times\Theta\times\{0,\ldots,H\}$
    \begin{itemize}
     \item  $\sum_{\tilde\pi\in \Pi_{\exp}(\pi)} d_\TV(  \P_{\theta^\star}^{\tilde\pi},\P_{\theta}^{\tilde\pi} ) \ge C_{\rm low} \sum_{h=0}^H \sum_{i=1}^{m}\sum_{j=1}^{n} \left |\langle  f_{h,i}(\pi), g_{h,j}(\theta) \rangle \right|$,
        \item $d_\TV(  \P_{\theta^\star}^{\pi},\P_{\theta}^{\pi}) \le C_{\rm up}  \sum_{h=0}^H \sum_{i=1}^{m}\sum_{j=1}^{n} \left |\langle  f_{h,i}(\pi), g_{h,j}(\theta) \rangle \right|$,
        \item $\left(\sum_{i=1}^{m} \|f_{h,i}(\pi)\|_1\right) \times\left(\sum_{j=1}^{n} \|g_{h,j}(\theta)\|_\infty\right) \le C_{\rm norm}$,
    \end{itemize}
    where 
    $$
C_{\rm up} = \cO\left(\frac{1}{\gamma}\right), \quad C_{\rm low}= \Omega\left(\frac{\gamma}{\max_h|\Qa_h|A^2}\right), \quad \text{ and } C_{\rm norm}= \cO\left(\frac{Ar^2\max_h|\Qa_h|^2}{\gamma}\right).
$$
\end{lemma}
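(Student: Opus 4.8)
The plan is to prove Lemma~\ref{lem:psr-sail}; everything else then follows mechanically, since substituting $d=r$ together with the stated $C_{\rm up},C_{\rm low},C_{\rm norm}$ into Proposition~\ref{prop:step-3} gives Lemma~\ref{prop:psr-eluder}, and combining Lemma~\ref{prop:psr-eluder} with Theorem~\ref{thm:omle} gives Theorem~\ref{thm:psr}. Write $(\bM,\bpsi,\bphi)$ for the operators of $\theta$ and $(\bM^{\star},\bpsi^{\star},\bphi^{\star})$ for those of $\theta^{\star}$, and abbreviate
\[
Q(\pi,\theta):=\sum_{h=1}^{H}\sum_{\tau_h}\big\|(\bM_h(o_h,a_h)-\bM^{\star}_h(o_h,a_h))\bpsi^{\star}(\tau_{h-1})\big\|_1\,\pi(\tau_h)+\big\|\bpsi_0-\bpsi_0^{\star}\big\|_1 .
\]
Lemma~\ref{lem:step1} (taking $\widehat\theta=\theta$) already shows $d_{\rm TV}(\P^{\pi}_{\theta^{\star}},\P^{\pi}_{\theta})\le\tfrac{1}{2\gamma}Q(\pi,\theta)$, and Lemma~\ref{lem:step2} shows $Q(\pi,\theta)=\cO(\max_h|\Qa_h|A^2/\gamma)\sum_{\tilde\pi\in\Pie(\pi)}d_{\rm TV}(\P^{\tilde\pi}_{\theta^{\star}},\P^{\tilde\pi}_{\theta})$. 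So the only thing left is to realize $Q(\pi,\theta)$, up to the constants already budgeted into $C_{\rm up}$ and $C_{\rm low}$, as a sum of absolute values of independent bilinear functions $\sum_{h}\sum_{i}\sum_{j}|\langle f_{h,i}(\pi),g_{h,j}(\theta)\rangle|$ whose features are valued in $\R^{r}$ and whose $\ell_1/\ell_\infty$ norms obey the third bullet; the first two bullets of Lemma~\ref{lem:psr-sail} then follow from the two displayed estimates.

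For the construction I would fix a level $h$ and use the fact that all the vectors $\{\bpsi^{\star}(\tau_{h-1}):\tau_{h-1}\in\cT_{h-1}\}$ lie in a subspace of dimension $r_{h-1}\le r$, while each coordinate of the operator-difference summand in $Q$ is a linear functional of $\bpsi^{\star}(\tau_{h-1})$. The $\ell_1$-preserving projection is provided by Lemma~\ref{lem:barycentric}: take a Barycentric spanner $\{a_1,\dots,a_{r_{h-1}}\}$ of the closure of the normalized set $\{\bpsi^{\star}(\tau_{h-1})/\|\bpsi^{\star}(\tau_{h-1})\|_1:\bpsi^{\star}(\tau_{h-1})\ne\zero\}$, so that every $\bpsi^{\star}(\tau_{h-1})=\|\bpsi^{\star}(\tau_{h-1})\|_1\sum_{l}\lambda_l(\tau_{h-1})a_l$ with $|\lambda_l(\tau_{h-1})|\le1$ and $\|a_l\|_1\le1$. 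Under this change of basis each summand of $Q$ becomes a bilinear form on $\R^{r_{h-1}}$: one side carries the true-model coordinate vector $\|\bpsi^{\star}(\tau_{h-1})\|_1\lambda(\tau_{h-1})$ scaled by the policy weight, the other carries the image of the operator-difference rows $[\bM_h(o_h,a_h)-\bM^{\star}_h(o_h,a_h)]_{q',:}$ against the spanner, i.e. the numbers $\langle[\bM_h(o_h,a_h)-\bM^{\star}_h(o_h,a_h)]_{q',:},a_l\rangle$. Indexing the $f$-features by histories (with the appropriate step-$h$ observation-action bookkeeping) and the $g$-features by $(q',o_h,a_h)\in\cQ_h\times\fO\times\fA$, and arranging the features so that only matching step-$h$ tags interact, the triple sum collapses exactly to $Q(\pi,\theta)$, the initial term $\|\bpsi_0-\bpsi_0^{\star}\|_1$ being the empty-history base case $h=0$.

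For the norm bound: since core tests sharing a common action sequence index pairwise disjoint observation events, $\|\bpsi^{\star}(\tau_{h-1})\|_1\le|\Qa_{h-1}|\,\Pb(\tau_{h-1})$, and since $\sum_{\tau_{h-1}}\pi(\tau_{h-1})\Pb(\tau_{h-1})=1$ this gives $\sum_{\tau_{h-1}}\pi(\tau_{h-1})\|\bpsi^{\star}(\tau_{h-1})\|_1\le\max_h|\Qa_h|$; together with $|\lambda_l|\le1$ over the $r$ Barycentric coordinates this bounds $\sum_i\|f_{h,i}(\pi)\|_1=\cO(r\max_h|\Qa_h|)$. On the other side, the $q'$-th row of $\bM_h(o_h,a_h)$ is the weight vector $\m_2$ of the future $(o_h,a_h,q')\in\Omega^{(2)}_{h-1}$, so applying Condition~\ref{asp:psr} to $\m_2$ with $\x=a_l$ (under a policy uniform over core action sequences, for $\theta$ and $\theta^{\star}$ separately, then the triangle inequality and $\|g\|_\infty\le\|g\|_1$) gives $\sum_j\|g_{h,j}(\theta)\|_\infty=\cO(rA\max_h|\Qa_h|/\gamma)$. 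Multiplying the two bounds yields $C_{\rm norm}=\cO(Ar^2\max_h|\Qa_h|^2/\gamma)$, as claimed.

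The main obstacle is exactly the step-$h$ bookkeeping glossed over above. The feature dimension must stay equal to $r$ --- independent of $|\fO|$, of $|\cQ_h|$, and of the possibly exponential numbers $m,n$ of features --- while the bilinear form still correctly threads the history through the step-$h$ observation-action pair that sits in the \emph{middle} of the operator product, and while the conditioning of $\pi$ on $o_h$ is absorbed without paying a factor $|\fO|$ (a naive encoding that puts $o_h$ or $a_h$ as a free index of both feature families blows up the dimension and also loses factors of $A$ in $C_{\rm low}$, which would ruin the final $d_\Theta=(r\gamma^{-2}A^2\max_h|\Qa_h|)^2\poly(H)$). Getting this right --- so that the SAIL sum telescopes back to $Q(\pi,\theta)$ rather than an over-counted surrogate, with the $g$-side weight vectors all controlled through Condition~\ref{asp:psr} (or, using self-consistency in Theorem~\ref{thm:self-consistent}, through $\m_1$ alone) --- is the heart of the argument; once the features are in hand, the remainder is arithmetic substitution into Proposition~\ref{prop:step-3}.
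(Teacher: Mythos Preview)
Your construction is exactly the paper's: Barycentric-spanner matrix $\A_h$ on the $\bpsi^{\star}(\tau_{h-1})$ vectors, $f$-features indexed by histories $\tau_{h-1}$, $g$-features indexed by $(q',o_h,a_h)\in\cQ_h\times\fO\times\fA$, and the norm estimates you give match the paper's line by line.

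The ``obstacle'' you worry about in the last paragraph is not real. The paper simply puts $(o_h,a_h)$ \emph{only} on the $g$-side (as you already wrote) and takes $f_{h,\tau_{h-1}}(\pi)=\A_h^{\dagger}\bpsi^{\star}(\tau_{h-1})\,\pi(\tau_{h-1})$, with no dependence on $(o_h,a_h)$ at all. The resulting SAIL sum is then
\[
Q'(\pi,\theta)=\sum_{h}\sum_{\tau_{h-1}}\sum_{o_h,a_h}\big\|(\bM_h-\bM^{\star}_h)\bpsi^{\star}(\tau_{h-1})\big\|_1\,\pi(\tau_{h-1}),
\]
which is not your $Q$ (weighted by $\pi(\tau_h)$) but its majorant, since $\pi(\tau_h)\le\pi(\tau_{h-1})$. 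This is harmless: $Q\le Q'$ means Lemma~\ref{lem:step1} still yields $d_{\rm TV}\le(2\gamma)^{-1}Q'$, and the \emph{proof} of Lemma~\ref{lem:step2} already starts from $\pi(\tau_{h-1})$ on its left-hand side, so it bounds $Q'$ directly. There is no need to thread $o_h$ or $a_h$ through the $f$-side, no matching of step-$h$ tags between $f$ and $g$, and hence no danger of blowing up the dimension or losing factors of $A$ or $|\fO|$; the bookkeeping you flag as the ``heart of the argument'' is in fact a one-line inequality.
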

\begin{proof}[Proof of Lemma \ref{lem:psr-sail}]
We first construct the feature mappings. 
For $h=0$, let 
\begin{equation*}
    \begin{cases}
     f_{0,i}(\pi) := 1 \quad i\in[1], \\
     g_{0,q}(\widehat\theta): = \e_{q}\trans (\widehat\bpsi_0-\bpsi_0) \quad q\in\cQ_0. 
    \end{cases}
\end{equation*}
For any fixed $h>0$, by Lemma \ref{lem:barycentric}, there exists $\A_h\in\R^{|\cQ_{h-1}|\times r_{h-1}}$ such that $\forall \tau_{h-1}\in\cT_{h-1}$
\begin{equation*}
    \begin{cases}
     \A_h \A_h^\dagger\bpsi(\tau_{h-1})   = \bpsi(\tau_{h-1}), \\
     \| \A_h^\dagger\bpsi(\tau_{h-1}) \|_1 \le r_h \|\bpsi(\tau_{h-1})\|_1,\\
     \|\A_h\|_1 \le 1.
    \end{cases}
\end{equation*}
We construct the following feature mappings of dimension no larger than $ r=\max_h r_h$ (the rank of PSRs):
\begin{equation*}
    \begin{cases}
     g_{h,(o_h,a_h,q_h)}(\widehat\theta) \trans:= \e_{q_h}\trans (\widehat\bM_h(o_h,a_h) - \bM_h(o_h,a_h))\A_h , \quad (q_h,o_h,a_h) \in\cQ_h\times\fO\times\fA,\\
     f_{h,\tau_{h-1}}(\pi):=\A_h^\dagger\bpsi(\tau_{h-1}) \pi(\tau_{h-1}), \quad \tau_{h-1}\in(\fO\times\fA)^{h-1}.
    \end{cases}
\end{equation*}
We remark that for different $h$, the dimension of the features and the range of $i,j$ are different. Nonetheless, we can still apply Proposition \ref{prop:step-3} by simply (a) lifting the low dimensional features into high dimension by padding the added dimension with zeros, and (b) adding zero feature mappings to make the range of $i,j$ consistent across different steps. 

Now, we calculate the parameters $C_{\rm up}, C_{\rm low}, C_{\rm norm} $ for the above  feature mappings.
By  invoking Lemma \ref{lem:step1} and \ref{lem:step2} proved in the first two steps, we immediately obtain
$$
C_{\rm up} = \cO\left(\frac{1}{\gamma}\right) \qquad \text{and} \quad C_{\rm low}= \Omega\left(\frac{\gamma}{\max_h|\Qa_h|A^2}\right).
$$
Furthermore, by using the fact that $\|\A_h\|_1\le1$ and Condition \ref{asp:psr}, we have 
\begin{equation*}
\begin{aligned}
    \sum_{v,o_h,a_h} \|g_{h,(v,o_h,a_h)}(\widehat\theta)\|_\infty & \le \sum_{v,o_h,a_h} \|g_{h,(v,o_h,a_h)}(\widehat\theta)\|_1\\
    	= &  \sum_{o_h,a_h} \sum_{j=1}^{r} \left\|\left(\widehat\bM_h(o,a) - \bM_h(o,a)\right)\A_h \e_j  \right\|_1 
	 \le\frac{2|\Qa_h|Ar}{\gamma},\label{eqn:fbound}
\end{aligned}
\end{equation*}
and by using the definition of $\A_h$ as well as the probabilistic interpretation of $\bpsi(\tau_{h-1})$, we have
\begin{equation*}
    \sum_{\tau_{h-1}} \|f_{h,\tau_{h-1}}(\pi)\|_1 
            \le r \sum_{\tau_{h-1}}  \| \bpsi(\tau_{h-1})\|_1 \times \pi(\tau_{h-1})
              \le r |\Qa_{h-1}|.
\end{equation*}
As for $h=0$, we trivially have 
$\sum_{q\in\cQ_0}|g_{0,q}(\widehat\theta)|\le 2|\cQ_0^A|$ and $\sum_{i=1}^1 |f_{0,i}(\pi)|=1$. Therefore, we obtain
$$
C_{\rm norm}  \le R_f R_g \quad \mbox{where}\quad 
R_f = \cO\left(\frac{Ar\max_h|\Qa_h|}{\gamma}\right) \qquad \text{and} \quad R_g= \cO\left(r \max_h |\Qa_{h}|\right).
$$
\end{proof}

\subsection{Bracketing number of  tabular PSRs (Theorem \ref{thm:tabularPSR-bracket})}

Since any rank-$r$ PSR has system-dynamic matrices of rank no larger than $r$ \citep[e.g.,][]{singh2012predictive}, to prove Theorem \ref{thm:tabularPSR-bracket}, it suffices to upper bound the bracketing number of the collections of all rank-$r$ sequential decision making models with $O$ observations, $A$ actions and episode length $H$.  

The following theorem states that any rank-$r$ sequential decision making model can be represented by a finite number of operators, the size of which is at most $r$ by $r$ and the norm of which is properly upper bounded. 

\begin{theorem} [self-consistent OOM representation]\label{thm:oom-represent}
For any sequential decision making problem with ${\rm rank}(\D_h)=r_h$ for $h\in[0,H-1]$, there exist operators $\b_0$, $\{\B_h(o,a)\}_{(h,o,a)\in[H]\times\fO\times\fA}$ and $\{\bup_h\}_{h\in[0,H]}$ satisfying that   
\begin{enumerate}
    \item $\B_h\in\R^{r_h \times r_{h-1}}$ and $\|\B_h\|_2 \le 1$.
    \item $|\b_0| \le \sqrt{A^H}$.
    \item $\|\bup_h\|_2 \le \sqrt{(O/A)^{H-h}}$.
    \item For any $(h,a_h)\in[H]\times \fA$, $\bup_{h}\trans\sum_{o_h\in\fO} \B_h(o_h,a_h) = \bup_{h-1}\trans$. 
    \item For any $\tau_h\in \cT_h$, $\P(o_{1:h} \mid a_{1:h})=\bup_h\trans \B_h(o_h,a_h)\cdots\B_1(o_1,a_1) \b_0$.
\end{enumerate}
\end{theorem}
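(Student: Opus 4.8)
The plan is to construct the operators $(\b_0, \{\B_h(o,a)\}, \{\bup_h\})$ explicitly from the SVD of the system-dynamic matrices $\D_h$, and then verify the five claimed properties by direct computation. The starting point is the observation that, for each $h \in [0,H-1]$, the matrix $\D_h \in \R^{|\cT_h|\times|\Omega_h|}$ has rank $r_h$, so its transpose admits a thin SVD $\D_h\trans = \U_h \Sigma_h \V_h\trans$ with $\U_h \in \R^{|\Omega_h|\times r_h}$ having orthonormal columns. The natural guess (which is the one used in the body of the paper for Theorem~\ref{thm:self-consistent}) is
\[
\b_0 = \|\D_0\|_2, \qquad \B_h(o_h,a_h) = \U_h\trans [\U_{h-1}]_{(o_h,a_h,\Omega_h),:}, \qquad \bup_h\trans = \frac{1}{A^{H-h}}\mathbf{1}\trans \U_h,
\]
where $[\U_{h-1}]_{(o_h,a_h,\Omega_h),:}$ is the $|\Omega_h|\times r_{h-1}$ submatrix of $\U_{h-1}$ collecting the rows indexed by futures of the form $(o_h,a_h,\omega_h)$ with $\omega_h\in\Omega_h$ (this indexing makes sense because a future at step $h-1$ that begins with $(o_h,a_h)$ is exactly a future at step $h$ prefixed by that observation-action pair). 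The key algebraic identity I would prove first is a ``prefix-consistency'' claim: $\U_h\U_h\trans [\D_h\trans]_{:,\tau_h} = [\D_h\trans]_{:,\tau_h}$ for every history (since columns of $\D_h\trans$ lie in the column space of $\U_h$), together with the analogue of Claim~\ref{claim:oom-operator-product} relating consecutive $\U_h$'s via the Markov-like factorization of $\D_h$.

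First I would establish the representation property (item 5): by telescoping, $\bup_h\trans \B_h(o_h,a_h)\cdots\B_1(o_1,a_1)\b_0$ collapses—using $\U_j\trans\U_j = \I_{r_j}$—to an expression involving only $\bup_h\trans\U_h\trans$ times a product of submatrices of the $\D$'s hitting $\b_0$, which I would identify with $\P(o_{1:h}\mid a_{1:h})$ by unwinding the definition \eqref{eq:system_dynamic_matrix} and the choice $\b_0 = \|\D_0\|_2 = \P(\emptyset) = 1$ (the empty-trajectory probability). Item~4 (the normalization $\bup_h\trans\sum_{o_h}\B_h(o_h,a_h) = \bup_{h-1}\trans$) follows from $\sum_{o_h} [\U_{h-1}]_{(o_h,a_h,\Omega_h),:}$ reassembling, up to rows, the full matrix $\U_{h-1}$ restricted to a given $a_h$, combined with the fact that $\mathbf{1}\trans\U_h\U_h\trans$ acting on the right reduces the dimension count by a factor of $A$ each time—this is precisely where the $1/A^{H-h}$ scaling is chosen to absorb the $A$ actions summed over at the next step. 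For the norm bounds: item~1, $\|\B_h\|_2\le 1$, holds because $\B_h(o_h,a_h)$ is a product $\U_h\trans$ (orthonormal, so spectral norm $1$) with a submatrix of $\U_{h-1}$ (whose rows are a subset of orthonormal rows, hence spectral norm $\le 1$). Item~2, $|\b_0| = |\P(\emptyset)| = 1 \le \sqrt{A^H}$, is trivial. Item~3, $\|\bup_h\|_2 \le \sqrt{(O/A)^{H-h}}$, is the delicate one: $\|\mathbf{1}\trans\U_h\|_2 \le \|\mathbf{1}\|_2 = \sqrt{|\Omega_h|}$ and $|\Omega_h| \le (OA)^{H-h}$, so $\|\bup_h\|_2 \le \sqrt{(OA)^{H-h}}/A^{H-h} = \sqrt{(O/A)^{H-h}}$, exactly matching the claim.

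The main obstacle, I expect, is not any single inequality but the bookkeeping around the index set $(o_h,a_h,\Omega_h)$ and verifying that the submatrix-selection operations commute correctly with the SVD factors—i.e., proving cleanly that $[\U_{h-1}]_{(o_h,a_h,\Omega_h),:}$ (rows of the step-$(h-1)$ left-singular matrix) really does act as the transition-style operator that advances the predictive state from step $h-1$ to step $h$. This requires the identity that $\D_{h-1}$ and $\D_h$ are linked by $[\D_{h-1}]_{\tau_{h-1},(o_h,a_h,\omega_h)} = [\D_h]_{(\tau_{h-1},o_h,a_h),\omega_h}$ (both equal $\Pb$ of the same full trajectory), from which one extracts that the column spaces nest appropriately and the product $\B_h(o_h,a_h)\cdots\B_1(o_1,a_1)\b_0 = \U_h\trans[\D_h\trans]_{:,\tau_h}$ after using $\U_j\U_j\trans$ as a projection onto $\colspan{\U_j}\supseteq$ the relevant columns. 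Once that lemma is in hand, items 4 and 5 are short telescoping arguments and items 1--3 are immediate from orthonormality and crude cardinality bounds on $\Omega_h$. I would also note that this theorem is essentially the norm-controlled refinement underlying Theorem~\ref{thm:self-consistent}, so I would prove this one first and obtain Theorem~\ref{thm:self-consistent} as a corollary by the construction already sketched in the body (defining $\bpsi_0, \bM_h, \bphi_h$ via $[\U_h]_{\cQ_h,:}$ and its pseudo-inverse).
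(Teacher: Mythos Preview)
Your construction matches the paper's exactly, and your handling of items 1, 3, and 5 (including the key identity $\B_h(o_h,a_h)\cdots\B_1(o_1,a_1)\b_0 = \U_h\trans[\D_h\trans]_{:,\tau_h}$) is correct. There are two issues, one minor and one substantive.

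\textbf{Item 2 (minor).} You write $|\b_0| = |\P(\emptyset)| = 1$, but $\b_0 = \|\D_0\|_2$ where $\D_0\in\R^{1\times(OA)^H}$ is the row vector with entries $\P(o_{1:H}\mid a_{1:H})$; its $\ell_2$-norm is not $1$ in general. The correct bound is
\[
\|\D_0\|_2^2 = \sum_{a_{1:H}}\sum_{o_{1:H}}\P(o_{1:H}\mid a_{1:H})^2 \le \sum_{a_{1:H}}1 = A^H,
\]
using that for each fixed action sequence the observation probabilities sum to $1$ and hence square-sum to at most $1$.

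\textbf{Item 4 (genuine gap).} Your ``reassembly'' argument does not go through. Expanding
\[
\bup_h\trans\sum_{o_h}\B_h(o_h,a_h) = \frac{1}{A^{H-h}}\mathbf{1}\trans\U_h\U_h\trans\sum_{o_h}[\U_{h-1}]_{(o_h,a_h,\Omega_h),:},
\]
you must pass through the projection $\U_h\U_h\trans$; there is no reason $\mathbf{1}\trans\U_h\U_h\trans$ acts like $\mathbf{1}\trans$ on that submatrix. Even ignoring the projection, $\sum_{o_h}\mathbf{1}\trans[\U_{h-1}]_{(o_h,a_h,\Omega_h),:}$ is the sum of only those rows of $\U_{h-1}$ whose first action equals $a_h$, which is not $\frac{1}{A}$ of the full row sum unless the SVD happens to have an accidental symmetry across actions. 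The paper instead argues indirectly: from the representation property (item 5, via the key identity above) and the marginalization $\sum_{o_h}\P(o_{1:h}\mid a_{1:h}) = \P(o_{1:h-1}\mid a_{1:h-1})$, both $\bup_{h-1}\trans$ and $\bup_h\trans\sum_{o_h}\B_h(o_h,a_h)$ solve the linear system
\[
\x\trans\U_{h-1}\trans\D_{h-1}\trans = \frac{1}{A^{H-h+1}}\mathbf{1}\trans\D_{h-1}\trans
\]
in $\x\in\R^{r_{h-1}}$. Since $\U_{h-1}\trans\D_{h-1}\trans\in\R^{r_{h-1}\times|\cT_{h-1}|}$ has full row rank $r_{h-1}$, the solution is unique, forcing the two vectors to coincide. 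This uniqueness trick is the missing ingredient in your sketch.
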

\begin{proof}[Proof of Theorem \ref{thm:oom-represent}]
WLOG, define $\D_H\in\R^{|\cT_H|\times 1}$ such that $[\D_H]_{\tau_H}=\Pb(\tau_H)$. 
We consider the SVD of $\D_h\trans$: $\D_h\trans= \U_h \Sigma_h \V_h\trans$ and define  
$$
\b_0 = \|\D_0\|_2 \quad \B_h(o_h,a_h) = \U_h\trans [\U_{h-1}]_{(o_h,a_h,\Omega_{h}),:} \quad \bup_h\trans = \frac{1}{A^{H-h}}\mathbf{1}\trans \U_h,
$$
where $[\U_{h-1}]_{(o_h,a_h,\Omega_{h}),:}$ denotes an $|\Omega_h|$ by $r_{h-1}$ submatrix  of $\U_{h-1}$, which is made up of the rows with  indices  of form $(o_h,a_h,\omega_h)$ with $\omega_h\in\Omega_h=(\fO\times\fA)^{H-h}$. 

It is straightforward to verify that  these operators satisfy the norm condition  1-3. It remains to prove they also satisfy condition 4-5. 
To verify condition $5$, we first  prove the following claim. 
\begin{claim}\label{claim:oom-operator-product}
For any $h\in\{0,\ldots,H\}$ and $\tau_h\in\cT_h$, $\B_h(o_h,a_h)\cdots\B_1(o_1,a_1) \b_0 = 
\U_h\trans [\D_h\trans]_{:,\tau_h}$.
\end{claim}
\begin{proof}[Proof of Claim \ref{claim:oom-operator-product}]
For the base case $h=0$, the history is empty and $\D_0\trans$ is simply a vector in  $\R^{(OA)^H}$.
Therefore, $\U_0 = \D_0\trans/\|\D_0\trans\|_2$ and $\U_0\trans \D_0\trans  = \D_0 \D_0\trans /\|\D_0\|_2 = \|\D_0\|_2 = \b_0$.
Next we prove by induction that the claim holds for all $h\in\{0,\ldots,H\}$. 
Assume the claim holds for step $h-1$. At step $h$, by using the induction hypothesis and the definition of $\B_h(o_h,a_h)$
\begin{align*}
    \B_h(o_h,a_h)\cdots\B_1(o_1,a_1) \b_0  
    & = \U_h\trans [\U_{h-1}]_{(o_h,a_h,\Omega_{h}),:} \U_{h-1}\trans [\D_{h-1}\trans]_{:,\tau_{h-1}} \\
    & =\U_h\trans \left[\U_{h-1}\U_{h-1}\trans \D_{h-1}\trans \right]_{(o_h,a_h,\Omega_{h}),\tau_{h-1}} \\
    & =\U_h\trans   \left[\D_{h-1}\trans\right]_{(o_h,a_h,\Omega_{h}),\tau_{h-1}} =\U_h\trans   [\D_h\trans]_{:,\tau_h},
\end{align*}
where the second equality follows from $[\mat{A}]_{I,:} [\mat{C}]_{:,J}  =  [\mat{A}\mat{C}]_{I,J}$, the third one uses the fact that $\U_{h-1}\U_{h-1}\trans$ is the projection matrix for the column space of $\D_{h-1}\trans$, and the final one uses the  definition of $\D_{h-1}$ and $\D_h$.
\end{proof}
Given Claim \ref{claim:oom-operator-product}, condition 5 follows naturally:
\begin{align*}
\bup_h\trans \B_h(o_h,a_h)\cdots\B_1(o_1,a_1) \b_0 
 & = \frac{1}{A^{H-h}}\mathbf{1}\trans\U_h\U_h\trans  [\D_h\trans]_{:,\tau_h}\\
 & = \frac{1}{A^{H-h}}\mathbf{1}\trans [\D_h\trans]_{:,\tau_h}\\
 & = \frac{1}{A^{H-h}} \sum_{o_{h+1:H},a_{h+1:H}} \P(o_{1:H} \mid a_{1:H}) = \P(o_{1:h} \mid a_{1:h}).
\end{align*}

It remains to prove condition 4. 
By condition 5 and Claim \ref{claim:oom-operator-product}, $\bup_h$ satisfies the following linear equation:
$$
\bup_h\trans\U_h\trans\D_h\trans = \frac{1}{A^{H-h}}\mathbf{1}\trans\D_h\trans.
$$
Besides, by using the basic fact that  $\sum_{o_{h+1}}\P(o_{1:h+1} \mid a_{1:h+1}) = \P(o_{1:h} \mid a_{1:h})$ for any $a_{h+1}\in\fA$, we  have that for any $a_{h+1}\in\fA$: 
$$
\left(\bup_{h+1}\trans\sum_{o_{h+1}} \B_{h+1}(o_{h+1},a_{h+1}) \right)\U_h\trans\D_h\trans = \frac{1}{A^{H-h}}\mathbf{1}\trans\D_h\trans.
$$
As a result, $\bup_h\trans$ and $\bup_{h+1}\trans\sum_{o_{h+1}} \B_{h+1}(o_{h+1},a_{h+1})$ are both solutions to $\x\trans\U_h\trans\D_h\trans = \frac{1}{A^{H-h}}\mathbf{1}\trans\D_h\trans$. Notice that $\U_h\trans\D_h\trans \in \R^{r_h\times (OA)^h}$ has rank $r_h$, which implies the linear equation has a unique solution. Therefore, we conclude $\bup_h\trans=\bup_{h+1}\trans\sum_{o_{h+1}} \B_{h+1}(o_{h+1},a_{h+1})$.
\end{proof}

The following corollary is a direct consequence of Theorem \ref{thm:oom-represent}. 
\begin{corollary}
\label{cor:oom-represent}
For any sequential decision making problem with $\max_h{\rm rank}(\D_h)\le r$, there exist operators $\b_0\in\R^r$,  $\{\B_h(o,a)\}_{(h,o,a)\in[H]\times\fO\times\fA}$
$\subseteq \R^{r\times r}$ and $\bup_H\in\R^r$ satisfying that   \begin{enumerate}
    \item  $\|\B_h(o,a)\|_2 \le 1$, 
     $\|\b_0\|_2 \le \sqrt{A^H}$, and $\|\bup_H\|_2 \le \sqrt{(O/A)^{H-h}}$.
    \item For any $\tau_H\in \cT_H$, $\P(o_{1:H} \mid a_{1:H})=\bup_H\trans \B_H(o_H,a_H)\cdots\B_1(o_1,a_1) \b_0$.
\end{enumerate}
\end{corollary}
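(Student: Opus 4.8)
The plan is to read off Corollary \ref{cor:oom-represent} as an immediate repackaging of Theorem \ref{thm:oom-represent}. Applying that theorem to the given sequential decision making problem, with $r_h := \rank(\D_h) \le r$ for $h \in [0,H-1]$ (and $r_H := 1$, consistent with the convention $\D_H \in \R^{|\cT_H| \times 1}$ used in its proof), already yields operators $\b_0 \in \R^{r_0}$, $\{\B_h(o,a) \in \R^{r_h \times r_{h-1}}\}_{(h,o,a)}$ and $\{\bup_h \in \R^{r_h}\}_{h \in [0,H]}$ obeying $\norm{\B_h(o,a)}_2 \le 1$, $\abs{\b_0} \le \sqrt{A^H}$, $\norm{\bup_h}_2 \le \sqrt{(O/A)^{H-h}}$, together with the trajectory identity $\P(o_{1:H} \mid a_{1:H}) = \bup_H\trans \B_H(o_H,a_H) \cdots \B_1(o_1,a_1) \b_0$ for every $\tau_H \in \cT_H$ (condition 5 of the theorem at step $H$). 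The only gap to the corollary is that these operators live in spaces of the varying dimensions $r_h$ rather than uniformly in $\R^r$.

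First I would close this gap by zero-padding. Let $\widehat\B_h(o,a) \in \R^{r \times r}$ be the matrix whose top-left $r_h \times r_{h-1}$ block equals $\B_h(o,a)$ and whose other entries are zero, and let $\widehat\b_0, \widehat\bup_H \in \R^r$ be the extensions of $\b_0$ and $\bup_H$ by zeros in coordinates $r_0+1,\ldots,r$ and $r_H+1,\ldots,r$ respectively. Zero-padding changes neither the spectral norm of a matrix nor the Euclidean norm of a vector, so the bounds $\norm{\widehat\B_h(o,a)}_2 \le 1$, $\norm{\widehat\b_0}_2 \le \sqrt{A^H}$ and $\norm{\widehat\bup_H}_2 \le 1$ (the last being condition 3 of the theorem at $h = H$) are inherited verbatim, giving condition 1 of the corollary.

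The last step is to verify the trajectory identity survives the embedding, which is the only point where any care is needed and is still minor. The output coordinates $1,\ldots,r_h$ of $\widehat\B_h(o,a)$ are exactly the input coordinates of $\widehat\B_{h-1}(o,a)$, so block-matrix multiplication shows that $\widehat\B_H(o_H,a_H) \cdots \widehat\B_1(o_1,a_1) \widehat\b_0$ is precisely the zero-padding into $\R^r$ of $\B_H(o_H,a_H) \cdots \B_1(o_1,a_1) \b_0$, supported in coordinates $1,\ldots,r_H$; taking the inner product with $\widehat\bup_H$, whose support lies in those same coordinates, recovers $\bup_H\trans \B_H(o_H,a_H) \cdots \B_1(o_1,a_1) \b_0 = \P(o_{1:H} \mid a_{1:H})$, which is condition 2. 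The degenerate case $r_0 = 0$ (that is, $\D_0 = 0$) makes the dynamics identically zero and is trivial. I anticipate no real obstacle here: all the content sits in Theorem \ref{thm:oom-represent}, which is built from truncated singular value decompositions of the system-dynamic matrices $\D_h$, and the corollary merely restates it with a common dimension.
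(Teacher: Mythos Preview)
Your proposal is correct and follows exactly the same approach as the paper: invoke Theorem \ref{thm:oom-represent} and then zero-pad the resulting variable-dimension operators up to the common dimension $r$. The paper's own proof is a single sentence saying precisely this (``add dummy columns and rows filled by zero''), whereas you spell out the block structure and verify explicitly that the norm bounds and the trajectory identity survive; there is no substantive difference.
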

\begin{proof}[Proof of Corollary \ref{cor:oom-represent}]
 In Theorem \ref{thm:oom-represent}, we construct a set of operators that satisfy the above two conditions, the size of which is no larger than $r$ or  $r\times r$. To make their size exactly $r$ or  $r\times r$, we can simply add dummy columns and rows filled by zero.  
\end{proof}

\begin{theorem}[bracketing number of low-rank sequential decision making problems]\label{thm:sdm-cover}
Let $\fM$ be the collections of all rank-$r$ sequential decision making problems with $O$ observations, $A$ actions and episode length $H$, then we have 
$$
\log \cN_\epsilon(\fM) \le \cO\left(r^2OAH^2 \log(rOAH/\epsilon)\right).
$$
\end{theorem}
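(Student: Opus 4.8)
\textbf{Proof plan for Theorem \ref{thm:sdm-cover}.}

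The plan is to build an explicit $\epsilon$-bracketing cover of $\fM$ by discretizing the operator representation guaranteed by Corollary \ref{cor:oom-represent}. First I would recall that every rank-$r$ sequential decision making problem $M\in\fM$ can be written as $\P_M(o_{1:H}\mid a_{1:H}) = \bup_H\trans \B_H(o_H,a_H)\cdots\B_1(o_1,a_1)\b_0$, where $\b_0\in\R^r$ with $\|\b_0\|_2\le\sqrt{A^H}$, each $\B_h(o,a)\in\R^{r\times r}$ with $\|\B_h(o,a)\|_2\le 1$, and $\bup_H\in\R^r$ with $\|\bup_H\|_2\le\sqrt{(O/A)^{H}}$. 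Counting the free parameters: there is one vector $\b_0$ ($r$ entries), one vector $\bup_H$ ($r$ entries), and $HOA$ matrices $\B_h(o,a)$ (each $r^2$ entries), for a total of $N := r + r + HOA\cdot r^2 = O(r^2 OAH)$ real parameters, all lying in a ball of radius $R := \max\{\sqrt{A^H},\sqrt{(O/A)^H},1\}$ in Euclidean norm. I would then take a $\delta$-net (in $\ell_\infty$, say) of this parameter ball for a discretization granularity $\delta$ to be chosen; such a net has cardinality at most $(3R/\delta)^N$.

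The second step is a perturbation/stability estimate: if $\hat M$ is obtained from $M$ by replacing each operator with its nearest net point, then I would show the resulting trajectory probabilities differ pointwise by at most $\epsilon/(2\cdot(OA)^H)$ — or more precisely, the policy-weighted $\ell_1$ distance $\max_\pi\sum_{\tau_H}|\P_M(\tau_H)-\P_{\hat M}(\tau_H)|\pi(\tau_H)$ is at most $\epsilon/2$ — provided $\delta$ is taken small enough, something like $\delta = \epsilon\cdot r^{-O(1)}\cdot (OA)^{-O(H)}$ (the poly-in-$r$ and exponential-in-$H$ blowups come from telescoping the product of $H+2$ operators and from worst-case norm bounds). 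Concretely, writing $\P_M - \P_{\hat M}$ as a telescoping sum over which operator is swapped, and bounding each term using $\|\B_h\|_2\le 1$, $\|\b_0\|_2\le\sqrt{A^H}$, $\|\bup_H\|_2\le\sqrt{(O/A)^H}$ and $\|\B_h-\hat\B_h\|_2 \le r\delta$, each of the $H+2$ telescoping terms is $O(\sqrt{O^H}\cdot r\delta)$; summing over the $H+2$ terms and over all $(OA)^H$ trajectories (the $\pi$-weighting only helps) gives a bound of the form $\mathrm{poly}(H)\cdot(OA)^{O(H)}\cdot r\delta$, which I set equal to $\epsilon/2$ and solve for $\delta$.

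The third step assembles the bracket: for each net point $M_j$ I would define the upper bracket function $u_j(\tau_H) := \P_{M_j}(\tau_H) + \epsilon/(2(OA)^H)$ and lower bracket $l_j(\tau_H) := \max\{0,\P_{M_j}(\tau_H)-\epsilon/(2(OA)^H)\}$ (adapting slightly so the $\ell_1$-width in the policy-weighted norm is genuinely below $\epsilon$); then every $M\in\fM$ lies in some bracket by the step-two estimate, and each bracket has width $\le\epsilon$. Hence $\cN_\epsilon(\fM) \le (3R/\delta)^N$, and taking logs, $\log\cN_\epsilon(\fM) \le N\log(3R/\delta) = O(r^2 OAH)\cdot\log\!\big(\mathrm{poly}(H)(OA)^{O(H)} r/\epsilon\big) = O(r^2 OAH\cdot (H\log(OA) + \log(r/\epsilon)))$; absorbing the $H\log(OA)$ factor into the stated bound yields $\log\cN_\epsilon(\fM)\le O(r^2 OAH^2\log(rOAH/\epsilon))$ as claimed. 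The main obstacle is the step-two stability estimate: one must be careful that the product-of-operators structure does not cause the error to blow up faster than $(OA)^{O(H)}$ — in particular, one should control the operator products $\B_h(o_h,a_h)\cdots\B_1(o_1,a_1)\b_0$ using their \emph{probabilistic} interpretation (they equal $\U_h\trans[\D_h\trans]_{:,\tau_h}$, a subvector of true probabilities, hence have $\ell_1$-norm at most $1$ after appropriate normalization) rather than crude worst-case spectral bounds, which is what keeps the exponent of $(OA)$ linear in $H$ and thus keeps the final bound polynomial; everything else is routine net-counting and triangle inequalities.
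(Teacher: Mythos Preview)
Your proposal is correct and follows essentially the same route as the paper: parameterize each model via the bounded operator representation of Corollary~\ref{cor:oom-represent}, take an $\ell_\infty$-net on the $\cO(r^2OAH)$ parameters at scale $\delta=\epsilon\cdot(OA)^{-cH}$, telescope the operator product to get a pointwise perturbation bound, shift by a small constant to form the brackets, and count. Your write-up is in fact more explicit than the paper's (which dispatches the perturbation step as ``by simple calculation''); in particular your choice of additive shift $\epsilon/(2(OA)^H)$ is the right one to make the policy-weighted $\ell_1$ bracket width genuinely $\le\epsilon$, and your remark that the probabilistic interpretation of $\B_h\cdots\B_1\b_0$ can tighten the stability estimate is correct but unnecessary, since the crude spectral bounds already keep the blowup at $(OA)^{\cO(H)}$.
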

\begin{proof}[Proof of Theorem \ref{thm:sdm-cover}]
Let $c\in\R^+$ be an absolute constant large enough, and  $\cC_\delta$ be a $\delta$-cover of 
$$
\left\{ \b_0\in\R^r,~  \{\B_h(o,a)\}_{(h,o,a)\in[H]\times\fO\times\fA}\subseteq \R^{r\times r},~ \bup_H\in\R^r:~\text{the two conditions in Corollary \ref{cor:oom-represent}}\right\},
$$
with respect to $\ell_\infty$-norm and 
with $\delta:=\epsilon\cdot(OA)^{-cH}$.

Given a set of operators $\cM:=(\b_0,\{\B_h(o,a)\},\bup_H)\in\cC_\delta$, we define $$
\widehat{\P}_\cM(o_{1:H}\mid a_{1:H}) := \bup_H\trans \B_H(o_H,a_H)\cdots\B_1(o_1,a_1) \b_0 + \epsilon/2.
$$
By simple calculation, one can verify that the collections of all such $\widehat{\P}_\cM$ with $\cM\in\cC_\delta$ constitute  an $\epsilon$-bracketing cover for $\fM$. Finally, we conclude the proof by noting that $$
\log |\cC_\delta| = \cO\left(r^2OAH \log(rOAH/\delta)\right)
= \cO\left(r^2OAH^2 \log(rOAH/\epsilon)\right).
$$
\end{proof}

Since any  rank-$r$ PSR family is a subset of the collections of all rank-$r$ sequential decision making problems, the upper bound in Theorem \ref{thm:sdm-cover} is also a valid upper bound for the bracketing number of any rank-$r$ PSR family, which proves Theorem \ref{thm:tabularPSR-bracket}.


\section{Proofs for Important PSR subclasses}
\label{app:psr-examples}

In this section, we prove the results for  various families of  POMDPs in Section \ref{sec:psr-example}.

\paragraph{POMDP notations.} For the convenience of readers, here we briefly review several important notations for POMDPs in the finite-observation setting:
\begin{itemize}
    \item $\bmu_1\in\R^S$ denotes the  distribution over the initial latent state $s_1$ at step $1$.
    \item $\T_{h,a}\in\R^{S\times S}$ denotes the transition matrix of action $a$ at step $h$, the $(s',s)\th$ entry of which is equal to the probability of transitioning to $s'$ provided that  action $a$ is taken at state $s$ and step $h$.
    \item $\O_h\in\R^{O\times S}$ denotes the observation matrix at step $h$, the $(o,s)\th$ entry of which is equal to the probability of observing $o$ at step $h$ conditioning on the current latent state being $s$. We will also use $\O_h(o\mid \cdot)$ to denote the $o\th$ row of matrix $\O_h$.
    \item $
\{\M_h\in\R^{(A^{m-1}O^m)\times S}\}_{h\in[H-m+1]}
$ define the $m$-step emission-action matrices such that 
for an observation sequence $\o$ of length $m$, initial state $s$ and action sequence $\a$ of length $m-1$,
we have  $[\M_h]_{(\a,\o),s}$ equal to  the probability of receiving  $\o$ provided that the action sequence $\a$ is used from state $s$ and step $h$: 
 \begin{equation*}
     [\M_h]_{(\a,\o),s}= \P(o_{h:h+m-1}=\o \mid 
     s_h=s,a_{h:h+m-2} =\a )\quad \text{for all } (\a,\o)\in \fA^{m-1}\times\fO^m  \text{ and } s\in\fS.
 \end{equation*}
 We remark that $\M$ is determined by $\O$ and $\T$, and 
  we have $\M=\O$ when $m=1$. 
\end{itemize}

\subsection{Proofs for finite-observation observable POMDPs}\label{app:observable-finite}

\begin{proof}[Proof of Theorem \ref{thm:pomdp} with finite observations]

The goal is to construct  PSR representations that satisfy Condition \ref{asp:psr} with $\gamma=\cO(\alpha/(S+A^{m-1}))$.
 The definitions of most notations used below can be found at  the beginning of Appendix \ref{app:psr-examples}.

\paragraph{PSR representation.}
Given a POMDP parameterization  $\theta=(\O,\T,\bmu_1)$ with $m$-step observation-action matrices  $\{\M_h\}_{h\in[H-m+1]}$, we construct the following PSR representations:
\begin{align*} 
\begin{cases}
\bpsi_0 = \M_1 \bmu_1,\\
    \bM_h(o,a) = \M_{h+1} \T_{h,a} \diag(\O_h(o\mid\cdot)) (\M_h^\dagger+\Y_h) \quad \mbox{for } h\in[H-m],\\
    \bM_h(o,a) = [ \mathbf{1}([o,a,\omega_{h}] =\omega_{h-1})  ]_{\omega_{h}\in\Omega_{h},\omega_{h-1}\in\Omega_{h-1}}\in \R^{|\Omega_{h}|\times |\Omega_{h-1}|} \quad \mbox{for}\quad H-m< h<H,\\
     \bphi_H(o,a) =\e_{(o,a)} \in \R^{OA},\\
    \cQ_h= (\fO\times\fA)^{\min\{m-1,H-h\}}\times\fO ~\text{ and }~ \cQ_h^A = \fA^{\min\{m-1,H-h\}},
\end{cases}
\end{align*}
 where  $\Y_h \in \arg\min_{\tilde \Y\M_h = 0} \|\M_h^\dagger + \tilde \Y\|_1$ and $\Omega_h=(\fO\times\fA)^{H-h}$.
 
By repeatedly  applying Bayesian's rule, one can easily verify the above PSR representation  satisfy the basic PSR probabilistic definition under Condition \ref{cond:pomdp}: 
$$
\begin{cases}
\Pb(o_{1:H}|a_{1:H}) = \bphi_H(o_H, a_H)\trans \bM_{H-1}(o_{H-1}, a_{H-1}) \cdots \bM_1(o_1, a_1)\bpsi_0,\\
\Pb(\tau_h,\cQ_h) =  \bM_{h}(o_{h}, a_{h}) \cdots \bM_1(o_1, a_1)\bpsi_0.
\end{cases}
$$

\paragraph{Verify Condition \ref{asp:psr}.} Next, we upper bound $\gamma$ for the above operators.     By direct calculation using the definition, we have 
 \begin{equation*}
     \m_1(\omega_h)\trans = \begin{cases}
         \Pb(\omega_h\mid s_{h+1}=\cdot)\trans   (\M_{h+1}^\dagger+\Y_{h+1} ), &h\in[H-m-1],\\
         \e_{\omega_h}\trans, & \text{otherwise}.
     \end{cases}
 \end{equation*}
 In the first case,  given any $\x\in\R^{O^m A^{m-1}}$
  \begin{equation*}
 \begin{aligned}
 & \sum_{\omega_h\in\Omega_h^{(1)}} | \m_1(\omega_h)\trans \x | \times \pi(\omega_h)\\ 
 = &  \sum_{\omega_h\in\Omega_h^{(1)}} |  \Pb(\omega_h\mid s_{h+1}=\cdot)\trans   (\M_{h+1}^\dagger+\Y_{h+1} )\x  | \times \pi(\omega_h ) \\
  \le & \sum_{\omega_h\in\Omega_h^{(1)}} \sum_{i\in [S]}   \Pb(\omega_h\mid s_{h+1}=i) \times    |\e_i\trans(\M_{h+1}^\dagger+\Y_{h+1} )\x| \times \pi(\omega_h ) \\
  = &   \|(\M_{h+1}^\dagger+\Y_{h+1} )\x \|_1  \le \cO\left(\frac{S}{\alpha}\right)\times \|\x\|_1 ,
 \end{aligned}
 \end{equation*}
 where the final inequality uses Lemma \ref{lem:Sep26} and  \ref{lem:inverse-O}. In the second case,  given any $\x\in\R^{O^{H-h}A^{H-h-1}}$
  \begin{equation*}
 \begin{aligned}
  \sum_{\omega_h\in\Omega_h^{(1)}} | \m_1(\omega_h)\trans \x | \times \pi(\omega_h)=
   \sum_{\omega_h\in\Omega_h^{(1)}} |  \e_{\omega_h}\trans \x  | \times \pi(\omega_h ) \le \|\x\|_1.
  \end{aligned}
 \end{equation*}
 
As for the second set of linear weight vectors, we have  for $h\in[H-m]$, 
  \begin{equation*}
 \begin{aligned}
 & \sum_{\omega_{h-1}\in\Omega_{h-1}^{(2)}} | \m_2(\omega_{h-1})\trans \x | \times \pi(\omega_{h-1})\\ 
 =&\sum_{o_h,a_h} \sum_{\omega_h\in\cQ_h} | \e_{\omega_h}\trans\bM_h(o_h,a_h) \x | \times \pi(o_h,a_h,\omega_h) \\
 = & \sum_{o_h,a_h} \sum_{\omega_h\in\cQ_h} | \e_{\omega_h}\trans \M_{h+1} \T_{h,a_h} \diag(\O_h(o_h\mid\cdot)) (\M_h^\dagger+\Y_h) \x | \times \pi(o_h,a_h,\omega_h) \\
 \le & \sum_{s_h,o_h,a_h}\sum_{\omega_h\in\cQ_h} | \e_{\omega_h}\trans\M_{h+1} \T_{h,a_h} \diag(\O_h(o_h\mid\cdot)) \e_{s_h}| \times | \e_{s_h}\trans  (\M_h^\dagger+\Y_h) \x | \times \pi(o_h,a_h,\omega_h) \\
 =& \sum_{s_h} | \e_{s_h}\trans  (\M_h^\dagger+\Y_h) \x | \le\cO\left( \frac{S}{\alpha}\right)\times  \|\x\|_1,
 \end{aligned}
 \end{equation*}
 where the final inequality uses Lemma \ref{lem:Sep26} and  \ref{lem:inverse-O}. And for $h>H-m$,
   \begin{equation*}
 \begin{aligned}
\sum_{o_h,a_h}\sum_{\omega_h\in\cQ_h} | \e_{\omega_h}\trans\bM_h(o_h,a_h) \x | \times \pi(o_h,a_h,\omega_h)  =    \sum_{o_{h:H},a_{h:H}} |  \e_{(o_{h:H},a_{h:H})}\trans  \x |\times \pi(o_h,a_h,\omega_h)  \le \|\x\|_1.
 \end{aligned}
 \end{equation*}

 \paragraph{Rank of the PSRs.} By Lemma \ref{lem:decodable-linearMDPs}, we have $r\le S$.
\end{proof}

\subsection{Proofs for continuous-observation observable POMDPs}\label{app:observable-continuous}

We first set up some basic notations for continuous POMDPs. 
Let $\O_h(\cdot \mid i)$ be the density function of $o_h$ conditioning on $s_h=i$. 
We will use $f(o_{1:h}\mid a_{1:h})$ to denote the density at $o_{1:h}$ conditioning on taking a fixed action sequence $a_{1:h}$.

To begin with, we prove the following lemma which at a high level states that for any $\alpha$-observable POMDP family  with finite covering number, we can group its observations in a way so that the induced meta-POMDPs not only have a  finite (but potentially arbitrarily large, e.g., double exponentially many)  meta-observation space  but  also are $\alpha/2$-observable. 
\begin{lemma}[Observation grouping]\label{lem:discrete}
Given any $\alpha$-observable POMDP class $\Theta$ with finite covering number at precision level  $\alpha/8$, there exists  $n\in\N^+$ and $\phi:\fO\rightarrow\{0,1\}^n$ so that  $\Theta_\phi:=\{\theta_\phi:~\theta\in\Theta\}$ is $\alpha/2$-observable, where $\theta_\phi$ denotes a meta-POMDP generated by applying $\phi$ to model $\theta$ such that 
\begin{itemize}
    \item the observation space of $\theta_\phi$ is  $\fO_\phi=\{\phi(o):~o\in\fO\}$,
    \item the observation probability in $\theta_\phi$ is defined as  $\O_{\theta_\phi,h}(y\mid s) = \int_{o\in\fO} \O_{\theta,h}(o\mid s) \mathbf{1}(\phi(o)=y) do$ for $y\in\fO_\phi$~,
    \item the underlying MDP part of $\theta_\phi$ is the same as $\theta$.
\end{itemize} 
\end{lemma}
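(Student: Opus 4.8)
The plan is to construct $\phi$ by first replacing the (possibly uncountable) observation space with a finite quantization of the single-step emission densities of a finite cover of $\Theta$, and then checking that this quantization costs at most a factor $2$ in the observability constant. The starting point is the easy half of the statement: for a measurable $\phi:\fO\to\{0,1\}^n$, let $\Phi$ denote the coordinatewise application of $\phi$ to length-$m$ sequences, and for any $g:\fO^m\times\fA^{m-1}\to\R$ set $g^\phi(\mathbf y,\mathbf a):=\int_{\Phi^{-1}(\mathbf y)}g(\mathbf o,\mathbf a)\,d\mathbf o$. Taking $g=\sum_s v_s\M_{\theta,h,s}$ with $v=\bnu_1-\bnu_2$, grouping commutes with linear combinations, so $g^\phi=\sum_s v_s\M_{\theta_\phi,h,s}$, and the triangle inequality for integrals (the $\ell_1$ data-processing inequality) gives $\|g^\phi\|_1\le\|g\|_1$. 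Hence over-grouping can only shrink the observability gap, and the entire content of the lemma is the \emph{lower} bound $\|\sum_s v_s\M_{\theta_\phi,h,s}\|_1\ge(\alpha/2)\|v\|_1$, which must be proved with a \emph{single} finite $\phi$, uniformly over $\theta\in\Theta$, $h\in[H-m+1]$, and $v$.

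The first reduction is to finitely many models. I would let $\cC\subseteq\Theta$ be a finite $(\alpha/8)$-cover in the metric witnessing the finite-covering-number hypothesis, chosen so that every $\theta\in\Theta$ has some $\tilde\theta\in\cC$ with $\max_{h,s}\|\M_{\theta,h,s}-\M_{\tilde\theta,h,s}\|_1\le\alpha/8$; since grouping is a contraction, the same bound holds for the grouped kernels, so by a triangle inequality (using $\|\sum_s v_s(\M^\phi_{\theta,h,s}-\M^\phi_{\tilde\theta,h,s})\|_1\le\|v\|_1\max_s\|\cdots\|_1$) it suffices to produce $\phi$ with $\|\sum_s v_s\M_{\tilde\theta_\phi,h,s}\|_1\ge(5\alpha/8)\|v\|_1$ for all $\tilde\theta\in\cC$, all $h$, all $v$ — recalling $\tilde\theta\in\Theta$ is itself $\alpha$-observable. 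Importantly, there is no need to discretize the pair $(\bnu_1,\bnu_2)$: the auxiliary object built below will be constant on product cells simultaneously for \emph{every} $v$, so the lower bound will hold for all $v$ at once.

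For the construction, note that the finitely many single-step densities $\{\O_{\tilde\theta,h}(\cdot\mid s):\tilde\theta\in\cC,\ h\in[H],\ s\in\fS\}$ all lie in $L^1(\fO)$, where simple functions are dense; so for any target $\epsilon>0$ there is a finite measurable partition $\mathcal R$ of $\fO$ and, for each index, a function $\O'_{\tilde\theta,h}(\cdot\mid s)$ constant on every cell of $\mathcal R$ with $\|\O_{\tilde\theta,h}(\cdot\mid s)-\O'_{\tilde\theta,h}(\cdot\mid s)\|_1<\epsilon$. I set $n:=\lceil\log_2|\mathcal R|\rceil$ and let $\phi$ encode the cell of $\mathcal R$ containing a point. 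Building $\M'_{\tilde\theta,h,s}$ from $\O'$ exactly as $\M$ is built from $\O$ (same transitions), the quantity $\M'_{\tilde\theta,h,s}(\mathbf o,\mathbf a)$ is a nonnegative combination of products $\prod_j\O'_{\tilde\theta,h+j}(o_{h+j}\mid s_{h+j})$, hence constant on each product cell $\Phi^{-1}(\mathbf y)$ for each fixed $\mathbf a$; therefore $g':=\sum_s v_s\M'_{\tilde\theta,h,s}$ is product-cell-constant and $\|g'^\phi\|_1=\|g'\|_1$ for every $v$. The standard telescoping bound for the $L^1$-norm of a difference of products of densities, summed against the intermediate-state weights (which sum to $1$) and over the $A^{m-1}$ action sequences, yields $\|\M_{\tilde\theta,h,s}-\M'_{\tilde\theta,h,s}\|_1\le A^{m-1}m(1+\epsilon)^{m-1}\epsilon$; choosing $\epsilon=\Theta\!\big(\alpha/(mA^{m-1})\big)$ makes this at most $\alpha/16$, so combining contraction with triangle inequalities,
\[
\Big\|\textstyle\sum_s v_s\M_{\tilde\theta_\phi,h,s}\Big\|_1\ \ge\ \|g'^\phi\|_1-\tfrac{\alpha}{16}\|v\|_1\ =\ \|g'\|_1-\tfrac{\alpha}{16}\|v\|_1\ \ge\ \Big\|\textstyle\sum_s v_s\M_{\tilde\theta,h,s}\Big\|_1-\tfrac{\alpha}{8}\|v\|_1\ \ge\ \tfrac{7\alpha}{8}\|v\|_1 .
\]
Feeding this back through the cover estimate ($\le\alpha/8$ loss) gives $\|\sum_s v_s\M_{\theta_\phi,h,s}\|_1\ge(7\alpha/8-\alpha/8)\|v\|_1\ge(\alpha/2)\|v\|_1$ with $v=\bnu_1-\bnu_2$, which is exactly the assertion that $\Theta_\phi$ is $\alpha/2$-observable, with $n$ finite.

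The main obstacle is the construction step itself. The $m$-step kernels $\M_{\theta,h,s}$ are not product functions of the coordinates, so a coordinatewise (product) partition $\Phi$ cannot in general refine their sign sets; the fix is to quantize the \emph{single-step} emissions first — using that each kernel is a nonnegative combination of products of single-step densities — and then carefully track the accumulation of the quantization error through the $m$-fold product and the $A^{m-1}$ action sequences so that the total loss stays strictly below $\alpha/2$ while $|\mathcal R|$, hence $n$, stays finite. The accompanying measure-theoretic points (density of simple functions in $L^1$, $\sigma$-finiteness of the observation space, product cells of possibly infinite measure on which all the relevant functions vanish and therefore contribute nothing) are routine but need to be handled explicitly, and one must also confirm that the cover precision $\alpha/8$ is measured in a metric strong enough to control $\max_{h,s}\|\M_{\theta,h,s}-\M_{\tilde\theta,h,s}\|_1$ (e.g.\ a parameter-space or kernel metric rather than an observable-trajectory metric).
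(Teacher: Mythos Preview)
Your argument is correct and takes a genuinely different route from the paper's proof. The paper fixes a $\delta$-net $\cC_\delta$ of the simplex $\Delta_S$, and for each pair $(\bnu_1,\bnu_2)\in\cC_\delta^2$, each $h$, and each $\tilde\theta$ in an $\alpha/8$-cover of $\Theta$, invokes the Scheff\'e/Hahn--Jordan representation of total variation to produce a single measurable set (equivalently a binary map $\fO\to\{0,1\}$) witnessing $d_{\rm TV}(\O_h\bnu_1,\O_h\bnu_2)$; it then stacks all of these binary maps into one $\phi$, extends from net pairs to arbitrary pairs by a triangle inequality (picking up a $4\delta$ additive loss), and finally passes from the resulting bound for \emph{disjoint} $(\bnu_1,\bnu_2)$ to all pairs via Lemma~\ref{lem:observable-disjoint}. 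You instead discretize the finitely many single-step emission densities $\{\O_{\tilde\theta,h}(\cdot\mid s)\}$ by simple functions on a common finite partition $\mathcal R$, rebuild the $m$-step kernels $\M'$ from these approximants, and exploit the exact identity $\|g'^\phi\|_1=\|g'\|_1$ for product-cell-constant $g'$.

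What each buys: your approach never discretizes $\Delta_S$, so the lower bound holds uniformly in $v=\bnu_1-\bnu_2$ without the detour through disjoint pairs and Lemma~\ref{lem:observable-disjoint}; it also handles general $m$ transparently, since $\M'$ inherits product-cell-constancy from the factored structure of the kernel, whereas the paper's Scheff\'e witness for the $m$-step condition lives in $\fO^m\times\fA^{m-1}$ and does not a priori factor through a single-coordinate $\phi:\fO\to\{0,1\}^n$ (the paper only spells out $m=1$ and asserts the general case follows ``with trivial modification''). The paper's approach is lighter on measure theory for $m=1$ (no $L^1$ density of simple functions, no error propagation through products), and its $\phi$ has an explicit combinatorial size $n=H|\cC_\delta|^2|\Theta_\delta|$. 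Your caveat about the covering metric --- that the hypothesis must control $\max_{h,s}\|\M_{\theta,h,s}-\M_{\tilde\theta,h,s}\|_1$ --- is apt and applies equally to the paper's use of $\Theta_\delta$.
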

\begin{proof}[Proof of Lemma \ref{lem:discrete}]
We prove the lemma for $m=1$. The case of $m>1$ follows almost the same with trivial modification. 
First, let us consider a fixed POMDP model $\theta\in\Theta$ and $h\in[H]$. 
Let $\cC_\delta$ be a $\delta$-cover of $\Delta_S$ which is generated in the following way: for every $k\in[S]$, let  $\cC_{\delta,k}\subset(\Delta_S\cap\{\bnu:~\|\bnu\|_0= k\})$ be a $\delta$-cover of $\Delta_S\cap\{\bnu:~\|\bnu\|_0= k\}$, and then define $\cC_{\delta}=\cup_{k\in[S]}\cC_{\delta,k}$.

For any $\bnu_1,\bnu_2\in\cC_\delta$, by using the $\alpha$-observable condition and the definition of TV distance, there exists $\phi_{\bnu_1,\bnu_2}:\fO\rightarrow\{0,1\}$ so that 
$$
d_{\rm TV}(\phi_{\bnu_1,\bnu_2}(\O_h\bnu_1),\phi_{\bnu_1,\bnu_2}(\O_h\bnu_2)) \ge \alpha \times 
d_{\rm TV}(\bnu_1,\bnu_2)
$$
where $\phi_{\bnu_1,\bnu_2}(\O_h\bnu_i)$ denotes the distribution over $\phi(o)$ with   $o$ sampled from $\O_h\bnu_i$. 
Now we define $\phi_{\theta}:\fO\rightarrow\{0,1\}^{H|\cC_\delta|^2}$ to be a vector of binary functions where each entry corresponds to one binary function  $\phi_{\bnu_1,\bnu_2}$ with $\bnu_1,\bnu_2\in\cC_\delta$ at some step $h\in[H]$. By the definition of TV distance, we have: for any $\bnu_1,\bnu_2\in\cC_\delta$ and $h$:
$$
d_{\rm TV}(\phi_{\theta}(\O_h\bnu_1),\phi_{\theta}(\O_h\bnu_2)) \ge \alpha \times 
d_{\rm TV}(\bnu_1,\bnu_2).
$$
Since $\cC_\delta$ is a $\delta$-cover of $\Delta_S$, we further have  
for any $\bnu_1,\bnu_2\in\Delta_S$ and $h$:
$$
d_{\rm TV}(\phi_{\theta}(\O_h\bnu_1),\phi_{\theta}(\O_h\bnu_2)) \ge \alpha \times 
d_{\rm TV}(\bnu_1,\bnu_2)-2\delta.
$$
Now, let us go one-step further by defining  $\phi:\fO\rightarrow\{0,1\}^{H|\cC_\delta|^2|\Theta_\delta|}$  to be a stack of binary vector functions where each entry corresponds to one  $\phi_{\theta}$ with $\theta\in\Theta_\delta$ that is a $\delta$-cover of model class $\Theta$. Therefore, we have that for any observation matrix $\O_h$ from model class $\Theta_\delta$, any $\bnu_1,\bnu_2\in\Delta_S$ and $h$:
$$
d_{\rm TV}(\phi(\O_h\bnu_1),\phi(\O_h\bnu_2)) \ge \alpha \times 
d_{\rm TV}(\bnu_1,\bnu_2)-2\delta.
$$
Since $\Theta_\delta$ is a $\delta$-cover of $\Theta$, we can pay another $2\delta$ to replace $\Theta_\delta$ with $\Theta$ in the above guarantee, which gives that for any observation matrix $\O_h$ from model class $\Theta$, any $\bnu_1,\bnu_2\in\Delta_S$ and $h$:
$$
d_{\rm TV}(\phi(\O_h\bnu_1),\phi(\O_h\bnu_2)) \ge \alpha \times 
d_{\rm TV}(\bnu_1,\bnu_2)-4\delta.
$$

In particular, when  $\bnu_1$ and $\bnu_2$ are disjoint, we further have
$$
d_{\rm TV}(\phi(\O_h\bnu_1),\phi(\O_h\bnu_2)) \ge \alpha -4\delta.
$$
By invoking Lemma \ref{lem:observable-disjoint} and choosing $\delta<\alpha/8$ , we conclude the entire proof.
\end{proof}

Now are ready to prove Theorem \ref{thm:pomdp} by designing well-conditioned PSR representations with the above grouping technique and the $\ell_1$-norm matrix inverse technique (Lemma \ref{lem:inverse-O}).

\subsubsection{Proof of Theorem \ref{thm:pomdp} with continuous observation.}

Recall in Appendix \ref{sec:psr_cts} we introduced two approaches to handling PSRs with continuous observations. Below we will use a mixture of the two approaches to design well-conditioned PSR representations for observable POMDPs:
\begin{itemize}
    \item For step $h<H-m$, we will use Lemma \ref{lem:discrete} to design general core tests which are sets of trajectories. 
    Specifically, we will consider general core tests of form 
    $$
    \mathbf{1}\bigg((\phi(o_{h+1}),a_{h+1},\ldots,\phi(o_{h+m}))=q\bigg) \quad \text{ for }\quad  q\in\cQ_h:=(\fO_\phi \times \fA)^{m-1}\times\fO_\phi,
    $$
    where $\phi$ is the  mapping constructed in Lemma \ref{lem:discrete} and $\fO_\phi=\{\phi(o):~o\in\fO\}$ is the induced finite  meta-observation space.
    
    \item For step $h\ge H-m$, we choose $\cQ_h =(\fO\times\fA)^{H-m}$ and construct PSR operators that are  linear  operators in the function space.
\end{itemize}

\paragraph{PSR representation.}
Given a POMDP model  $\theta=(\O,\T,\bmu_1)$, we denote by $\M_h:~\R^S\rightarrow \R^{(\fO\times \fA)^{m-1}\times\fO}$ its $m$-step  observation-action probability function such that $\M_h \x = \sum_{s=1}^S \x_s \times \M_{h,s}$, where $\R^{(\fO\times \fA)^{m-1}\times\fO}$ denotes the collections of all real-valued functions  over $(\fO\times \fA)^{m-1}\times\fO$. We further define the $m$-step  observation-action probability matrices of its  $\phi$-induced meta-POMDP $\theta_\phi$ by $\{\D_h\}_{h\in[H-m+1]}$, which are matrices because the number of observations in $\theta_\phi$ is finite.
Now we construct the following PSR representation:
\begin{align*} 
\begin{cases}
\bpsi_0 = \D_1 \bnu_1,\\
    \bM_h(o,a) = \D_{h+1} \T_{h,a} \diag(\O_h(o\mid\cdot)) (\D_h^\dagger+\Y_h) \quad \mbox{for } h<H-m,\\
      \bM_h(o,a) = \M_{h+1} \T_{h,a} \diag(\O_h(o\mid\cdot)) (\D_h^\dagger+\Y_h) \quad \mbox{for } h=H-m,\\
    \bM_h(o,a) = [ \mathbf{1}([o,a,\omega_{h}] =\omega_{h-1})  ]_{\omega_{h}\in\Omega_{h},\omega_{h-1}\in\Omega_{h-1}}\quad \mbox{for}\quad H-m< h<H,\\
     \bphi_H(o,a) =\e_{(o,a)},\\
     \cQ_h = (\fO_\phi\times \fA)^{m-1}\times\fO, \quad  \mbox{for } h< H-m  \\
      \cQ_h = (\fO\times \fA)^{H-h}, \quad  \mbox{for } h\ge H-m
\end{cases}
\end{align*}
where 
 \begin{itemize}
 \item $\bpsi_0$ is a finite-dimensional vector,
 \item  for $h\le H-m$, $\bM_h(o,a)$ is a finite-dimensional matrix,  and  $\Y_h \in \arg\min_{\tilde \Y\D_h = 0} \|\D_h^\dagger + \tilde \Y\|_1$,
 \item  for $h=H-m$,  $\bM_h(o,a)$ is a linear operator that maps a function defined on  $\cQ_{h-1}$ to a function defined on $\Omega_h=(\fO\times\fA)^{m}$, 
 \item for $h>H-m$,  $\bM_h(o,a)$ is a linear operator that maps a function defined on  $\Omega_{h-1}$ to a function defined on $\Omega_{h}$ so that for any $\omega_h\in\Omega_h$ and function $g$, $[\bM_h(o,a) g](\omega_h) = g([o,a,\omega_h])$, 
    \item $\bphi_H(o,a)$ is a linear operator that maps a function defined on $\fO\times\fA$ to its value at $(o,a)$.
 \end{itemize}  
 
By direct calculation using the above definition  and Lemma \ref{lem:discrete}, one can easily verify the above operators satisfy the probabilistic definition of continuous PSR in Appendix \ref{sec:psr_cts}, under Condition \ref{cond:pomdp}:
\begin{itemize}
    \item for any $\tau_H$
    $$
    f(o_{1:H}\mid  a_{1:H}) =  \bphi_H(o_H, a_H)\trans \bM_{H-1}(o_{H-1}, a_{H-1}) \cdots \bM_1(o_1, a_1)\bpsi_0,
    $$
    \item for any $h\le H-m$,  $\tau_h=(o,a)_{1:h}$ 
and $q=(y_{h+1},a_{h+1},\ldots,y_{h+m})\in\cQ_h$
$$
f(o_{1:h}\mid  a_{1:h})\times  \P(\phi(o_{h+1:h+m})=y_{h+1:h+m} \mid \tau_h,a_{h+m-1}) =  \e_q\trans \bM_{h}(o_{h}, a_{h}) \cdots \bM_1(o_1, a_1)\bpsi_0,
$$
\item for any $h> H-m$,   $\tau_h=(o,a)_{1:h}$ and $q=(o_{h+1},a_{h+1},\ldots,o_{h+m})\in\cQ_h$
$$
f(o_{1:h}\mid  a_{1:h})\times f(o_{h+1:H}\mid o_{1:h},a_{1:H-1})  =  [\bM_{h}(o_{h}, a_{h}) \cdots \bM_1(o_1, a_1)\bpsi_0](q).
$$
\end{itemize}

\paragraph{Verify Condition \ref{asp:psr-continuous}.}
Now we calculate $\gamma$ for the above representation. 
For any $h<H-m$ and  $\z\in\R^{|\fO_\phi|^m A^{m-1}}$, let $\omega_h=(o_{h+1},a_{h+1},\ldots,o_H,a_H) \in (\fO\times\fA)^{H-h}$
  \begin{equation*}
 \begin{aligned}
 & \sum_{a_{h+1:H}} \int_{o_{h+1:H}} | \m_1(\omega_h)\trans \z | \times \pi(\omega_h ) ~d o_{h+1:H}\\ 
 = &  \sum_{a_{h+1:H}} \int_{o_{h+1:H}}  |  f(o_{h+1:H}\mid s_{h+1}=\cdot,a_{h+1:H})\trans   (\D_{h+1}^\dagger+\Y_{h+1} )\z  | \times \pi(\omega_h ) ~d o_{h+1:H}\\
  \le & \sum_{a_{h+1:H}} \int_{o_{h+1:H}}  \sum_{i\in [S]}   f(o_{h+1:H}\mid s_{h+1}=i,a_{h+1:H}) \times    |\e_i\trans(\D_{h+1}^\dagger+\Y_{h+1} )\z| \times \pi(\omega_h) ~d o_{h+1:H}\\
  = &   \|(\D_{h+1}^\dagger+\Y_{h+1} )\z \|_1  \le \cO\left(\frac{S}{\alpha}\right)\times \|\z\|_1 ,
 \end{aligned}
 \end{equation*}
 where the final inequality uses Lemma \ref{lem:Sep26} and  \ref{lem:inverse-O}.
 And for any $h\ge H-m$ and function $g:\Omega_h\rightarrow\R$ 
 \begin{equation*}
 \begin{aligned}
  \sum_{a_{h+1:H}} \int_{o_{h+1:H}} | \m_1(\omega_h)\trans g | \times \pi(\omega_h )~d o_{h+1:H}
 = \sum_{a_{h+1:H}} \int_{o_{h+1:H}} | g(\omega_h)  | \times \pi(\omega_h )  ~d o_{h+1:H}\le \|g\|_1.
 \end{aligned}
 \end{equation*}

As for the second set of linear weight vectors: for $h<H-m$, 
  \begin{equation*}
 \begin{aligned}
 &\int_{\omega_{h-1}\in{\fO\times\fA\times \cQ_{h}}}   \pi(o_h,a_h) \cdot |\m_2(\omega_{h-1})\trans\z|  ~d\omega_{h-1} \\
  = &\int_{o_h}\sum_{a_h} \sum_{q\in\cQ_{h}} | \e_{q}\trans\bM_h(o_h,a_h) \z | \times \pi(o_h,a_h)~ d o_h \\
 = &\int_{o_h}\sum_{a_h} \sum_{q\in\cQ_{h}}  | \e_{q}\trans \D_{h+1} \T_{h,a_h} \diag(\O_h(o_h\mid\cdot)) (\D_h^\dagger+\Y_h) \z | \times \pi(o_h,a_h) d o_h \\
 \le & \int_{o_h}\sum_{s_h}\sum_{a_h} \sum_{q\in\cQ_{h}}   | \e_{q}\trans\D_{h+1} \T_{h,a_h} \diag(\O_h(o_h\mid\cdot)) \e_{s_h}| \times | \e_{s_h}\trans  (\D_h^\dagger+\Y_h) \z | \times \pi(o_h,a_h) ~do_h \\
 =& A^{m-1}\sum_{s_h} | \e_{s_h}\trans  (\D_h^\dagger+\Y_h) \z | \le\cO\left( \frac{A^{m-1}S}{\alpha}\right) \|\z\|_1,
 \end{aligned}
 \end{equation*}
 where the final inequality uses Lemma \ref{lem:Sep26} and  \ref{lem:inverse-O}. For $h=H-m$, the argument follows almost the same as above except we replace the summation over $q\in\cQ_{h}$ with summation over $a_{h+1:H}\in\fA^{H-h}$ and integral over $o_{h+1:H}\in\fO^{H-h}$. 
Finally, for $h>H-m$ and function $g:\Omega_{h-1}\rightarrow\R$, let $\omega_{h-1}=(o_{h},a_{h},\ldots,o_H,a_H) \in (\fO\times\fA)^{H-h+1}$
   \begin{equation*}
 \begin{aligned}
   &\sum_{a_{h:H}} \int_{o_{h:H}}  \left| \left[\bM_h(o_h,a_h) g\right](o_{h+1:H},a_{h+1:H}) \right| \times \pi(o_h,a_h) ~do_{h:H} \\
  = &   \sum_{a_{h:H}} \int_{o_{h:H}}  |g(\omega_{h-1})| \times \pi(o_h,a_h)  ~do_{h:H} \le \|g\|_1.
 \end{aligned}
 \end{equation*}
 
 \paragraph{Rank of the PSRs.} By Lemma \ref{lem:decodable-linearMDPs}, we have $r\le S$.


\subsection{Proofs for GM-POMDPs}
\label{app:gm-pomdps}

We first prove two lemmas, which state that $\eta$-separable GM-POMDPs are $\eta/2$-observable and have well controlled log-bracketing number. 

\begin{lemma}\label{lem:gmpomdps-alpha}
Any $\eta$-separable GM-POMDPs satisfy the observable condition (Condition \ref{cond:pomdp}) with $m=1$ and $\alpha=\Omega(\eta)$.
\end{lemma}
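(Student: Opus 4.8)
The plan is to reduce the observability condition for an $\eta$-separable GM-POMDP to the weight-matrix separation assumption $\|\Wbb_h(\bnu_1-\bnu_2)\|_1 \ge \eta\|\bnu_1-\bnu_2\|_1$, by showing that the well-separated Gaussian components are ``almost disjoint'' in $\ell_1$, so the mixture map $\bnu \mapsto \O_h(\cdot\mid\bnu) := \sum_s \bnu_s \O_h(\cdot\mid s)$ cannot contract $\ell_1$-distances by more than a factor of roughly $2$ relative to the weight map $\bnu\mapsto\Wbb_h\bnu$. Concretely, for $\bnu_1,\bnu_2\in\Delta_\fS$ write $\E_{s\sim\bnu_1}[\O_h(\cdot\mid s)] - \E_{s\sim\bnu_2}[\O_h(\cdot\mid s)] = \sum_{i=1}^n \big((\Wbb_h\bnu_1)_i - (\Wbb_h\bnu_2)_i\big)\,{\rm Gauss}(\x_{h,i},\sigma_h\I)$. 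Setting $c_i := (\Wbb_h\bnu_1)_i - (\Wbb_h\bnu_2)_i$, I need a lower bound on $\big\|\sum_i c_i\,{\rm Gauss}(\x_{h,i},\sigma_h\I)\big\|_1$ in terms of $\sum_i |c_i| = \|\Wbb_h(\bnu_1-\bnu_2)\|_1$.

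The key geometric step: by the separation $\|\x_{h,i}-\x_{h,j}\|_2 \ge 4\sqrt{\log(d+1)}\,\sigma_h$ for $i\neq j$, I would construct disjoint regions $A_i \subseteq \R^d$ (e.g. balls $\{x: \|x-\x_{h,i}\|_2 \le 2\sqrt{\log(d+1)}\,\sigma_h\}$, which are pairwise disjoint by the triangle inequality) such that the $i$-th Gaussian puts most of its mass on $A_i$: a standard Gaussian tail/concentration bound in $\R^d$ gives ${\rm Gauss}(\x_{h,i},\sigma_h\I)(A_i^c) \le 1/2$ (the radius $2\sqrt{\log(d+1)}\sigma_h$ is comfortably above $\sqrt{d}\sigma_h$ up to the concentration that makes the $\chi^2_d$-type tail small; I would verify the constant $4$ is chosen precisely so this works — this is the one routine calculation to nail down). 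Then for each $i$, $\int_{A_i}\big|\sum_j c_j\,{\rm Gauss}(\x_{h,j},\sigma_h\I)(x)\big|\,dx \ge |c_i|\,{\rm Gauss}(\x_{h,i},\sigma_h\I)(A_i) - \sum_{j\neq i}|c_j|\,{\rm Gauss}(\x_{h,j},\sigma_h\I)(A_i)$. Summing over $i$ and using disjointness of the $A_i$, plus the fact that $\sum_i {\rm Gauss}(\x_{h,j},\sigma_h\I)(A_i) \le 1$ for each fixed $j$, I get
$$\Big\|\sum_i c_i\,{\rm Gauss}(\x_{h,i},\sigma_h\I)\Big\|_1 \ge \sum_i |c_i|\cdot\tfrac12 - \sum_j |c_j|\cdot 1 \cdot (\text{small cross term}),$$
which after bookkeeping yields a bound of the form $\ge \tfrac14 \sum_i|c_i|$ or similar absolute constant times $\sum_i |c_i|$. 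Combining with the $\eta$-separability of $\Wbb_h$ gives $\|\E_{\bnu_1}[\O_h] - \E_{\bnu_2}[\O_h]\|_1 \ge (\eta/4)\|\bnu_1-\bnu_2\|_1$, i.e. Condition \ref{cond:pomdp} with $m=1$ and $\alpha = \Omega(\eta)$.

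The main obstacle is making the ``most of the mass on $A_i$, little mass leaking onto $A_j$'' estimates quantitatively tight enough that the net constant is positive and bounded below — i.e., controlling the cross terms $\sum_{j\neq i}|c_j|\,{\rm Gauss}(\x_{h,j},\sigma_h\I)(A_i)$. The cleanest route is to observe that the regions where a \emph{far-away} Gaussian has non-negligible density are exactly where it is in its own tail: since $\x_{h,i}\notin A_j$ is $\ge 2\sqrt{\log(d+1)}\sigma_h$ away from $\x_{h,j}$ for $j\neq i$, the mass ${\rm Gauss}(\x_{h,j},\sigma_h\I)(A_i)$ is bounded by a Gaussian tail at radius $\ge 2\sqrt{\log(d+1)}\sigma_h - (\text{radius of }A_i) = 0$... so I should instead take the $A_i$ to be \emph{smaller} balls (radius $\sqrt{\log(d+1)}\sigma_h$, say) so that points of $A_i$ are still $\ge 3\sqrt{\log(d+1)}\sigma_h$ from $\x_{h,j}$, making the leaked mass exponentially small in $d$, while simultaneously keeping ${\rm Gauss}(\x_{h,i},\sigma_h\I)(A_i)$ bounded below by a constant via a dimension-free lower bound on the probability that a Gaussian lands within $\sqrt{\log(d+1)}\,\sigma_h\cdot$ (appropriate scaling) of its mean — this is where I must be careful, since in high dimensions a Gaussian concentrates near radius $\sqrt d\,\sigma_h$, not near the mean, so the ball radius must actually scale like $\sqrt d\,\sigma_h$, forcing a re-examination of whether the separation constant $4\sqrt{\log(d+1)}$ in Condition \ref{cond:GM-pomdp} is enough. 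I expect the correct argument uses annuli/half-space separators rather than small balls: since the $\x_{h,i}$ are pairwise far, one can separate each component from the others by a hyperplane through the midpoint, and a Gaussian has at least $1/2$ its mass on its own side of any hyperplane through its mean, while the exponential Gaussian tail controls how much of a far component leaks across. Reconciling the precise constants in Condition \ref{cond:GM-pomdp} with this half-space argument is the part that needs genuine care; everything else is routine.
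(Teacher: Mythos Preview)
Your ultimate approach---half-space/Voronoi separators rather than balls---is exactly the paper's, and your diagnosis of why balls fail (high-dimensional Gaussians concentrate at radius $\sqrt{d}\,\sigma$, not near the mean) is correct. The paper's execution is cleaner than your region-by-region $\ell_1$ estimate and resolves the ``genuine care'' you flagged: it takes the Voronoi indicators $f_i(z):=\mathbf{1}\big(p_i(z)=\max_j p_j(z)\big)$ (nearest-mean cells, since all components share covariance $\sigma_h^2 I$), and lower-bounds the mixture TV distance via the \emph{single} binary test $F:=\sum_{j:\,c_j>0}f_j$ rather than summing $\int_{A_i}|\cdot|$. The only quantitative inputs are pairwise: since the TV distance between two isotropic Gaussians reduces to a one-dimensional tail along the line through their means, the separation $\|\x_i-\x_j\|\ge 4\sqrt{\log(d+1)}\,\sigma$ gives $d_{\rm TV}(p_i,p_j)\ge 1-1/(4d)$, hence $\E_j[f_i]\le p_j(\{p_i\ge p_j\})\le 1/(4d)$ for $j\neq i$ and $\E_i[f_i]\ge 3/4$. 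Then for $i$ with $c_i>0$ one has $\E_i[F]\ge \E_i[f_i]\ge 3/4$, and for $i$ with $c_i<0$ one has $\E_i[F]=1-\E_i\big[\sum_{j:c_j\le 0}f_j\big]\le 1-\E_i[f_i]\le 1/4$, so (using $\sum_i c_i=0$)
\[
d_{\rm TV}\Big(\sum_i a_i p_i,\ \sum_i b_i p_i\Big)\ \ge\ \sum_i c_i\,\E_i[F]\ \ge\ \tfrac34\sum_{c_i>0}c_i-\tfrac14\sum_{c_i<0}|c_i|\ =\ \tfrac14\sum_i|c_i|\ \ge\ \tfrac{\eta}{4}\,\|\bnu_1-\bnu_2\|_1,
\]
i.e.\ $\alpha=\Omega(\eta)$. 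The test-function route avoids your cross-term headache entirely---you never need $\sum_{j\neq i}|c_j|\,p_j(A_i)$---and makes transparent why the right separation scale is $\sqrt{\log(d+1)}$ rather than $\sqrt{d}$: only the one-dimensional projection onto the line through two means matters for the pairwise leakage.
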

\begin{proof}[Proof of Lemma \ref{lem:gmpomdps-alpha}]
WLOG, we only need to verify the observable condition for a fixed $h$, so we will omit the subscript $h$ in the following proof for simplicity of notations. For the same reason, we will abbreviate $\E_{x\sim {\rm Gauss}(\x_{i},\sigma^2\cdot \I_{d\times d})}=[f(x)]$ as $\E_i f$. Denote by $p_i$ the density function of ${\rm Gauss}(\x_{i},\sigma^2\cdot \I_{d\times d})$.

By simple calculation using  the tail bound for standard  Gaussian and the first inequality in  Condition \ref{cond:GM-pomdp}, we have 
$$
d_{\rm TV}\left({\rm Gauss}(\x_{i},\sigma^2\cdot\I_{d\times d}),
{\rm Gauss}(\x_{j},\sigma^2\cdot\I_{d\times d})
\right)\ge 1-\frac{1}{4d}.
$$
For each $i\in[n]$, let $f_{i}(\z):=\mathbf{1}(p_i(\z) = \max_{j\in[n]} p_j(\z))$.
By the definition of TV distance and the above TV lower bound,  we have 
$\E_{i}[f_i]\ge 3/4$ and 
$\E_{j}[f_i]\le 1/(4d)$ for $j\neq i$. 
Given two Gaussian mixtures with weights $\a=\Wbb_h \bnu_1$ and $\b =\Wbb_h \bnu_2 \in\Delta_n$ respectively, we have 
\begin{equation*}
    \begin{aligned}
    & d_{\rm TV}\left(\sum_i a_i {\rm Gauss}(\x_{i},\sigma_h^2\cdot\I_{d\times d}),
\sum_i b_i{\rm Gauss}(\x_{i},\sigma_h^2\cdot\I_{d\times d})
\right) \\
\ge & \sum_i a_i  \E_i \left[ \sum_{j:~a_j >b_j} f_j\right] - \sum_i b_i  \E_i \left[ \sum_{j:~a_j >b_j} f_j\right]\\
=  & \sum_{i:~a_i >b_i}  |a_i-b_i| \times  \E_i \left[ \sum_{j:~a_j >b_j} f_j\right] - \sum_{i:~a_i < b_i}  |a_i-b_i|\times   \E_i \left[ \sum_{j:~a_j >b_j} f_j\right]  \\
\ge & \frac34 \sum_{i:~a_i >b_i}  |a_i-b_i| - 
\frac14 \sum_{i:~a_i <b_i}  |a_i-b_i|  \\ 
= & \frac14  \sum_{i}  |a_i-b_i| 
\ge  \frac{\eta}{2} d_{\rm TV}\left(\bnu_1,\bnu_2\right),
    \end{aligned}
\end{equation*}
where the first inequality uses the definition of TV distance and the fact that $\sum_{j:~a_j >b_j} f_j$ is  binary almost surely when Condition \ref{cond:GM-pomdp} holds, the second inequality uses the property of $f_i$, and the final inequality uses the second inequality in  Condition \ref{cond:GM-pomdp}.
\end{proof}

\begin{lemma}\label{lem:gmpomdps-bracket}
Given model class
 $$
\Theta:=\left\{ \left(\T,\Wbb,\{(\x_{h,i},\sigma_{h}^2\cdot \I_{d\times d})\}_{h,i},\bmu_1\right):~~\eta\text{-separable, }\|\x_{h,i}\|_2 \le C_x \text{ and }  \Csl \le \sigma_h \le \Csu\right\},
$$
we have 
$$
\log\cN_\Theta(\omega) \le \cO \bigg( 
H(S^2 A+ Sn)\log(SAHn/\omega) + H\log(dH(\Csu/\Csl)/\omega)+
Hnd\log (dH (C_x/\Csu)/\omega)
\bigg). 
$$
\end{lemma}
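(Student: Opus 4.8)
\textbf{Proof plan for Lemma \ref{lem:gmpomdps-bracket}.} The plan is to build an explicit $\omega$-bracketing cover of $\Theta$ by discretizing each of the four families of parameters---the transition kernels $\{\T_{h,a}\}$, the mixing weights $\Wbb_h$, the Gaussian means $\{\x_{h,i}\}$, and the scalar variances $\{\sigma_h\}$---and then counting. The bracketing number of a product-structured class is controlled by the product of the covering numbers of the factors together with a Lipschitz-type bound showing that small perturbations of the parameters change the trajectory density $\Pb_\theta^\pi(\tau_H)$ by a controlled amount in policy-weighted $\ell_1$ distance; passing from a covering number to a bracketing number costs only a factor of $2$ in the radius (replace each covering-ball center $\theta'$ by the pair $[\Pb_{\theta'}^\pi - \tfrac{\omega}{2}, \Pb_{\theta'}^\pi + \tfrac{\omega}{2}]$, or more carefully use one-sided envelopes $l,u$ obtained by perturbing the mean/variance parameters monotonically), so it suffices to bound the relevant covering number.

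First I would set up the per-step perturbation estimate. Writing $\Pb_\theta(o_{1:H}\mid a_{1:H}) = \mathbf{1}^\top \prod_{h} \diag(\O_{\theta,h}(o_h\mid\cdot))\T_{\theta,h,a_h}\cdot\bmu_1$, a standard telescoping/triangle-inequality argument (as in Appendix B of \cite{liu2022partially}, used for Eq.~\eqref{eq:pomdp-bracketing}) reduces the policy-weighted $\ell_1$ distance between two models to a sum over $h$ of (i) $\max_{s}\|\T_{\theta,h,a}(\cdot\mid s) - \T_{\theta',h,a}(\cdot\mid s)\|_1$ and (ii) a term of the form $\int_{\fO}\max_s|\O_{\theta,h}(o\mid s) - \O_{\theta',h}(o\mid s)|\,do$, i.e.\ the total-variation perturbation of the emission densities. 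For the emission term, since $\O_h(\cdot\mid s) = \sum_i \Wbb_h(i\mid s)\,\mathrm{Gauss}(\x_{h,i},\sigma_h^2\I)$, the triangle inequality splits it further into a weights part $\le \max_s\|\Wbb_h(\cdot\mid s) - \Wbb'_h(\cdot\mid s)\|_1$ and a Gaussian-parameter part $\le \sum_i d_{\mathrm{TV}}(\mathrm{Gauss}(\x_{h,i},\sigma_h^2\I),\mathrm{Gauss}(\x'_{h,i},{\sigma'_h}^2\I))$. The only genuinely new ingredient relative to the tabular case is a Lipschitz bound for $d_{\mathrm{TV}}$ of two $d$-dimensional spherical Gaussians in terms of $\|\x-\x'\|_2$ and $|\sigma-\sigma'|$: using Pinsker together with the closed-form KL, $d_{\mathrm{TV}} \le \sqrt{\tfrac12\mathrm{KL}} \le \tfrac{1}{2}\big(\tfrac{\|\x-\x'\|_2}{\sigma} + \sqrt{d}\,\tfrac{|\sigma-\sigma'|}{\sigma}\big)$ up to constants, valid when $\sigma'\in[\tfrac12\sigma,2\sigma]$. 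This is where the $\Csu/\Csl$ ratio and the $\sqrt d$ scaling enter: to make the Gaussian-parameter part $\le \omega$ one needs a mean grid of spacing $\sim \omega\Csl/n$ inside the ball of radius $C_x$ (giving $\log$-cardinality $O(nd\log(dC_x n/(\omega\Csl)))$ per step, i.e.\ the $Hnd\log(dH(C_x/\Csu)/\omega)$ term after absorbing $H$ and trading $\Csl$ for $\Csu$ via $\Csu/\Csl$) and a $\sigma$-grid; a \emph{multiplicative} grid $\{\Csl(1+\omega/(n\sqrt d))^k\}$ on $[\Csl,\Csu]$ has $\log$-cardinality $O(\log(\Csu/\Csl)\cdot \log(n\sqrt d/\omega))$, contributing the $H\log(dH(\Csu/\Csl)/\omega)$ term. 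The transition and weight discretizations are exactly the tabular argument: an $\epsilon$-net of each simplex $\Delta_S$ (for $\T$, there are $HSA$ of them) and $\Delta_n$ (for $\Wbb$, $HS$ of them) in $\ell_1$ has $\log$-cardinality $O(S\log(S/\epsilon))$ resp.\ $O(n\log(n/\epsilon))$, yielding $H(S^2A + Sn)\log(SAHn/\omega)$ after choosing $\epsilon \sim \omega/H$.

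Then I would assemble the pieces: choose each grid fine enough that the total per-step perturbation is $\le \omega/(2H)$, so the trajectory-level policy-weighted $\ell_1$ distance is $\le \omega/2$; take the product of the four grids as the cover; convert to a bracket by inflating to envelopes $l = \max(0,\Pb_{\theta'} - \omega/2)$ and $u = \Pb_{\theta'} + \omega/2$ (these satisfy $l\le\Pb_\theta\le u$ for every $\theta$ in the cell and $\max_\pi\int|u-l|\pi \le \omega$); and sum the logarithms of the cardinalities. One subtlety to handle carefully: the class $\Theta$ is defined with the $\eta$-separable constraint (Condition \ref{cond:GM-pomdp}), so strictly speaking one should cover the (possibly larger) unconstrained box and intersect, which only decreases the bracketing number, so the bound is unaffected---I would just remark this. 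The main obstacle, and the only part requiring real care rather than bookkeeping, is the Gaussian-TV Lipschitz estimate and in particular getting the dependence on $\sigma$ (hence $\Csl,\Csu$) right: one must be sure the perturbation bound is uniform over the allowed variance range, which forces the multiplicative $\sigma$-grid and the appearance of $\log(\Csu/\Csl)$ rather than $\Csu/\Csl$ itself. Everything else is a direct adaptation of the tabular POMDP bracketing computation behind \eqref{eq:pomdp-bracketing}.
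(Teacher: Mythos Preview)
Your covering argument is fine up to and including the TV-Lipschitz estimate for spherical Gaussians, but the covering-to-bracketing step is a genuine gap. The bracket $[l,u]=[\max(0,\Pb_{\theta'}-\omega/2),\,\Pb_{\theta'}+\omega/2]$ fails for two independent reasons. First, you have only established policy-weighted $\ell_1$ (equivalently $L^1$) closeness of $\Pb_\theta$ and $\Pb_{\theta'}$; this does not imply the pointwise inequality $\Pb_\theta(\tau)\le\Pb_{\theta'}(\tau)+\omega/2$, so $\Pb_\theta$ need not lie in the bracket. Second, and fatally in the continuous-observation setting, the bracket width is $\max_\pi\int|u-l|\,\pi\,d\tau_H \ge \tfrac{\omega}{2}\int \pi(\tau_H)\,d\tau_H$, and the latter integral over $o_{1:H}\in(\R^d)^H$ is infinite (take any deterministic policy). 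The parenthetical ``or more carefully use one-sided envelopes obtained by perturbing the mean/variance parameters monotonically'' points in the right direction but is far too vague: no monotone perturbation of the mean of a Gaussian yields a pointwise upper bound on the density, so this is where the real work lies.

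The paper's proof addresses exactly this point by constructing explicit pointwise upper envelopes. For the discrete parameters $\T,\Wbb,\bmu_1$ it takes entrywise ceilings. For each Gaussian $\mathrm{Gauss}(\x,\sigma^2\I_d)$ it builds a dominating function
\[
\widehat p(\z)=\frac{1}{\sqrt{(2\pi)^d\underline\sigma^{2d}}}\exp\!\Big(-\frac{(1-t)\|\z-\widehat\x\|_2^2-(1+1/t)\epsilon^2}{2\overline\sigma^2}\Big),
\]
with $\underline\sigma\le\sigma\le\overline\sigma$ on a grid, $\widehat\x$ on an $\epsilon$-net, and an auxiliary slack $t>0$; the inequality $\|\z-\x\|_2^2\ge(1-t)\|\z-\widehat\x\|_2^2-(1+1/t)\epsilon^2$ then gives $\widehat p\ge p$ pointwise, while $\widehat p$ is itself (up to a multiplicative constant) a Gaussian density with variance $\overline\sigma^2/(1-t)$, so $\int\widehat p-1$ is finite and can be made $\le\omega/\poly(H)$ by choosing $t,\epsilon,\delta$ appropriately. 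Taking the product of all these upper bounds yields an integrable upper bracket $\widehat\P_\theta\ge\P_\theta$ with $\int(\widehat\P_\theta-\P_\theta)\pi\le\omega$, and counting the grid points gives the stated bound. Your Pinsker/KL estimate and multiplicative $\sigma$-grid can still be reused to size $\epsilon,\delta,t$, but the envelope construction itself is the missing idea.
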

\begin{proof}[Proof of Lemma \ref{lem:gmpomdps-bracket}]
Let $\eta$, $\epsilon$, $t$ and $\delta$ be discretizatin parameters to be determined later. 

Given a GM-POMDP parameterized by $\left(\T,\Wbb,\{(\x_{h,i},\sigma_{h}^2\cdot \I_{d\times d})\}_{h,i},\bmu_1\right)$. 
Denote by $\widehat\T$ the ``discretized'' transition such that $[\widehat{\T}_{h,a}]_{i,j} = \lceil [{\T}_{h,a}]_{i,j}\cdot \eta^{-1}\rceil \cdot \eta$, where $[{\T}_{h,a}]_{i,j}$ denotes the entry at position $(i,j)$ of transition matrix ${\T}_{h,a}$. Similarly, we define $\{\widehat{\Wbb}_h\}_h$ and $\widehat{\bmu}_1$. Importantly, we are performing upper bounding here so these ``discretized'' parameters ($\widehat{\T}$, $\widehat{\Wbb}$, and $\widehat{\bmu}_1$) are entry-wise upper bounds for the original parameters. 

It remains to show how to discretize the density function of each base Gaussian component. Given a $d$-dimensional Gaussian distribution with mean $\x$ and covariance matrix $\sigma^2\cdot \I_{d\times d}$ from $\Theta$, we consider the following ``discretized density function'':
\begin{equation*}
    \widehat{p}(\z): =\frac{1}{\sqrt{(2\pi)^d \underline{\sigma}^{2d}}} \exp\left(-\frac{(1-t)\|\z- \widehat{\x}\|^2_2 -(1+1/t)\epsilon^2   }{2\overline{\sigma}^2} \right),
\end{equation*}
where $\underline{\sigma} = \lfloor\sigma\cdot\delta^{-1}\rfloor\cdot\delta$, $\overline{\sigma} = \lceil\sigma\cdot\delta^{-1}\rceil\cdot\delta$, and $\widehat{\x} =\argmin_{\y \in \mathcal{C}_\epsilon} \| \y -\x\|_2$ where $\mathcal{C}_\epsilon$ is an $\epsilon$-cover of $B_{C_x}(\mathbf{0})$ w.r.t. $\ell_2$-norm. 
First, observe that $\widehat{p}$ is an entry-wise upper bound for the original density function because $\underline{\sigma} \le \sigma \le \overline{\sigma}$ and 
\begin{equation*}
\begin{aligned}
     \|\z- \widehat{\x}\|^2_2 - \|\z- {\x}\|^2_2 
= & \langle  \widehat{\x} -\x , 2(\z- \widehat{\x}) + \widehat{\x}- {\x}\rangle\\ 
\le & t\| \z- \widehat{\x}\|_2^2 + \frac{\epsilon^2}{t}+ \epsilon^2.
\end{aligned}
\end{equation*}

Beside, we also have 
\begin{equation*}
    \begin{aligned}
     &\quad \int_{\z} | \widehat{p}(\z) - p(\z) | d\z \\
     & =  \int_{\z}  \widehat{p}(\z) d\z  - 1 \\
     & = \int_\z \frac{1}{\sqrt{(2\pi)^d \underline{\sigma}^{2d}}} \exp\left(-\frac{(1-t)\|\z- \widehat{\x}\|^2_2 -(1+1/t)\epsilon^2   }{2\overline{\sigma}^2} \right) -1 \\
     & = \exp\left(\frac{(1+1/t)\epsilon^2}{2\overline{\sigma}^2}\right)\cdot \sqrt{\frac{\overline{\sigma}^{2d}}{(1-t)^d\underline{\sigma}^{2d}}}\int_\z \sqrt{\frac{(1-t)^d}{(2\pi)^d \overline{\sigma}^{2d}}} \exp\left(-\frac{(1-t)\|\z- \widehat{\x}\|^2_2   }{2\overline{\sigma}^2} \right) -1 \\
     &= \exp\left(\frac{(1+1/t)\epsilon^2}{2\overline{\sigma}^2}\right)\cdot \sqrt{\frac{\overline{\sigma}^{2d}}{(1-t)^d\underline{\sigma}^{2d}}}-1.
    \end{aligned}
\end{equation*}
Now let us pick 
\begin{equation*}
   \begin{cases}
\eta =  \omega /\poly(S,A,H,n), \\
 t = \omega /\poly(d,H), \\
\epsilon = \omega\times\Csu/\poly(H,t^{-1}),\\
\delta = \omega \times  \Csl / \poly(d,H).
\end{cases} 
\end{equation*}
With slight abuse of notations, 
let $\|\X-\Y\|_{1}$ denote the entry-wise $\ell_1$ distance between two tensors $\X$ and $\Y$ of the same size. 
Under the above choice of parameters, it is straightforward to verify 
\begin{equation}\label{eq:sep12-1}
    \begin{cases}
    \| \widehat\Wbb - \Wbb\|_1,~\|\widehat\T- \T\|_1,~\|\widehat\bmu_1 - \bmu_1\|_1 \le \omega / \poly(H),\\
    d_{\rm TV} \left( {\rm Gauss}(\x_{h,i},\sigma_{h}^2\cdot \I_{d\times d}), \widehat{p}_{h,i}\right) \le \omega  / \poly(H) \text{ for all }h,i,
    \end{cases}
\end{equation}
where $\widehat{p}_{h,i}$ is the discretized version of ${\rm Gauss}(\x_{h,i},\sigma_{h}^2\cdot \I_{d\times d})$. 

Now that we have  $\widehat\theta= \left(\widehat\T,\widehat\Wbb,\{\widehat p_{h,i}\}_{h,i},\widehat\bmu_1\right)$ which is a discretized entry-wise upper bound for $\theta$, we can naturally construct an upper bound for  ${\P}_\theta$, defined as 
\begin{equation*}
    \begin{aligned}
     \widehat{\P}_\theta(o_{1:H}\mid a_{1:H}):= 
&\sum_{(s,i)_{1:H}} \widehat\bmu_1(s_1)\times \widehat\Wbb_1(i_1 \mid s_1) \times \widehat p_{1,i_1}(o_1) \times \widehat\T_{1,a_1}(s_2\mid s_1) \\
&\times \cdots \times \widehat\Wbb_{H}(i_H \mid s_H) \times \widehat p_{H,i_H}(o_H).
    \end{aligned}
\end{equation*}
Moreover, by basic algebra, one can  verify that 
\eqref{eq:sep12-1} implies that for any policy  $\pi$, 
$$
\sum_{\tau_H} \left| \widehat{\P}_\theta(o_{1:H}\mid a_{1:H})  - {\P}_\theta(o_{1:H}\mid a_{1:H}) \right| \times \pi(\tau_H)\le \omega. 
$$
As a result, by counting the number of all possible  $\widehat\theta$, we conclude that the log-bracketing number of $\Theta$ is at most 
$$
\cO \bigg( 
H(S^2 A+ Sn)\log(SAHn/\omega) + H\log(dH(\Csu/\Csl)/\omega)+
Hnd\log (dH (C_x/\Csu)/\omega)
\bigg). 
$$
\end{proof}

\begin{proof}[Proof of Proposition \ref{prop:gauss-pomdp}]
Proposition \ref{prop:gauss-pomdp} follows immediately from combining Lemma \ref{lem:gmpomdps-alpha} and \ref{lem:gmpomdps-bracket} with  Theorem \ref{thm:pomdp}.
\end{proof}

\subsection{Proof for POMDPs with few core action sequences}

To avoid repetitive arguments,  we only prove Theorem \ref{thm:pomdp-small-core-action-set} for the finite-observation case. The generalization to the continuous-observation case follows basically the same arguments as in the proof for continuous observable POMDPs (in the same way as  Appendix \ref{app:observable-continuous} generalizes Appendix \ref{app:observable-finite}).

\begin{proof}[Proof of Theorem \ref{thm:pomdp-small-core-action-set}]
Given $h\in[H]$ and $\a\in\cA_h$ of length $l\le H-h$, denote by $\K_{h}(\cdot,\a)$ an $O^{l+1}\times S$ matrix so that its entry at position $(\o,i)\in\fO^{l+1}\times\fS$ is equal to the probability of observing $o_{h:h+l}$ provided that we execute action sequence $\a$  from state $s$ and step $h$, i.e., 
$$
[\K_{h}(\cdot,\a)]_{\o,i} := \P_\theta (o_{h:h+l}=\o \mid s_h =i, a_{h:h+l-1}=\a).
$$
We further define $\K_h$ to be an $(\sum_{\a\in\cA_h} O^{|\a|+1})\times S$ matrix by stacking all the $\K_h(\cdot,\a)$'s with $\a\in\cA_h$. Formally,  if  $\cA_h=\{\a^{(1)}\}_{i=1}^{|\cA_h|}$, then 
$$
\K_h:=[\K_h(\cdot,\a^{(1)});\ldots;\K_h(\cdot,\a^{(|\cA_h|)})].
$$
Notice that by Condition \ref{cond:pomdp-small-core-action-set}, we have that for any $\bnu_1,\bnu_2\in\Delta_S$, 
$$
\| \K_h(\bnu_1-\bnu_2)\|_1 \ge \alpha \|\bnu_1-\bnu_2\|_1.
$$

Given $\theta=(\T,\O,\bmu_1)$ and its corresponding $\{\K_h\}_h$,   we construct the following PSR representations:
\begin{align*} 
\begin{cases}
\bpsi_0 = \K_1 \bmu_1,\\
    \bM_h(o,a) = \K_{h+1} \T_{h,a} \diag(\O_h(o\mid\cdot)) (\K_h^\dagger+\Y_h) \quad \mbox{for } h\in[H-1],\\
     \bphi_H(o,a) =\e_{(o,a)} \in \R^{OA},\\
     \cQ_h^A = \cA_h,
\end{cases}
\end{align*}
 where  $\Y_h \in \arg\min_{\tilde \Y\K_h = 0} \|\K_h^\dagger + \tilde \Y\|_1$. Observe that the above operators  are almost the same as those constructed in the proof of Theorem \ref{thm:pomdp} in Appendix \ref{app:observable-finite} except that we replace the $m$-step observation-action matrix $\M_h$ by $\K_h$.  
One can easily verify that by following exactly the same arguments as in Appendix \ref{app:observable-finite}, the above PSR representations are $\cO(S/\alpha)$-well conditioned.
\end{proof}

\subsection{Proofs for multi-step decodable POMDPs}
\label{appsub:m-decodable}

\begin{lemma}\label{lem:decodable-no-revealing}
There exist $1$-step observable POMDPs that are not multi-step decodable for any $m$ and there exist $1$-step decodable POMDPs that are not observable for any $m$ and $\alpha$. 
\end{lemma}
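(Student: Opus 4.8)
\textbf{Proof proposal for Lemma \ref{lem:decodable-no-revealing}.}

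The plan is to exhibit two explicit families of small POMDPs, each witnessing one of the two non-implications. For the first claim (a $1$-step observable POMDP that is not $m$-step decodable for any $m$), I would build a POMDP whose emission matrix $\O_h$ has full column rank $S$ at every step --- this makes it $1$-step $\alpha$-observable with $\alpha$ equal to the smallest $\ell_1\to\ell_1$ singular-type constant of $\O_h$ --- but whose transition dynamics cause two distinct latent states to produce \emph{identical} distributions over arbitrarily long observation-action suffixes. The cleanest construction: take $S=2$ latent states $\{1,2\}$ and emissions that are fully revealing (e.g.\ $\O_h(o_1\mid 1)=\O_h(o_2\mid 2)=1$ in the tabular case, so rank-$2$ and thus observable). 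To defeat $m$-step decodability, arrange the transitions so that states $1$ and $2$ behave symmetrically: there is an action whose transition from state $1$ and from state $2$ leads to distributions that are mirror images, so that conditioning on any fixed length-$m$ suffix $z_h=[(o,a)_{m(h):h-1},o_h]$ one cannot determine $s_h$ --- concretely, make the chain so that at some reachable step $h$ two different histories $\tau_h^{(1)},\tau_h^{(2)}$ with the same length-$m$ suffix put positive probability on both $s_h=1$ and $s_h=2$. Since $m$ is arbitrary, I need a single POMDP working for all $m$; one achieves this by having the relevant ``symmetry'' occur at a step $h$ that is reached from the initial distribution via histories agreeing on their entire length-$(H)$ suffix (e.g.\ a deterministic prefix), so that \emph{no} finite suffix breaks the ambiguity. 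I would verify (i) the emission-rank/observability constant is bounded below by an absolute constant, and (ii) the posterior over $s_h$ given $z_h$ is genuinely mixed, contradicting Condition \ref{cond:decodable-pomdp}.

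For the second claim (a $1$-step decodable POMDP that is not $m$-step $\alpha$-observable for any $m,\alpha$), I would go the opposite direction: make the observation at step $h$ literally \emph{equal to} the latent state (so the decoder $\zeta_h(z_h)=o_h$ trivially recovers $s_h$, giving $1$-step decodability), but make two different latent-state \emph{mixtures} $\bnu_1,\bnu_2$ induce the \emph{same} distribution over all future $m$-step observation-action sequences. Wait --- if $o_h=s_h$ deterministically then the one-step emission already separates all mixtures, so pure decodability via observation won't kill observability. Instead I would let the decoder use the \emph{suffix} rather than the current observation alone: design a POMDP where $s_h$ is \emph{not} determined by $o_h$ but \emph{is} determined by the pair $(o_{h-1},a_{h-1},o_h)$ (so $m=2$ decodability holds), while simultaneously the $m$-step \emph{forward} kernels $\M_{h,s}$ fail to be robustly linearly independent: pick two latent states whose transition kernels and emissions are close enough that $\|\E_{\bnu_1}\M_{h,s}-\E_{\bnu_2}\M_{h,s}\|_1$ can be made arbitrarily small relative to $\|\bnu_1-\bnu_2\|_1$ for every $m$ simultaneously --- e.g.\ two states that are ``absorbing-like'' and emit nearly-identical observation distributions going forward, so that no length-$m$ experiment distinguishes them, yet they \emph{are} distinguished in hindsight by the pair (previous observation, current observation) because the \emph{incoming} edges differ. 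The key point is that decodability looks backward (into the realized history) while observability looks forward (into controllable future experiments), and these can be made to point in opposite directions.

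The main obstacle I anticipate is the universality over $m$ and $\alpha$: a generic small example will typically be $m$-step observable (or decodable) for $m$ large enough, so both constructions must be engineered so that \emph{increasing $m$ does not help}. For the first example this means the latent-state ambiguity must survive looking arbitrarily far back, which forces a deterministic (or rigidly structured) prefix so that long suffixes carry no extra information; for the second, the forward indistinguishability must be exact (or tunable to $0$), which suggests taking states that are genuinely dynamically identical going forward (same transition and emission operators from step $h$ onward) but reached by different incoming transitions --- then $\|\E_{\bnu_1}\M_{h,s}-\E_{\bnu_2}\M_{h,s}\|_1=0$ for all $m$, so observability fails for every $\alpha>0$, while $s_h$ is still recoverable from $(o_{h-1},a_{h-1},o_h)$. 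I would finish by checking that these two states are indeed both reachable with the two mixtures $\bnu_1\neq\bnu_2$ (so the observability violation is non-vacuous) and that the backward decoder is well-defined on all reachable $z_h$. I expect each verification to be a short finite computation once the transition/emission tables are written down, so the substance of the proof is the construction, not the estimates.
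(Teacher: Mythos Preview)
Your Part~1 construction contains a genuine error. You propose fully revealing emissions, e.g.\ $\O_h(o_1\mid 1)=\O_h(o_2\mid 2)=1$, and then hope to break decodability via cleverly chosen transitions. But the decoding window $z_h=[(o,a)_{m(h):h-1},o_h]$ always contains the \emph{current} observation $o_h$, and with deterministic emissions $o_h$ already pins down $s_h$. No amount of symmetry in the transition kernels can undo this: the POMDP you describe is trivially $1$-step decodable (indeed, it is a block MDP), so it cannot witness the first claim. The fix --- and this is what the paper does --- is to make the emission matrix stochastic yet full-rank, for instance $\O_h=\begin{pmatrix}0.99&0.01\\0.01&0.99\end{pmatrix}$ with identity transitions and $\mu_1=(1/2,1/2)$. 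Full rank gives $1$-step $\alpha$-observability for a fixed positive $\alpha$, while stochasticity means the posterior over $s_h$ given any finite realized window $z_h$ is genuinely mixed, so no deterministic decoder exists for any $m$.

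For Part~2 your instinct is right --- forward indistinguishability must be \emph{exact}, not merely small, to kill observability for every $\alpha>0$ --- and your final paragraph lands on the correct mechanism (states that are dynamically identical going forward but distinguishable from the past). The paper's realization of this idea is much simpler than what you sketch: take $\O_h=\begin{pmatrix}1&1\\0&0\end{pmatrix}$ so that both latent states emit the same observation (hence $\M_h$ is rank~$1$ for every $m$ and observability fails for all $\alpha>0$), and take deterministic transitions $\T_{h,a}$ that force $s_{h}=a_{h-1}$. Then the latent state is recoverable from the single most recent action, which under the paper's convention is $1$-step decodability. Your worry about needing a $2$-step window arises because you tried to decode via observations; once you let the decoder read the last \emph{action} (which $z_h$ does), a completely uninformative emission suffices and the construction becomes a two-line verification rather than a delicate balancing act.
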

\begin{proof}[Proof of Lemma \ref{lem:decodable-no-revealing}]
Consider a  POMDP with $O=2$, $S=2$, $A=1$,  $\O_h=[0.99,0.01;0.01,0.99]$, $\T_h=\I_{2\times 2}$ and $\mu_1 = [0.5,0.5]$. This POMDP is clearly $1$-step $0.5$-observable and not decodable for any $m$. 
Consider another POMDP with $O=2$, $S=2$, $A=2$, $\O_h=[1,1;0,0]$, $\T_{h,1}=[1,1;0,0]$,  $\T_{h,2}=[0,0;1,1]$ and $\mu_1=[1,0]$. Note that in this POMDP, the transition is not only deterministic but also always satisfies   $s_1=1$ and $s_h=a_{h-1}$ for $h>1$. Therefore, it is $1$-step decodable. However, since the observation distribution is independent of the latent states, this POMDP is not observable for any $\alpha$ and $m$. 
\end{proof}

To avoid repetitive arguments,  we only prove Theorem \ref{thm:decodable-pomdp} for the finite-observation case. The generalization to the continuous-observation case is quite straightforward: 
we simply follow the second approach proposed in Appendix \ref{sec:psr_cts} to  replace all the discrete probability with density,  summations with integrals, vectors with (conditional) probabilistic functions and  matrices with linear operators in the following proofs.

\begin{proof}[Proof of Theorem \ref{thm:decodable-pomdp}]
It suffices to show  $m$-step decodable POMDPs are $1$-well-conditioned PSRs. 
Then Theorem \ref{thm:decodable-pomdp} follows immediately from the guarantee for well-conditioned PSRs (Theorem \ref{thm:psr}). WLOG, assume $m\le H$.

\paragraph{PSR representation.}
Given a POMDP $\theta=(\O,\T,\mu_1)$ that is  $m$-step decodable, we choose  the core action set at step $h$ to consist of all action sequences of length $\min\{H-h,m\}$. Therefore the core tests at step $h$, that is $\cQ_h$, is equal to $
(\fO\times\fA)^{\min\{H-h,m\}}$.
We construct the following PSR operators:
\begin{itemize}
    \item $\bpsi_0$ is a vector of dimension $|\cQ_0|$, the $(o,a)_{1:m}$'th entry of which is equal to $\Pb_\theta((o,a)_{1:m})$.
    \item $\bM_h(o_h,a_H)$ is an $|\cQ_h|$ by $|\cQ_{h-1}|$ matrix and its entry at position $(\omega_{h},\omega_{h-1})\in \cQ_h \times \cQ_{h-1}$ is equal to 
    $$
    \begin{cases}
    \Pb\big( (\omega_h)_{m} \mid \omega_{h-1} \big) \times \mathbf{1}\big([o_h,a_h,(\omega_h)_{1:m-1}] = \omega_{h-1} \big), \quad  &h \le H-m, \\
    \mathbf{1}\big([o_h,a_h,\omega_h] = \omega_{h-1} \big), \quad  &h > H-m,
    \end{cases}
    $$
    where $(\omega_h)_{i:j}$  denotes  the $i^{\rm th}$ to the $j^{\rm th}$ observation-action pairs in $\omega_h$, and $(\omega_h)_{t}$ denotes the  $t^{\rm th}$ observation-action pair in $\omega_h$. 
    \item $\bphi_H(o,a) =\e_{(o,a)} \in \R^{OA}$.
\end{itemize}
By direct calculation, one can easily verify the above operators constitute a legal PSR representation, in the sense that they jointly satisfy the PSR probabilistic definition under Condition \ref{cond:decodable-pomdp}: 
$$
\begin{cases}
\Pb(o_{1:H}|a_{1:H}) = \bphi_H(o_H, a_H)\trans \bM_{H-1}(o_{H-1}, a_{H-1}) \cdots \bM_1(o_1, a_1)\bpsi_0,\\
\Pb(\tau_h,\cQ_h) =  \bM_{h}(o_{h}, a_{h}) \cdots \bM_1(o_1, a_1)\bpsi_0.
\end{cases}
$$
\paragraph{Verify Condition \ref{asp:psr}.} Next, we show  the above operators constitute a well-conditioned PSR  representation. By direct calculation, we have 
$$
\m_1((o,a)_{h+1:H}) = 
\begin{cases}
\e_{(o,a)_{h+1:h+m}} \times \Pb\big( (o,a)_{h+m+1:H} \mid(o,a)_{h+1:h+m}\big),  \quad  &h \le H-m,\\
\e_{(o,a)_{h+1:H}}, \quad  &h > H-m,
\end{cases}
$$
where 
$\Pb\big( (o,a)_{h+m+1:H} \mid(o,a)_{h+1:h+m}\big) = \prod_{i=h+m+1}^H \P(o_i\mid (o,a)_{h+1:i-1})$.

Therefore, for any $\omega_{h}\in \cQ_h$ and policy $\pi$ independent of the history before step $h+1$, we have that for $h\le H-m$
\begin{equation*}
\begin{aligned}
 & \sum_{(o,a)_{h+1:H}} | \m_1((o,a)_{h+1:H} )\trans \e_{\omega_h} | \times \pi((o,a)_{h+1:H})\\ 
 = &\sum_{(o,a)_{h+1:H}} | \e_{(o,a)_{h+1:h+m} }\trans \e_{\omega_h} |  \times \Pb\big( (o,a)_{h+m+1:H} \mid (o,a)_{h+1:h+m}\big)\times \pi((o,a)_{h+1:H})\\ 
 = &\sum_{(o,a)_{h+m+1:H}}   \Pb\big( (o,a)_{h+m+1:H} \mid (o,a)_{h+1:h+m}=\omega_h \big)\times \pi((o,a)_{h+1:H}) \\
 = & \pi(\omega_h) \le 1
\end{aligned}
\end{equation*}
and for $h>H-m$ we trivially have 
\begin{equation*}
\begin{aligned}
\sum_{(o,a)_{h+1:H}} | \m_1((o,a)_{h+1:H} )\trans \e_{\omega_h} | \times \pi((o,a)_{h+1:H})=
\pi(\omega_h) \le 1.
\end{aligned}
\end{equation*}

It remains to verify Condition \ref{asp:psr} for the second set of linear weight vectors $\{\m_2(\omega_h)\}_{\omega_h\in\Omega_h^{(2)}}$. By definition, we directly have 
$$
\m_2((o,a)_{h+1:h+m+1}) = 
\begin{cases}
\e_{(o,a)_{h+1:h+m}} \times \Pb\big( (o,a)_{h+m+1} \mid(o,a)_{h+1:h+m}\big),  \quad  &h < H-m,\\
\e_{(o,a)_{h+1:h+H}}, \quad  &h \ge H-m.
\end{cases}
$$
So we only need to  deal with the case of $h< H-m$. 
And for $h< H-m$,   by following similar arguments for handling $\m_1$,  we have 
\begin{equation*}
\begin{aligned}
& \sum_{q_i\in\cQ_{h+1}}\sum_{(o,a)_{h+1}} \left| \m_2((o,a)_{h+1},q_i) \trans \e_{\omega_h}\right|\times \pi((o,a)_{h+1},q_i) \\
 =&\sum_{(o,a)_{h+1:h+m+1}} \left| \m_2((o,a)_{h+1:h+m+1}) \trans \e_{\omega_h}\right|\times \pi((o,a)_{h+1:h+m+1}) \\
= &\sum_{(o,a)_{h+1:h+m+1}} | \e_{(o,a)_{h+1:h+m} }\trans \e_{\omega_h} |  \times \Pb\big( (o,a)_{h+m+1} \mid (o,a)_{h+1:h+m}\big)\times \pi((o,a)_{h+1:h+m+1})\\
 =& \sum_{(o,a)_{h+m+1}}   \Pb\big( (o,a)_{h+m+1} \mid (o,a)_{h+1:h+m}=\omega_h \big)\times \pi((o,a)_{h+1:h+m+1}) \\
 = & \pi(\omega_h) \le 1.
\end{aligned}
\end{equation*}
\paragraph{Rank of the PSRs.} By Lemma \ref{lem:decodable-linearMDPs}, we have $r\le \min\{S,\dlin\}$.
\end{proof}

\begin{lemma}\label{lem:decodable-linearMDPs}
We have $r\le S$ in any POMDP. 
Furthermore, if the  underlying MDP of a  POMDP can be  represented by a $\dlin$-dimensional linear kernel MDP, then we have $\max_h \rank(\D_h)\le \dlin$.
\end{lemma}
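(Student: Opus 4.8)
The plan is to produce, for each step $h$, an explicit low-rank factorization of the system-dynamic matrix $\D_h$: a first one with inner dimension $S$, and, under the additional kernel linear MDP assumption, a refined one with inner dimension $\dlin$. Recall that $[\D_h]_{\tau_h,\omega_h}=\Pb(\tau_h,\omega_h)$ is the probability of the concatenated trajectory formed by the history $\tau_h=(o_{1:h},a_{1:h})$ and the future $\omega_h=(o_{h+1:m},a_{h+1:m})$ for some $m\in[h+1,H]$; it suffices to bound $\rank(\D_h)$ for each $h\in[H-1]$ (at $h=H$ there is no nonempty future, so $\D_H$ is vacuous), and the same factorizations bound $\dim(\{\bpsi(\tau_h)\})$ in the continuous-observation formulation.

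For $r\le S$, I would use that in a POMDP the latent state $s_{h+1}$ reached right after the history $\tau_h$ is a sufficient statistic that renders $o_{1:h}$ and $o_{h+1:m}$ conditionally independent. Concretely,
$$[\D_h]_{\tau_h,\omega_h}=\sum_{s=1}^{S}\alpha_s(\tau_h)\,\beta_s(\omega_h),$$
where $\alpha_s(\tau_h):=\P(o_{1:h},s_{h+1}=s\mid a_{1:h})$ and $\beta_s(\omega_h):=\P(o_{h+1:m}\mid s_{h+1}=s,a_{h+1:m})$ (the future observations $o_{h+1:m}$ not depending on the trailing action $a_m$). This exhibits $\D_h=A_hB_h^\top$ with $A_h\in\R^{|\cT_h|\times S}$ and $B_h\in\R^{|\Omega_h|\times S}$, hence $\rank(\D_h)\le S$, and taking the maximum over $h$ gives $r\le S$.

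For the refinement, assume the underlying latent MDP admits a $\dlin$-dimensional kernel linear representation $\T_{h,a}(s'\mid s)=\phi(s,a)^\top\W_h\psi(s')$. I would keep the same past/future split but further decompose the past factor: expanding $\alpha_s(\tau_h)=\sum_{s_{1:h}}\mu_1(s_1)\prod_{i=1}^{h}\O_i(o_i\mid s_i)\prod_{i=1}^{h}\T_{i,a_i}(s_{i+1}\mid s_i)$ with $s_{h+1}=s$, and substituting the linear form only in the last transition $\T_{h,a_h}(s\mid s_h)=\phi(s_h,a_h)^\top\W_h\psi(s)$, yields $\alpha_s(\tau_h)=g(\tau_h)^\top\W_h\psi(s)$ with
$$g(\tau_h):=\sum_{s_h}\Big(\sum_{s_{1:h-1}}\mu_1(s_1)\prod_{i=1}^{h}\O_i(o_i\mid s_i)\prod_{i=1}^{h-1}\T_{i,a_i}(s_{i+1}\mid s_i)\Big)\phi(s_h,a_h)\in\R^{\dlin}.$$
Plugging this back gives $[\D_h]_{\tau_h,\omega_h}=g(\tau_h)^\top w(\omega_h)$ with $w(\omega_h):=\W_h\big(\sum_{s}\psi(s)\beta_s(\omega_h)\big)\in\R^{\dlin}$, so $\D_h=G_hW_h^\top$ has rank at most $\dlin$; the maximum over $h\in[H-1]$ then gives $\max_h\rank(\D_h)\le\dlin$. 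Both parts are pure bookkeeping; the only points needing care are making the conditional-independence decomposition rigorous, correctly assigning each observation and action to the ``past'' or ``future'' block (and noting $a_m$ is irrelevant), and dispatching the trivial boundary steps. In particular, none of the kernel linear MDP normalization constraints (the bounds on $\|\phi\|$, $\|\psi\|$, $\|\W_h\|$) are used — only the bilinear factorization of the transition kernel — so I do not anticipate a genuine obstacle.
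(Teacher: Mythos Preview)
Your proposal is correct and takes essentially the same route as the paper: both arguments exhibit, for each $h$, a factorization of $[\D_h]_{\tau_h,\omega_h}$ through the $S$-dimensional vector $\alpha(\tau_h)=\big(\P(o_{1:h},s_{h+1}=s\mid a_{1:h})\big)_{s\in\fS}$, and then observe that under the kernel linear MDP assumption this vector lies in the column span of $\T_{h,a_h}$, which has dimension at most $\dlin$. The paper packages this via the matrix identity $\alpha(\tau_h)=\prod_{h'=1}^{h}\T_{h',a_{h'}}\diag(\O_{h'}(o_{h'}\mid\cdot))\,\bmu_1$ and bounds $\dim\big(\bigcup_a\colspan{\T_{h,a}}\big)\le\min\{S,\dlin\}$, whereas you write out the $\phi,\psi$ decomposition of the last transition explicitly; these are the same idea in slightly different notation.
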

\begin{proof}[Proof of Lemma \ref{lem:decodable-linearMDPs}]
Recall   in kernel linear MDPs \citep{yang2020reinforcement}, the transition functions can be represented as a linear function  of the tensor product of two feature mappings. Formally, there exist  $\phi:\fS\times\fA\rightarrow \R^\dlin$ and $\psi:\fS\rightarrow \R^\dlin$ so that for any $h\in[H]$, there exists $\W_h\in \R^{\dlin \times \dlin}$ satisfying $\T_{h,a_h}(s_{h+1}\mid s_h)= \phi(s_h,a_h)\trans \W_h \psi(s_{h+1})$ for all $(s_h,a_h,s_{h+1})\in\fS\times\fA\times\fS$. 
Therefore, we have 
$$
{\rm dim}\left(\bigcup_{a\in\cA}{\rm column\_span}(\T_{h,a})\right)
\le 
\min\left\{{\rm dim}\left({\rm span}\{\psi(s):~s\in\fS\}\right), S\right\} \le \min\{\dlin,S\}.
$$
To relate the LHS to the rank of the system-dynamic matrices, we notice that 
$$
\Pb((o,a)_{1:H})
= \mathbf{1}\trans\left(\prod_{h'=h+1}^H\T_{h',a_{h'}}\diag(\O_{h'}(o_{h'}\mid \cdot))\right)
\left(\prod_{h'=1}^h\T_{h',a_{h'}}\diag(\O_{h'}(o_{h'}\mid \cdot))\right)\mu_1,
$$
which implies 
\begin{align*}
\rank(\D_h)& \le\dim\left({\rm span}\left\{\left(\prod_{h'=1}^h\T_{h',a_{h'}}\diag(\O_{h'}(o_{h'}\mid \cdot))\right)\mu_1:~(o,a)_{1:h}\in\cT_h\right\} \right) \\
& \le {\rm dim}\left(\bigcup_{a\in\cA}{\rm column\_span}(\T_{h,a})\right)\le \min\{\dlin,S\}.
\end{align*}
\end{proof}


\section{Proofs for Beyond Low-rank Sequential Decision Making}
\label{app:beyond-low-rank}

\subsection{Proofs for \SAIL}

\begin{proof}[Proof of Proposition \ref{prop:salp-relation} (generality of \SAIL)]
The proof of part (a) follows directly from Lemma \ref{lem:psr-sail}. 

For part (b), let $\Theta$ be the family of factored MDPs which (i) consist of $n$ factors and respect the factorization structure $\pa_i=\{i\}$ for all $i\in[n]$, (b) has $2$ actions, i.e., $\fA=[2]$, (c) has state space $\fS=[2]^n$, and (d) has episode length $H=n$. By Proposition \ref{prop:factored_MDPs} and  Proposition \ref{prop:witness-salp}, $\Theta$ satisfies the \SAIL\  condition holds with $d,\kappa,B=\cO(n)$ and $\Pie(\pi)=\pi$. However, for a special $\theta\in\Theta$ wherein the transition is deterministic and satisfies $s_{h+1} =(s_h[1:h-1], a_h,s_h[h+1:H])$ (i.e., $s_h=(a_{1:h-1},s_{0}[h:H+1])$ ,  the rank of the system dynamics matrix at step $H-1$ is equal to $2^{H-2}$.
\end{proof}

\begin{proof}[Proof of Theorem \ref{thm:salp} (main theorem)]
By Lemma \ref{lem:sail-eluder}, the \SAIL\ condition implies that the generalized eluder-type condition holds  with 
\begin{equation}
\begin{cases}
    d_\Theta =  {  \kappa^2 d^2 |\Pie| \poly(H)},\\  \xi(d_\Theta,\Delta,|\Pie|,K) = \log^2(K)\left( \sqrt{d_\Theta \Delta |\Pie| K} +d B    \poly(H)\right).
    \end{cases}
\end{equation}
Plugging the above quantities back into Theorem \ref{thm:omle} completes the proof.
\end{proof}

\begin{proof}[Proof of Lemma \ref{lem:sail-eluder}(\SAIL$\rightarrow$eluder)]
The proof follows directly from  invoking Proposition \ref{prop:step-3}.
\end{proof}

\begin{proof}[Proof of Theorem \ref{thm:salp-sharper} (sharper guarantee for $m=n=1$)]
First we verify the generalized eluder-type condition. 
By the precondition of Condition \ref{cond:eluder} and Cauchy-Schwarz inequality, we have that for any $h\in[0,H]$  
$$
  \sum_{t=1}^{k-1}\left |\langle  g_{h}(\theta^k), f_{h}(\theta^t) \rangle \right|^2 
\le \kappa^2  |\Pie|\Delta.
$$
Therefore, by the standard elliptical potential arguments, we have 
    \begin{equation*}
    \begin{aligned}
        \sum_{t=1}^{k} d_\TV( \P_{\theta^t}^{\pi^t} , \P_{\theta^\star}^{\pi^t})  
        \le  \sum_{h=0}^H \sum_{t=1}^{k}\left |\langle  f_{h}(\theta^t), g_{h}(\theta^t) \rangle \right| =\tilde{\cO }\left(  \left(d B  + \kappa \sqrt{ d \Delta |\Pie| k} \right)H\right),
    \end{aligned}
    \end{equation*}
    for all  $k\in[K]$. Plugging the above quantities back into Theorem \ref{thm:omle} completes the proof.
\end{proof}

\subsection{Proofs for examples of \SAIL}

\subsubsection{Witness rank}
\begin{proof}[Proof of Proposition \ref{prop:witness-salp} (witness rank)]
By triangle inequality, we have that in both Q-type and V-type witness condition
\begin{equation*}
    d_{\rm TV}(\P^{\pi_\theta}_{\theta^\star},\P^{\pi_\theta}_{\theta} )  \le \frac{1}{2} \sum_{h=1}^H \E_{(s_h,a_h)\sim \P_{\theta^\star}^{\pi_\theta}} 
    \left[\|\D_{\theta}(s_h,a_h) - \D_{\theta^\star}( s_h,a_h)     \|_1 \right]
     \le  \frac{1}{2} \sum_{h=1}^H\langle f_h(\theta),g_h(\theta)\rangle.
\end{equation*}
On the other hand, for Q-type witness condition
\begin{equation*}
\begin{aligned}
    \frac{1}{2} \sum_{h=1}^H\langle f_h(\theta),g_h(\theta')\rangle 
    &\le \frac{\kappa}{2}\sum_{h=1}^H\E_{(s_h,a_h)\sim \P^{\pi_\theta}_{\theta^\star} }\left[\|\D_{\theta'}(s_h,a_h) - \D_{\theta^\star}( s_h,a_h)     \|_1 \right]  \\
    &\le \frac{\kappa}{2}\sum_{h=1}^H \sum_{s_h,a_h,s_{h+1}} \bigg(|(\P^{\pi_\theta}_{\theta^\star}-\P^{\pi_\theta}_{\theta'})(s_h,a_h,s_{h+1})| \\
    &\quad + |(\P^{\pi_\theta}_{\theta^\star}-\P^{\pi_\theta}_{\theta'})(s_h,a_h)\times\P_{\theta'}(s_{h+1}\mid s_h,a_h)|\bigg) \\
    &\le 2\kappa \sum_{h=1}^H d_{\rm TV}(\P^{\pi_\theta}_{\theta^\star},\P^{\pi_\theta}_{\theta'} ),
\end{aligned}
\end{equation*}
and similarly for V-type witness condition: 
\begin{equation*}
\begin{aligned}
    \frac{1}{2} \sum_{h=1}^H\langle f_h(\theta),g_h(\theta')\rangle 
    &\le \frac{\kappa}{2}\sum_{h=1}^H\E_{s_h\sim \P^{\pi_\theta}_{\theta^\star},~a_h\sim \pi_{\theta'}(s_h) }\left[\|\D_{\theta'}(s_h,a_h) - \D_{\theta^\star}( s_h,a_h)     \|_1 \right]  \\
    &\le \frac{A\kappa}{2}\sum_{h=1}^H\E_{s_h\sim \P^{\pi_\theta}_{\theta^\star},~a_h\sim {\rm Uniform}(\fA) }\left[\|\D_{\theta'}(s_h,a_h) - \D_{\theta^\star}( s_h,a_h)     \|_1 \right]  \\
    & \le {2A\kappa}  \sum_{\tilde{\pi}\in\Pie (\pi_\theta)} d_{\rm TV}(\P^{\tilde\pi}_{\theta^\star},\P^{\tilde\pi}_{\theta'} ),
\end{aligned}
\end{equation*}
where $\Pie(\pi)=\{\pi_{1:h}\circ{\rm Uniform}(\fA):~h\in[H]\}$.
\end{proof}

\subsubsection{Factored MDPs}
\begin{proof}[Proof of Proposition \ref{prop:factored_MDPs} (factored MDPs)]
We will show $d=A \sum_{i=1}^m |\cX|^{|\pa_i|}$.  The two feature mappings are constructed in the following way:
\begin{itemize}
    \item The coordinate of $f_h(\theta)$ at position $(\hat a,i)\in\fA\times[m]$ and $z\in\cX^{|\pa_i|}$ is equal to $$\P_{\theta^\star}^{\pi_\theta}(s_{h}[\pa_i]=z,a_{h}=\hat a).$$
    \item The coordinate of $g_h({\theta'})$ at position $(\hat a,i)\in\fA\times[m]$ and $z\in\cX^{|\pa_i|}$ is equal to $$\left\| \P_{\theta'}^i (s_{h+1}[i]=\cdot \mid s_h[\pa_i]=z,a_h=\hat a) -  \P_{\theta^\star}^i (s_{h+1}[i]=\cdot   \mid s_h[\pa_i]=z,a_h=\hat a)    \right\|_1.$$
\end{itemize}
By simple calculation, we have  $\|f_h(\theta)\|_1 \le 1$ and $\|g_h(\theta')\|_1 \le \sum_{i=1}^m |\cX|^{|\pa_i|}$.

Now we verify that the above mappings satisfy the definition of witness rank: by triangle inequality 
\begin{equation*}
\begin{aligned}
     &\E_{(s_h,a_h)\sim \P^{\pi_\theta}_{\theta^\star} }\left[   \| \P_{\theta'}(s_{h+1}=\cdot\mid s_h,a_h) -  \P_{\theta^\star}(s_{h+1}=\cdot\mid s_h,a_h)\|_1   \right]\\
       = & \E_{(s_h,a_h)\sim \P^{\pi_\theta}_{\theta^\star} } \sum_{s_{h+1}}  \left| \prod_{i=1}^m \P_{\theta'}^i (s_{h+1}[i] \mid s_h[\pa_i],a_h) -  \prod_{i=1}^m \P_{\theta^\star}^i (s_{h+1}[i]  \mid s_h[\pa_i],a_h)    \right|\\
       \le &  \E_{(s_h,a_h)\sim \P^{\pi_\theta}_{\theta^\star} } \sum_{i=1}^m   \left\| \P_{\theta'}^i (s_{h+1}[i]=\cdot \mid s_h[\pa_i],a_h) -  \P_{\theta^\star}^i (s_{h+1}[i]=\cdot   \mid s_h[\pa_i],a_h)    \right\|_1\\
       = &   \langle f_h(\theta), g_h(\theta')\rangle.
      \end{aligned}
\end{equation*}
On the other hand,  
\begin{equation*}
\begin{aligned}
& \langle f_h(\theta), g_h(\theta')\rangle \\
= & \E_{(s_h,a_h)\sim \P^{\pi_\theta}_{\theta^\star} } \sum_{i=1}^m   \left\| \P_{\theta'}^i (s_{h+1}[i]=\cdot \mid s_h[\pa_i],a_h) -  \P_{\theta^\star}^i (s_{h+1}[i]=\cdot   \mid s_h[\pa_i],a_h)    \right\|_1\\
\le & m \E_{(s_h,a_h)\sim \P^{\pi_\theta}_{\theta^\star} }\left[   \| \P_{\theta'}(s_{h+1}=\cdot\mid s_h,a_h) -  \P_{\theta^\star}(s_{h+1}=\cdot\mid s_h,a_h)\|_1   \right].
\end{aligned}
\end{equation*}
Therefore, $\kappa=m$.
\end{proof}

\subsubsection{Linear kernel MDPs}

\begin{proof}[Proof of Proposition \ref{prop:linearMDPs} (linear kernel MDPs)] By the definition of linear kernel  MDPs, for $h>1$
\begin{equation*}
\begin{aligned}
     &\E_{s_h\sim \P^{\pi_\theta}_{\theta^\star},~a_h\sim\pi_{\theta'}(s_h) }\left[   \| \P_{\theta'}(s_{h+1}=\cdot\mid s_h,a_h) -  \P_{\theta^\star}(s_{h+1}=\cdot\mid s_h,a_h)\|_1   \right]\\
     = & \bigg\langle \E_{(s_{h-1},a_{h-1})\sim \P^{\pi_\theta}_{\theta^\star} }\left[ \phi(s_{h-1},a_{h-1})\right],\\
     &\sum_{s_h} \W_{h-1,\theta^\star}\psi(s_h) \E_{a_h\sim\pi_{\theta'}(s_h)} \| \P_{\theta'}(s_{h+1}=\cdot\mid s_h,a_h) -  \P_{\theta^\star}(s_{h+1}=\cdot\mid s_h,a_h)\|_1   \bigg\rangle\\
       := &   \langle f_h(\theta), g_h(\theta')\rangle.
      \end{aligned}
\end{equation*}
Using the normalization condition of linear MDPs, we have 
$\|f_h \|_1 \le C_\phi$, $\|g_h(\theta')\|_1 \le 2\sqrt{\dlin}C_W C_\psi$.
And for $h=1$, we can simply choose 
$$
\begin{cases}
f_h(\theta):=1 \\
g_h(\theta'):=\E_{s_1\sim \mu_1,~a_1\sim\pi_{\theta'}(s_1) }\left[   \| \P_{\theta'}(s_2=\cdot\mid s_1,a_1) -  \P_{\theta^\star}(s_2=\cdot\mid s_1,a_1)\|_1   \right],
\end{cases}
$$
which satisfies $f_1(\theta)=1$ and $\ g_h(\theta') \le 2$.
\end{proof}

\subsubsection{Sparse linear bandits}
\begin{proof}[Proof of Proposition \ref{prop:linear-bandits-witness} (sparse linear bandits)]
We construct the two feature mappings in the following way:
\begin{equation*}
f(\theta):=a_\theta, \quad g(\theta'):=2(\theta'-\theta^\star).
\end{equation*} 
Now we verify that the above mappings satisfy the definition of witness rank. Since the reward follows Bernoulli distribution, 
$$
\E_{s\sim\mu, a\sim\pi_\theta(s)} \left\| \P_{\theta'}( r=\cdot \mid a) - \P_{\theta^\star}( r=\cdot \mid a)\right\|_1 = |\langle f(\theta), g(\theta')\rangle|. 
$$
Therefore, we have $d=\dlin$, $\kappa = 1$ and $B= 4\sqrt{\dlin} C_\Theta C_\fA$.
\end{proof}

\section{Proofs for Reward-free  OMLE}\label{app:reward-free}

\subsection{Proof of Theorem \ref{thm:salp-rf}}
Denote 
$$
(\pi^k,\theta^k,\tilde\theta^k) \leftarrow \argmax_{\pi,~{\theta},\tilde{\theta}\in\cB^k} d_{\rm TV}(\P_{\theta}^\pi,\P_{\tilde\theta}^\pi).
$$
Since (a) $\theta^{\rm out}\in\cB^K$ and $\cB^k$ is monotonically shrinking with respect to $k$,   and (b) with probability at least $1-\delta$,  $\theta^\star\in\Theta^k$ for all $k\in[K]$ by Proposition \ref{prop:mle-optimisim}, we have 
\begin{align*}
    \max_{\pi} d_{\rm TV}(\P_{\theta^{\rm out}}^\pi,\P_{\theta^\star}^\pi)
    \le \frac{1}{K} \sum_{k=1}^K 
    d_{\rm TV}(\P_{\theta^{k}}^{\pi^k},\P_{\tilde\theta^k}^{\pi^k})
    \le \frac{1}{K} \sum_{k=1}^K 
    d_{\rm TV}(\P_{\theta^{k}}^{\pi^k},\P_{\theta^\star}^{\pi^k})+ \frac{1}{K} \sum_{k=1}^K 
    d_{\rm TV}(\P_{\theta^{\star}}^{\pi^k},\P_{\tilde\theta^k}^{\pi^k})
    .
\end{align*}
Below we show how to control the first term and the second term can be controlled in the same way. 
By using the strong SAIL condition, we have 
$$
\sum_{k=1}^K 
    d_{\rm TV}(\P_{\theta^{\star}}^{\pi^k},\P_{\theta^k}^{\pi^k})
    \le 
    \sum_{k=1}^K
    \sum_{h=1}^H \sum_{i=1}^m \sum_{j=1}^n |\langle f_{h,i}(\pi^k),g_{h,j}(\theta^k)\rangle|.
$$
On the other hand, by using the strong SAIL condition and Proposition \ref{prop:mle-valid} with $\theta^k\in\cB^k$, we have that with probability at least $1-\delta$,  for all $k\in[K]$: 
\begin{align*}
  \sum_{t=1}^{k-1}
     \sum_{h=1}^H \sum_{i=1}^m \sum_{j=1}^n |\langle f_{h,i}(\pi^t),g_{h,j}(\theta^k)\rangle| 
    \le \kappa\sum_{t=1}^{k-1} \sum_{\tilde{\pi}\in\Pie (\pi^t)} d_{\rm TV}(\P^{\tilde\pi}_{\theta^\star},\P^{\tilde\pi}_{\theta^k} ) \le \cO\left(\kappa\sqrt{k\beta |\Pie|}\right).
\end{align*}
As a result, we can invoke Lemma \ref{prop:pigeon-hole} with the above inequality and the normalization condition, which gives:
$$
\sum_{k=1}^K
    \sum_{h=1}^H \sum_{i=1}^m \sum_{j=1}^n |\langle f_{h,i}(\pi^k),g_{h,j}(\theta^k)\rangle|
\le     \poly(H) d \left( {B}  + \kappa \sqrt{{\beta|\Pie|}{ K}} \right) \log^2(K),
$$
which concludes the proof.

\subsection{Proofs for examples of strong SAIL}

Below we verify the strong SAIL conditions for all examples discussed in Section \ref{sec:beyond-low-rank}.

\paragraph{Well-conditioned PSRs.} The proof follows directly from the three-step proof for well-conditioned PSRs in Appendix \ref{app:psr-main}.

\paragraph{Factored MDPs.} The proof follows basically the same as the
proof of Proposition \ref{prop:factored_MDPs}. The only modification needed is to replace greedy policy $\pi_\theta$ with general policy $\pi$.

\paragraph{Kernel linear MDPs.} 
By triangle inequality, we have 
\begin{align*}
    \sum_{\tau_H}|\P_\theta^\pi(\tau_H) - \P_{\theta^\star}^\pi(\tau_H)| 
    &\le \sum_{h=1}^H 
    \sum_{s_h,a_h,s_{h+1}} \P^\pi_{\theta^\star}(s_h,a_h) \times | \P_\theta(s_{h+1}\mid s_h,a_h)-\P_{\theta^\star}(s_{h+1}\mid s_h,a_h)| \\
    & =\sum_{h=1}^H 
    \sum_{s_h,a_h,s_{h+1}} \left| \langle \P^\pi_{\theta^\star}(s_h,a_h) \phi(s_h,a_h)\psi(s_{h+1})\trans, \W_{h,\theta} - \W_{h,\theta^\star}\rangle\right|.
\end{align*}
On the other hand, we have 
\begin{align*}
   & \sum_{h=1}^H 
    \sum_{s_h,a_h,s_{h+1}} \left| \langle \P^\pi_{\theta^\star}(s_h,a_h) \phi(s_h,a_h)\psi(s_{h+1})\trans, \W_{h,\theta} - \W_{h,\theta^\star}\rangle\right|\\
    = & \sum_{h=1}^H 
    \sum_{s_h,a_h,s_{h+1}} \P^\pi_{\theta^\star}(s_h,a_h) \times | \P_\theta(s_{h+1}\mid s_h,a_h)-\P_{\theta^\star}(s_{h+1}\mid s_h,a_h)| \\
    \le & 2H^2   \sum_{\tau_H}|\P_\theta^\pi(\tau_H) - \P_{\theta^\star}^\pi(\tau_H)|.
\end{align*}
Therefore, we can construct feature mappings 
$$
\begin{cases}
f_{h,(s,a,s')}(\pi) = 
\text{vect}(\P^\pi_{\theta^\star}(s_h,a_h) \phi(s_h,a_h)\psi(s_{h+1})\trans)\in\R^{d^2} \text{ where } (s,a,s')\in\fS\times\fA\times\fS,\\
g_{h}(\theta) =  \text{vect}(\W_{h,\theta} - \W_{h,\theta^\star}).
\end{cases}
$$

\paragraph{Sparse linear bandits.} 
Since sparse linear bandit can be viewed as a single-state single-step MDP with no  transition dynamics, it trivially satisfies the strong SAIL condition.

\section{Novel $\ell_1$-norm Observation-free Techniques}

\subsection{$\ell_1$-norm elliptical potential arguments}
\label{app:l1-pigeon-hole}

\begin{lemma}\label{prop:pigeon-hole}
    Suppose $\{p^k\}_{k\in[K]},~\{q^k\}_{k\in[K]}$ are two sets of distributions  $\R^N$ that satisfy 
    \begin{align*}
    \begin{cases}
    \sum_{t=1}^{k-1} \E_{x\sim p^k,~y\sim q^t} \left[ \left| \left\langle x , y \right\rangle \right| \right] \le   \zeta_k  	\\
     \E_{x\sim p^k} \left[ \|x\|_\infty \right] \le R_x \\
     \E_{y\sim q^k} \left[ \| y \|_1 \right] \le R_y \\
     \end{cases}
    \quad  \mbox{ for all } k\in[K].
    \end{align*}
    and all $\{q^k\}_{k\in[K]}$ are supported on a common $d$-dimensional linear subspace of $\R^N$. 
    Then we have 
    \begin{equation*}
        \sum_{t=1}^{k} \E_{x\sim p^t, y\sim q^t} \left[ \left| \left\langle x , y \right\rangle \right| \right]  =\cO \left( d\log^2(k)\left(R_x R_y  + \max_{t\le k} \zeta_t\right)\right)
    \quad  \mbox{ for all } k\in[K].
    \end{equation*}
    \end{lemma}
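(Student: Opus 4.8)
The plan is to first reduce the (infinite-dimensional) statement to a genuinely $d$-dimensional one, and then run a magnitude-stratified elliptical-potential argument; the quantity to control is $\epsilon_t := \E_{x\sim p^t,\,y\sim q^t}[\,|\langle x,y\rangle|\,]$.

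\textbf{Step 1 (reduction to $\R^d$).} Since every $q^k$ is supported on a common $d$-dimensional subspace $V$, fix a barycentric spanner $w_1,\dots,w_d$ of (the closed convex hull of) the union of the supports, normalized so that $\|w_i\|_1\lesssim R_y$ and every $y$ in the support can be written $y=\sum_i\alpha_i(y)w_i$ with $\|\alpha(y)\|_\infty\le 1$ (cf.\ Lemma~\ref{lem:barycentric}). Replacing $y$ by $y':=\alpha(y)\in\R^d$ and $x$ by $x':=(\langle x,w_1\rangle,\dots,\langle x,w_d\rangle)\in\R^d$ preserves $\langle x,y\rangle=\langle x',y'\rangle$ exactly, turns the hypotheses into $\|y'\|_\infty\le 1$ a.s.\ under the pushforward of $q^t$ and $\E\|x'\|_\infty\lesssim R_xR_y$, and leaves the three displayed inequalities verbatim. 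One subtlety is that the spanner needs a bounded set whereas we only have moment bounds; so first truncate $x$ to $\|x\|_\infty\le\Lambda$ and $y$ to $\|y\|_1\le\Lambda'$ with $\Lambda,\Lambda'$ polynomially larger than $R_x,R_y$, and argue via Markov together with the a.s.\ bound $|\langle x,y\rangle|\le\|x\|_\infty\|y\|_1$ that the discarded mass contributes only a lower-order additive term. After this we may assume $p^t,q^t$ live on $\R^d$ with $\|y\|_\infty\le 1$ a.s.\ and $|\langle x,y\rangle|\le R_{\max}$ a.s.\ for $R_{\max}=\cO(dR_xR_y\cdot\text{polylog})$.

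\textbf{Step 2 (stratification).} It suffices to bound, for each dyadic threshold $\theta$, the count $N(\theta):=|\{t\le k:\epsilon_t\ge\theta\}|$, since $\sum_{t\le k}\epsilon_t$ is at most $\sum_j 2^{-j}R_{\max}\,N(2^{-j-1}R_{\max})$ over the $\cO(\log k)$ scales down to $\theta\sim R_{\max}/k$, below which the trivial bound $N(\theta)\le k$ makes the contribution $\cO(R_{\max})$.

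\textbf{Step 3 (potential argument at scale $\theta$, and the main obstacle).} Fix $\theta$, let $t_1<\dots<t_m$ enumerate $\{t\le k:\epsilon_t\ge\theta\}$, set $M_t:=\E_{y\sim q^t}[yy^\top]$ (so $\tr(M_t)=\E\|y\|_2^2\le d$, the crucial ``spread-out'' fact coming from $\|y\|_\infty\le 1$), and consider $\Sigma_j:=\lambda I+\sum_{i<j}M_{t_i}$ with $\lambda$ to be tuned. Two estimates feed the potential: (i) the hypothesis $\sum_{i<j}\E_{x\sim p^{t_j},y\sim q^{t_i}}[\,|\langle x,y\rangle|\,]\le\zeta_{t_j}$ together with $\langle x,y\rangle^2\le R_{\max}|\langle x,y\rangle|$ gives $\E_{x\sim p^{t_j}}[x^\top\Sigma_j x]\le\lambda\E\|x\|_2^2+R_{\max}\max_t\zeta_t=:\beta$; (ii) $\epsilon_{t_j}\ge\theta$ together with Jensen ($\E[Z^2]\ge(\E|Z|)^2$) gives $\E_{x\sim p^{t_j}}[x^\top M_{t_j}x]\ge\theta^2$. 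Combining, each such round forces $\|\Sigma_j^{-1/2}M_{t_j}\Sigma_j^{-1/2}\|_{\mathrm{op}}\ge\theta^2/\beta$, hence increases $\log\det\Sigma_j$ by a fixed amount, while $\log\det\Sigma_{m+1}-\log\det\Sigma_1\le d\log(1+md/\lambda)$, yielding an $N(\theta)\lesssim d\log(m)\cdot\beta/\theta^2$ bound; tuning $\lambda$ and summing over scales as in Step 2 gives a bound of the claimed shape. The genuinely hard (and novel) part is making this \emph{sharp}: the crude argument just sketched appears to lose factors of $d$ and of $\sqrt{k}$ relative to the target $\cO(d\log^2 k\,(R_xR_y+\max_t\zeta_t))$, so one must avoid the lossy Cauchy--Schwarz/operator-norm relaxations and instead exploit the $\ell_1/\ell_\infty$ geometry directly --- e.g.\ an $\ell_1$-eluder-style count with $1/\theta$ (rather than $1/\theta^2$) dependence and a single power of $d$. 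This is precisely the step where the paper's ``significantly sharper argument'' is needed, and it is the one I would spend the most effort on.
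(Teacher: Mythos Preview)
Your proposal correctly locates the difficulty but does not resolve it, and the approach you sketch in Step~3 cannot reach the stated rate. Your dyadic potential argument gives $N(\theta)\lesssim d\beta\log(k)/\theta^2$, and when you sum $\sum_j 2^{-j}R_{\max}\cdot N(2^{-j-1}R_{\max})$ you are summing $\sum_j 2^{j}$, which contributes a full factor of $k$ (not $\sqrt{k}$) over the $\log k$ scales. The loss is structural: the moment-matrix route $M_t=\E[yy^\top]$ squares the quantity you actually control, and nothing in the $\ell_1/\ell_\infty$ geometry of Step~1 undoes that squaring. You acknowledge this gap, but ``one must avoid the lossy Cauchy--Schwarz'' is a description of the problem, not a solution; the proposal never supplies the mechanism that would give the $1/\theta$-type count.

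The paper's proof does not run a potential argument on the distributions at all. Its key step is an \emph{optimization-based reduction from distributions to single points}: for fixed $\{q^i\}_{i\le t}$, the value of $\max_p \E_{x\sim p,y\sim q^t}|\langle x,y\rangle|$ subject to $\sum_{i<t}\E|\langle x,y_i\rangle|+\lambda\E\|x\|_\infty\le\gamma$ equals the same optimization over a single vector $x$ (because any feasible distribution can be replaced by the single point achieving the maximal objective-to-constraint ratio, and scaling saturates the budget). A second reduction replaces each $q^t$ by a single vector via the sign trick $y^{\star t}=\E_{y\sim q^{\star t}}[y\cdot\mathrm{sign}(\langle y,x^{\star t}\rangle)]$, which preserves the objective and only relaxes the constraints. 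After both reductions one has a purely pointwise problem on vectors in a $d$-dimensional subspace, to which the paper applies an existing $\ell_1$-eluder bound (Proposition~21 of \cite{liu2022partially} combined with the linear eluder-dimension bound of Lemma~\ref{prop:l2-eluder-linear}). This yields the target $\cO(d\log^2 k\,(R_xR_y+\max_t\zeta_t))$ directly; the barycentric spanner appears only inside the eluder-dimension lemma, not as a preprocessing step on the data. The distribution-to-point reduction is the idea your proposal is missing.
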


\begin{proof}[Proof of Lemma \ref{prop:pigeon-hole}]
Our objective is to control the value of the following optimization problem:
\begin{equation*}
\begin{aligned}
    & \max_{\{p^t\}_{t=1}^k,\{q^t\}_{t=1}^k } \sum_{t=1}^{k} \E_{x\sim p^t, y\sim q^t} \left[ \left| \left\langle x , y \right\rangle \right| \right] \\
    \text{s.t. }& 
   \begin{cases}
    \sum_{i=1}^{t-1} \E_{x\sim p^t,~y\sim q^i} \left[ \left| \left\langle x , y \right\rangle \right| \right] \le   \max_{t\le k}\zeta_t  	\\
    \E_{x\sim p^k} \left[ \|x\|_\infty \right]  \le R_x \\
     \E_{y\sim q^t} \left[ \| y \|_1 \right] \le R_y \\
     \end{cases} 
     \text{ for all } t\in[k].
\end{aligned}
\end{equation*}
By merging the first two constraints, it suffices to control the value of the following optimization problem:
\begin{equation}\label{opt:1}
\begin{aligned}
    & \max_{\{p^t\}_{t=1}^k,\{q^t\}_{t=1}^k } \sum_{t=1}^{k} \E_{x\sim p^t, y\sim q^t} \left[ \left| \left\langle x , y \right\rangle \right| \right] \\
    \text{s.t. }& 
   \begin{cases}
    \sum_{i=1}^{t-1} \E_{x\sim p^t,~y\sim q^i} \left[ \left| \left\langle x , y \right\rangle \right| \right] + \lambda \E_{x\sim p^t} \left[ \|x\|_\infty \right]  \le \lambda R_x  +\max_{t\le k}\zeta_t  	\\
     \E_{y\sim q^t} \left[ \| y \|_1 \right] \le R_y \\
     \end{cases} 
     \text{ for all } t\in[k],
\end{aligned}
\end{equation}
where $\lambda>0$ can be arbitrarily chosen.

To proceed, we introduce the following auxiliary function $f(\{q^t\}_{t=1}^k,\gamma)$ defined as 
\begin{equation}\label{eq:aux-f}
\begin{aligned}
 f(\{q^i\}_{i=1}^{t},\gamma):= \qquad  & \max_{p }  \E_{x\sim p, y\sim q^t} \left[ \left| \left\langle x , y \right\rangle \right| \right] \\
    \text{s.t. }& 
    \sum_{i=1}^{t-1} \E_{x\sim p,~y\sim q^i} \left[ \left| \left\langle x , y \right\rangle \right| \right] + \lambda  \E_{x\sim p} \left[ \left\| x \right\|_\infty \right] \le \gamma.
\end{aligned}
\end{equation}
We make the following key observation about $f$:
\begin{claim}\label{lem:reduce-1}
For any distributions $\{q^t\}_{t=1}^k$ over $\R^d$ s.t. $ \E_{y\sim q^t} \left[ \| y \|_1 \right] \le R_y $ for all $t\in[k]$, and $\gamma>0$
\begin{equation}\label{eq:aux-f-equiv}
\begin{aligned} 
f(\{q^t\}_{i=1}^t,\gamma) \text{ is equal to } \quad & \max_{x\in \R^d}  \E_{ y\sim q^t} \left[ \left| \left\langle x , y \right\rangle \right| \right] \\
    \text{s.t. }& 
    \sum_{i=1}^{t-1} \E_{y\sim q^i} \left[ \left| \left\langle x , y \right\rangle \right| \right] + \lambda \|x\|_\infty \le \gamma.
\end{aligned}
 \end{equation}
\end{claim}
\begin{proof}[Proof of Claim \ref{lem:reduce-1}]
It is straightforward to see when we replace a distribution by a point, the value of the optimization problem will not increase, because a single point is equivalent to a Dirac distribution. So we only need to show this change will also not decrease the value of the optimization problem. 

Since $ \E_{y\sim q^t} \left[ \| y \|_1 \right] \le R_y $ for all $t\in[k]$,  the maximum of problem \eqref{eq:aux-f-equiv} is always achieved at some $x^\star \neq 0$. 
Besides, it is easy to see $x^\star$ always ``fully'' utilize the constraint budget $\gamma$, i.e.,
\begin{equation}\label{eq:aug9-1}
    {\sum_{i=1}^{t-1} \E_{y\sim q^i} \left[ \left| \left\langle x^\star , y \right\rangle \right| \right] + \lambda \|x^\star\|_\infty} = \gamma,
\end{equation}
since otherwise we can scale up $x^\star$ to improve the objective value.
Moreover, by the optimality of $x^\star$, we also have that for any $x\neq 0$:
\begin{equation}\label{eq:aug9-2}
\frac{\E_{ y\sim q^t} \left[ \left| \left\langle x^\star , y \right\rangle \right| \right] }{\sum_{i=1}^{t-1} \E_{y\sim q^i} \left[ \left| \left\langle x^\star , y \right\rangle \right| \right] + \lambda \|x^\star\|_\infty} \ge \frac{\E_{ y\sim q^t} \left[ \left| \left\langle x , y \right\rangle \right| \right] }{\sum_{i=1}^{t-1} \E_{y\sim q^i} \left[ \left| \left\langle x , y \right\rangle \right| \right] + \lambda \|x\|_\infty},
\end{equation}
because otherwise the following feasible solution $\hat{x}$ has strictly larger objective value than $x^\star$
\begin{equation*}
    \hat{x} := x \times \frac{\sum_{i=1}^{t-1} \E_{y\sim q^i} \left[ \left| \left\langle x^\star , y \right\rangle \right| \right] + \lambda \|x^\star\|_\infty}{\sum_{i=1}^{t-1} \E_{y\sim q^i} \left[ \left| \left\langle x , y \right\rangle \right| \right] + \lambda \|x\|_\infty}.
\end{equation*}
Suppose the maximizer of Problem \eqref{eq:aux-f} is achieved at $p^\star$. By Equation \eqref{eq:aug9-1} and  \eqref{eq:aug9-2},   we have 
\begin{equation*}
    \begin{aligned}
     f(\{q^i\}_{i=1}^{t},\gamma) &= \E_{x\sim p^\star, y\sim q^t} \left[ \left| \left\langle x , y \right\rangle \right| \right] \\
        & \le \E_{x\sim p^\star, y\sim q^t} \left[ \left| \left\langle x^\star  , y \right\rangle \right|  \times  \frac{\sum_{i=1}^{t-1} \E_{y'\sim q^i} \left[ \left| \left\langle x , y' \right\rangle \right| \right] + \lambda \|x\|_\infty}{\sum_{i=1}^{t-1} \E_{y'\sim q^i} \left[ \left| \left\langle x^\star , y' \right\rangle \right| \right] + \lambda \|x^\star\|_\infty} \right] \\
        & = \E_{ y\sim q^t}\left[ \left| \left\langle x^\star  , y \right\rangle \right| \right] \times \frac{\sum_{i=1}^{t-1} \E_{x\sim p^\star,~y'\sim q^i} \left[ \left| \left\langle x , y' \right\rangle \right|  + \lambda \|x\|_\infty\right]}{\sum_{i=1}^{t-1} \E_{y\sim q^i} \left[ \left| \left\langle x^\star , y \right\rangle \right| \right] + \lambda \|x^\star\|_\infty} \\
        & = \E_{ y\sim q^t}\left[ \left| \left\langle x^\star  , y \right\rangle \right| \right] \times \frac{\sum_{i=1}^{t-1} \E_{x\sim p^\star,~y'\sim q^i} \left[ \left| \left\langle x , y' \right\rangle \right|  + \lambda \|x\|_\infty\right]}{\gamma} \\
        &\le \E_{ y\sim q^t}\left[ \left| \left\langle x^\star  , y \right\rangle \right| \right],
    \end{aligned}
\end{equation*}
which concludes the proof.
\end{proof}

Using the notation of $f$, Problem \eqref{opt:1} can be equivalently written as
\begin{equation*}
    \max_{\{q^t\}_{t=1}^k } \sum_{t=1}^{k}  f(\{q^i\}_{i=1}^t,\gamma)  
    \quad \text{s.t. }
   \E_{y\sim q^t} \left[ \| y \|_1 \right] \le R_y 
     \text{ for all } t\in[k],
\end{equation*}
where $\gamma:= \lambda R_x  +\max_{t\le k}\zeta_t$.
By Claim \ref{lem:reduce-1}, it is further equivalent to  
\begin{equation}\label{opt:2}
\begin{aligned}
    & \max_{\{x^t\}_{t=1}^k,\{q^t\}_{t=1}^k } \sum_{t=1}^{k} \E_{ y\sim q^t} \left[ \left| \left\langle x^t , y \right\rangle \right| \right] \\
    \text{s.t. }& 
   \begin{cases}
    \sum_{i=1}^{t-1} \E_{y\sim q^i} \left[ \left| \left\langle x^t , y \right\rangle \right| \right] + \lambda 
    \|x^t\|_\infty  \le \gamma 	\\
     \E_{y\sim q^t} \left[ \| y \|_1 \right] \le R_y \\
     \end{cases} 
     \text{ for all } t\in[k].
\end{aligned}
\end{equation}
Now, we claim Problem \eqref{opt:2} is further equivalent to the following simpler optimization problem where we again replace distributions with points:
\begin{equation}\label{opt:3}
\begin{aligned}
    & \max_{\{x^t\}_{t=1}^k,\{y^t\}_{t=1}^k } \sum_{t=1}^{k}   \left| \left\langle x^t , y^t \right\rangle \right|  \\
    \text{s.t. }& 
   \begin{cases}
    \sum_{i=1}^{t-1}  \left| \left\langle x^t , y^i \right\rangle \right|  + \lambda \|x^t\|_\infty  \le \gamma 	\\
      \| y^t \|_2  \le R_y \\
     \end{cases} 
     \text{ for all } t\in[k].
\end{aligned}
\end{equation}
To see why, suppose $\{{x^\star}^t\}_{t=1}^k,\{{q^\star}^t\}_{t=1}^k$ is a maximizer of  Problem \eqref{opt:2}, then it is direct to verify the same choice of $\{{x^\star}^t\}_{t=1}^k$ and the following choice of $\{{y^\star}^t\}_{t=1}^k$ is a feasible solution to Problem \eqref{opt:3}:
\begin{equation*}
    {y^\star}^t = \E_{y\sim {q^\star}^t }\left[ y \times \text{sign}(y\trans  {x^\star}^t)\right].
\end{equation*}
Moreover, we have 
\begin{equation*}
    \sum_{t=1}^{k}   \left| \left\langle {x^\star}^t , {y^\star}^t \right\rangle \right| =  \sum_{t=1}^{k}   \left| \left\langle {x^\star}^t , \E_{y\sim {q^\star}^t }\left[ y \times \text{sign}(y\trans  {x^\star}^t)\right] \right\rangle \right| = 
    \sum_{t=1}^{k}   \E_{y\sim {q^\star}^t }\left[ \left| \left\langle {x^\star}^t ,  y  \right\rangle \right|\right].
\end{equation*}
Therefore, we conclude that the value of Problem \eqref{opt:3} is no smaller than Problem \eqref{opt:2}. On the other hand, the value of Problem \eqref{opt:3} is trivially upper bounded by that of  Problem \eqref{opt:2} because a single point can be represented as a Dirac distribution. As a result, we reduce the proof of Proposition \ref{prop:pigeon-hole} to upper bound the value of Problem \eqref{opt:3}.

Now, we can invoke Proposition 21 from \cite{liu2022partially} for $d$-dimensional linear function class, and  upper bound the optimal value of Problem \eqref{opt:3} by
$$
\min_{\omega \le C} \left\{ d_E(\omega) C + d_E(\omega) \gamma \log(C/\omega) + k\omega \right\}
$$
where 
$$
\begin{cases}
d_E(\omega) = d\log(\gamma R_y/(\lambda\omega)) \quad \text{ by Lemma \ref{prop:l2-eluder-linear} }\\
C =  \gamma R_y/\lambda \\
\gamma= \lambda R_x  +\max_{t\le k}\zeta_t.
\end{cases}
$$
By choosing $\omega = R_w R_x/k$ and $\lambda=R_y$, we obtain the following final upper bound for Proposition \ref{prop:pigeon-hole}:
$$
\cO \left( d\log^2(k)\left(R_x R_y  + \max_{t\le k} \zeta_k\right)\right).
$$
where in the calculation we assume without loss of generality (a) $\zeta_k \le k R_x R_y$ and (b) $k \ge d$.
\end{proof}

\begin{corollary}\label{cor:pigeon-hole}
  Suppose $\{f^k\}_{k\in[K]}$ and $\{g^k\}_{k\in[K]}$ are two sets of functions from $\cX$ and $\cY$ to $\R^d$ respectively,  which  satisfy 
    \begin{align*}
    \begin{cases}
    \sum_{t=1}^{k-1} \int_\cX\int_\cY   \left| \left\langle f^k(x) , g^t(y) \right\rangle \right|  dxdy \le   \zeta_k  	\\
    \int_\cX \| f^k(x)\|_\infty dx \le R_x \\
     \int_\cY \| g^k(y)\|_1 dy \le R_y \\
     \end{cases}
    \quad  \mbox{ for all } k\in[K].
    \end{align*}
    Then we have 
    \begin{equation*}
    \sum_{t=1}^{k-1} \int_\cX \int_\cY   \left| \left\langle f^t(x) , g^t(y) \right\rangle \right|  dxdy    =\cO \left( d\log^2(k)\left(R_x R_y  + \max_{t\le k} \zeta_t\right)\right)
    \quad  \mbox{ for all } k\in[K].
    \end{equation*}
\end{corollary}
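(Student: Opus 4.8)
\textbf{Proof proposal for Corollary \ref{cor:pigeon-hole}.}
The plan is to derive the continuous (integral) version directly from the discrete (summation) version, Lemma \ref{prop:pigeon-hole}, essentially by recognizing that the integrals over $\cX$ and $\cY$ are just expectations under appropriately normalized measures, and that the absolute-value bilinear structure is preserved under this passage. First I would handle the degenerate cases: if some $f^k$ or $g^k$ is identically zero (in an $L^1$ sense) the corresponding term contributes nothing, and if $R_x=0$ or $R_y=0$ the claim is trivial, so assume all quantities are positive. For each $k$, write $a_k := \int_\cX \|f^k(x)\|_\infty\,dx \le R_x$ and $b_k := \int_\cY \|g^k(y)\|_1\,dy \le R_y$. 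Define probability densities $p^k(x) := \|f^k(x)\|_\infty / a_k$ on $\cX$ and $q^k(y) := \|g^k(y)\|_1 / b_k$ on $\cY$, and define the normalized ``direction'' functions $\hat f^k(x) := f^k(x)/\|f^k(x)\|_\infty$ (an $\ell_\infty$-unit vector, defined arbitrarily where $f^k$ vanishes) and $\hat g^k(y) := g^k(y)/\|g^k(y)\|_1$ (an $\ell_1$-unit vector). Then $\int_\cX\int_\cY |\langle f^t(x),g^s(y)\rangle|\,dx\,dy = a_t b_s\,\E_{x\sim p^t, y\sim q^s}[\,|\langle \hat f^t(x),\hat g^s(y)\rangle|\,]$, so the hypotheses translate into: $\sum_{t<k} a_k b_t \,\E[\,|\langle \hat f^k,\hat g^t\rangle|\,] \le \zeta_k$, $\E_{x\sim p^k}[\|\hat f^k(x)\|_\infty] = 1$ (hence $\le 1$), and $\E_{y\sim q^k}[\|\hat g^k(y)\|_1] = 1 \le 1$.

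The subtlety is that Lemma \ref{prop:pigeon-hole} as stated wants the bound $\sum_{t<k}\E_{x\sim p^k,y\sim q^t}[|\langle x,y\rangle|] \le \zeta_k$ \emph{without} the scalar weights $a_k,b_t$, whereas here we have $a_k b_t$-weighted sums. I would absorb these weights by rescaling: the quantities that genuinely enter the argument are $|\langle f,g\rangle|$ themselves, so rather than normalizing to probability distributions, I would instead apply Lemma \ref{prop:pigeon-hole} with the distributions $p^k, q^k$ and the \emph{unnormalized} inner products, i.e.\ observe that $f^k$ viewed as a (signed vector-valued) density already has $\int_\cX \|f^k(x)\|_\infty\,dx \le R_x$ and $\int_\cY\|g^k(y)\|_1\,dy \le R_y$. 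Concretely, the cleanest route is to note that the proof of Lemma \ref{prop:pigeon-hole} only used its hypotheses through the three displayed inequalities and nowhere used that $p^k,q^k$ are genuinely probability measures of total mass exactly one; it used $\E_{x\sim p^k}[\|x\|_\infty]\le R_x$, $\E_{y\sim q^k}[\|y\|_1]\le R_y$, and the cross-term bound — all of which are exactly ``$\int (\cdot) d\mu$''-type statements that hold verbatim when $p^k$ is replaced by the sub-probability density $\|f^k(\cdot)\|_\infty$-weighted version and we integrate $\hat f^k$. So the honest thing is to re-run the reduction in Lemma \ref{prop:pigeon-hole} (the passage from distributions to points, Claim \ref{lem:reduce-1} and Problems \eqref{opt:1}--\eqref{opt:3}) with ``$\E_{x\sim p}$'' reinterpreted as ``$\int_\cX (\cdot)\,d\mu(x)$ for a measure $\mu$ with $\mu(\cX)\le 1$'', which changes nothing in any step because every inequality is linear and homogeneous in the measures.

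The main obstacle — and it is a mild one — is making rigorous that the ``replace a distribution by a point'' moves in the proof of Lemma \ref{prop:pigeon-hole} remain valid for general ($\sigma$-finite, possibly continuous) measures $\mu$ on $\cX,\cY$: concretely, that $f(\{q^i\}_{i\le t},\gamma)$ is still attained at a single point $x^\star$ (Claim \ref{lem:reduce-1}) and that the optimal $y$'s can be taken to be points of the form $\int_\cY y\,\mathrm{sign}(\langle x^\star, y\rangle)\,d\mu(y)$. This is where I would be most careful: the extremal argument needs the feasible sets to be nonempty and the suprema to be finite, which follows from the $R_x, R_y$ bounds (they keep everything in a bounded region of the $d$-dimensional span), and the selection of the optimal point uses only convexity/homogeneity, not discreteness. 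Once that is checked, the final bound $\cO(d\log^2(k)(R_xR_y + \max_{t\le k}\zeta_t))$ comes out identically, via Proposition 21 of \cite{liu2022partially} and the $\ell_2$-eluder dimension estimate $d_E(\omega)=d\log(\gamma R_y/(\lambda\omega))$, exactly as in the discrete case. In short: restate the reduction so each ``$\E$'' reads as an integral against a sub-probability measure, verify the extremal steps survive, and invoke the same endgame; I do not expect any new quantitative loss.
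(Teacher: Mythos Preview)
Your approach would work, but it is more circuitous than necessary. The paper does not reopen the proof of Lemma~\ref{prop:pigeon-hole}; it applies that lemma as a black box by constructing genuine probability measures on $\R^d$ via pushforward. Concretely, for each $k$ set $I_{f^k}:=\int_\cX \|f^k(x)\|_\infty\,dx$ and let $p^k$ be the law of the $\R^d$-valued random vector $v = I_{f^k}\cdot f^k(x)/\|f^k(x)\|_\infty$ when $x$ is drawn from the probability density $\|f^k(\cdot)\|_\infty/I_{f^k}$ on $\cX$ (and analogously $q^k$ from $g^k$, with $\ell_1$ in place of $\ell_\infty$). The rescaling by $I_{f^k}$ is exactly the ``absorb the weights'' move you mention but do not carry out: it gives $\E_{v\sim p^k}[\|v\|_\infty] = I_{f^k}\le R_x$, and crucially it makes
\[
\E_{v\sim p^k,\,w\sim q^t}\bigl[|\langle v,w\rangle|\bigr] \;=\; \int_\cX\int_\cY |\langle f^k(x),g^t(y)\rangle|\,dx\,dy
\]
identically, with no leftover $a_k b_t$ factors. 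All three hypotheses of Lemma~\ref{prop:pigeon-hole} are then verbatim the hypotheses of the corollary, and the conclusion follows in one line. So the ``subtlety'' you flag dissolves once you scale the random vector itself rather than normalizing it to unit norm; there is no need to revisit Claim~\ref{lem:reduce-1} or Problems~\eqref{opt:1}--\eqref{opt:3} for sub-probability measures, nor to worry about whether the extremal steps survive for general measures.
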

\begin{proof}[Proof of Corollary \ref{cor:pigeon-hole} ]
To simplify notations, denote $I_f :=\int_\cX \| f(z)\|_2 dz$ and $I_g :=\int_\cY \| g(z)\|_2 dz$.
Given $f$ and $g$ that satisfy the norm preconditions in Corollary \ref{cor:pigeon-hole}, we can construct two probability measures $p_f$ and $q_g$ over $\R^d$ so that for any measurable set $V\in\R^d$
\begin{equation*}
\begin{aligned}
 \begin{cases}
 p_f(V) := \int_{\cX} \frac{\|f(x)\|_\infty}{I_f} \mathbf{1} \left( \frac{I_f \times f(x) }{\|f(x)\|_\infty}  \in V\right) dx ,\\
 q_g(V) := \int_{\cY} \frac{\|g(y)\|_1}{I_g} \mathbf{1} \left( \frac{I_g \times g(x) }{\|g(y)\|_1}  \in V\right)
 dy.
 \end{cases}
\end{aligned}
\end{equation*}
Moreover, by direct calculation, one can verify
\begin{equation*}
\begin{aligned}
 \begin{cases}
\int_\cX \| f(x)\|_\infty dx = \E_{v\sim p_f}[\|v\|_\infty ] ,\\
\int_\cY \| g(y)\|_1 dx = \E_{w\sim q_g}[\|w\|_1 ] ,\\
\int_\cX\int_\cY   \left| \left\langle f(x) , g(y) \right\rangle \right|  dxdy = \E_{v\sim p_f,~w\sim q_g}[|v\trans w|].
 \end{cases}
\end{aligned}
\end{equation*}
As a result, Corollary \ref{cor:pigeon-hole} follows immediately from Lemma  \ref{prop:pigeon-hole}.
\end{proof}

\subsection{$\ell_1$-norm projection lemma}

\begin{lemma}[$\ell_1$-norm projection lemma]\label{lem:barycentric}
For each $h\in[H]$, there exists $\X_h\in\R^{|\cQ_h|\times r_{h}}$ such that 
\begin{equation*}
    \begin{cases}
     \X_h \X_h^\dagger\bpsi(\tau_{h})   = \bpsi(\tau_{h}) \qquad \forall \tau_{h}\\
     \| \X_h^\dagger\bpsi(\tau_{h}) \|_1 \le r_{h} \|\bpsi(\tau_{h})\|_1\qquad \forall \tau_{h}\\
     \|\X_h\|_1 \le 1.
    \end{cases}
\end{equation*}
\end{lemma}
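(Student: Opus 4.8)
The plan is to invoke the Barycentric spanner construction on a suitably normalized version of the set of sufficient-statistics vectors $\{\bpsi(\tau_h)\}$. Let $W_h$ denote the linear span of $\{\bpsi(\tau_h) : \tau_h \in \cT_h\}$; by definition of the rank of a PSR this subspace has dimension exactly $r_h$. First I would pass to the normalized set
\[
S_h := \left\{ \frac{\bpsi(\tau_h)}{\|\bpsi(\tau_h)\|_1} : \tau_h \in \cT_h,\ \|\bpsi(\tau_h)\|_1 \neq 0 \right\} \cup \{0\}
\]
(and in the continuous-observation case, its closure, as noted in Step~4 of Appendix~\ref{app:psr-main-continuous}). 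This is a bounded set spanning $W_h$. By the classical Barycentric spanner theorem (Awerbuch--Kleinberg), there exist $r_h$ elements $v_1,\dots,v_{r_h}\in S_h$ such that every $v\in S_h$ can be written as $v=\sum_{i=1}^{r_h} c_i v_i$ with $|c_i|\le 1$ for all $i$. I would then define $\X_h\in\R^{|\cQ_h|\times r_h}$ to be the matrix whose columns are $v_1,\dots,v_{r_h}$.

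With this choice the three claimed properties follow. For the first: since $v_1,\dots,v_{r_h}$ span $W_h$ and each $\bpsi(\tau_h)\in W_h$, the vector $\bpsi(\tau_h)$ lies in the column space of $\X_h$, so $\X_h\X_h^\dagger$ acts as the identity on it, giving $\X_h\X_h^\dagger\bpsi(\tau_h)=\bpsi(\tau_h)$. For the third: each column of $\X_h$ has $\|v_i\|_1\le 1$ (it is either $0$ or a unit-$\ell_1$-norm vector by construction), and $\|\X_h\|_1$ equals the maximum $\ell_1$-norm of a column, hence $\|\X_h\|_1\le 1$. For the second: fix $\tau_h$ with $\bpsi(\tau_h)\ne 0$ and write $\bpsi(\tau_h)/\|\bpsi(\tau_h)\|_1 = \sum_{i=1}^{r_h} c_i v_i = \X_h \mathbf{c}$ with $\|\mathbf{c}\|_\infty\le 1$, so $\|\mathbf{c}\|_1\le r_h$. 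Since the spanner vectors are linearly independent, $\X_h^\dagger \X_h \mathbf{c}=\mathbf{c}$, hence $\X_h^\dagger \bpsi(\tau_h) = \|\bpsi(\tau_h)\|_1\,\mathbf{c}$ and therefore $\|\X_h^\dagger\bpsi(\tau_h)\|_1 = \|\bpsi(\tau_h)\|_1\,\|\mathbf{c}\|_1 \le r_h\|\bpsi(\tau_h)\|_1$. The $\bpsi(\tau_h)=0$ case is trivial.

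The main obstacle I anticipate is a technical one rather than conceptual: the Barycentric spanner theorem in its standard form requires a \emph{finite} (or at least compact) set, whereas in the continuous-observation setting $\cT_h$ is infinite; this is exactly why one passes to the closure of $S_h$, which is a compact subset of the finite-dimensional space $W_h$, and one must check that the spanner elements can be taken in $S_h$ itself (or argue via a limiting argument that the bounds $|c_i|\le 1$ and $\|v_i\|_1\le 1$ survive taking closures --- a $1+\varepsilon$ approximate spanner followed by $\varepsilon\to 0$ also works, absorbing the slack into constants). A second minor point to handle carefully is the degenerate case where some $\bpsi(\tau_h)$ vanishes or where $r_h < |\cQ_h|$, so that $\X_h$ genuinely has fewer columns than rows and $\X_h^\dagger$ is a left inverse on the column space only; all the identities above use only $\X_h\X_h^\dagger|_{W_h}=\mathrm{id}$ and $\X_h^\dagger\X_h=\I_{r_h}$, both of which hold for a matrix with $r_h$ linearly independent columns, so no difficulty arises there.
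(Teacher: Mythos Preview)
Your proposal is correct and follows essentially the same approach as the paper: normalize the vectors $\bpsi(\tau_h)$ by their $\ell_1$-norm, take a Barycentric spanner of the resulting set to form the columns of $\X_h$, and read off the three properties from the spanner guarantees. Your write-up is in fact more detailed than the paper's, which compresses the verification of (a)--(c) into a single sentence; your discussion of the compactness issue in the continuous case and the degenerate cases is a welcome addition.
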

\begin{proof}[Proof of Lemma \ref{lem:barycentric}]
To begin with, note that $\dim(\text{span}\{\bpsi(\tau_h)/\|\bpsi(\tau_h)\|_1:~\tau_h\in(\fO\times\fA)^h\})=r_h$. Therefore, by the definition of Barycentric spanner, there exist  $\{\x_1,\ldots,\x_{r_h}\}\subseteq \{\bpsi(\tau_h)/\|\bpsi(\tau_h)\|_1:~\tau_h\in(\fO\times\fA)^h\}$ so that for any $\tau_h$: (a) $\bpsi(\tau_h) = \X_h \X_h^\dagger \bpsi(\tau_h)$, (b) $\|\X_h^\dagger \bpsi(\tau_h)/\|\bpsi(\tau_h)\|_1\|_\infty \le 1$, and (c)$\|\X_h\|_1\le 1$, where $\X_h:=[\x_1,\ldots,\x_{r_h}]\in\R^{|\cQ_h|\times r_h}$.
\end{proof}

\subsection{$\ell_1$-norm matrix inverse}
\label{app:l1-inverse}

\newcommand{\act}{{\rm{ACT}}}
\newcommand{\sign}{{\rm{sign}}}
\newcommand{\supp}{{\rm{support}}}
\newcommand{\trace}{ {\rm{trace}} }

\begin{lemma}\label{lem:inverse-O}
Given $\O\in\R^{O \times S}$ such that $\min_{\z:~\|\z\|_1 =1}\|\O \z\|_1 = \alpha$, we have 
$$
\min_{\Y\O = 0} \|\O^\dagger + \Y\|_1 \le \frac{S}{\alpha}.
$$
\end{lemma}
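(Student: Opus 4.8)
The plan is to reduce the statement to constructing a well-conditioned \emph{left inverse} of $\O$, and then to build such a left inverse one output-coordinate at a time via a norm-preserving (Hahn--Banach) extension. Recall that for a matrix $\bM$ the norm $\norm{\bM}_1=\max_{\z\neq0}\norm{\bM\z}_1/\norm{\z}_1$ equals the largest absolute column sum of $\bM$. We may assume $\alpha>0$ (otherwise there is nothing to prove). By homogeneity the hypothesis gives $\norm{\O\z}_1\ge\alpha\norm{\z}_1$ for all $\z\in\R^S$, so $\O$ is injective, hence has full column rank $S$, and in particular $\O^\dagger\O=\I_S$. The first step is the observation that, because of this, $\Y\mapsto\O^\dagger+\Y$ maps $\{\Y:\Y\O=0\}$ onto the set of \emph{all} left inverses of $\O$: any $\bM$ with $\bM\O=\I_S$ can be written as $\O^\dagger+(\bM-\O^\dagger)$ with $(\bM-\O^\dagger)\O=\I_S-\I_S=0$. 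Hence it suffices to produce some $\bM\in\R^{S\times O}$ with $\bM\O=\I_S$ and largest absolute column sum at most $S/\alpha$.

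To build $\bM$, I would work on $V:=\colspan{\O}\subseteq\R^O$, on which the inverse map $\O^{-1}\colon V\to\R^S$ is well defined and the hypothesis reads $\norm{\O^{-1}\v}_1\le\alpha^{-1}\norm{\v}_1$ for $\v\in V$. For each coordinate $i\in[S]$ consider the linear functional $\phi_i(\v):=(\O^{-1}\v)_i$ on $(V,\norm{\cdot}_1)$; since $\abs{\phi_i(\v)}\le\norm{\O^{-1}\v}_1\le\alpha^{-1}\norm{\v}_1$, it has norm at most $\alpha^{-1}$. Extend $\phi_i$, by the finite-dimensional Hahn--Banach theorem, to a linear functional on all of $(\R^O,\norm{\cdot}_1)$ of the same norm; writing this extension as $\w\mapsto\m_i\trans\w$ for some $\m_i\in\R^O$, the norm bound is precisely $\norm{\m_i}_\infty\le\alpha^{-1}$ (test against $\w=\e_j$). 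Letting $\bM$ be the matrix with rows $\m_i\trans$, one checks $(\bM\O\z)_i=\m_i\trans\O\z=\phi_i(\O\z)=(\O^{-1}\O\z)_i=\z_i$ for all $\z$, so $\bM\O=\I_S$; and the largest absolute column sum of $\bM$ is $\max_{j}\sum_{i=1}^S\abs{(\m_i)_j}\le\sum_{i=1}^S\norm{\m_i}_\infty\le S/\alpha$. Taking $\Y:=\bM-\O^\dagger$ then gives $\norm{\O^\dagger+\Y}_1=\norm{\bM}_1\le S/\alpha$, as claimed.

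Everything after the first paragraph is routine; the only content is the reformulation making ``$\Y$ with $\Y\O=0$'' equivalent to ``arbitrary left inverse of $\O$'', together with the clean accounting that a norm-preserving extension applied separately to each of the $S$ rows costs exactly a factor $S$ in the $\ell_1\to\ell_1$ operator norm. The point I would most want to double-check is that the bound $S/\alpha$ (rather than something sharper) is what is actually needed downstream: it is --- the well-conditioned-PSR operator constructions in Appendix~\ref{app:observable-finite} and Appendix~\ref{app:observable-continuous} only use an upper bound independent of $O$ --- and the extra factor $S$ is in general unavoidable for this construction (one already has $\norm{\bM}_1\ge1/\alpha$ for every left inverse $\bM$).
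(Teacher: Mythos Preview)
Your proof is correct and takes a genuinely different route from the paper's. The paper analyzes the actual minimizer $\Y^\star$ via KKT conditions: at the optimum, a sign matrix weighted by an active-column distribution must factor through $\O^\trans$, which yields $\|\O^\dagger+\Y^\star\|_1=\trace(\W^{\star\trans})$; bounding the trace by the entrywise $\ell_1$-norm of $\W^\star$ and invoking the hypothesis then gives $S/\alpha$. Your argument instead rewrites the problem as finding \emph{any} left inverse with small $\|\cdot\|_1$ and constructs one directly: extend each of the $S$ coordinate functionals on $\colspan{\O}$ by Hahn--Banach, obtaining rows with $\ell_\infty$-norm at most $1/\alpha$, and sum.

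What each approach buys: your construction is more elementary and makes the factor $S$ completely transparent (it is exactly the cost of extending $S$ functionals independently); it also carries over verbatim to the continuous-observation setting of Appendix~\ref{app:observable-continuous} without modification, since Hahn--Banach does not need finite dimensions. The paper's KKT argument, on the other hand, characterizes the structure of the true minimizer (its sign pattern, weighted on the active columns, lies in the row space of $\O$), which is additional information your proof does not provide --- though the downstream applications only need the bound, not the structure.
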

\begin{proof}[Proof of Lemma \ref{lem:inverse-O}]
 To simplify notations, we  define $\act(\Y) := \arg\max_{i\in[O]} \| (\O^\dagger + \Y) \e_i\|_1$. 
 Note that $\act(\Y)$ is a subset of $[O]$ containing the indices where the maximum is achieved. 
 Given a scalar $z\in\R$, we define 
 \begin{equation*}
     \sign(z) := \begin{cases}
     1, & z>0\\
     -1, & z<0 \\
     [-1,1], & \text{otherwise}.
     \end{cases}
 \end{equation*}
 By generalizing the definition from scalar  to matrix, 
 we define 
 $$
\forall \Z\in\R^{O\times S},\quad \sign(\Z) := \{ \X\in\R^{O\times S}:~ \X_{ij} \in \sign(\Z_{ij})\}.  
 $$

Let  $\Y^\star$ be a  minimizer of $\min_{Y\O = 0} \|\O^\dagger + \Y\|_1$. By KKT condition, we have
$$
0 \in \left\{  \Z \diag(\w) + \W\O\trans:~ \Z\in\sign(\O^\dagger +\Y^\star),~\w\in\Delta_O,~\supp(\w)\subseteq\act(\Y^\star),~\W\in\R^{S\times S} \right\}.
$$
As a result, there exist $\Z^\star\in\sign(\O^\dagger +\Y^\star),~\w^\star\in\Delta_O$ with $\supp(\w^\star)\subseteq\act(\Y^\star)$ and $\W^\star\in\R^{S\times S}$ such that 
$ \Z^\star\diag(\w^\star) = \W^\star\O\trans$. Therefore, we have 
\begin{equation*}
\begin{aligned}
    \min_{\Y\O = 0} \|\O^\dagger + \Y\|_1  &= 
    \|  \O^\dagger + \Y^\star \|_1 \\
    & = \sum_{i\in \act(\Y^\star)}
    \| (\O^\dagger + \Y^\star )\e_i\|_1 \w^\star_i  \\
    &=
    \langle \O^\dagger + \Y^\star, \Z^\star\diag(\w^\star) \rangle
    \\
    &=
    \langle \O^\dagger + \Y^\star, \W^\star\O\trans \rangle = 
    \trace\left((\O^\dagger + \Y^\star)\O {\W^\star}\trans \right) = \trace\left( {\W^\star}\trans \right).
\end{aligned}
\end{equation*}
Furthermore, notice that 
$\|\O {\W^\star}\trans\|_1 = \|\diag(\w^\star) {\Z^\star}\trans\|_1 \le 1 $ since  $\Z^\star\in\sign(\O^\dagger +\Y^\star)$ and $\w^\star\in\Delta_O$. So we have
\begin{equation*}
    \begin{aligned}
       \trace\left( {\W^\star}\trans \right) 
        \le \max_{\G= \O \X  ,~\|\G\|_1 \le 1 }  \trace\left(\X \right) 
        \le  S \times \max_{\g = \O \x  ,~\|\g\|_1 \le 1 }  \|\x \|_1,
    \end{aligned}
\end{equation*}
where the second inequality follows from that  the trace of a matrix is upper bounded by its entry-wise $\ell_1$-norm. 
Finally, We conclude the proof by noticing that 
$$
\max_{\g = \O \x  ,~\|\g\|_1 \le 1 }  \|\x \|_1
= \max_{\g = \O \x  ,~\|\g\|_1 =  1 }  \|\x \|_1
= \max_{\|\O \x\|_1 =  1 }  \frac{\|\x \|_1}{\|\O \x \|_1}=\max_{\|\O \x\|_1 \neq 0 }  \frac{\|\x \|_1}{\|\O \x \|_1} = \frac{1}{\alpha}.
$$
\end{proof}

\section{Auxiliary lemmas}

\begin{lemma}\label{lem:Sep26}
Suppose $\|\M_h(\bnu_1-\bnu_2)\|_1 \ge \alpha \|\alpha_1-\alpha_2\|_1$ for any $\bnu_1,\bnu_2\in\Delta_S$, then we have 
$\|\M_h\x\|_1 \ge \frac{\alpha}{4} \|\x\|_1$ for any $\x\in\R^S$.
\end{lemma}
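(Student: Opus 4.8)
\textbf{Proof plan for Lemma \ref{lem:Sep26}.}

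The idea is to reduce the bound on a general vector $\x \in \R^S$ to the given bound on differences of probability distributions by using the Jordan (Hahn) decomposition of a signed vector. First I would handle the trivial case $\x = \zero$. For $\x \neq \zero$, write $\x = \x^+ - \x^-$ where $\x^+$ and $\x^-$ are the positive and negative parts, so that $\x^+, \x^- \ge \zero$ have disjoint supports and $\|\x\|_1 = \|\x^+\|_1 + \|\x^-\|_1$. Denote $p = \|\x^+\|_1$ and $q = \|\x^-\|_1$. The plan is to split into two regimes depending on whether the masses $p$ and $q$ are comparable.

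\emph{Case 1: $p$ and $q$ are comparable}, say $\tfrac12 \max(p,q) \le \min(p,q)$ (or some similar threshold). In this case both $p$ and $q$ are at least a constant fraction of $\|\x\|_1$. Consider the normalized vectors $\bnu_1 = \x^+ / p$ and $\bnu_2 = \x^- / q$, which lie in $\Delta_S$. Then $\M_h\x = \M_h(p\bnu_1 - q\bnu_2)$, and I would compare this with $p\,\M_h(\bnu_1 - \bnu_2)$: the difference is $(p - q)\M_h\bnu_2$, whose $\ell_1$-norm is at most $|p-q| \le$ (small, by the comparability assumption). Using the triangle inequality and $\|\M_h(\bnu_1-\bnu_2)\|_1 \ge \alpha\|\bnu_1-\bnu_2\|_1 = \alpha(\|\bnu_1\|_1 + \|\bnu_2\|_1) = 2\alpha$ (since $\bnu_1, \bnu_2$ have disjoint support), one gets $\|\M_h\x\|_1 \ge 2\alpha p - |p-q|$, which is $\Omega(\alpha \|\x\|_1)$ once $p \asymp q \asymp \|\x\|_1$; the constant $\tfrac14$ is obtained by choosing the comparability threshold carefully.

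\emph{Case 2: $p$ and $q$ are far apart}, WLOG $p \ge 2q$ (so $p \ge \tfrac23\|\x\|_1$). Here the idea is that $\M_h\x$ is dominated by $\M_h\x^+$. I would lower bound $\|\M_h\x\|_1 \ge \|\M_h\x^+\|_1 - \|\M_h\x^-\|_1$. For the first term, note $\x^+/p \in \Delta_S$ and picking any other distribution $\bnu \in \Delta_S$ disjoint from $\x^+$'s support (or, if $\x^+$ has full support, a distribution maximally different from it) and applying the hypothesis shows $\|\M_h \x^+\|_1 \ge \alpha p \cdot (\text{const})$ — more precisely, $\|\M_h(\x^+/p)\|_1$ is bounded below since $\M_h$ applied to a single distribution has $\ell_1$-norm, and more carefully one uses that $\|\M_h(\bnu_1 - \bnu_2)\|_1 \ge \alpha\|\bnu_1-\bnu_2\|_1$ forces $\|\M_h\bnu\|_1 \ge \alpha/2$ for every $\bnu \in \Delta_S$ (take $\bnu_2$ disjoint, so the LHS is at most $\|\M_h\bnu_1\|_1 + \|\M_h\bnu_2\|_1$ and the RHS is $2\alpha$; but also $\|\M_h \bnu\|_1 \le 1$ as $\M_h$ is a probability kernel, so $\|\M_h\bnu_1\|_1 \ge 2\alpha - 1$... this needs care). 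Then $\|\M_h\x^-\|_1 \le q \le p/2$, and combining gives $\|\M_h\x\|_1 \ge \alpha p/2 - q$ or similar, which is $\Omega(\alpha\|\x\|_1)$.

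\textbf{The main obstacle} I anticipate is Case 2 — extracting a clean lower bound on $\|\M_h\bnu\|_1$ for a \emph{single} distribution $\bnu$ from the hypothesis, which is only about \emph{differences}. The trick is that if $\bnu_1$ and $\bnu_2$ have disjoint supports then $\|\bnu_1 - \bnu_2\|_1 = 2$, so the hypothesis gives $\|\M_h\bnu_1 - \M_h\bnu_2\|_1 \ge 2\alpha$; since each $\|\M_h\bnu_i\|_1 \le 1$ (probability kernel), this alone does not directly lower bound a single one. The cleaner route is to observe that in Case 2 the masses are unbalanced, so I should instead renormalize differently: split $\x^+$ further into two halves of mass $q$ each plus a remainder, or simply note $\x = (\x^+ - \x^-)$ with $\|\x^-\|_1$ small relative to $\|\x^+\|_1$ and directly bound $\|\M_h \x\|_1 \ge \|\M_h\x^+\|_1 - \|\M_h\x^-\|_1 \ge (\text{lower bound on } \|\M_h \x^+\|_1) - q$. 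If necessary, rather than casework I would unify both cases by writing $\x = (\x^+ - \x^-)$, setting $r = \min(p,q)$, and using $\M_h\x = r\,\M_h(\bnu_1 - \bnu_2) + (p-r)\M_h\bnu_1 - (q-r)\M_h\bnu_2$ where one of the last two terms vanishes; then the first term contributes $\ge 2\alpha r$ by hypothesis and the remaining term (of mass $|p - q| = \|\x\|_1 - 2r$) is a nonnegative vector pushed through $\M_h$, so its contribution, while possibly partially cancelling, is itself of $\ell_1$-norm exactly $|p-q|$ times $\|\M_h\bnu\|_1$ — and here a more delicate argument (e.g. exploiting that the surviving sign pattern is consistent) yields $\|\M_h\x\|_1 \ge \max(2\alpha r, |p - q| \cdot c - 2\alpha r)$ for a suitable constant, and optimizing over the two lower bounds gives the $\alpha/4$ constant. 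I'd expect to spend most of the effort pinning down this constant-tracking and making the sign-cancellation estimate rigorous.
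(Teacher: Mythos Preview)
Your Jordan-decomposition strategy is different from the paper's and can be made to work, but there is a real gap in Case~2 that you flag but do not close. You try to extract a lower bound on $\|\M_h\bnu\|_1$ for a single $\bnu\in\Delta_S$ \emph{from the hypothesis on differences}. This is impossible: the hypothesis constrains $\M_h$ only on the zero-sum hyperplane and says nothing about the $\mathbf{1}$-direction. Concretely, with $S=2$ and $\M_h=(1,\,-1)$ the hypothesis holds with $\alpha=1$ yet $\M_h\mathbf{1}=0$, so the lemma's conclusion fails. The missing ingredient is structural, not a consequence of the hypothesis: $\M_h$ is a probability kernel with nonnegative entries and constant column sums, so $\|\M_h\y\|_1=c\|\y\|_1$ for every nonnegative $\y$ --- equality, not just the upper bound you invoke. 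With this in hand your argument closes without casework: on one side $\|\M_h\x\|_1\ge|\mathbf{1}^\top\M_h\x|=c\,|p-q|$; on the other, writing $\M_h\x=\min(p,q)\,\M_h(\bnu_1-\bnu_2)+(p-q)\M_h\bnu_1$ gives $\|\M_h\x\|_1\ge 2\alpha\min(p,q)-c\,|p-q|$. Taking the maximum of the two lower bounds and tracking constants yields $\|\M_h\x\|_1\ge\tfrac{\alpha}{4}\|\x\|_1$.

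The paper sidesteps the mass-imbalance issue by centering $\x$ at its mean rather than at zero: set $\bar\x=(\mathbf{1}^\top\x/S)\mathbf{1}$ and $\x^{\pm}=\max\{\pm(\x-\bar\x),0\}$. Then $\x^+$ and $\x^-$ automatically have equal $\ell_1$-mass (their difference has zero sum), so the hypothesis applies directly to $\M_h(\x^+-\x^-)$, while $\|\M_h\bar\x\|_1=\|\bar\x\|_1$ by the same column-sum property. The two pieces are combined via an auxiliary inequality (Lemma~34 of \cite{liu2022partially}) of the form $\|a+b\|_1\ge\tfrac14(\|a\|_1+\|b\|_1)$ under the relevant sign structure, and the triangle inequality finishes. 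Your route, once patched with the column-stochasticity observation, is more self-contained; the paper's is shorter but leans on an external lemma.
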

\begin{proof}[Proof of Lemma \ref{lem:Sep26}]
Let $\bar\x= (\mathbf{1}\trans\x /S) \times \mathbf{1}\in\R^S$,  $\x^+=\max\{\x-\bar\x,0\}$ and $\x^-=\max\{\bar\x-\x,0\}$. We have 
\begin{align*}
\|\M_h \x\|_1 &= 
\|\M_h(\x^+ - \x^-) + \M_h \bar\x \|_1\\
&\ge \frac{1}{4}\left(\|\M_h(\x^+ - \x^-)\|_1 + \|\M_h \bar\x \|_1\right)\\
& = \frac{1}{4}\left(\|\M_h(\x^+ - \x^-)\|_1 + \|\bar\x \|_1\right)\\
& \ge \frac{1}{4}\left(\alpha \|\x^+ - \x^-\|_1 + \|\bar\x \|_1\right)\\
& \ge \frac{1}{4}\left(\alpha \|\x^+ - \x^-\|_1 + \alpha \|\bar\x \|_1\right) \\
& \ge \frac{\alpha}{4} \|\x^+ - \x^- + \bar\x \|_1  = \frac{\alpha}{4}\|\x\|_1,
\end{align*}
where the first inequality uses Lemma 34 in \cite{liu2022partially}, the first equality uses that all entries of $\bar\x$ have the same sign, and the final equality uses $\x=\x^+ - \x^- + \bar\x$.
\end{proof}

\begin{lemma}\label{lem:observable-disjoint}
Suppose for any disjoint  $\bnu_1,\bnu_2\in\Delta_S$, $\|\O_h(\bnu_1-\bnu_2)\|_1\ge \alpha$. Then we have for arbitrary  $\bnu_1,\bnu_2\in\Delta_S$ not necessarily disjoint, 
$\|\O_h(\bnu_1-\bnu_2)\|_1\ge \alpha\|\bnu_1-\bnu_2\|_1$.
\end{lemma}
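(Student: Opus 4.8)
\textbf{Proof plan for Lemma \ref{lem:observable-disjoint}.}

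The plan is to reduce the general case to the disjoint case by decomposing the difference $\bnu_1-\bnu_2$ into its positive and negative parts and observing that these parts, after renormalization, are disjoint probability distributions. Concretely, write $\bnu_1-\bnu_2 = \p - \q$ where $\p = \max\{\bnu_1-\bnu_2,\0\}$ and $\q = \max\{\bnu_2-\bnu_1,\0\}$ are coordinate-wise nonnegative vectors with disjoint supports. Since $\bnu_1,\bnu_2\in\Delta_S$, the total masses agree: $\mathbf{1}\trans\p = \mathbf{1}\trans\q =: t$, and moreover $t = \tfrac12\|\bnu_1-\bnu_2\|_1$. If $t=0$ then $\bnu_1=\bnu_2$ and the inequality is trivial, so assume $t>0$.

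Next I would set $\bmu_1 := \p/t$ and $\bmu_2 := \q/t$. These are genuine elements of $\Delta_S$ (nonnegative, summing to $1$) and they have disjoint supports, hence are disjoint distributions in the sense of the hypothesis. Applying the assumed bound to $\bmu_1,\bmu_2$ gives $\|\O_h(\bmu_1-\bmu_2)\|_1 \ge \alpha$. By linearity of $\O_h$ and homogeneity of the norm,
\begin{equation*}
\|\O_h(\bnu_1-\bnu_2)\|_1 = \|\O_h(\p-\q)\|_1 = t\,\|\O_h(\bmu_1-\bmu_2)\|_1 \ge t\,\alpha = \frac{\alpha}{2}\|\bnu_1-\bnu_2\|_1.
\end{equation*}
This already proves the claim up to the constant $\tfrac12$; to recover the stated constant $\alpha$ (no factor of $\tfrac12$), note that the statement as written is most naturally read with $\|\bnu_1-\bnu_2\|_1$ on the right, and one should double-check whether the intended normalization of ``disjoint'' distributions in the hypothesis already carries the factor — if the hypothesis is $\|\O_h(\bnu_1-\bnu_2)\|_1\ge\alpha$ for disjoint $\bnu_1,\bnu_2$ with $\|\bnu_1-\bnu_2\|_1 = 2$, then the bound above is exactly $\tfrac{\alpha}{2}\|\bnu_1-\bnu_2\|_1$, which matches Condition \ref{cond:pomdp}'s normalization up to the same universal constant used throughout (as in Lemma \ref{lem:Sep26}); alternatively one absorbs the $\tfrac12$ into $\alpha$.

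The only subtlety — and the one place to be careful rather than a real obstacle — is the degenerate case handling and the bookkeeping of the normalization constant: one must ensure $t>0$ is separated out, that $\bmu_1,\bmu_2$ are legitimately in $\Delta_S$, and that the notion of ``disjoint'' in the hypothesis is exactly ``disjoint supports'' so that the renormalized parts qualify. Everything else is a one-line linearity-plus-homogeneity computation, so I expect no genuine difficulty; the proof is essentially the three displayed lines above together with the decomposition $\bnu_1-\bnu_2 = \p-\q$ and the mass-balance identity $\mathbf{1}\trans\p = \mathbf{1}\trans\q$.
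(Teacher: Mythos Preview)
Your approach is essentially identical to the paper's: decompose $\bnu_1-\bnu_2$ into its positive and negative parts $\bnu^+,\bnu^-$, note they have equal mass and disjoint supports, normalize to get disjoint elements of $\Delta_S$, apply the hypothesis, and scale back. The paper's proof is terser and writes the inequality $\|\O_h(\bnu^+-\bnu^-)\|_1 \ge \alpha\|\bnu^+-\bnu^-\|_1$ directly without spelling out the normalization; you are correct that a careful accounting yields $\tfrac{\alpha}{2}\|\bnu_1-\bnu_2\|_1$, and indeed the only place the lemma is invoked (Lemma~\ref{lem:discrete}) concludes $\alpha/2$-observability, consistent with your constant.
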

\begin{proof}[Proof of Lemma \ref{lem:observable-disjoint}]
Let $\bnu^+ = \max\{\bnu_1-\bnu_2,0\}$ and $\bnu^- = \max\{\bnu_2-\bnu_1,0\}$. We have $\bnu_2-\bnu_1=\bnu^+-\bnu^-$ and $\|\bnu^+\|_1=\|\bnu^-\|_1$. Therefore,  for arbitrary  $\bnu_1,\bnu_2\in\Delta_S$ not necessarily disjoint, 
\begin{align*}
    \|\O_h(\bnu_1-\bnu_2)\|_1 
    = &  \|\O_h(\bnu^+-\bnu^-)\|_1\\
    \ge  & \alpha \|\bnu^+-\bnu^-\|_1
    = \alpha \| \bnu_1-\bnu_2\|_1.
\end{align*}
\end{proof}

\begin{lemma}\label{prop:l2-eluder-linear}
Let $\cX,\Theta$ be two bounded subsets of $\R^N$ such that $\max_{x\in\cX}\max_{\theta\in\Theta} |\langle x,\theta \rangle| \le C$ and $\dim(\mathrm{span}(\cX))=d$. Then 
    	the $\ell_2$-norm $\epsilon$-eluder dimension of 
	\[
	\Fcal=\{f_\theta :\ 
	f_\theta(x)=\langle x,\theta \rangle, x\in \cX\,,
	\theta\in \Theta\}
	\]
	is at most ${\mathcal{O}}(d\log(dC/\epsilon))$.
\end{lemma}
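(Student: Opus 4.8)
The plan is to reduce the $\ell_2$-norm eluder dimension of the linear class $\Fcal$ to a classical volumetric (elliptical potential) argument, using the fact that although $\cX$ and $\Theta$ live in $\R^N$ with $N$ possibly huge, the linear functionals only see the $d$-dimensional subspace $V:=\mathrm{span}(\cX)$. First I would pass to coordinates: pick an orthonormal basis of $V$, so that every $x\in\cX$ is represented by a vector $\tilde x\in\R^d$, and replace $\theta$ by its orthogonal projection $\tilde\theta$ onto $V$ (this does not change $\langle x,\theta\rangle$ for $x\in\cX$). Thus $f_\theta(x)=\langle \tilde x,\tilde\theta\rangle$ with $\tilde x,\tilde\theta\in\R^d$ and $|\langle \tilde x,\tilde\theta\rangle|\le C$ on the relevant sets. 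So it suffices to bound the $\epsilon$-eluder dimension of the $d$-dimensional linear class, which is a standard fact.

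Next I would recall the definition: a sequence $x_1,\dots,x_n\in\cX$ is $\epsilon$-independent if for each $i$ there exist $\theta,\theta'\in\Theta$ with $\sum_{j<i}\langle x_j,\theta-\theta'\rangle^2\le\epsilon^2$ yet $|\langle x_i,\theta-\theta'\rangle|>\epsilon$, and the eluder dimension is the length of the longest such sequence. Writing $w_i:=\tilde\theta_i-\tilde\theta_i'\in\R^d$ (note $\|w_i\|_2\le 2C/\epsilon$ after suitable normalization, or more carefully $|\langle \tilde x_j, w_i\rangle|\le 2C$ for all $j$), the condition says $\sum_{j<i}\langle \tilde x_j,w_i\rangle^2\le\epsilon^2$ but $\langle \tilde x_i,w_i\rangle^2>\epsilon^2$. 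Consider the matrices $A_i:=\lambda I+\sum_{j<i}\tilde x_j\tilde x_j^\top$ with $\lambda:=\epsilon^2/(2C/\epsilon)^2$ chosen so that $\|\tilde x_i\|_{A_i^{-1}}^2\ge c$ for an absolute constant $c$ whenever $x_i$ is $\epsilon$-independent of its predecessors — this follows because $w_i/\|w_i\|_{A_i}$ witnesses that $\tilde x_i$ has non-negligible component in the $A_i^{-1}$-geometry. Then the standard elliptical potential / pigeonhole bound $\sum_i \min(1,\|\tilde x_i\|_{A_i^{-1}}^2)\le 2d\log\!\big((\lambda d + n(2C)^2)/(\lambda d)\big)$ forces $n\le \mathcal{O}(d\log(dC/\epsilon))$ after plugging in the value of $\lambda$ and using that the relevant log-determinant ratio is $\mathrm{poly}(d,C,1/\epsilon)$.

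I expect the main obstacle — really the only nonroutine point — to be the careful bookkeeping of the normalization: making precise the claim that $\epsilon$-independence of $x_i$ implies $\|\tilde x_i\|_{A_i^{-1}}\gtrsim 1$, which requires choosing the ridge parameter $\lambda$ correctly in terms of $C,\epsilon$ and controlling $\|w_i\|_{A_i}^2 = \lambda\|w_i\|_2^2 + \sum_{j<i}\langle\tilde x_j,w_i\rangle^2 \le \lambda\|w_i\|_2^2+\epsilon^2$, together with a bound $\|w_i\|_2\le 2C/\epsilon$ that itself needs the hypothesis $|\langle x,\theta\rangle|\le C$ (e.g.\ via $|\langle \tilde x_i,w_i\rangle|\le 2C$ and $|\langle \tilde x_i,w_i\rangle|>\epsilon$ giving a lower bound, but the upper bound on $\|w_i\|_2$ must come from spanning-set considerations or be absorbed into $\lambda$). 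Everything else is the textbook argument, so rather than re-deriving it I would cite the standard elliptical potential lemma (e.g.\ \citep{lattimore2020bandit}) applied in the $d$-dimensional coordinate space, and only spell out the reduction to $\R^d$ and the constant-tracking in the normalization step.
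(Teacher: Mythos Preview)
Your overall architecture---reduce to a $d$-dimensional problem and run an elliptical-potential argument---is right, and it is also what the paper does. But there is a genuine gap in the step you yourself flag as ``the only nonroutine point,'' and the fix is not the one you suggest.

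The hypothesis gives you only $|\langle x,\theta\rangle|\le C$; it gives no control on $\|\tilde x\|_2$ or on $\|w_i\|_2$ in orthonormal coordinates. Your elliptical-potential bound tacitly uses $\|\tilde x_i\|_2\le 2C$ (in the $n(2C)^2$ term), and your choice of $\lambda$ tacitly uses $\|w_i\|_2\le 2C/\epsilon$; neither follows. For instance, with $\cX=\{e_1,\epsilon e_2\}\subset\R^2$ and $\Theta=\{Ce_1+(C/\epsilon)e_2\}$ all pairings are bounded by $C$, yet $\|\tilde\theta\|_2\approx C/\epsilon$ and no rescaling of $\lambda$ saves both sides simultaneously: shrinking $\lambda$ to absorb a large $\|w_i\|_2$ inflates the log-determinant ratio through the unbounded $\|\tilde x_i\|_2$. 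So ``absorb into $\lambda$'' does not work.

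The paper's resolution is exactly the ``spanning-set considerations'' you allude to but do not carry out: it replaces the regularizer $\lambda I$ by $\lambda\sum_{i=1}^d b_ib_i^\top$, where $\{b_1,\dots,b_d\}\subset\cX$ is a barycentric spanner of $\cX$. This is the right geometry because (i) each $b_i\in\cX$, so $|\langle b_i,w\rangle|\le 2C$ controls the regularizer's contribution to $w^\top V_k w$ using only the hypothesis; and (ii) the barycentric property $\|B^\dagger x\|_\infty\le 1$ bounds $x^\top(BB^\top)^\dagger x=\|B^\dagger x\|_2^2\le d$, which replaces the missing $\|\tilde x\|_2$ bound in the log-determinant step. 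With $\lambda=\epsilon^2/(C^2d)$ one gets $(\theta^k)^\top V_k\theta^k=\mathcal{O}(\epsilon^2)$ and the transformed feature norm bounded by $d^3C^2/\epsilon^2$, yielding the claimed $\mathcal{O}(d\log(dC/\epsilon))$. If you want to keep your orthonormal-coordinate presentation, the equivalent move is to use the barycentric spanner as a (non-orthogonal) basis, in which both $x$ and $w$ acquire coordinate bounds depending only on $d$ and $C$.
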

\begin{proof}[Proof of Lemma \ref{prop:l2-eluder-linear}]
 Let $\{b_1,\ldots,b_d\}\subset\cX$ be a Barycentric spanner of $\cX$. 
 Denote $B:=[b_1,\ldots,b_d]$. 
 By definition, this implies that for any $x\in\cX$, $x = BB^\dagger x$ and $\|B^\dagger x\|_\infty \le 1$.
 
 Suppose $x^1,\ldots,x^K$ is an $\epsilon$-independent sequence. Then there exist $\theta^1,\ldots,\theta^K$ such that  for any $k\in[K]$, 
 $$
{\theta^k}\trans\left( \sum_{t=1}^{k-1}x^t {x^t}\trans\right)\theta^k \le \epsilon^2 \quad \mbox{ but } \quad  |\langle \theta^k, x^k \rangle | \ge \epsilon.
 $$
Define $V_k :=\sum_{t=1}^{k-1}x^t {x^t}\trans + \lambda \sum_{i=1}^d b_i b_i\trans$ where $\lambda= \epsilon^2/ (C^2 d) $. We have 
$$
{\theta^k}\trans V_k {\theta^k} \le \epsilon^2 + \lambda d R_x R_\theta \le 2\epsilon^2,
$$
which together with $|\langle \theta^k, x^k \rangle | \ge \epsilon$ implies ${x^k}\trans V_k^\dagger x^k\ge 1/2$. 
Let $P$ be a column-orthonormal matrix  with  column space equal to $\text{span}(\cX)$. Then, we have 
\begin{equation}\label{eq:Aug14-1}
    \begin{aligned}
         \left(\frac{3}{2}\right)^K  \le &  \prod_{k=1}^K \left(1+  {x^k}\trans V_k^\dagger x^k\right) \\
       = & \frac{\det(P\trans V_{K+1} P )}{\det(P V_1 P)} \\
      =  &\det\left(P\trans   \left(V_1^{\dagger}\right)^{1/2} \left(\sum_{t=1}^{K}x^t {x^t}\trans + \lambda \sum_{i=1}^d b_i b_i\trans \right)\left(V_1^{\dagger}\right)^{1/2} P  \right) \\
      =  & \det\left(    \left(\sum_{t=1}^{K}P\trans\left( V_1^{\dagger}\right)^{1/2}x^t {x^t}\trans\left(V_1^{\dagger}\right)^{1/2}   P \right)+ I_{d\times d}\right). 
    \end{aligned}
\end{equation}
Define $z_t := P\trans\left( V_1^{\dagger}\right)^{1/2}x^t$. We have 
$$
\| P\trans\left( V_1^{\dagger}\right)^{1/2}x^t\|_2^2 =
{x^t}\trans  V_1^{\dagger} x^t \le \frac{1}{\lambda} {x^t}\trans (BB\trans)^\dagger x^t = \frac{1}{\lambda} \| B^\dagger x^t \|_2^2 \le \frac{1}{\lambda} \| B^\dagger x^t \|_1^2 \le \frac{d^3 C^2}{\epsilon^2},
$$
where the last inequality uses the fact that $B$ is a Barycentric spanner. 
Combining the norm upper bound with  Equation \eqref{eq:Aug14-1}, we obtain 
$$
 \left(\frac{3}{2}\right)^K \le  \left( \frac{Kd^2 C^2}{\epsilon^2} +1\right)^d.
$$
As a result, by simple algebra,  we conclude 
$$
K \le \cO(d \log(dC/\epsilon)).
$$
\end{proof}

\end{document}